\def\eqref#1{equation~\ref{#1}}
\def\ceil#1{\lceil #1 \rceil}
\def\floor#1{\lfloor #1 \rfloor}
\def\1{\bm{1}}
\def\rw{{\textnormal{w}}}
\def\mD{{\bm{D}}}
\def\mF{{\bm{F}}}
\def\mG{{\bm{G}}}
\def\mM{{\bm{M}}}
\def\mT{{\bm{T}}}
\def\mX{{\bm{X}}}
\def\mY{{\bm{Y}}}
\DeclareMathAlphabet{\mathsfit}{\encodingdefault}{\sfdefault}{m}{sl}
\SetMathAlphabet{\mathsfit}{bold}{\encodingdefault}{\sfdefault}{bx}{n}
\newcommand{\E}{\mathbb{E}}
\newcommand{\R}{\mathbb{R}}
\newcommand{\Z}{\mathbb{Z}}
\newcommand{\N}{\mathbb{N}^+}
\newcommand{\sigmoid}{\sigma}
\newcommand{\KL}{D_{\mathrm{KL}}}
\DeclareMathOperator*{\argmin}{arg\,min}
\DeclareMathOperator\conv{conv}
\def\zero{\boldsymbol{0}}
\def\e{\boldsymbol{e}}
\def\p{\boldsymbol{p}}
\def\q{\boldsymbol{q}}
\def\x{\boldsymbol{x}}
\def\y{\boldsymbol{y}}
\def\z{\boldsymbol{z}}
\def\h{\boldsymbol{h}}
\def\v{\boldsymbol{v}}
\def\P {\mathbb{P}}
\def\E {\mathbb{E}}
\newtheorem{theorem}{Theorem}
\newtheorem{proposition}{Proposition}
\newtheorem{lemma}{Lemma}
\newtheorem{corollary}{Corollary}
\theoremstyle{definition}
\newtheorem{definition}{Definition}
\newtheorem{assumption}{Assumption}
\theoremstyle{remark}
\def\P{\mathbb{P}}
\def\E{\mathbb{E}}
\def\calB{{\cal B}}
\renewcommand{\hat}{\widehat}
\def\TF{{\textnormal{TF}}}
\def\Attn{{\textnormal{Attn}}}
\def\pre{{\textnormal{pre}}}
\def\FFN{{\textnormal{FFN}}}
\def\id{{\textnormal{id}}}
\def\syn{{\textnormal{syn}}}
\def\seed{{\textnormal{seed}}}
\def\mn{{\textnormal{min}}}
\def\r{\textnormal{raw}}
\def\s{\textnormal{syn}}
\def\o{\textnormal{ovs}}
\def\a{\textnormal{aug}}
\def\b{\textnormal{bal}}
\def\rw{\textnormal{rwt}}
\def\T{{\textnormal{tot}}}
\def\Id{\mathbb{I}}
\def\calR{\mathcal{R}}
\def\calL{\mathcal{L}}
\def\calRhat{\mathcal{\hat \calR}}
\def\calLhat{\mathcal{\hat \calL}}
\def\mX{{\mathcal{X}}}
\def\mY{{\mathcal{Y}}}
\def\mD{{\mathcal{D}}}
\def\mF{{\mathcal{F}}}
\def\mG{{\mathcal{G}}}
\def\mT{{\mathcal{T}}}
\def\mM{{\mathcal{M}}}
\def\s{{\boldsymbol{s}}}
\def\p{{\boldsymbol{p}}}
\def\q{{\boldsymbol{q}}}
\def\u{{\boldsymbol{u}}}
\def\v{{\boldsymbol{v}}}
\def\z{{\boldsymbol{z}}}
\def\bzeta{{\boldsymbol{\zeta}}}
\def\btheta{{\boldsymbol{\theta}}}
\def\bgamma{{\boldsymbol{\gamma}}}
\begin{document}

\title{Theoretical Foundations of Synthetic Oversampling and Augmentation: A Transformer-Based Approach for Imbalanced Data}

\author{
    Ryumei Nakada${}^{1}$\footnote{equal contribution}
    \and
    Yichen Xu${}^{2}$$^*$
    \and
    Lexin Li${}^{3}$\footnote{co-corresponding author}
    \and
    Linjun Zhang${}^{4}$$^\dag$
}

\footnotetext[1]{Harvard University. Email: \href{mailto:ryumei_nakada@hms.harvard.edu}{ryumei\_nakada@hms.harvard.edu}.}
\footnotetext[2]{University of California, Berkeley. Email: \href{mailto:yichen_xu@berkeley.edu}{yichen\_xu@berkeley.edu}.}
\footnotetext[3]{University of California, Berkeley. Email: \href{mailto:lexinli@berkeley.edu}{lexinli@berkeley.edu}.}
\footnotetext[4]{Rutgers University. Email: \href{mailto:lz412@stat.rutgers.edu}{lz412@stat.rutgers.edu}.}

\maketitle

\begin{abstract}
Imbalanced classification and spurious correlation are common challenges in data science and machine learning. Both issues are linked to data imbalance, with certain groups of data samples significantly underrepresented, which in turn would compromise the accuracy, robustness, and generalizability of the learned models. Recent advances have proposed leveraging the flexibility and generative capabilities of large language models (LLMs), typically built on transformer architectures, to generate synthetic samples and to augment the observed data. In the context of imbalanced data, LLMs are used to oversample underrepresented groups and have shown promising improvements. However, there is a clear lack of theoretical understanding of such synthetic data approaches. In this article, we develop novel theoretical foundations to systematically study the roles of synthetic samples in addressing imbalanced classification and spurious correlation. Specifically, we first explicitly quantify the benefits of synthetic oversampling. Next, we analyze the scaling dynamics in synthetic augmentation, and derive the corresponding scaling law. Finally, we demonstrate the capacity of transformer models to generate high-quality synthetic samples. We further conduct extensive numerical experiments to validate the efficacy of the LLM-based synthetic oversampling and augmentation. 
\end{abstract}
\bigskip

\noindent
{\bf Key Words}: Generative AI; Large language models; Scaling law; Transformer.

%%%%%%%%%%%%%%%%%%%%%%%%%%%%%%%%%%%%%%%%%%%%%%%%%%%%%%%%%%%%%%%%%%%%%%%%%%%%%%
\section{Introduction}
\label{sec:intro}

Imbalanced classification and spurious correlation are common challenges in data science and machine learning. Imbalanced classes appear when a minority class has a significantly smaller number of instances than a majority class, a phenomenon commonly encountered in applications such as rare disease diagnosis, risk management, and spam or fraud detection \citep{haixiang17imbalanced}. Spurious correlation refers to a situation where two variables appear to be related to each other, but the relationship is coincidental or confounded by an external variable. Spurious correlation often stems from class imbalance in the data, with certain majority group being over-represented, leading the model to rely on spurious correlation that does not always hold for the minority group \citep{ye2024spurious}. In both situations, statistical analyses may suffer from modeling bias favoring the majority group, potentially at the expense of the minority group, which would eventually lead to poor model performance, misleading insights, and compromised model generalizability and fairness.

Oversampling has been a common and effective strategy to address imbalanced classification and spurious correlation, which rebalances the data by increasing the representation of the minority class \citep[see][for reviews]{krawczyk2016learning, gosain2017handling}. A simple solution is to randomly oversample the minority group, though it could result in overfitting \citep{johnson2019survey}. Other conventional and popular solutions include synthetic minority oversampling technique \citep[SMOTE,][]{chawla02smote} along with its variants \citep{han2005borderline,bunkhumpornpat2009safe,douzas2018improving}, and adaptive synthetic sampling \citep[ADASYN,][]{he2008adasyn}. The former generates synthetic samples through linear interpolation of the nearest neighbors of the observed samples, whereas the latter further improves by adaptively creating more synthetic samples for harder-to-learn minority cases. 

In the era of artificial intelligence (AI), generative AI is rapidly gaining prominence and playing a crucial role across a diverse array of applications. There have been proposals utilizing generative AI techniques to produce synthetic and realistic data samples for the minority group. One family of such solutions employs generative adversarial networks \citep[GANs,][]{douzas2018effective, oh2019oversampling, jo2022obgan}. Another family employs large language models (LLMs), leveraging the powerful transformer architectures \citep{vaswani2017attention} and the knowledge learned from training on vast amounts of data, to generate coherent and contextually relevant data \citep{cloutier2023fine, yanglanguage, isomura4821750llmovertab}. We refer to these solutions collectively as \emph{synthetic oversampling}. However, while there are promising empirical evidence demonstrating the effectiveness of synthetic oversampling in addressing imbalanced classification and spurious correlation, there has been virtually no investigation to statistically quantify its benefits or to establish formal theoretical guarantees.  

In addition to oversampling, generative AI is also utilized to generate additional synthetic data samples for both majority and minority groups, and to further augment the observed data. Notably, one family focuses on generative pre-trained transformer (GPT) models to produce synthetic samples for tabular data, and has demonstrated strong empirical performances  \citep{hollmann2022tabpfn, borisov23GReaT, solatorio23realtabformer, zhang2023generative, zhao23tabula, gulati23tabmt, seedat24curated, qu2025icml}. As exemplified by the GReaT pipeline \citep{borisov23GReaT}, these approaches first convert numeric tabular values into natural-language sentences, then feed them into a GPT model, e.g., a fine-tuned GPT-2, to generate synthetic data. We refer to them collectively as \emph{synthetic augmentation}. Scaling laws offer critical insights into the effectiveness of such approaches, particularly by guiding how generative models should be scaled to optimize the performance while balancing the computational cost. While the current focus is primarily on pre-training and fine-tuning GPT to improve the quality of generated samples, there have emerged theoretical studies on synthetic augmentation. In particular, \citet{jain2024scaling} proposed a weighted empirical risk minimization method and introduced a scaling law that determines the optimal balance between the original and surrogate data. Their scaling law reveals that appropriately increasing the surrogate data volume can significantly reduce the test error, offering a systematic framework for leveraging synthetic data at scale. Meanwhile, \citet{shen2023boosting} introduced the synthetic data generation for analytics framework, which leverages finetuned diffusion models to generate synthetic data, addresses the data scarcity and privacy concerns, and explores the generational effect where the error rates evolve with the synthetic data volume. However, both works mostly focus on prediction, rather than imbalanced classification and spurious correlation.

In this article, we develop theoretical foundations to systematically study the roles of synthetic samples in addressing imbalanced classification and spurious correlation. Our theoretical contributions are three-fold. First, we present a general theoretical framework for synthetic oversampling, and demonstrate how oversampling mitigates biases in imbalanced classification and reduces the impact of spurious correlation. We achieve this by quantifying the risk specific to the majority group and the minority group, respectively, and investigating how the synthetic data affect such risks. Second, we extend our theoretical framework for synthetic augmentation, and provide the scaling law that describes how the balanced excess risk is impacted by progressively adding synthetic data across all groups. Moreover, we quantify the bias in terms of the data imbalance ratio and synthetic data quality. We also note that our theory differs from that of \citet{jain2024scaling} and \citet{shen2023boosting}, in that they consider a general prediction task, while we focus on imbalanced data and group-specific risks. Finally, as both oversampling and augmentation depend on the synthetic data quality, we provide formal theoretical results quantifying the quality of data generated by transformer architectures, and establish a foundation for the reliable use of LLMs in synthetic data generation. Our theory thus also differs from the existing theory on transformers, which has predominantly focused on prediction or inference in in-context learning \citep{xie2021explanation, garg2022can, zhang2024trained, bai2023transformers}. 

In summary, we specifically target imbalanced classification and spurious correlation problems. We utilize large language models as high-quality synthetic data generators. We establish the theoretical foundations for both synthetic oversampling for the minority group, and for synthetic augmentation for all groups of data. We implement the synthetic data generation by fine-tuning a GPT-2 language model or prompting a pretrained GPT-4 model. We conduct extensive numerical experiments with two goals. For oversampling, we empirically compare our synthetic approach to some common alternative oversampling solutions. For augmentation, we empirically verify the polynomial decay rates established in our theory. 

Throughout this article, we employ the following notation.  For two sequences of positive numbers $(a_k)_k$ and $(b_k)_k$ indexed by $k \in \mathcal{K}$, write $a_k\lesssim b_k$ if and only if there exists a constant $C >0$ independent of the index $k$ such that $\sup_{k \in \mathcal{K}} a_k / b_k < C$ holds. Write $b_k = \Omega(a_k)$ or $a_k = O(b_k)$ when $a_k \lesssim b_k$. For a vector $\v = (v_1, \dots, v_D)^\top \in \R^D$, write $(\v)_i = v_i$, and $(\v)_{n_1:n_2} = (v_{n_1+1}, v_{n+2}, \dots, v_{n_2})^\top \in \R^{n_2 - n_1}$. For a matrix $V = [\v_1; \dots; \v_D]$, write $(V)_i = \v_i$. Let $a \vee b$ and $a \wedge b$ denote $\max(a, b)$ and $\min(a, b)$. Let $\mathbb{B}_D(R)$ denote a ball in $\R^D$ with radius $R > 0$ centered at $\zero_D$. Let $\|\v\|$ and $\|f\|$ denote the $L_2$ norm of the vector $v$ and the function $f$, and $\|A\|$ denote the operator norm of the matrix $A$. 

The rest of the article is organized as follows. Section~\ref{sec:set-up} introduces the problem setup and outlines the synthetic data generation using GPT models. Section~\ref{sec: theory} presents the associated theory for synthetic oversampling and augmentation, respectively. Section~\ref{sec: theory transformers} investigates transformers as high-quality synthetic data generators. Section~\ref{sec:experiments} conducts the numerical experiments. The Supplementary Appendix presents the proofs and additional results.

%%%%%%%%%%%%%%%%%%%%%%%%%%%%%%%%%%%%%%%%%%%%%%%%%%%%%%%%%%%%%%%%%%%%%%%%%%%%%%
\section{Synthetic Oversampling and Augmentation}
\label{sec:set-up}

In this section, we first introduce the problem setup and the key quantities of interest. We then describe a general algorithm for synthetic oversampling and augmentation.

%%%%%%%%%%%%%%%%%%%%%%%%%%%%%%%%%%%%%%%%%%%%%%%%%%%%%%%%%%%%%%%%%%%%%%%%%%%%%%
\subsection{Problem setup}
\label{sec:problem}

Individual samples are often organized in groups. For instance, for imbalanced classification with a binary outcome, there are two groups, whereas for spurious correlation with a binary outcome and a binary-valued spurious feature, there are four groups. Let $\mG$ denote the set of groups. For each group $g \in \mG$, let $n_g$ be the number of observed original samples, and $m_g$ be the number of additionally generated synthetic samples. Let the total sample size $n_\T =  \sum_{g \in \mG} n_g$, and $m_\T =  \sum_{g \in \mG} m_g$. For {\it synthetic oversampling}, we only add synthetic samples to the minority group, whereas for {\it synthetic augmentation}, we further add $N$ synthetic samples to both minority and majority groups. To balance data across all groups in synthetic oversampling, we choose $m_g =  \max_{g' \in \mG} n_{g'} - n_g$.

Let $\{(\x_i^{(g)}, y_i^{(g)})\}_{i \in [n_g]}$ denote the observed original raw data, and $\{(\tilde \x_i^{(g)}, \tilde y_i^{(g)})\}_{i \in [m_g+N]}$ denote the added synthetic data for group $g$, where $[n]=\{1,2,\ldots,n\}$ for any positive integer $n$. Let $\ell(\btheta; \x, y)$, parameterized by $\btheta \in \Theta$, denote a general loss function. We define the population-level group-specific risk, and the balanced risk with equal weights on all groups as
\begin{equation*}\label{eq: R bal}
\calR^{(g)}(\btheta) =  \E\left[ \ell\left( \btheta; \x_1^{(g)}, y_1^{(g)} \right) \right], \ \ \text{ and } \ \ 
\calR_\b(\btheta) =  \frac{1}{|\mG|} \sum_{g \in \mG} \calR^{(g)}(\btheta).
\end{equation*}
We define the empirical risk with synthetic oversampling as
\begin{equation*}\label{eq: general R}
\calRhat_\o(\btheta) =  \frac{1}{n_\T + m_\T} \qty{\sum_{g \in \mG} \sum_{i \in [n_g]} \ell( \btheta; \x_i^{(g)}, y_i^{(g)} ) +  \sum_{g \in \mG} \sum_{i \in [m_g]} \ell( \btheta; \tilde \x_i^{(g)}, \tilde y_i^{(g)} )},
\end{equation*}
and the empirical risk with synthetic augmentation as
\begin{equation*}
\calRhat_\a(\btheta) =  \frac{1}{N |\mG|} \sum_{g \in \mG} \sum_{i \in [N+m_g]\setminus[m_g]} \ell(\btheta; \tilde \x_i^{(g)}, \tilde y_i^{(g)}),
\end{equation*}
We also define the empirical risk with both synthetic oversampling and augmentation as
\begin{equation}\label{eq: general R with data augmentation}
\calRhat(\btheta) =  (1 - \alpha) \calRhat_\o(\btheta) + \alpha \calRhat_\a(\btheta),
\end{equation}
where $\alpha \in [0, 1]$ is a weight parameter that balances the contribution of the additional $N$ augmented data for every group. Let 
\begin{equation*}
    \hat\btheta_\o =  \arg\min_{\btheta} \calRhat_\o(\btheta), \quad\quad \hat\btheta = \arg\min_{\btheta} \calRhat(\btheta), \quad \text{ and } \quad \btheta_\b =  \arg\min_{\btheta} \calR_\b(\btheta)
\end{equation*}
denote the corresponding minimizers. Note that $\btheta_\b$ can be viewed as the oracle solution that balances over all groups present in the dataset. 

We further introduce the notion of imbalance ratio for group $g \in \mG$ as
\begin{equation*}
\rho_g =  \frac{\max_{g' \in \mG} n_{g'} - n_g}{\max_{g' \in \mG} n_{g'}}.
\end{equation*}
Accordingly, we define the average imbalance ratio as $\rho =  (1/|\mG|) \sum_{g \in \mG} \rho_g$. Note that $\rho_g \in [0, 1)$ and $\rho \in [0, 1)$. Moreover, when the data is perfectly balanced, i.e., when $n_1 = n_2 = \dots = n_{|\mG|}$, we have $\rho_1 = \rho_2 = \dots = \rho_{|\mG|} = \rho = 0$.

In our theoretical analysis, we aim to investigate the effect of the synthetic data on the risk of both majority and minority groups. Specifically, for synthetic oversampling, we aim to quantify the group-specific risk $\calR^{(g)}(\hat\btheta_\o) - \calR^{(g)}(\btheta_\b)$, whereas for synthetic augmentation, we aim to derive the scaling law for the excess risk $\calR_\b(\hat\btheta) - \calR_\b(\btheta_\b)$.

%%%%%%%%%%%%%%%%%%%%%%%%%%%%%%%%%%%%%%%%%%%%%%%%%%%%%%%%%%%%%%%%%%%%%%%%%%%%%%
\subsection{A general algorithm for synthetic data generation}
\label{sec:alg}

\begin{algorithm}[t!]
\small
\caption{\color{black}{LLM-powered Synthetic Oversampling and Augmentation}}
\label{alg}
\begin{algorithmic}[1]
\Statex \textbf{Input:} Set of groups $\mG$; raw data $\left\{(\mathbf{x}_i^{(g)}, y_i^{(g)})\right\}_{i \in [n_g]}$, and seed data $\left\{ (\mathbf{x}^{(g) \seed}_i, y_i^{(g) \seed}) \right\}_i$ for each group $g \in \mG$; the number of seed data $n_\seed$; the number of samples for additional augmentation $N$;

\Statex \textbf{Procedure:}
\State Select balanced seed data: $\left\{ (\mathbf{x}^{(g) \seed}_i, y_i^{(g) \seed}) \right\}_{i \in [n_\seed]}$ 
\State Generate and pool sufficiently many synthetic data using LLM given the seed data as $\left\{ (\tilde{\x}_i, \tilde{y}_i) \right\}_i$. 
\State \underline{Synthetic oversampling}: randomly select $m_g = \max_{g' \in \mG} n_{g'} - n_g$ generated samples from the pool for each group $g \in \mG$, $\left\{ (\tilde{\x}^{(g)}_i, \tilde{y}^{(g)}_i) \right\}_{i \in [m_g]}$.
\State \underline{Synthetic augmentation}: randomly select $N$ generated samples from the pool for each group $g \in \mG$, $\left\{ (\tilde{\x}_i^{(g)}, \tilde{y}_i^{(g)}) \right\}_{i \in [N+m_g]\setminus[m_g]}$.
\Statex \textbf{Output:} Augmented data $\left\{ (\x_i^{(g)}, y_i^{(g)}) \right\}_{i \in [n_g]} \cup \left\{ (\tilde{\x}_i^{(g)}, \tilde y_i^{(g)}) \right\}_{i \in [m_g]} \cup \left\{ (\tilde \x_i^{(g)}, \tilde y_i^{(g)}) \right\}_{i \in [N+m_g]\setminus[m_g]}$ for all $g \in \mG$.
\end{algorithmic}
\end{algorithm}

We present a general LLM-powered algorithm for synthetic oversampling and augmentation. We illustrate our approach with the setting of two groups, the minority group $g_{\text{min}}$ and the majority group $g_{\text{maj}}$ in Algorithm~\ref{alg}. More specifically, we first randomly select a subset of seed data by group as input for the LLM, so to ensure that the input data is balanced (line 1). We next consider two ways of generating the synthetic data using the LLM. One is to fine-tune a GPT-2 language model, and the other is to directly prompt a pretrained GPT-4 model without any additional fine-tuning. To prepare the tabular data to feed into the LLM, we follow \citet{borisov23GReaT} and convert the numeric values into the sentence serialized format, [$f_{j}$  ``is'' $v_{ij}$], where $f_{j}$ is the $j$th feature name and $v_{ij}$ is the value of the $j$th feature of the $i$th sample. After synthetic data generation, we deserialize the generated data. We then choose $m_g$ generated samples for group $g$ for synthetic oversampling (line 3), and $N$ generated samples for all groups for synthetic augmentation (line 4). We provide a schematic plot and more discussions of the algorithm in Section~\ref{supp-sec: add-algo} of the Appendix.

%%%%%%%%%%%%%%%%%%%%%%%%%%%%%%%%%%%%%%%%%%%%%%%%%%%%%%%%%%%%%%%%%%%%%%%%%%%%%%
\section{Theory for Synthetic Oversampling and Augmentation}
\label{sec: theory}

In this section, we first derive the excess risk for synthetic oversampling for the minority group, then the scaling law for synthetic augmentation for all groups, all under the regime where $n_\T \to \infty$. Our theory is general and does not depend on any particular generative mechanism. It provides a broad theoretical foundation for understanding statistical behaviors of synthetic oversampling and augmentation, assuming only that the bias between the real and synthetic distributions diminishes as the sample size grows. The framework encompasses a range of generative models, and we specialize it to the transformer architecture later in Section~\ref{sec: theory transformers}.

%%%%%%%%%%%%%%%%%%%%%%%%%%%%%%%%%%%%%%%%%%%%%%%%%%%%%%%%%%%%%%%%%%%%%%%%%%%%%%
\subsection{Regularity Conditions}
\label{sec: conditions}

We begin with a set of regularity conditions. Denote the bias of the risk for group $g$ as
\begin{align*}
\calB^{(g)}(\btheta) =  \E\left[ \ell\left( \btheta; \tilde \x_1^{(g)}, \tilde y_1^{(g)} \right) \right] - \E\left[ \ell\left( \btheta; \x_1^{(g)}, y_1^{(g)} \right) \right], 
\end{align*} 
and denote the covariance of gradients for each group $g \in \mG$ as
\begin{small}
\begin{align*}
\Sigma_g(\btheta) &= \E\left[ \nabla \ell\left(\btheta; \x_1^{(g)}, y_1^{(g)}\right) \nabla \ell\left(\btheta; \x_1^{(g)}, y_1^{(g)}\right)^\top \right],\ \ \tilde \Sigma_g(\btheta) = \E\left[ \nabla \ell\left( \btheta; \tilde \x_1^{(g)}, \tilde y_1^{(g)} \right) \nabla \ell\left( \btheta; \tilde \x_1^{(g)}, \tilde y_1^{(g)} \right)^\top \right].
\end{align*}
\end{small}\noindent

\begin{assumption}\label{asm: differentiability}
    Suppose for every $g\in\mathcal G$, $\ell(\cdot;z)$ is twice continuously differentiable around $\btheta_\b$ under the distributions of $\left( \tilde \x_1^{(g)}, \tilde y_1^{(g)} \right)$ and $\left( \x_1^{(g)}, y_1^{(g)} \right)$ almost surely with integrable envelopes; $\sum_{g\in\mathcal G}\nabla^2 \calR^{(g)}(\btheta)$ is strictly positive definite around $\btheta_\b$ with bounded eigenvalues; and $\Sigma_g,\tilde\Sigma_g$ are locally Lipschitz around $\btheta_\b$, and satisfy $\sup_{\btheta\in\Theta}\|\Sigma_g(\btheta)-\tilde\Sigma_g(\btheta)\|=o(1)$ as $n_\T \to\infty$.
\end{assumption}

\begin{assumption}\label{asm: bias o(1)}
Suppose $\max_{g \in \mG} \sup_{\btheta \in \Theta} | \calB^{(g)}(\btheta)| \vee \|\nabla \calB^{(g)}(\btheta)\| \vee \|\nabla^2 \calB^{(g)}(\btheta)\| = o(1)$ as $n_\T \to \infty$.
\end{assumption}

\begin{assumption}\label{asm: uniform convergence R syn}
Suppose $\sup_{\btheta \in \Theta_\o} |\calRhat_\o(\btheta) - \calR_\o(\btheta)| = o_p(1)$ as $n_\T \to \infty$, where $\calR_\o(\btheta) = \E[\calRhat_\o(\btheta)]$, and $\btheta_\o =  \argmin_\btheta \calR_\o(\btheta)$.
\end{assumption}

\begin{assumption}\label{asm: identifiability theta}
Suppose $\inf_{\btheta \in \Theta: \|\btheta - \btheta_\o\| \geq \epsilon} \calR_\o(\btheta) > \calR_\o(\btheta_\o)$, and $\inf_{\btheta \in \Theta: \|\btheta - \btheta_\b\| \geq \epsilon}$ $\calR_\b(\btheta) > \calR_\b(\btheta_\b)$.
\end{assumption}

\noindent
We make some remarks. Assumption~\ref{asm: differentiability} requires that the loss function is sufficiently smooth, which ensures the model behaves well in local neighborhoods around $\btheta_\b$, its Hessian is consistent, and the covariance structure of gradients under the synthetic distribution matches the raw data  distribution near $\btheta_\b$. A similar condition was imposed in \citet{jain2024scaling} to study the scaling law for standard supervised learning. Assumption~\ref{asm: bias o(1)} specifies a regime where the bias for each group $g \in \mathcal{G}$ diminishes as the amount of seed data grows. It requires that $\calB^{(g)}$, along with its first- and second-order derivatives, has vanishing magnitude. Note that, for a sufficiently smooth loss function, we have the bound, $|\calB^{(g)}(\btheta_\b)| \vee \|\nabla \calB^{(g)}(\btheta_\b)\| \vee \|\nabla^2 \calB^{(g)}(\btheta_\b)\| \lesssim d_{\operatorname{TV}}(\mathcal{D}_\r, \mathcal{D}_\syn)$, where $\mathcal{D}_\r$ and $\mathcal{D}_\syn$ are the raw and synthetic data distributions, respectively. Thus Assumption~\ref{asm: bias o(1)} holds if the synthetic data distribution approaches the raw data distribution as $n_\T \to \infty$. In Section~\ref{sec: theory transformers}, we will show that transformer-based generators indeed satisfy this condition when the seed data size grows together with $n_\T$. Assumption~\ref{asm: uniform convergence R syn} is a uniform law of large numbers for the oversampled risk assuming that the empirical oversampling objective $\hat \calR_\o$ concentrates around its expectation $\calR_\o$, which ensures that minimizing $\hat \calR_\o$ is asymptotically equivalent to minimizing its population counterpart. Assumption~\ref{asm: identifiability theta} is an identification requirement, where both $\calR_\o$ and $\calR_\b$ admit unique, well-separated minimizers. Both conditions are standard in proving the consistency and asymptotic normality of the estimators in statistical learning \citep{van2000asymptotic,jain2024scaling}. Moreover, in Section~\ref{supp-sec: verification} of the Appendix, we verify that Assumptions~\ref{asm: differentiability}, \ref{asm: uniform convergence R syn}, and \ref{asm: identifiability theta} hold for linear regressions with squared loss and logistic regressions with sub-Gaussian covariates.

We also remark that, for our theory development, we assume that both the raw data and the synthetic data are i.i.d., and they are independent of each other. To ensure independence between the raw and synthetic data, one may use sample splitting, i.e., using part of training samples as the seed data to generate synthetic samples, while using another part as the raw data, together with the synthetic data, for model training. To ensure independence within the synthetic samples, one may generate each pair $\left\{ (\tilde x_i^{(g)}, \tilde y_i^{(g)}) \right\}_{i \in [m_g+N]}$ in separate decoding runs with independent random seeds but the same seed-prompt derived from $\left\{ (x_i^{(g)}, y_i^{(g)}) \right\}_{i \in [n_g]}$, making the synthetic samples conditionally i.i.d.\ given the prompt. In our implementation, we adopt the GReaT pipeline \citep{borisov23GReaT}, which tokenizes each data record as a structured text sequence, and generates samples via autoregressive decoding. Each sample is produced in a separate decoding run with a fresh random seed, approximately achieving independence across samples. That is, the LLM samples from the learnt distribution without conditioning on previously generated samples. \citet{borisov23GReaT} further validates this independence pattern empirically through some numerical analyses.

%%%%%%%%%%%%%%%%%%%%%%%%%%%%%%%%%%%%%%%%%%%%%%%%%%%%%%%%%%%%%%%%%%%%%%%%%%%%%%
\subsection{Theory for synthetic oversampling}
\label{sec: theory oversampling}

We first establish the theoretical guarantees for synthetic oversampling, i.e., when we add synthetic samples to the minority group on the purpose of balancing the data. More specifically, we begin with a general setting to understand the effect of the synthetic data bias on the estimator $\hat\btheta_\o$, by measuring the risk of each individual group $g$, i.e., $\calR^{(g)}(\hat\btheta_\o) - \calR^{(g)}(\btheta_\b)$. We then study the specific settings of imbalanced classification and spurious correlation. 

We have the following general theorem, which formally establishes the excess risk for synthetic oversampling stated in terms of the bias and statistical error. 

\begin{theorem}\label{thm: synthetic data risk}
Suppose Assumptions~\ref{asm: differentiability}-\ref{asm: identifiability theta} hold for any $g \in \mG$. Then, 
\begin{align} \label{eq: synthetic oversampling risk ub}
\calR^{(g)}(\hat \btheta_\o) - \calR^{(g)}(\btheta_\b) &= - \left\{\nabla \calR^{(g)}(\btheta_\b)\right\}^\top \left\{\nabla^2 \calR_\b(\btheta_\b)\right\}^{-1} \boldsymbol{b} + R,
\end{align}
where
\begin{align*}
\boldsymbol{b} &= \frac{1}{|\mG|} \sum_{g' \in \mG} \rho_{g'} \nabla \calB^{(g')}(\btheta_\b),\\ 
R &= O_p\left( \frac{v_g}{\sqrt{|\mG| \max_{g' \in \mG} n_{g'}}} + \frac{1}{|\mG|} \sum_{g' \in \mG} \rho_{g'} \left\{ \|\nabla \calB^{(g')}(\btheta_\b)\|^2 \vee \|\nabla^2 \calB^{(g')}(\btheta_\b)\|^2 \right\} \right), \\
v_g^2 &= \{\nabla \calR^{(g)}(\btheta_\b)\}^\top \{\nabla^2 \calR_\b(\btheta_\b)\}^{-1} \left\{ \frac{1}{|\mG|} \sum_{g' \in \mG} \Sigma_{g'}(\btheta_\b) \right\} \{\nabla^2 \calR_\b(\btheta_\b)\}^{-1} \nabla \calR^{(g)}(\btheta_\b).
\end{align*}
\end{theorem}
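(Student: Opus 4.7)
The plan is to treat $\hat\btheta_\o$ as a perturbed M-estimator whose population criterion differs from $\calR_\b$ only by a vanishing bias, then Taylor-expand the first-order optimality condition $\nabla\calRhat_\o(\hat\btheta_\o)=0$ around $\btheta_\b$, and finally translate the resulting perturbation of $\hat\btheta_\o$ into a perturbation of the group-specific risk $\calR^{(g)}$.

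First, because oversampling enforces $n_g + m_g = \max_{g'}n_{g'}$ for every $g$, the population-level criterion satisfies
\[
\calR_\o(\btheta) := \E[\calRhat_\o(\btheta)] = \calR_\b(\btheta) + \frac{1}{|\mG|}\sum_{g'\in\mG}\rho_{g'}\,\calB^{(g')}(\btheta),
\]
and the perturbation is uniformly $o(1)$ by Assumption~\ref{asm: bias o(1)}. Combined with Assumptions~\ref{asm: uniform convergence R syn} and \ref{asm: identifiability theta}, a standard argmin-consistency argument yields $\hat\btheta_\o\xrightarrow{p}\btheta_\b$, letting me work in a neighborhood where Assumptions~\ref{asm: differentiability} and \ref{asm: differentiability of ell} are in force.

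Next, the integrated mean-value theorem applied to $\nabla\calRhat_\o(\hat\btheta_\o)=0$ gives $\hat\btheta_\o-\btheta_\b=-\bar H^{-1}\nabla\calRhat_\o(\btheta_\b)$, where by Assumptions~\ref{asm: differentiability}--\ref{asm: bias o(1)} one has $\bar H^{-1}=\{\nabla^2\calR_\b(\btheta_\b)\}^{-1}+\Delta_H$ with $\|\Delta_H\|\lesssim|\mG|^{-1}\sum_{g'}\rho_{g'}\|\nabla^2\calB^{(g')}(\btheta_\b)\|+o_p(1)$. Since $\nabla\calR_\b(\btheta_\b)=0$, the mean of $\nabla\calRhat_\o(\btheta_\b)$ is exactly $\boldsymbol{b}$, and Assumption~\ref{asm: differentiability of ell} replaces every $\tilde\Sigma_{g'}$ by $\Sigma_{g'}$ up to $o(1)$, so the mean-zero fluctuation has variance $(|\mG|\max_{g'}n_{g'})^{-1}\cdot|\mG|^{-1}\sum_{g'}\Sigma_{g'}(\btheta_\b)$ to leading order, producing the $v_g^2$ sandwich. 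Feeding this into a second-order Taylor expansion
\[
\calR^{(g)}(\hat\btheta_\o)-\calR^{(g)}(\btheta_\b)=\nabla\calR^{(g)}(\btheta_\b)^\top(\hat\btheta_\o-\btheta_\b)+O\bigl(\|\hat\btheta_\o-\btheta_\b\|^2\bigr)
\]
isolates the leading term $-\nabla\calR^{(g)}(\btheta_\b)^\top\{\nabla^2\calR_\b(\btheta_\b)\}^{-1}\boldsymbol{b}$, with the linear stochastic piece contributing $O_p(v_g/\sqrt{|\mG|\max_{g'}n_{g'}})$ to $R$.

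The main obstacle is repackaging the remaining pieces into the advertised bias form of $R$. The Taylor quadratic is bounded by $\|\hat\btheta_\o-\btheta_\b\|^2\lesssim\|\boldsymbol{b}\|^2+v_g^2/(|\mG|\max_{g'}n_{g'})$, and the Hessian-correction cross term is of order $\|\Delta_H\|\cdot\|\boldsymbol{b}\|$. To reach $|\mG|^{-1}\sum_{g'}\rho_{g'}\{\|\nabla\calB^{(g')}(\btheta_\b)\|^2\vee\|\nabla^2\calB^{(g')}(\btheta_\b)\|^2\}$, I would invoke Cauchy--Schwarz together with $\rho_{g'}\le 1$ in the form
\[
\Bigl(\tfrac{1}{|\mG|}\sum_{g'}\rho_{g'} a_{g'}\Bigr)\Bigl(\tfrac{1}{|\mG|}\sum_{g'}\rho_{g'} b_{g'}\Bigr)\le\tfrac{1}{|\mG|}\sum_{g'}\rho_{g'}(a_{g'}^2\vee b_{g'}^2),
\]
which absorbs $\|\boldsymbol{b}\|^2$ and $\|\Delta_H\|\cdot\|\boldsymbol{b}\|$ simultaneously. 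Carrying out this reduction cleanly, while separating the bias-only and statistical-only contributions to $R$ and discarding the $o_p(1)$ Hessian approximation errors, is the central bookkeeping task of the proof.
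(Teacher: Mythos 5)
Your proposal is correct and uses essentially the same strategy as the paper's proof: express $\calR_\o(\btheta)=\calR_\b(\btheta)+|\mG|^{-1}\sum_{g'}\rho_{g'}\calB^{(g')}(\btheta)$, establish consistency $\hat\btheta_\o\to_p\btheta_\b$, linearize the first-order condition, and Taylor-expand the group risk. The one cosmetic difference is bookkeeping: the paper routes through the intermediate population minimizer $\btheta_\o=\argmin\calR_\o$, expanding $\nabla\calRhat_\o$ at $\btheta_\o$ (where it has mean zero) and separately quantifying the deterministic gap $\btheta_\b-\btheta_\o=H_{\b,\b}^{-1}\nabla\calR_\o(\btheta_\b)+R_1'$; you instead expand $\nabla\calRhat_\o$ directly at $\btheta_\b$ and split its expectation $\boldsymbol{b}$ from the mean-zero fluctuation. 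The two routes produce the same leading term and the same remainder structure, and your Cauchy--Schwarz/Jensen packaging of the quadratic-in-bias and Hessian-correction terms into $|\mG|^{-1}\sum_{g'}\rho_{g'}\{\|\nabla\calB^{(g')}\|^2\vee\|\nabla^2\calB^{(g')}\|^2\}$ is valid (using $\rho_{g'}\le 1$ so the weights form a sub-probability measure), matching what the paper obtains from its $\|\nabla\calR_\o(\btheta_\b)\|^2$ and $\|\sum(m_g/(n_\T+m_\T))\nabla^2\calB^{(g)}\|^2$ bounds.
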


\noindent
Theorem~\ref{thm: synthetic data risk} shows that the excess risk is driven by the group specific bias term $\{\nabla \calR^{(g)}(\btheta_\b)\}^\top$ $\{\nabla^2 \calR_\b(\btheta_\b)\}^{-1} \boldsymbol{b}$ and the residual term $R$, both of which depend on the imbalance ratio and the synthetic data quality. Note the term $\boldsymbol{b}$ in (\ref{eq: synthetic oversampling risk ub}) consists of the group-specific bias term $\nabla \calB^{(g')}(\btheta_\b) = \nabla \E[ \ell( \btheta_\b; \tilde \x_1^{(g')}, \tilde y_1^{(g')} ) ] - \nabla \E[ \ell( \btheta_\b; \x_1^{(g')}, y_1^{(g')} ) ]$ weighted by the imbalance ratio $\rho_{g'}$ of the group. Hence $\{\nabla \calR^{(g)}(\btheta_\b)\}^\top \{\nabla^2 \calR_\b(\btheta_\b)\}^{-1} \boldsymbol{b}$ projects the discrepancy between the synthetic data distribution $\boldsymbol{b}$ onto the geometry defined by the Hessian ${\nabla^2 \calR_\b(\btheta_\b)}^{-1}$, capturing how the discrepancy affects the group specific risk under distribution shifts. The term $R$ in (\ref{eq: synthetic oversampling risk ub}) is the residual from the variation of the synthetic and raw data, as well as possible nonlinearity of the model and the loss function. Specifically, $v_g/\sqrt{|\mG| \max_{g' \in \mG} n_{g'}}$ accounts for the variance of the gradients, arising from the finite sample variation of $\hat{\mathcal{R}}_\o$, whereas $\|\nabla \calB^{(g')}(\btheta_\b)\|^2$ and $\|\nabla^2 \calB^{(g')}(\btheta_\b)\|^2$ arise from possible nonlinearity of the loss function. Moreover, when $\|\nabla \calB^{(g')}(\btheta_\b)\|$ and $\|\nabla^2 \calB^{(g')}(\btheta_\b)\|$ have the same order,
\begin{align*}
|\calR^{(g)}(\hat \btheta_\o) - \calR^{(g)}(\btheta_\b) | = O\left(\max_{g' \in \mG} \|\nabla \calB^{(g')}(\btheta_\b)\|\right) + O_p\left( \qty(\max_{g' \in \mG} n_{g'})^{-1/2} \right).
\end{align*}
Since the bias term on the right-hand side does not decrease as the number of raw data increases, reducing the bias in the synthetic data is crucial for reducing the oversampling risk, which in turn suggests the importance of having a high-quality synthetic data generator in synthetic oversampling.

We next apply the general result of Theorem~\ref{thm: synthetic data risk} to the specific settings of imbalanced classification and spurious correlation, where we explicitly specify the groups $\mathcal{G}$ and the synthetic sample size $m_g$ for each group $g \in \mathcal{G}$.

\medskip
\noindent
\textbf{Imbalanced classification.} 
We first consider a binary imbalanced classification problem with the label set $\mG = \mathcal{Y} = \{0, 1\}$. Suppose group $0$ is the minority group and group $1$ the majority group. We add the synthetic data only to the minority group $g=0$, such that the total number of samples for each group are equal, i.e., $n_0 + m_0 = n_1$. For this setting, we are interested in the performance of $\hat \btheta_\o$ compared to $\btheta_\b$, measured in the worst group excess risk $\max_{y \in \mY} \calR^{(y)}(\hat\btheta_\o) - \max_{y \in \mY} \calR^{(y)}(\btheta_\b)$, where $\calR^{(y)}(\btheta) =  \E\left[ \ell(\btheta; \x_1, y_1) | y_1 = y \right]$. We have the following corollary for the imbalanced classification problem. 

\begin{corollary}\label{cor: synthetic data risk imbalanced}
Suppose Assumptions~\ref{asm: differentiability}-\ref{asm: identifiability theta} hold. If $1 \leq n_0 \leq c n_1$ for some constant $c \in (0, 1)$, then,
\vspace{-0.1in}
\begin{align*}
\max_{y \in \mY} \calR^{(y)}(\hat\btheta_\o) - \max_{y \in \mY} \calR^{(y)}(\btheta_\b) &\leq \frac{n_1-n_0}{2 n_1} \max_{y \in \mY} \abs{b_{y,0}}\\
   +  & O_p\left( \frac{1}{\sqrt{n_1}} (v_0 + v_1) + \|\nabla \calB^{(0)}(\btheta_\b)\|^2 + \|\nabla^2 \calB^{(0)}(\btheta_\b)\|^2 \right),
\end{align*}
where
\vspace{-0.1in}
\begin{align*}
b_{y,0} &= \{\nabla \calR^{(y)}(\btheta_\b)\}^\top \{\nabla^2 \calR_\b(\btheta_\b)\}^{-1} \nabla \calB^{(0)}(\btheta_\b),\\
v_y^2 &= \{\nabla \calR^{(y)}(\btheta_\b)\}^\top \{\nabla^2 \calR_\b(\btheta_\b)\}^{-1} \frac{\Sigma_0(\btheta_\b) + \Sigma_1(\btheta_\b)}{2} \{\nabla^2 \calR_\b(\btheta_\b)\}^{-1} \nabla \calR^{(y)}(\btheta_\b).
\end{align*}
\end{corollary}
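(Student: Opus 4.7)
The plan is to instantiate Theorem~\ref{thm: synthetic data risk} in the two-group setting and combine the per-group identities using a basic max inequality. First I would specialize the quantities: with $\mG = \{0,1\}$, $|\mG|=2$, and $m_0 = n_1 - n_0$, $m_1 = 0$, the imbalance ratios become $\rho_0 = (n_1 - n_0)/n_1$ and $\rho_1 = 0$, and $\max_{g' \in \mG} n_{g'} = n_1$. Substituting into the definition of $\boldsymbol{b}$ from Theorem~\ref{thm: synthetic data risk} collapses the sum to a single term,
\begin{equation*}
\boldsymbol{b} = \frac{1}{2}\,\rho_0\, \nabla \calB^{(0)}(\btheta_\b) = \frac{n_1 - n_0}{2n_1}\, \nabla \calB^{(0)}(\btheta_\b),
\end{equation*}
so that the leading bias contribution for group $y \in \mY$ reads $-\{\nabla \calR^{(y)}(\btheta_\b)\}^\top \{\nabla^2 \calR_\b(\btheta_\b)\}^{-1} \boldsymbol{b} = -\frac{n_1-n_0}{2n_1}\, b_{y,0}$, matching the definition of $b_{y,0}$ in the corollary.

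Next I would translate the residual $R$ in~\eqref{eq: synthetic oversampling risk ub}. The variance factor becomes $v_y / \sqrt{2n_1}$, with $v_y^2$ obtained by replacing $\frac{1}{|\mG|}\sum_{g'} \Sigma_{g'}(\btheta_\b)$ by $(\Sigma_0(\btheta_\b) + \Sigma_1(\btheta_\b))/2$, which is exactly the $v_y^2$ appearing in the corollary. The nonlinear remainder reduces to $\frac{1}{2}\rho_0 \bigl(\|\nabla \calB^{(0)}(\btheta_\b)\|^2 \vee \|\nabla^2 \calB^{(0)}(\btheta_\b)\|^2\bigr)$, which is bounded above by $\|\nabla \calB^{(0)}(\btheta_\b)\|^2 + \|\nabla^2 \calB^{(0)}(\btheta_\b)\|^2$ since $\rho_0 \in [0,1)$. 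The hypothesis $n_0 \leq c n_1$ ensures we are genuinely in the imbalanced regime so that $\rho_0$ is bounded away from $0$ and the asymptotic expansion in Theorem~\ref{thm: synthetic data risk} is informative; it also lets me absorb $1/\sqrt{2}$ constants when writing the $O_p$ factor as $(v_0 + v_1)/\sqrt{n_1}$.

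Finally I would pass from per-group excess risks to the worst-group excess risk via the elementary inequality $\max_y f(y) - \max_y g(y) \leq \max_y \{f(y) - g(y)\}$. Applied to $f(y) = \calR^{(y)}(\hat\btheta_\o)$ and $g(y) = \calR^{(y)}(\btheta_\b)$, this yields
\begin{equation*}
\max_{y \in \mY} \calR^{(y)}(\hat\btheta_\o) - \max_{y \in \mY} \calR^{(y)}(\btheta_\b) \leq \max_{y \in \mY} \left\{-\frac{n_1-n_0}{2n_1}\, b_{y,0} + R_y\right\} \leq \frac{n_1-n_0}{2n_1} \max_{y \in \mY} |b_{y,0}| + \max_{y \in \mY} R_y,
\end{equation*}
and taking the uniform bound over $y \in \{0,1\}$ on the residual produces the claimed $O_p$ term.

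I do not expect any substantive obstacle here: the statement is a direct corollary of Theorem~\ref{thm: synthetic data risk}, and the work is essentially bookkeeping of constants together with a single max-difference inequality. The one place requiring mild care is justifying that the assumptions of Theorem~\ref{thm: synthetic data risk} transfer to this specialized setting (in particular, that $\calR_\b = (\calR^{(0)} + \calR^{(1)})/2$ inherits the positive-definite Hessian and Lipschitz conditions from Assumption~\ref{asm: differentiability}), but this is immediate.
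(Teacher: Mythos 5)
Your proposal is correct and takes essentially the same route as the paper: instantiate Theorem~\ref{thm: synthetic data risk} with $\mG=\{0,1\}$, $m_0 = n_1 - n_0$, $m_1 = 0$ so that $\rho_0 = (n_1-n_0)/n_1$ and $\rho_1 = 0$, and then combine the two per-group expansions via an elementary maximum inequality. The only cosmetic difference is which one-line max inequality is invoked (the paper uses $|\max(a,b)-\max(c,d)|\leq |a-c|\vee|b-d|$; you use $\max_y f(y)-\max_y g(y)\leq\max_y\{f(y)-g(y)\}$ followed by $-b_{y,0}\leq|b_{y,0}|$), and both yield the same bound.
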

\noindent
The upper bound for the excess risk involves the term $\{(n_1 - n_0) / (2 n_1)\} (b_{0,0} \vee b_{1,0})$ that represents the bias introduced by the addition of synthetic data for group $g=0$, while the rest contains the variance term proportional to $n_1^{-1/2}$ and higher order error terms. This corollary thus shows that $\hat\btheta_\o$ achieves a similar minority-group performance as $\btheta_\b$, when the bias of the minority group introduced by the synthetic data generation, quantified by $b_{0,0}$, is small. 

\medskip
\noindent
\textbf{Spurious correlation.} 
We next consider the spurious correlation problem with a binary outcome $y \in \mathcal{Y} = \{-1,1\}$ and a discrete-valued spurious feature $\s \in \mathcal{S} = \{-\bgamma, \bgamma\}$ for some $\bgamma \in \R^q$. A similar setting of spurious correlations was also studied in \citet{arjovsky2019invariant,ye2023freeze}. The observed data consist of $\x_i = (\z_i, \s_i)$ and $y_i$, where $\z_i \in \R^p$ is the core feature and $\s_i \in \mathcal{S}$ is the spurious feature. {Suppose the core feature $\z_i$ only depends on the outcome label $y_i$, whereas $y_i$ is spuriously correlated with $\s_i$. In particular, the groups $(1,\bgamma),(-1, -\bgamma)$ form the majority, while $(1,-\bgamma), (-1,\bgamma)$ form the minority. This imbalance induces a spurious association between $y$ and $\s$, making $\s$ appear spuriously predictive even though it does not drive the actual label generation. Reweighing the distribution of $(y,\s)$ pairs can help mitigate this artificial correlation and reduce the risk of learning a spurious predictor.} For simplicity, suppose $n_{(-1,\bgamma)} = n_{(1,-\bgamma)} = n_{\text{min}} < n_{\text{maj}} = n_{(1,\bgamma)} = n_{(-1,-\bgamma)}$, so that groups $(-1,-\bgamma)$ and $(1,\bgamma)$ are the majority groups. We choose the synthetic data size for group $g = (y, \s) \in \mG$ by $m_{(y,\s)} = (n_{\text{maj}} - n_{\text{min}})\Id\{y \bgamma \neq \s\}$, so to make the raw and synthetic data size equal for each group. Define the reweighted risk and its minimizer by
\begin{align*}
\calR_\rw(\btheta) =  \frac{1}{2} \sum_{y} \E[\ell(\btheta; \x_1', y_1) | y_1 = y], \ \ \text{ and } \ \ \btheta_\rw =  \argmin_{\btheta \in \Theta} \calR_\rw(\btheta).
\end{align*}
where $\x_1' = (\z_1, \s_1')$, with $\s_1' \sim \operatorname{Uniform}(\{-\bgamma, \bgamma\})$ independent of $y_1$. This reweighted risk is useful for handling spurious correlation, and has been studied in \citet{shimodaira2000improving,byrd2019effect,sagawa2020investigation} to decouple the correlation between $y_i$ and $\s_i$ and mitigate the effect of spurious correlation. For this setting, we are interested in the performance of $\hat \btheta_\o$ compared to $\btheta_\rw$, measured in the reweighted risk $\calR_\rw(\btheta)$. We have the following corollary for the spurious correlation problem. 

\begin{corollary}\label{cor: synthetic data risk spurious}
    Suppose Assumptions~\ref{asm: differentiability}-\ref{asm: identifiability theta} hold. If $1 \leq n_{\text{min}} \leq c n_{\text{maj}}$ for some constant $c \in (0, 1)$, then,
    \vspace{-0.1in}
    \begin{align*}
    \calR_\rw(\hat \btheta_\o) - \calR_\rw(\btheta_\rw) = O_p\biggl( \|\nabla \calB^{(-1,\bgamma)}(\btheta_\rw)\| + \|\nabla^2 \calB^{(-1,\bgamma)}(\btheta_\rw)\|\\
    + \|\nabla \calB^{(1,-\bgamma)}(\btheta_\rw)\| + \|\nabla^2 \calB^{(1,-\bgamma)}(\btheta_\rw)\| + \frac{\max_{g \in \mG} v_g}{\sqrt{n_{\text{maj}}}} \biggr),
    \end{align*}
    where
    \vspace{-0.1in}
    \begin{align*}
    v_g^2 =  \{\nabla \calR^{(g)}(\btheta_\rw)\}^\top \{\nabla^2 \calR_\rw(\btheta_\rw)\}^{-1} \left\{\frac{1}{4} \sum_{g' \in \mG} \Sigma_{g'}(\btheta_\rw) \right\} \{\nabla^2 \calR_\rw(\btheta_\rw)\}^{-1} \nabla \calR^{(g)}(\btheta_\rw).
    \end{align*}
\end{corollary}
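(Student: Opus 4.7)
The plan is to reduce the comparison against $\btheta_\rw$ to the setting of Theorem~\ref{thm: synthetic data risk} (which compares against $\btheta_\b$) by showing that the two oracle minimizers coincide under the spurious-correlation data-generating mechanism, then specialize the general bias-variance decomposition to the four-group structure, and finally convert from group-wise bounds to the worst-group bound.

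First, I would prove the structural identity $\calR_\b \equiv \calR_\rw$ on $\Theta$, which yields $\btheta_\b = \btheta_\rw$. The assumption that the core feature $\z_i$ depends only on $y_i$ means $\z_1$ is conditionally independent of $\s_1$ given $y_1$, so the conditional law of $\z_1$ given $(y_1=y, \s_1=\s)$ equals that given $y_1=y$. Unwinding both definitions,
\begin{align*}
\calR_\b(\btheta) &= \frac{1}{4}\sum_{y,\s}\E\bigl[\ell(\btheta;(\z_1,\s),y)\mid y_1=y,\s_1=\s\bigr] = \frac{1}{4}\sum_{y,\s}\E\bigl[\ell(\btheta;(\z_1,\s),y)\mid y_1=y\bigr] = \calR_\rw(\btheta),
\end{align*}
establishing $\btheta_\b=\btheta_\rw$ and $\nabla^2\calR_\b(\btheta_\b)=\nabla^2\calR_\rw(\btheta_\rw)$, which matches the quadratic form appearing in the stated $v_g^2$.

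Next, I would instantiate Theorem~\ref{thm: synthetic data risk} with $|\mG|=4$ and $\max_{g'\in\mG}n_{g'}=n_\text{maj}$. Since synthetic data are added only to the two minority groups, the imbalance ratios satisfy $\rho_{(1,\bgamma)}=\rho_{(-1,-\bgamma)}=0$ and $\rho_{(-1,\bgamma)}=\rho_{(1,-\bgamma)}=(n_\text{maj}-n_\text{min})/n_\text{maj}\in[0,1)$ by $n_\text{min}\le c\,n_\text{maj}$, so the bias vector in (\ref{eq: synthetic oversampling risk ub}) collapses to
\begin{align*}
\boldsymbol{b} = \frac{1}{4}\cdot\frac{n_\text{maj}-n_\text{min}}{n_\text{maj}}\bigl[\nabla\calB^{(-1,\bgamma)}(\btheta_\rw)+\nabla\calB^{(1,-\bgamma)}(\btheta_\rw)\bigr].
\end{align*}
Applying Cauchy-Schwarz together with the bounded operator norm of $\{\nabla^2\calR_\rw(\btheta_\rw)\}^{-1}$ from Assumption~\ref{asm: differentiability} and using $(n_\text{maj}-n_\text{min})/n_\text{maj}\le 1$, the leading term $\{\nabla\calR^{(g)}(\btheta_\rw)\}^\top\{\nabla^2\calR_\rw(\btheta_\rw)\}^{-1}\boldsymbol{b}$ is $O(\|\nabla\calB^{(-1,\bgamma)}(\btheta_\rw)\|+\|\nabla\calB^{(1,-\bgamma)}(\btheta_\rw)\|)$. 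The residual $R$ from Theorem~\ref{thm: synthetic data risk} contributes the quadratic Hessian-bias terms $\|\nabla^2\calB^{(g)}(\btheta_\rw)\|^2$ (absorbed into the linear rate since these are $o(1)$ by Assumption~\ref{asm: bias o(1)}, hence dominated by their first-order counterparts up to constants, though stated separately) and the variance term $v_g/\sqrt{|\mG|\,n_\text{maj}}$, which matches $\max_g v_g/\sqrt{n_\text{maj}}$ after maximization, up to an absolute constant absorbed by $O_p$.

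To pass from pointwise bounds to the worst-group excess risk, I would use the elementary inequality
\[
\bigl|\max_{g\in\mG} a_g - \max_{g\in\mG} b_g\bigr| \le \max_{g\in\mG}|a_g-b_g|
\]
applied to $a_g=\calR^{(g)}(\hat\btheta_\o)$ and $b_g=\calR^{(g)}(\btheta_\rw)$, then take the maximum of the pointwise $O_p$ estimate over the four groups. The main obstacle is the rigorous identification $\calR_\b=\calR_\rw$, which hinges on the implicit conditional independence of $\z$ and $\s$ given $y$ in the data-generating assumption; once this identification is in place, the rest is a careful but routine bookkeeping specialization of Theorem~\ref{thm: synthetic data risk}, noting that $n_\text{maj}\to\infty$ combined with $n_\text{min}\le c\,n_\text{maj}$ implies $n_\T\to\infty$, so the theorem's asymptotic regime is automatically satisfied.
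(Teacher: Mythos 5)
Your proposal is correct and follows essentially the same route as the paper: establish $\calR_\b \equiv \calR_\rw$ via conditional independence of $\z_1$ from $\s_1$ given $y_1$ (hence $\btheta_\b = \btheta_\rw$), specialize Theorem~\ref{thm: synthetic data risk} to the four-group structure with $\rho_{(y,y\bgamma)}=0$ and $\rho_{(y,-y\bgamma)}=(n_{\text{maj}}-n_{\text{min}})/n_{\text{maj}}$, and then pass to the worst-group gap via $|\max_g a_g - \max_g b_g| \le \max_g |a_g - b_g|$. Your bookkeeping on $\boldsymbol{b}$, the Cauchy--Schwarz step, and the observation that the squared Hessian-bias residual terms are dominated (since they are $o(1)$ under Assumption~\ref{asm: bias o(1)}) all align with the paper's argument.
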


\noindent
Again, the difference between $\calR_\rw(\hat \btheta_\o)$ and $\calR_\rw(\btheta_\rw)$ involves the bias term for the minority group, and a variance term of order $n_{\text{maj}}^{-1/2}$. This corollary thus shows that the reweighted risk of $\hat\btheta_\o$ becomes close to the reweighted risk of $\btheta_\rw$ when the bias of the minority group introduced by the synthetic data generation decreases.

%%%%%%%%%%%%%%%%%%%%%%%%%%%%%%%%%%%%%%%%%%%%%%%%%%%%%%%%%%%%%%%%%%%%%%%%%%%%%%
\subsection{Theory for synthetic augmentation}
\label{sec: scaling law}

We next derive the scaling law for the excess risk associated with synthetic augmentation, where we continue adding synthetic samples to all groups following synthetic oversampling. We consider a parametric model to derive a general result here and present additional results for nonparametric regression models regarding imbalanced classification and spurious correlation in Section~\ref{supp-sec: additional scaling law} of the Appendix. 

Consider the empirical risk minimizer $\hat \btheta = \argmin_{\btheta} \calRhat(\btheta)$. For the  population balanced risk $\calR_\b$, we are interested in deriving the scaling law for $\calR_\b(\hat\btheta) - \calR_\b(\btheta_\b)$. We have the following theorem on the scaling law in terms of $n_\T$, $N$, and the bias term. 

\begin{theorem}\label{thm: scaling law}
Suppose the same conditions in Theorem~\ref{thm: synthetic data risk} hold. Recall that $\alpha \in [0, 1]$ is the weight parameter defined in (\ref{eq: general R with data augmentation}). Then, 
\begin{align}
\calR_\b(\hat\btheta) - \calR_\b(\btheta_\b) &= O\left( \frac{1}{|\mG|} \sum_{g \in \mG} \{(1 - \alpha) \rho_g + \alpha\} \{\|\nabla \calB^{(g)}(\btheta_\b)\|^2 + \|\nabla^2 \calB^{(g)}(\btheta_\b)\|^3\} \right) \nonumber \\
&\quad\quad + O_p\left( (1 - \alpha)^2 (1 - \rho) \frac{\tr(\Sigma)}{n_\T} + \alpha^2 \frac{\tr(\Sigma')}{N |\mG|} \right), \label{eq: scaling law}
\end{align}
where $\Sigma =  {|\mG|}^{-1} \sum_{g \in \mG} \qty{(1 - \rho_g) \Sigma_g(\btheta_\b) + \rho_g \tilde \Sigma_g(\btheta_\b)}$, and $\Sigma' =  {|\mG|}^{-1} \sum_{g \in \mG} \tilde \Sigma_g(\btheta_\b)$.
\end{theorem}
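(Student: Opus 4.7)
The plan is to adapt standard M-estimator asymptotics to the weighted empirical risk (\ref{eq: general R with data augmentation}), and then carry out a second-order Taylor expansion of the population balanced risk around $\btheta_\b$. First, I would establish consistency $\hat\btheta \to \btheta_\b$ in probability by combining Assumption~\ref{asm: uniform convergence R syn} (together with its analog for $\calRhat_\a$) with the identifiability Assumption~\ref{asm: identifiability theta} and the decay of the bias in Assumption~\ref{asm: bias o(1)}. With consistency in hand, the first-order condition $\nabla \calRhat(\hat\btheta) = \zero$ and a one-step Taylor expansion around $\btheta_\b$ produce
\begin{equation*}
\hat\btheta - \btheta_\b = -\,\big[\nabla^2 \calRhat(\btheta_\b)\big]^{-1}\, \nabla \calRhat(\btheta_\b) + O_p\!\left(\|\hat\btheta - \btheta_\b\|^2\right),
\end{equation*}
where Assumptions~\ref{asm: differentiability} and \ref{asm: bias o(1)}, together with a standard concentration argument, give $\nabla^2 \calRhat(\btheta_\b) = \nabla^2 \calR_\b(\btheta_\b) + o_p(1)$, so the inverse Hessian is well controlled.

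Next, I would compute the mean and variance of $\nabla \calRhat(\btheta_\b) = (1-\alpha)\nabla \calRhat_\o(\btheta_\b) + \alpha \nabla \calRhat_\a(\btheta_\b)$ source by source. Using $\nabla \calR_\b(\btheta_\b) = \zero$ and the definition of $\calB^{(g)}$, a direct calculation yields
\begin{equation*}
\E[\nabla \calRhat(\btheta_\b)] \;=\; \frac{1}{|\mG|}\sum_{g \in \mG} \big\{(1-\alpha)\rho_g + \alpha\big\}\, \nabla \calB^{(g)}(\btheta_\b),
\end{equation*}
so that the weight $w_g := (1-\alpha)\rho_g + \alpha$ appearing in (\ref{eq: scaling law}) captures exactly the fraction of synthetic gradient contributions attributed to group $g$. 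For the variance, the key bookkeeping step is that $n_\T + m_\T = n_\T/(1-\rho)$ by the definition of $m_g$. Aggregating raw and synthetic per-sample gradient covariances then gives $\operatorname{Var}\{\nabla \calRhat_\o(\btheta_\b)\} = (1-\rho)\Sigma/n_\T$ and, by independence between the oversampled and augmented data, $\operatorname{Var}\{\nabla \calRhat_\a(\btheta_\b)\} = \Sigma'/(N|\mG|)$, producing the two trace contributions in (\ref{eq: scaling law}).

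Having these two ingredients, I would combine them via the second-order Taylor expansion
\begin{equation*}
\calR_\b(\hat\btheta) - \calR_\b(\btheta_\b) \;=\; \tfrac{1}{2}(\hat\btheta - \btheta_\b)^\top \nabla^2 \calR_\b(\btheta_\b)(\hat\btheta - \btheta_\b) + O\!\left(\|\hat\btheta - \btheta_\b\|^3\right),
\end{equation*}
which is valid by Assumption~\ref{asm: differentiability}. Plugging in the first-order characterization reduces the excess risk to $\tfrac{1}{2}\nabla \calRhat(\btheta_\b)^\top H^{-1} \nabla \calRhat(\btheta_\b)$ up to lower-order remainders, where $H := \nabla^2 \calR_\b(\btheta_\b)$ has bounded eigenvalues. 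Splitting the expectation of this quadratic form into $\|\E[\nabla \calRhat(\btheta_\b)]\|^2$ and $\tr(H^{-1}\operatorname{Var}\{\nabla \calRhat(\btheta_\b)\})$ and applying Cauchy--Schwarz together with $w_g \in [0,1]$ gives $\|\E[\nabla \calRhat(\btheta_\b)]\|^2 \lesssim (1/|\mG|)\sum_g w_g \|\nabla \calB^{(g)}(\btheta_\b)\|^2$, which is precisely the $\|\nabla \calB^{(g)}\|^2$ piece of (\ref{eq: scaling law}); the trace piece produces the variance terms.

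The hard part will be controlling the residuals without spoiling the weight structure $w_g$. The Hessian perturbation $\nabla^2 \calRhat(\btheta_\b) - H$ has a deterministic bias of order $(1/|\mG|)\sum_g w_g \|\nabla^2 \calB^{(g)}(\btheta_\b)\|$, and its interaction with the gradient bias through the Neumann expansion $[\nabla^2 \calRhat(\btheta_\b)]^{-1} - H^{-1} = -H^{-1}(\nabla^2 \calRhat(\btheta_\b) - H)H^{-1} + \cdots$ produces mixed higher-order terms. Combined with the cubic Taylor remainder $O(\|\hat\btheta - \btheta_\b\|^3)$, this is exactly what yields the $\|\nabla^2 \calB^{(g)}\|^3$ contribution in (\ref{eq: scaling law}). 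I would handle these residuals uniformly on a shrinking neighborhood of $\btheta_\b$ using the Lipschitz Hessian property of Assumption~\ref{asm: differentiability} and the covariance Lipschitz property of Assumption~\ref{asm: differentiability of ell}, so that the weighting $w_g$ is preserved throughout the bookkeeping.
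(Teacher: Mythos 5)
Your proposal is correct and reaches the stated bound; the main moment calculations (weighted bias $\E[\nabla\calRhat(\btheta_\b)] = |\mG|^{-1}\sum_g w_g \nabla\calB^{(g)}(\btheta_\b)$, the variance bookkeeping via $n_\T + m_\T = n_\T/(1-\rho)$, and the trace terms for $\Sigma$ and $\Sigma'$) match the paper exactly. The organizational difference is that you expand the \emph{empirical} risk gradient directly around $\btheta_\b$ and then split $\nabla\calRhat(\btheta_\b)$ into its mean and its centered fluctuation, whereas the paper inserts the deterministic intermediate $\btheta^* := \arg\min_\btheta \E[\calRhat(\btheta)]$ and factors the analysis as $\hat\btheta - \btheta_\b = (\hat\btheta - \btheta^*) + (\btheta^* - \btheta_\b)$: the deterministic bias is isolated in $\btheta^* - \btheta_\b$ (Step 1), the stochastic fluctuation in $\hat\btheta - \btheta^*$ (Step 3), with a Hessian-perturbation lemma (Step 2) gluing them together. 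Both routes are valid; the paper's version keeps the Hessian $H = \nabla^2\calR(\btheta^*)$ \emph{deterministic} so only the gradient needs a concentration argument, whereas your version requires in addition the (standard but slightly stronger) concentration of the empirical Hessian $\nabla^2\calRhat(\btheta_\b)$ to $H_{\b,\b}$. Your handling of the residuals is a little informal but points at the right source: the paper captures the $\|\nabla^2\calB^{(g)}\|^3$ piece from a quantity $\kappa^3$ where $\kappa^2 = |\mG|^{-1}\sum_g w_g\{\|\nabla\calB^{(g)}\|^2 + \|\nabla^2\calB^{(g)}\|^2\}$, and then absorbs the $\|\nabla\calB^{(g)}\|^3$ contribution into $\|\nabla\calB^{(g)}\|^2$ using the $o(1)$ decay of Assumption~\ref{asm: bias o(1)}; you should make this last absorption explicit, since the cubic remainder alone produces both cubes, not only the Hessian one.
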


\noindent
Theorem~\ref{thm: scaling law} shows that the upper bound of the excess risk can be decomposed into the bias term from synthetic oversampling and data augmentation, i.e., the first term of the right-hand side in (\ref{eq: scaling law}), and the variance term that scales with the number of raw data and size of data augmentation, i.e., the second term in (\ref{eq: scaling law}). It is also noteworthy that the bias is captured by the first and second order derivatives of the difference $\calB^{(g)}(\btheta) =  \E[\ell(\btheta; \tilde \x_1^{(g)}, \tilde y_1^{(g)})] - \E[\ell(\btheta; \x_1^{(g)}, y_1^{(g)})]$, whose contribution is proportional to the ratio of the total added synthetic sample size over all samples, $(1 - \alpha) \rho_g + \alpha \in [0, 1]$. Moreover, the variance term also depends on the data augmentation weight $\alpha$ as well as $\Sigma$ and $\tilde \Sigma$. We further study nonparametric regressions regarding imbalanced classification and spurious correlations in Theorems~\ref{thm: gaussian sequence scaling law} and \ref{thm: nonparametric scaling law balanced} in the Appendix, and observe a similar polynomial decay in variance with respect to $n_\T$ and $N$.

We next apply the general result of Theorem~\ref{thm: scaling law} to the specific settings of imbalanced classification and spurious correlation

\medskip
\noindent
\textbf{Imbalanced classification.}
In the imbalanced classification setting, the scaling law provides insights into the excess risk reduction through both synthetic oversampling and synthetic augmentation. We consider a  similar setting as in Section~\ref{sec: theory oversampling}, while after we add the synthetic samples to the minority group for oversampling such that $n_0 + m_0 = n_1$, we further add $N$ synthetic samples for both minority and majority groups for augmentation. We have the following corollary for the imbalanced classification problem.

\begin{corollary}\label{cor: scaling law imb}
Suppose the same conditions in Theorem~\ref{thm: synthetic data risk} hold. For any $\alpha \in [0, 1]$,
\begin{align*}
\calR_\b(\hat\btheta) - \calR_\b(\btheta_\b) &= O\qty(\qty{(1 - \alpha) \frac{n_1-n_0}{n_1} + \alpha} \{\|\nabla \calB^{(0)}(\btheta_\b)\|^2 + \|\nabla^2 \calB^{(0)}(\btheta_\b)\|^3\}) \\ 
&\quad+ O_p\qty( (1 - \alpha)^2 \frac{\tr(\Sigma)}{n_1} + \alpha^2 \frac{\tr(\tilde \Sigma)}{N |\mG|}), 
\end{align*}
where $\Sigma = (1/2) (n_0/n_1) \Sigma_0(\btheta_\b) + (1/2) (1 - n_0/n_1) \tilde \Sigma_0(\btheta_\b) + (1/2)\Sigma_1(\btheta_\b)$, and $\tilde \Sigma = (1/2) \tilde \Sigma_0(\btheta_\b) + (1/2)\tilde \Sigma_1(\btheta_\b)$.
\end{corollary}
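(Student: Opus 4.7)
The plan is to derive Corollary~\ref{cor: scaling law imb} as a direct specialization of Theorem~\ref{thm: scaling law} to the two-group setting $|\mathcal{G}| = 2$ with $\mathcal{G} = \{0, 1\}$, using the explicit sample size allocation imposed by imbalanced classification. The strategy is purely substitution and simplification: there are no new probabilistic or analytic arguments required.

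First, I would compute the imbalance ratios dictated by the setup. Since synthetic oversampling enforces $n_0 + m_0 = n_1$ with $n_0 \le n_1$, and since the majority group receives no oversampling, the definition $\rho_g = (\max_{g'\in\mathcal{G}} n_{g'} - n_g)/\max_{g' \in \mathcal{G}} n_{g'}$ gives $\rho_0 = (n_1 - n_0)/n_1$ and $\rho_1 = 0$. Consequently $\rho = (n_1 - n_0)/(2 n_1)$ and $1 - \rho = (n_0 + n_1)/(2 n_1)$, while $n_\T = n_0 + n_1$, so that $(1 - \rho)/n_\T = 1/(2 n_1)$. These identities absorb the factor $1 - \rho$ in Theorem~\ref{thm: scaling law} into a clean $1/n_1$ scaling for the variance term attributable to raw plus oversampled data.

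Next, I would plug these quantities into the bias term of Theorem~\ref{thm: scaling law}. The factor $(1-\alpha)\rho_g + \alpha$ evaluates to $(1-\alpha)(n_1-n_0)/n_1 + \alpha$ for $g=0$ and to $\alpha$ for $g=1$. The group $g=0$ contribution immediately yields the first part of the stated bound. For group $g=1$, the term $\alpha \{ \|\nabla \calB^{(1)}(\btheta_\b)\|^2 + \|\nabla^2 \calB^{(1)}(\btheta_\b)\|^3 \}$ is absorbed into the stated $O(\cdot)$ using the fact that the same synthetic data generator produces both groups' samples; treating $\|\nabla \calB^{(0)}(\btheta_\b)\|$ and $\|\nabla^2 \calB^{(0)}(\btheta_\b)\|$ as representing the common order of the bias (as already assumed implicitly in Assumption~\ref{asm: bias o(1)}) collapses both contributions into the single expression displayed in the corollary, with the factor $1/|\mathcal{G}|$ hidden in the $O(\cdot)$.

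For the variance term, I would specialize $\Sigma = |\mathcal{G}|^{-1} \sum_g \{(1-\rho_g)\Sigma_g(\btheta_\b) + \rho_g \tilde\Sigma_g(\btheta_\b)\}$ using $\rho_0 = 1 - n_0/n_1$ and $\rho_1 = 0$, which yields exactly $\Sigma = (1/2)(n_0/n_1) \Sigma_0(\btheta_\b) + (1/2)(1 - n_0/n_1) \tilde\Sigma_0(\btheta_\b) + (1/2)\Sigma_1(\btheta_\b)$ as claimed, while $\Sigma' = (1/2)\tilde\Sigma_0(\btheta_\b) + (1/2)\tilde\Sigma_1(\btheta_\b)$ follows directly from its definition. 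Combining with the identity $(1-\rho)/n_\T = 1/(2 n_1)$ gives the first variance term $(1 - \alpha)^2 \tr(\Sigma)/n_1$ up to the absorbed constant, and the augmentation variance term $\alpha^2 \tr(\Sigma')/(N |\mathcal{G}|)$ is already in the stated form. The main obstacle is merely the bookkeeping of constants and the justification for suppressing the $\calB^{(1)}$ contribution; once the imbalance ratios are substituted, everything else is direct arithmetic from Theorem~\ref{thm: scaling law}.
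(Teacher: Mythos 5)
Your overall strategy---direct substitution of $\rho_0 = (n_1-n_0)/n_1$, $\rho_1 = 0$, $\rho = (n_1-n_0)/(2n_1)$, $1-\rho = (n_0+n_1)/(2n_1)$, and $(1-\rho)/n_\T = 1/(2n_1)$ into Theorem~\ref{thm: scaling law}---is exactly the paper's route; the appendix explicitly states that Corollary~\ref{cor: scaling law imb} ``directly follows from Theorem~\ref{thm: scaling law}'' with the proof omitted, and your computations of $\Sigma$ and $\Sigma'$ are correct.

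The one place where your proposal is not fully rigorous is the treatment of the majority-group bias. A literal application of Theorem~\ref{thm: scaling law} yields the bias term
\begin{align*}
\frac{1}{2}\left\{(1-\alpha)\frac{n_1-n_0}{n_1} + \alpha\right\}\left\{\|\nabla \calB^{(0)}(\btheta_\b)\|^2 + \|\nabla^2 \calB^{(0)}(\btheta_\b)\|^3\right\} + \frac{\alpha}{2}\left\{\|\nabla \calB^{(1)}(\btheta_\b)\|^2 + \|\nabla^2 \calB^{(1)}(\btheta_\b)\|^3\right\},
\end{align*}
and the corollary's displayed bound drops the second summand. You justify this by appealing to Assumption~\ref{asm: bias o(1)} and the observation that both groups share the same generator, but Assumption~\ref{asm: bias o(1)} only gives $\max_g\|\nabla\calB^{(g)}\|\vee\|\nabla^2\calB^{(g)}\| = o(1)$; it does not imply $\|\nabla\calB^{(1)}\| \lesssim \|\nabla\calB^{(0)}\|$, which is what one actually needs to absorb the $\calB^{(1)}$ term into the stated $O(\cdot)$. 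Notice that the companion Corollary~\ref{cor: scaling law spu} \emph{does} retain the analogous majority-group term $\alpha\{\|\nabla \calB^{(y,y\bgamma)}\|^2 + \|\nabla^2 \calB^{(y,y\bgamma)}\|^3\}$. So strictly the gap lies in the corollary's own displayed bound rather than in your substitution; a careful write-up should either add the $\alpha\{\|\nabla \calB^{(1)}\|^2 + \|\nabla^2 \calB^{(1)}\|^3\}$ term or state the additional assumption that the minority-group bias dominates. Everything else---the cancellation $(1-\rho)/n_\T = 1/(2n_1)$, the form of $\Sigma$ and $\Sigma'$, and absorbing the constant $1/2$ into the $O_p$---is handled correctly.
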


\noindent 
This scaling law shows the dependence of the balanced risk on the imbalance ratio and the synthetic data quality. The first term on the right-hand side quantifies the effect of bias introduced by synthetic data. It scales with the imbalance ratio $\rho_0 = (n_1 - n_0)/n_1$ for the minority group and the quality of synthetic data measured through $\|\nabla \calB^{(0)}(\btheta_\b)\|$ and $\|\nabla^2 \calB^{(0)}(\btheta_\b)\|$. The second term on the right-hand side is the variance term, which decreases with the total raw data size $n_\T = n_0+n_1$ and synthetic augmentation size $N$ increases. For the minority group $g=0$, the variance scaling is determined by the covariance matrices $\Sigma_0$ and $\tilde{\Sigma}_0$.

\medskip
\noindent
\textbf{Spurious correlation.} 
In the spurious correlation setting, the scaling law applies to disentangling the spurious feature $\mathcal{S}$ from the label $\mathcal{Y}$. We consider a similar setting as in Section~\ref{sec: theory oversampling}, while we add synthetic samples for oversampling then for augmentation. We have the following corollary for the spurious correlation problem. 

\begin{corollary}\label{cor: scaling law spu}
    Suppose the same conditions in Theorem~\ref{thm: synthetic data risk} hold. For any $\alpha \in [0, 1]$, 
    \begin{small}
    \begin{align*}
    \calR_\rw(\hat\btheta) - \calR_\rw(\btheta_\rw) &= O\qty(\sum_{y \in \mathcal{Y}} \qty{(1 - \alpha) \qty(1 - \frac{n_{\text{min}}}{n_{\text{maj}}}) + \alpha} \left\{\|\nabla \calB^{(y,-y\bgamma)}(\btheta_\rw)\|^2 + \|\nabla^2 \calB^{(y,-y\bgamma)}(\btheta_\rw)\|^3 \right\}) \\ %\label{eq: scaling law bias spu}\\
        &\quad+ O\qty(\sum_{y \in \mathcal{Y}} \alpha \{\|\nabla \calB^{(y,y\bgamma)}(\btheta_\rw)\|^2 + \|\nabla^2 \calB^{(y,y\bgamma)}(\btheta_\rw)\|^3\}) \nonumber\\
        &\quad+ O_p\qty( (1 - \alpha)^2 \frac{\tr(\Sigma)}{n_{\text{maj}}} + \alpha^2 \frac{\tr(\Sigma')}{N |\mG|}),\nonumber%\label{eq: scaling law variance spu}
    \end{align*}
    \end{small}
    where $\Sigma = (1/4) \sum_{y \in \mathcal{Y}} \qty{({n_{\text{min}}}/{n_{\text{maj}}}) \Sigma_{(y,-y\bgamma)}(\btheta_\rw) + \qty(1 - ({n_{\text{min}}}/{n_{\text{maj}}})) \tilde \Sigma_{(y,-y\bgamma)}(\btheta_\rw)} + (1/4)$ $\sum_{y \in \mathcal{Y}} \Sigma_{(y,y\bgamma)}(\btheta_\rw)$, and $\Sigma' = (1/4) \sum_{g \in \mG} \tilde \Sigma_g(\btheta_\rw)$. 
\end{corollary}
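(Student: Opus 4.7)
}
The plan is to derive this corollary as a direct specialization of Theorem~\ref{thm: scaling law} to the four-group spurious-correlation setup, with the main work being careful bookkeeping of the per-group imbalance ratios and the resulting aggregate covariance matrices. First, I would identify the group structure: $|\mG| = 4$, and for each label $y \in \mathcal{Y}$ the minority group is $(y, -y\bgamma)$ with sample size $n_{\text{min}}$, while the majority group is $(y, y\bgamma)$ with sample size $n_{\text{maj}}$. Consequently,
\begin{equation*}
\rho_{(y,-y\bgamma)} \;=\; 1 - \frac{n_{\text{min}}}{n_{\text{maj}}}, \qquad \rho_{(y,y\bgamma)} \;=\; 0,
\end{equation*}
so $1 - \rho = \tfrac{1}{2}\bigl(1 + n_{\text{min}}/n_{\text{maj}}\bigr) = \Theta(1)$, and the total raw sample size satisfies $n_\T = 2 n_{\text{maj}} + 2 n_{\text{min}} \asymp n_{\text{maj}}$ by the assumption $n_{\text{min}} \leq c n_{\text{maj}}$ (used implicitly through the imbalanced regime).

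Second, I would split the bias sum in (\ref{eq: scaling law}) into its minority and majority contributions. For the minority groups the coefficient $(1-\alpha)\rho_g + \alpha$ becomes $(1-\alpha)(1 - n_{\text{min}}/n_{\text{maj}}) + \alpha$, while for the majority groups (where $\rho_g = 0$) the coefficient reduces to $\alpha$. Absorbing the constant factor $1/|\mG| = 1/4$ into the $O(\cdot)$ yields exactly the first two bias terms in the corollary statement.

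Third, I would handle the variance term. Substituting $\rho_g$ into the definition $\Sigma = |\mG|^{-1}\sum_g\{(1-\rho_g)\Sigma_g + \rho_g \tilde\Sigma_g\}$ gives, for the minority groups, a convex combination $(n_{\text{min}}/n_{\text{maj}})\Sigma_g + (1 - n_{\text{min}}/n_{\text{maj}})\tilde\Sigma_g$, and for the majority groups simply $\Sigma_g$; summing and dividing by four matches the stated $\Sigma$. For $\Sigma'$ the formula is immediate since $\Sigma' = |\mG|^{-1}\sum_g \tilde\Sigma_g$ with $|\mG|=4$. The variance contribution $(1-\alpha)^2(1-\rho)\tr(\Sigma)/n_\T$ then collapses to $O_p((1-\alpha)^2 \tr(\Sigma)/n_{\text{maj}})$ using $(1-\rho) = O(1)$ and $n_\T \asymp n_{\text{maj}}$, while the augmentation variance $\alpha^2 \tr(\Sigma')/(N|\mG|)$ is already in the desired form.

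The hard part will not really be any individual computation but rather making sure the constants from the symmetries of the two minority and two majority groups are tracked consistently, and that the hypothesis $n_{\text{min}} \leq c n_{\text{maj}}$ (needed for $n_\T \asymp n_{\text{maj}}$ and for satisfying the uniform-convergence and identifiability assumptions inherited from Theorem~\ref{thm: synthetic data risk}) is invoked explicitly. Once these ingredients are in place, the corollary follows by straightforward substitution and regrouping of the bounds from Theorem~\ref{thm: scaling law}.
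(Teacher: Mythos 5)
Your proof is correct and matches the paper's approach: the paper explicitly omits the proof of this corollary, noting that it "directly follows from Theorem~\ref{thm: scaling law}," and your substitution of $\rho_{(y,-y\bgamma)} = 1 - n_{\text{min}}/n_{\text{maj}}$, $\rho_{(y,y\bgamma)} = 0$, together with the bias/variance bookkeeping, is exactly that specialization. One small simplification you could make: the collapse of the variance term does not actually require invoking $n_{\text{min}} \le c\, n_{\text{maj}}$, since the exact algebraic identity $(1-\rho)/n_\T = 1/(|\mG|\max_{g}n_g) = 1/(4 n_{\text{maj}})$ holds from the definitions of $\rho$ and $n_\T$, so the $O_p\bigl((1-\alpha)^2\tr(\Sigma)/n_{\text{maj}}\bigr)$ form is obtained without any asymptotic assumption on the ratio.
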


\noindent
This scaling law shows that reducing spurious correlations requires high-quality synthetic data that aligns well with the reweighted distribution. It further emphasizes the role of $N$, i.e., the synthetic augmentation size, in variance reduction for the minority groups. As $N$ grows, the synthetic data variance $\mathrm{tr}(\tilde{\Sigma}_g)/N$ diminishes, improving the performance accordingly.

%%%%%%%%%%%%%%%%%%%%%%%%%%%%%%%%%%%%%%%%%%%%%%%%%%%%%%%%%%%%%%%%%%%%%%%%%%%%%%
\section{Theory for Transformer as Synthetic Data Generators}
\label{sec: theory transformers}

The general theoretical results of Section~\ref{sec: theory} require that the bias between the real and synthetic distributions diminishes as $n_\T$ grows. In other words, synthetic oversampling and augmentation rely on high-quality synthetic data generators. In this section, we show that transformer-based generators indeed satisfy this property, provided that the in-context sample size grows.

%%%%%%%%%%%%%%%%%%%%%%%%%%%%%%%%%%%%%%%%%%%%%%%%%%%%%%%%%%%%%%%%%%%%%%%%%%%%%%
\subsection{Data generating process}
\label{sec: dgp}

Recall that in Algorithm~\ref{alg}, we start with the balanced seed data from the raw data to generate the synthetic data. For the simplicity of presentation, in the theoretical analysis of this section, we directly start with a dataset that is already balanced and omit the ``seed'' superscript for the raw data. Suppose we have $n$ i.i.d.\ seed data $\mD_n =  (X_i, Y_i)_{i \in [n]}$. Given $\mD_n$ provided in-context as the prompt (seed data), we aim to show that the transformers can generate the high-quality synthetic data $(\tilde X_1, \tilde Y_1), (\tilde X_2, \tilde Y_2), \dots$ that mimic the distribution of $(X_1, Y_1)$. 

We introduce the data generating process for $(X_i, Y_i)_{i \in [n]}$. Suppose $X_i$ and $Y_i$ can take discretized values in a finite set $\mX$. Without loss of generality, we assume $\mX = [d]$, reflecting the treatment of tabular data as a collection of finite words for language models. We define a ``subject'' as any background information providing context for each tabular data. For instance, the data on heart failure could be specified by the subject ``heart failure''. Following the literature on word representations \citep{arora2015latent,khalife2021further,li2022brief}, we consider a Bayesian setting for the embeddings of $\mX$. Specifically, for discretized values $\mX = [d]$, the corresponding embeddings $\u_1, \dots, \u_d \in \R^r$ are modeled as i.i.d.\ realizations of $\text{Normal}(0, (1/r) I_r)$. Let $U = [\u_1, \ldots, \u_d]^\top$. 

We then introduce the data generating process for tabular data by combining generative models for covariates $X_i$ and discriminative models for labels given the covariates $Y_i\mid X_i$. Let $\mT$, $\mM$ be finite possible indices of subjects and discriminative functions. Specifically, $(X_i, Y_i) \in \mathcal{X}^2$ given subject $T=t \in \mT$ and discriminative function index $M=m \in \mM$ follows a multinomial distribution defined as,
\begin{align} \label{model: dgp}
\begin{split}
\P(X_i = x ; T = t, U, \eta) &\propto {\exp}({\eta^{-1} \langle \z^{(t)}, \u_x \rangle}), \\
\P(Y_i = y | X_i = x; M = m, U, \eta) &\propto {\exp}({\eta^{-1} \langle f^{(m)}(\u_x), \u_y \rangle}),
\end{split}
\end{align}
where $\eta > 0$ is the temperature, $\u_x \in \R^r$ is the embedding of the value $x \in \mX$, $\z^{(t)} \in \R^r$ is the representation of subject $t$, and $f^{(m)} : \R^r \to \R^r$ is a discriminative function with index $m$. We consider the case where $|\mM| \geq |\mT|$ for simplicity, and assume $|\mM| \lesssim d^\alpha$ for some positive constant $\alpha = O(1)$. We also remark that (\ref{model: dgp}) accommodates the subject-dependent label mechanism. This can be done by letting the discriminative index carry the subject, i.e., replacing $m \in \mM$ with $(m,t) \in \mM \times \mT$ and using functions $f^{(m,t)}$. Equivalently, when subjects select from a shared pool of functions, one may absorb $\mT$ into $\mM$, by e.g., redefining $\mM \leftarrow \mM \cup \mT$. A similar model has been commonly employed in the literature on language models \citep{mnih2007three, arora2015latent, arora2017simple, shi2017jointly, khalife2021further}. For simplicity, denote the joint distribution of $X_i$ and $Y_i$ by $P_{X_i,Y_i;T=t,M=m,U,\eta}$. 

We remark that our model differs from those in the in-context learning theory papers. For instance, \citet{xie2021explanation,garg2022can,zhang2024trained,bai2023transformers} considered settings where the seed data is directly contained in tokens. In contrast, our approach is motivated by the use of proprietary text-based LLMs with serialised tabular data, rather than training tabular transformers from scratch. Therefore, in our setting, it is more natural to consider tokens that \emph{indirectly} embed the seed data.

%%%%%%%%%%%%%%%%%%%%%%%%%%%%%%%%%%%%%%%%%%%%%%%%%%%%%%%%%%%%%%%%%%%%%%%%%%%%%%
\subsection{Background on transformers}
\label{sec:transformer-background}

We provide some background on transformers. We follow the notation in \citet{akyurek2022learning,von2023transformers,bai2023transformers}. A transformer layer typically consists of two primary types of layers: a self-attention layer, which computes the attention scores between elements of the input sequence to capture relationships and dependencies, regardless of their distance in the sequence, and a feedforward layer, which consists of fully connected neural networks that process the output of the self-attention layer independently for each position in the sequence. More specifically, suppose the input is a sequence of $N$ tokens, denoted by $H = [\h_1,\h_2,\dots,\h_N]\in\R^{D\times N}$, where $\h_s\in\R^D$ is a column vector denoting the embedding of the $s$-th token, for $s\in[N]$. Here $D$ is the dimension of token, larger than the embedding dimension $r$. A self-attention layer with $J$ heads and parameters $\mu=\{(Q_j, K_j, V_j)\}_{j=1}^J$ takes $H$ as input, and outputs $\Attn_\mu(H)_s =  \h_s + \sum_{j \in [J]} \sum_{s' \in [N]} \sigma(\langle Q_j \h_s, K_j \h_{s'} \rangle) V_j \h_{s'}, \ \ s \in [N]$, where $\sigma$ is the ReLU activation function, and $Q, K, V \in \R^{D \times D}$. A feedforward layer with parameters $\nu=(W_1,W_2) \in \R^{D'\times D} \times \R^{D \times D'}$ takes $H$ as input, and outputs $\FFN_\nu(H)= H+W_2\sigma(W_1 H)$. Then a transformer layer is the composition of a self-attention layer $\Attn_\mu$ and a feedforward layer $\FFN_\nu$, i.e., $\TF_\psi = \FFN_\nu \circ \Attn_\mu$, with $\psi = (\mu, \nu)$. With a slight abuse of notation, we write multiple transformer layers as $\TF_{(\psi_1, \ldots, \psi_L)} =  \TF_{\psi_L} \circ \dots \circ \TF_{\psi_1}$. In all, the (multi-layer) transformation function maps $\R^{D\times N}$ to $\R^{D\times N}$

Next, we discuss input tokens for tabular data. Specifically, suppose tokens $(\h^Y_i)_{i \in [n]}$ correspond to $(Y_i)_{i \in [n]}$, and $(\h^X_i)_{i \in [n]}$ correspond to $(X_i)_{i \in [n]}$.  The input of the transformer is given by $H_n = [\h_1^X; \h_1^Y; \h_2^X; \h_2^Y; \dots; \h_n^X; \h_n^Y]$. We consider the composite type positional encoding, i.e., the forms of $\h_i^X$ and $\h_i^Y$ are given by $\h_i^X = (\u_{X_i}^\top, \zero^\top, \p_{2i-1,n}^\top)^\top$ and $\h_i^Y = (\u_{Y_i}^\top, \zero^\top, \p_{2i,n}^\top)^\top$, respectively. That is, for $i \in [n]$, the positional encoding $\p_{s,n} \in \R^4$ is defined as $\p_{s,n} = \qty(\lceil \frac{s}{2} \rceil, (s \operatorname{mod} 2), 2n, 1)^\top$, where $s \operatorname{mod} 2$ is $0$ for even $s$ and $1$ for odd $s$. 

Next, we discuss the output distribution. Given the initial input tokens $H_n \in \R^{D \times 2n}$, a transformer parameterized by $\Psi = (\psi_1, \ldots, \psi_L)$ sequentially outputs $\h_{2n+1}, \h_{2n+2}, \dots$ corresponding to the synthetic data from a categorical distribution given the last output from the transformer layers. More specifically, at each step $\ell \in \N$, given all previous tokens $H_n$ and $\h_{2n+1}, \h_{2n+2}$, $\dots$, $\h_{2n+\ell-1}$, the next token $\h_{2n+\ell}$ is given by $\h_{2n+\ell} = (\v_{2n+\ell}^\top, \zero^\top, \p_{2n+\ell,n}^\top)^\top$, where $\v_{2n+\ell} \in \R^r$ follows (\ref{model: dgp}) and satisfies a categorical distribution with softmax probability over all possible tokens $\u_1, \dots, \u_d \in \R^r$. For example, for odd $\ell$, $\P(\v_{2n+\ell} = \u_x) \propto {\exp}(\tau^{-1} \langle \u_x, (\tilde \h_{2n+\ell-1})_{1:r} \rangle)$, where $\tilde \h_{2n+\ell-1} =  (\TF_{\Psi}([H_n, \h_{2n+1}, \h_{2n+2}, \dots, \h_{2n+\ell-1}]))_{2n+\ell-1}$, and $\tau > 0$ is the temperature parameter. Since we expect the outputs $\h_{2n+1}, \h_{2n+2}, \h_{2n+3}, \h_{2n+4}, \dots$ from a transformer $\TF_\Psi$ correspond to $\tilde X_1, \tilde Y_1, \tilde X_2, \tilde Y_2, \dots$, we write the joint distribution of $(\v_{2n+2s-1}, \v_{2n+2s})$ as 
\begin{align} \label{eq: Q joint}
Q_{\tilde X_s,\tilde Y_s;\Psi,\tau,\mD_n}(x,y) =  \P(\v_{2n+2s-1} = \u_x, \v_{2n+2s} = \u_y).
\end{align}
Table~\ref{tab: tokens} shows the overview of the input and output tokens sequentially input to transformers. A table for the summary of notation can be found in Table~\ref{tab: notation transformer} in the Appendix.

\begin{table}[t!]
\centering
\caption{An overview of the input and output tokens.}
\label{tab: tokens}
\begin{tabular}{c|ccccccc|ccccc}
        \toprule
        & \multicolumn{7}{c|}{Input Tokens} & \multicolumn{4}{c}{Output Tokens} \\
        \midrule
        Index $s$ & $1$ & $2$ & $3$ & $4$ & $\cdots$ & $2n-1$ & $2n$ & $\cdots$ & $2n+2s-1$ & $2n+2s$ & $\cdots$\\
        \midrule
        Token & $\h_1^X$ & $\h_1^Y$ & $\h_2^X$ & $\h_2^Y$ & $\cdots$ & $\h_n^X$ & $\h_n^Y$ & $\cdots$ & $\h_{2n+2s-1}$ & $\h_{2n+2s}$ & $\cdots$\\
        \midrule
        Datum & $X_1$ & $Y_1$ & $X_2$ & $Y_2$ & $\cdots$ & $X_n$ & $Y_n$ & $\cdots$ & $\tilde X_s$ & $\tilde Y_s$ & $\cdots$\\
        $(\p_{s,n})_1$ & $1$ & $1$ & $2$ & $2$ & $\cdots$ & $n$ & $n$ & $\cdots$ & $n+s$ & $n+s$ & $\cdots$\\
        $(\p_{s,n})_2$ & $0$ & $1$ & $0$ & $1$ & $\cdots$ & $0$ & $1$ & $\cdots$ & $0$ & $1$ & $\cdots$\\
        \bottomrule
\end{tabular}
\end{table}

%%%%%%%%%%%%%%%%%%%%%%%%%%%%%%%%%%%%%%%%%%%%%%%%%%%%%%%%%%%%%%%%%%%%%%%%%%%%%%
\subsection{Theory for transformers in generating high-quality synthetic data}
\label{sec:theory-transformers}

We next provide a formal theory to quantify the data quality generated by transformers. We first introduce two regularity conditions.

\begin{assumption}\label{asm: known subjects}
    Suppose the transformer knows the candidate of subject embeddings $\{\z^{(t)}\}_{t \in \mT}$ and candidate of functions $\{f^{(m)}\}_{m \in \mM} \subset \mF(L_0, r_0)$, where 
    \begin{align*}
    \mF(L_0, r_0) \subset \Big\{ \FFN_{\nu_{\pre,L_0}} \circ \FFN_{\nu_{\pre,L_0-1}} \circ \dots \circ \FFN_{\nu_{\pre,1}}: \nu_{\pre,\ell} \in \R^{r_0 \times r} \times \R^{r \times r_0} \text{ for } \ell \in [L_0] \Big\}
    \end{align*}
    with fixed $L_0, r_0 \in \N$. Suppose $\|\z^{(t)}\| = 1$ for any $t \in \mT$, and $\sup_{\u \in \mathbb{B}_r(\log d)} \|f^{(m)}(\u)\| \leq 1$, for identifiability.
\end{assumption}

\begin{assumption}\label{asm: separability}
Suppose there exist constants $\delta_\mT > 0$ and $\delta_\mM > 0$, such that $1 - \max_{t, t' \in \mT: t \neq t'}$ $\langle \z^{(t)}, \z^{(t')}\rangle \geq \delta_\mT$, and $\E\qty[\|f^{(m)}(\u_{X_1})\|^2 | T=t, U, \eta] - \max_{m': m' \neq m} \E[\langle f^{(m')}(\u_{X_1}), f^{(m)}(\u_{X_1}) \rangle |$ $T=t, U, \eta] \geq \delta_\mM$ hold with probability $1 - \exp\{-\Omega(\log^2 d)\}$ under the randomness of $U$.    
\end{assumption}

\noindent
Assumption~\ref{asm: known subjects} requires that the transformer has access to the candidate of subject embeddings and candidate of discriminative functions expressed as a multilayer neural networks, which reflects the fact that the transformer is pre-trained on vast amount of data. Assumption~\ref{asm: separability} is a margin-based identifiability condition. The first part requires subject embeddings to be angularly well separated, with a cosine margin $\delta_\mT$, which ensures that a unique subject best explains the covariates. The second part requires that different functions remain distinguishable in expectation under the random token embeddings. Together these margins guarantee the identifiability and stability of the argmax steps in our transformer construction.

Here we consider the regime where $d$, $r$ both grow with $n$ and $r = o(\log d)$. 

\begin{theorem}\label{thm: generative and discriminative}
    Suppose Assumptions~\ref{asm: known subjects} and \ref{asm: separability} hold. Fix any $r \in \N$, $(\z^{(t)})_{t \in \mT}$ and $(f^{(m)})_{m \in \mM}$. Choose $n$ and $d$ sufficiently large, such that
    \begin{align} \label{eq: sample margin condition}
    C_0 \qty( \eta \sqrt{r} \frac{\log^2 d}{\sqrt{n}} + \eta r \frac{\log d}{\sqrt{d}} ) &< \delta_\mM \wedge \delta_\mT
    \end{align}
    holds for some constant $C_0 > 0$. Then, there exist transformer layers $\TF_{\Psi^*}$ with $O(|\mM|)$ attention heads, such that, for any $t \in \mT$, $m \in \mM$, and $\eta \geq r^{-1/2} \log d$, 
    \begin{align} \label{eq:transformer-upperbound}
    \begin{split}
    \min_{\tau > 0} \E\left[ \KL\left( P_{X_1,Y_1;T=t,M=m,U,\eta} \| Q_{\tilde X_s, \tilde Y_s;\Psi^*,\tau,\mD_n} \right) | T=t,M=m,\eta \right] \\
    = \exp\left\{ -\Omega\qty(\frac{\delta_\mT^2 \wedge \delta_\mM^2}{\eta^2 r} n) \right\}
    \end{split}
    \end{align}
    holds for all $s \in \N$.    
\end{theorem}

\noindent
Theorem~\ref{thm: generative and discriminative} shows an exponential convergence of the KL divergence between the true joint distribution $P_{X_1,Y_1;T=t,M=m,U,\eta}$ and the transformer-generated distribution $Q_{\tilde X_s,\tilde Y_s;\Psi^*,\tau,\mD_n}$ under the separability margins. More specifically, for a suitable temperature, e.g., $\tau=\eta$, there exists a transformer $\Psi^*$, such that its generated distribution approximates the true distribution with the KL error equal to the right-hand-side of (\ref{eq:transformer-upperbound}). This error decays exponentially in the number of in-context seed samples $n$, and improves by larger separability margins $(\delta_{\mM}, \delta_{\mT})$ and a smaller latent dimension $r$. 
This result holds in the in-context regime, where $r=o(\log d)$, $\eta \ge r^{-1/2}\log d$, and $n,d$ are sufficiently large so that (\ref{eq: sample margin condition}) holds. This requires $n$ to dominate $(\eta^{2}r)\log^{4}d/(\delta_\mM \wedge \delta_\mT)^2$, and $d$ to dominate $(\eta^{2}r^{2})\log^{2}d/(\delta_\mM \wedge \delta_\mT)^2$, which ensure the separability margins exceed the stochastic error terms. In practice, $d$ corresponds to the size of a dictionary or token set, and is typically large in our tabular-tokenization scheme.

We remark that, Theorems~\ref{thm: synthetic data risk} and \ref{thm: scaling law} require that $\|\nabla \calB^{(g)}(\btheta_\b)\| \vee \|\nabla^2 \calB^{(g)}(\btheta_\b)\| = o(1)$, and show that the upper bound of the excess risk depends on $\|\nabla \calB^{(g)}(\btheta_\b)\|$. Recall that, when the loss function is sufficiently smooth, we have $\|\nabla \calB^{(g)}(\btheta_\b)\| \vee \|\nabla^2 \calB^{(g)}(\btheta_\b)\| \lesssim d_{\operatorname{TV}}(\mathcal{D}_\r, \mathcal{D}_\syn) \leq \sqrt{(1/2) \KL(\mathcal{D}_\r, \mathcal{D}_\syn)}$, where the last relation holds due to Pinsker's inequality. By Theorem~\ref{thm: generative and discriminative}, $\|\nabla \calB^{(g)}(\btheta_\b)\| \vee \|\nabla^2 \calB^{(g)}(\btheta_\b)\|$ is small when $d$ and $n$ are both large. Therefore, the capacity of transformers to produce high-quality synthetic data supports the use of LLMs for synthetic oversampling and augmentation. 

We also remark that, Theorem~\ref{thm: generative and discriminative} essentially shows that there \emph{exists} a transformer capable of learning the target distribution from in-context examples at an exponential rate, thereby generating high-quality synthetic data. Our analysis focuses on the representational capacity of the transformer architecture, rather than the optimization dynamics that guide a pretrained model to attain optimal parameters. This type of existence statement is standard in the literature on transformer expressivity and in-context learning \citep{akyurek2022learning, mahankali2023one, bai2023transformers, zhang2024trained}.  We give a specific construction of the desired transformer in Proposition~\ref{prop: transformer as good data generator}, and establish the generative and discriminative capacity of this constructed transformer in Theorem~\ref{thm: transformer as good data generator 2}, as shown in Section~\ref{supp-sec: theory transformers proof} of the Appendix.

%%%%%%%%%%%%%%%%%%%%%%%%%%%%%%%%%%%%%%%%%%%%%%%%%%%%%%%%%%%%%%%%%%%%%%%%%%%%%%
\section{Numerical Experiments}
\label{sec:experiments}

We carry out numerical experiments to demonstrate the effectiveness of our LLM-based synthetic method. In particular, for oversampling, we empirically compare our approach to some common alternative oversampling solutions. For augmentation, we empirically verify the polynomial decay rates established in our theory.

%%%%%%%%%%%%%%%%%%%%%%%%%%%%%%%%%%%%%%%%%%%%%%%%%%%%%%%%%%%%%%%%%%%%%%%%%%%%%%
\subsection{Experiment setup}
\label{sec:setup}

We consider one simulation dataset and three real datasets, each with a binary outcome, for our numerical experiments. For each dataset, we choose one class as the minority group and the other as the majority group, and we obtain a subset of samples accordingly. We also choose a possible spurious feature, and specify the corresponding minority and majority groups. 

\noindent \textbf{Craft}. This simulated dataset involves $8000$ samples and 9 features. 
\begin{align*}
& X_1 \sim \mathcal{N}(0,1), \quad
X_2 \sim \mathcal{N}(0,1), \quad
X_3 = 0.5X_1 + 0.3X_2 + \epsilon_3,  \epsilon_3 \sim \mathcal{N}(0, 0.5), \\
& X_4 = X_1 \cdot \epsilon_4,  \epsilon_4 \sim \mathcal{N}(0,1), \quad
X_5 = 0.5X_3 + \epsilon_5,  \epsilon_5 \sim \mathcal{N}(0,1), \quad
X_6 \sim \text{Uniform}\{-1, 1\}, \\
& X_7 \sim \mathcal{N}(0,1), \quad
X_8 = X_2 \cdot X_3, \quad
X_9 = X_1 \cdot X_2, 
\end{align*}
Moreover, we set $Z_i = f(X_1^i, X_2^i, X_3^i, X_9^i) + \epsilon_i$, where {$f(X_1, X_2, X_3, X_9) = 1.5 + 0.7X_1 - 0.6X_2 + 0.8X_3 + 0.4X_9$, and  $\epsilon_i \sim \text{Normal}(0, 1)$,} and generate the binary outcome as $ Y_i = \Id\left\{ Z_i > \text{median}(\{Z_j\}_{j=1}^{n}) \right\}.$ We choose $X_6$ as a potential spurious feature, where the minority group includes samples with $\{X_6 = 1, Y = 0\}$ or $\{X_6 = -1, Y = 1\}$, and the majority group includes samples with $\{X_6 = -1, Y = 0\}$ or $\{X_6 = 1, Y = 1\}$.

\noindent \textbf{Gender}. This dataset \citep{gender} involves $5001$ samples, $7$ features related to facial characteristics, and one binary outcome {\tt gender}. We choose ${\tt long hair}$ as a potential spurious feature, where the minority group includes female subjects without long hair, $\{\text{\tt gender} = 1, \text{\tt long hair} = 0\}$, and male subjects with long hire, $\{\text{\tt gender} = 0, \text{\tt long hair} = 1\}$, and the majority group includes female subjects with long hair, and male subjects without long hair, $\{\text{\tt gender} = 0, \text{\tt long hair} = 0\}$.

\noindent \textbf{Diabetes}. This dataset \citep{diabetes} collects ten years of clinical care records from 1999 to 2008 at $130$ U.S.\ hospitals and delivery networks to predict whether the patients would be readmitted to hospital. It involves $101766$ samples and $47$ features. We choose {\tt gender} as a possible spurious feature, where the minority group includes female patients who are readmitted, $\{ \text{\tt readmitted} = \text{YES}, \text{\tt gender} = \text{Female}\}$, and male patients who are not readmitted, $\{ \text{\tt readmitted} = \text{NO}, \text{\tt gender} = \text{Male}\}$, {and the majority group includes male patients who are readmitted, $\{ \text{\tt readmitted} = \text{YES}, \text{\tt gender} = \text{Male}\}$,} and female patients who are not readmitted, $\{ \text{\tt readmitted} = \text{NO}, \text{\tt gender} = \text{Female}\}$.

\noindent \textbf{Adult}. This dataset \citep{census} collects the demographic and employment  information from the 1994 U.S.\ Census to predict whether an individual's annual income exceeds $\$50K$. It involves $48842$ samples and $14$ features. There is no obvious spurious feature in this dataset. 

For each dataset, we randomly leave out $30\%$ samples as the testing data, and use the rest $70\%$ as the raw training data. We consider the synthetic data generation by fine-tuning a GPT-2 language model. We use all the training data for GPT-2 fine-tuning. To set up the imbalanced data, we choose the sample size for the minority group as $n_{\text{min}} = 100$, and vary the sample size of the majority group so that the ratio ${n_{\text{maj}}} / {n_{\text{min}}} = \{1, 2, \ldots, 10\}$. We then randomly sample $n_{\text{min}}$ and $n_{\text{maj}}$ observations from the training data. For synthetic oversampling, we add $n_{\text{maj}} - n_{\text{min}}$ generated samples by LLM to the minority group. For synthetic augmentation, we further add $N$ generated samples to both the minority and majority groups, with $N = \{400, 800, \ldots, 3600\}$. We choose the weight parameter $\alpha = 1/3$ for the empirical risk in (\ref{eq: general R with data augmentation}). 

For both imbalanced classification and spurious correlation, we choose the binary outcome classification as the downstream task, and employ XGBoost \citep{chen16} as the classifier. We have experimented with other classifiers such as random forests and obtained similar qualitative results. We evaluate the performance on the test data using the balanced cross-entropy loss, and the cross-entropy loss for the minority group \citep{brodersen10}:
\vspace{-0.01in}
\begin{align*}
\ell_{\text{balanced}} 
&= \frac{1}{| \mathcal{G} |} \sum_{g \in \mathcal{G}} \frac{1}{n_g} \sum_{i \in [n_g]} 
\left\{ - y_i \log \hat{y}_i - (1 - y_i) \log (1 - \hat{y}_i) \right\}, \\
\ell_{\text{minority}} 
&= \frac{1}{n_{g_{\text{min}}}} \sum_{i \in [n_{g_{\text{min}}}]} 
\left\{ - y_i \log \hat{y}_i - (1 - y_i) \log (1 - \hat{y}_i) \right\}.
\end{align*}

We compare our method with a number of alternative solutions, including the method with no oversampling (denoted as RAW), random oversampling \citep[][denoted as ROS]{he09}, synthetic minority oversampling technique \citep[][denoted as SMOTE]{chawla02smote}, and adaptive synthetic sampling \citep[][denoted as ADASYN]{he2008adasyn}. For our proposed method, we consider two versions, one only adding synthetic samples to the minority group (denoted as LLM oversampling), and the other further adding synthetic samples to all groups of data after oversampling the minority group (denoted as LLM oversampling + augmentation). 

We report more details on fine-tuning of the GPT-2 model in Section~\ref{supp-sec: add-gpt2}, and report the details and results using a pretrained GPT-4 model in Section~\ref{supp-sec: add-gpt4} of the Appendix.

%%%%%%%%%%%%%%%%%%%%%%%%%%%%%%%%%%%%%%%%%%%%%%%%%%%%%%%%%%%%%%%%%%%%%%%%%%%%%%
\subsection{Synthetic oversampling}
\label{sec:exp-oversampling}

We first investigate the empirical performance of synthetic oversampling for the imbalanced classification setting and the spurious correlation setting, respectively. 

For imbalanced classification, Figure~\ref{fig:ratio-imb-xgb} reports the two cross-entropy losses for the four datasets based on 5 data replications, with ${n_{\text{maj}}} / {n_{\text{min}}} = \{1, 2, \ldots, 10\}$, whereas Table~\ref{tab:imb-vary-ratio} reports the results when ${n_{\text{maj}}} / {n_{\text{min}}} = 6$. For the LLM oversampling method, we add $n_{\text{maj}} - n_{\text{min}}$ synthetic samples to the minority group, and for the LLM oversampling + augmentation method, we further add $N = 600$ synthetic samples to all groups after oversampling. From both the plot and the table, we observe that our LLM-based synthetic oversampling method clearly outperforms all the alternative solutions. When the ${n_{\text{maj}}} / {n_{\text{min}}}$ ratio increases, the improvement becomes more substantial. Moreover, oversampling coupled with additional synthetic augmentation could further improve the performance moderately. 

\begin{figure}[t!]
\centering
    % First row
    \begin{subfigure}{0.4\textwidth}
        \centering
        \includegraphics[width=\linewidth,height=1.18in]{./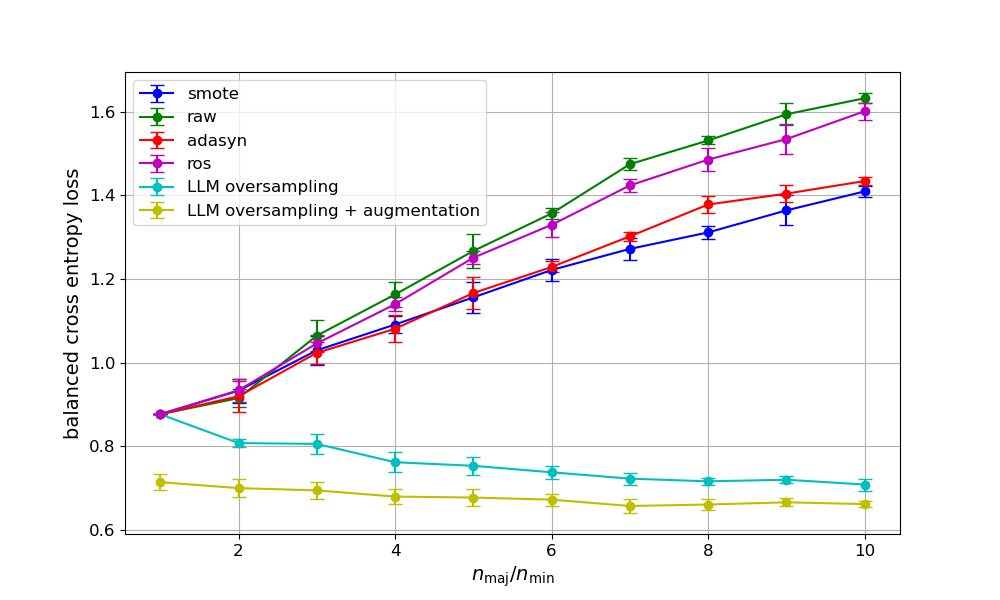}
        \caption{Craft - balanced cross-entropy loss}
    \end{subfigure}
    \begin{subfigure}{0.4\textwidth}
        \centering
        \includegraphics[width=\linewidth,height=1.18in]{./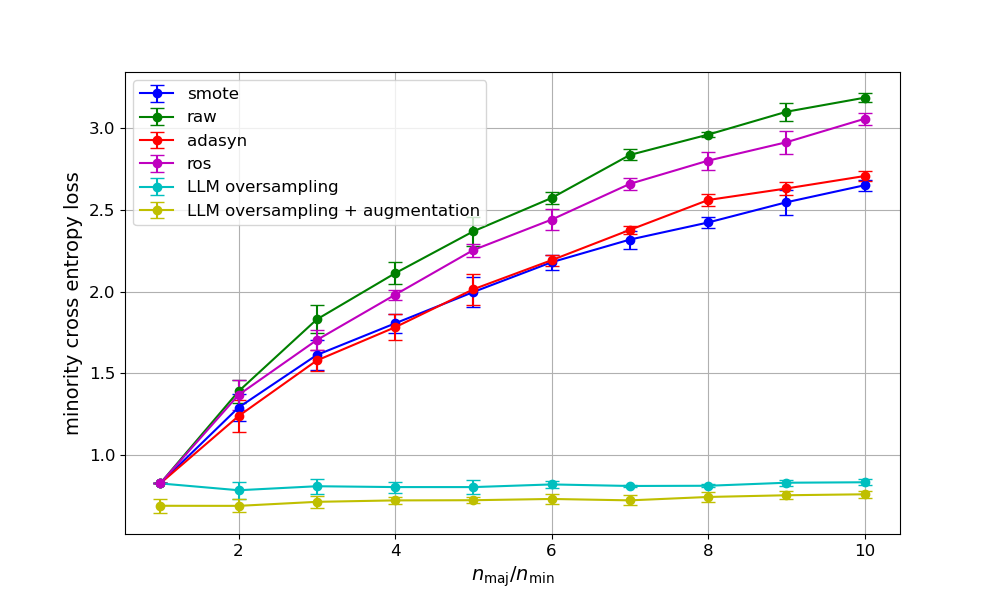}
        \caption{Craft - minority cross-entropy loss}
    \end{subfigure}
    % Second row
    \begin{subfigure}{0.4\textwidth}
        \centering
        \includegraphics[width=\linewidth,height=1.18in]{./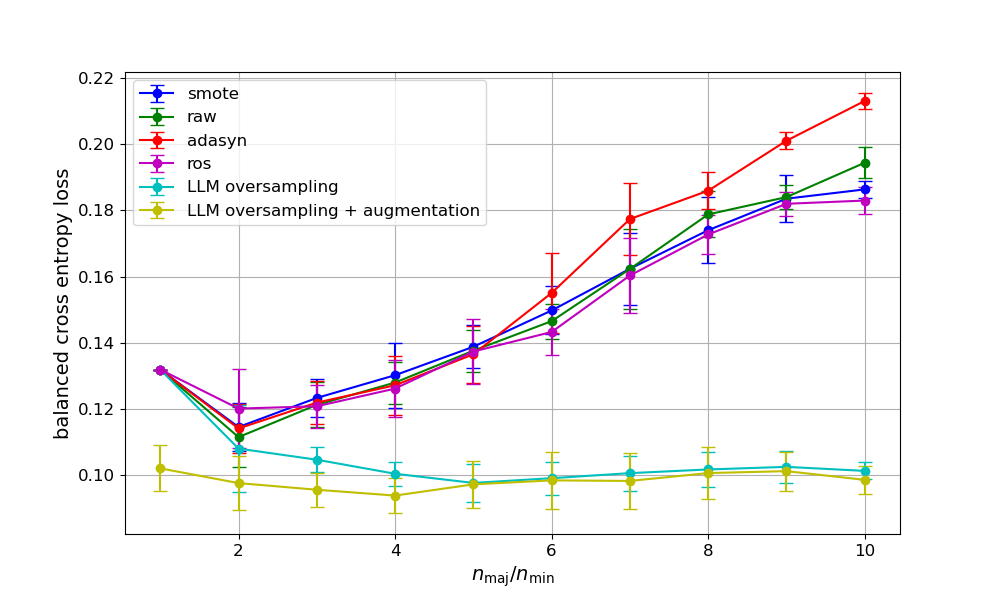}
        \caption{Gender - balanced cross-entropy loss}
    \end{subfigure}
    \begin{subfigure}{0.4\textwidth}
        \centering
        \includegraphics[width=\linewidth,height=1.18in]{./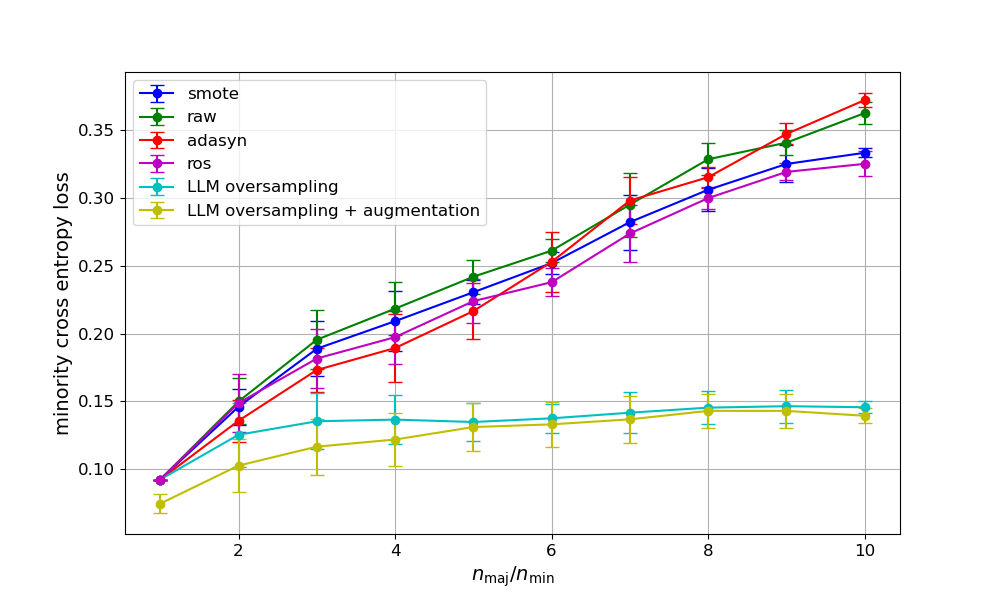}
        \caption{Gender - minority cross-entropy loss}
    \end{subfigure}
    % Third row
    \begin{subfigure}{0.4\textwidth}
        \centering
        \includegraphics[width=\linewidth,height=1.18in]{./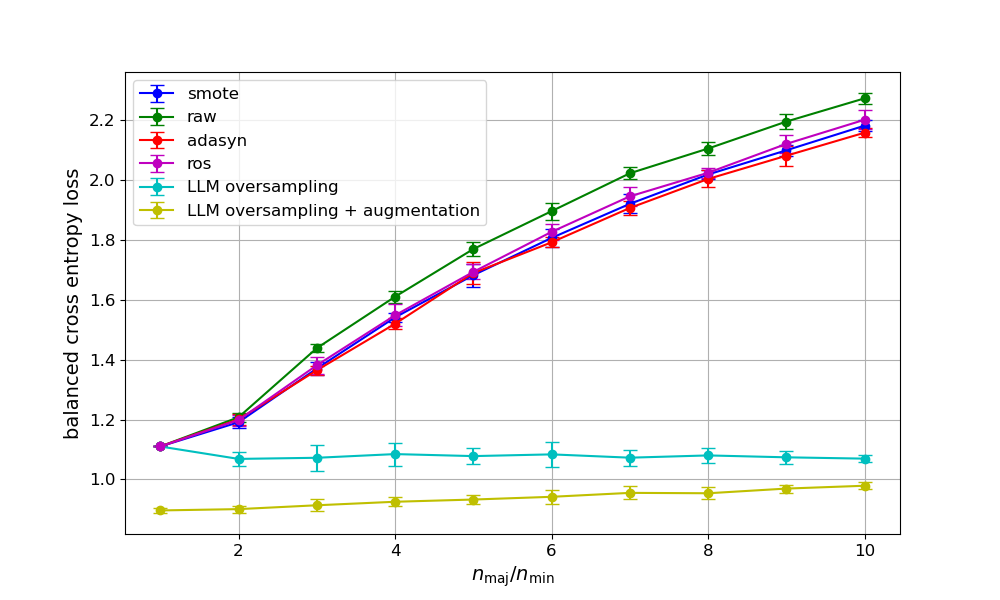}
        \caption{Diabetes - balanced cross-entropy loss}
    \end{subfigure}
    \begin{subfigure}{0.4\textwidth}
        \centering
        \includegraphics[width=\linewidth,height=1.18in]{./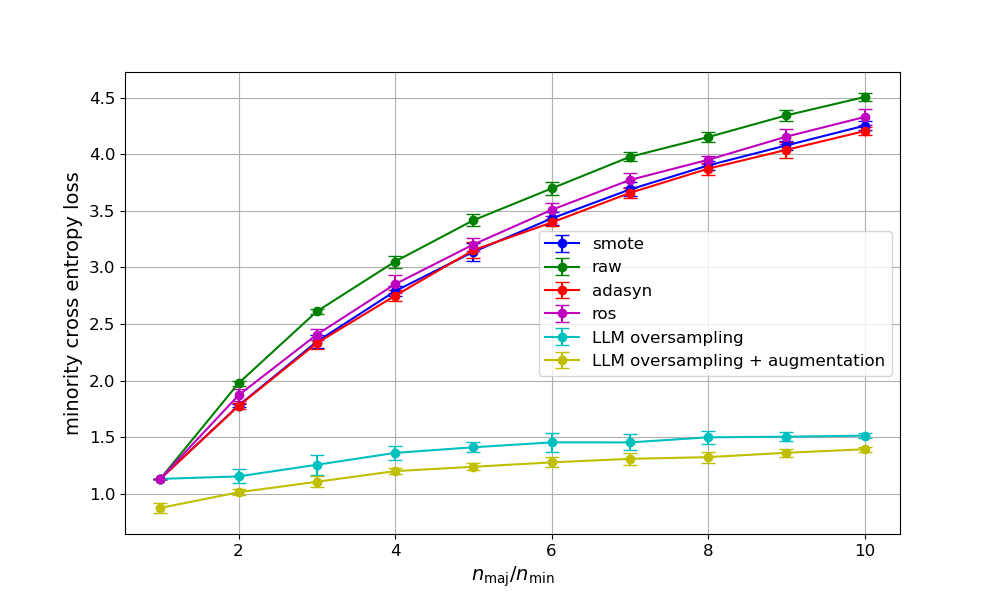}
        \caption{Diabetes - minority cross-entropy loss}
    \end{subfigure}
    % Fourth row
    \begin{subfigure}{0.4\textwidth}
        \centering
        \includegraphics[width=\linewidth,height=1.18in]{./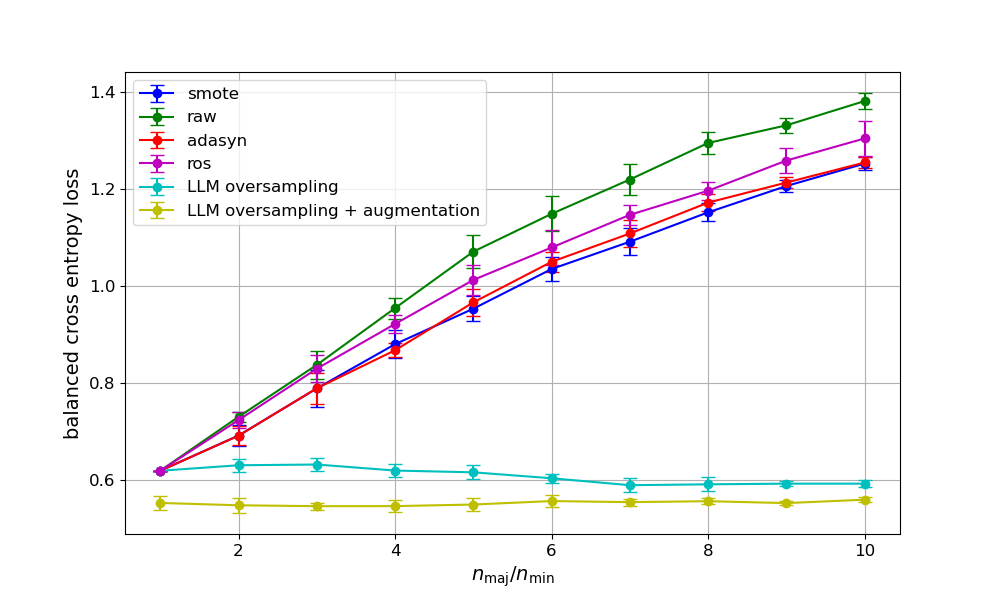}
        \caption{Adult - balanced cross-entropy loss}
    \end{subfigure}
    \begin{subfigure}{0.4\textwidth}
        \centering
        \includegraphics[width=\linewidth,height=1.18in]{./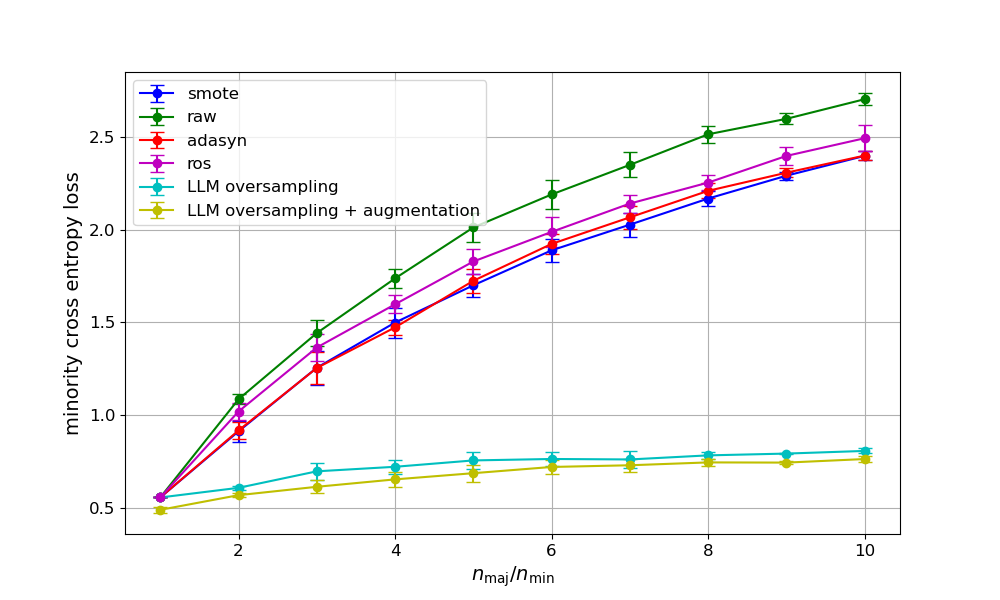}
        \caption{Adult - minority cross-entropy loss}
    \end{subfigure}
\caption{Synthetic oversampling for imbalanced classification. Six methods are compared: no oversampling (RAW), random oversampling (ROS), synthetic minority oversampling technique (SMOTE), adaptive synthetic sampling (ADASYN), the proposed synthetic oversampling (LLM oversampling), and the proposed synthetic oversampling with additional synthetic augmentation (LLM oversampling + augmentation). Two cross-entropy losses are reported as the sample size of the majority and minority groups ${n_{\text{maj}}} / {n_{\text{min}}}$ varies.}
\label{fig:ratio-imb-xgb}
\end{figure}

\begin{table}[t!]
\centering
\caption{Synthetic oversampling for imbalanced classification. Six methods are compared: no oversampling (RAW), random oversampling (ROS), synthetic minority oversampling technique (SMOTE), adaptive synthetic sampling (ADASYN), the proposed synthetic oversampling (LLM oversampling), and the proposed synthetic oversampling with additional synthetic augmentation (LLM oversampling + augmentation). Two cross-entropy losses are reported with the sample size of the majority and minority groups ${n_{\text{maj}}} / {n_{\text{min}}} = 6$.}
\label{tab:imb-vary-ratio}
\resizebox{\textwidth}{!}{%
\begin{tabular}{l l c c c c c c}
\toprule
\textbf{Data Name} & \textbf{Metric} & \textbf{RAW} & \textbf{ROS} & \textbf{SMOTE} & \textbf{ADASYN} & \textbf{LLM bal} & \textbf{LLM bal+aug} \\
\midrule
\multirow{2}{*}{craft} & Balanced & 1.357 $\pm$ 0.013 & 1.330 $\pm$ 0.030 & 1.222 $\pm$ 0.027 & 1.229 $\pm$ 0.013 & 0.738 $\pm$ 0.016 & \textbf{0.673 $\pm$ 0.014} \\
                       & Min      & 2.574 $\pm$ 0.036 & 2.441 $\pm$ 0.063 & 2.180 $\pm$ 0.046 & 2.192 $\pm$ 0.033 & 0.817 $\pm$ 0.020 & \textbf{0.728 $\pm$ 0.030} \\
\midrule
\multirow{2}{*}{gender} & Balanced & 0.147 $\pm$ 0.005 & 0.143 $\pm$ 0.007 & 0.150 $\pm$ 0.007 & 0.155 $\pm$ 0.012 & 0.099 $\pm$ 0.005 & \textbf{0.098 $\pm$ 0.008} \\
                        & Min      & 0.261 $\pm$ 0.009 & 0.238 $\pm$ 0.010 & 0.252 $\pm$ 0.008 & 0.253 $\pm$ 0.022 & 0.138 $\pm$ 0.011 & \textbf{0.133 $\pm$ 0.017} \\
\midrule
\multirow{2}{*}{diabetes} & Balanced & 1.897 $\pm$ 0.028 & 1.827 $\pm$ 0.028 & 1.807 $\pm$ 0.029 & 1.793 $\pm$ 0.017 & 1.084 $\pm$ 0.042 & \textbf{0.942 $\pm$ 0.023} \\
                          & Min      & 3.697 $\pm$ 0.058 & 3.508 $\pm$ 0.058 & 3.432 $\pm$ 0.057 & 3.396 $\pm$ 0.033 & 1.454 $\pm$ 0.083 & \textbf{1.278 $\pm$ 0.045} \\
\midrule
\multirow{2}{*}{adult} & Balanced & 1.149 $\pm$ 0.036 & 1.080 $\pm$ 0.036 & 1.036 $\pm$ 0.025 & 1.050 $\pm$ 0.021 & 0.604 $\pm$ 0.009 & \textbf{0.557 $\pm$ 0.013} \\
                       & Min      & 2.189 $\pm$ 0.077 & 1.987 $\pm$ 0.083 & 1.888 $\pm$ 0.062 & 1.923 $\pm$ 0.052 & 0.765 $\pm$ 0.037 & \textbf{0.722 $\pm$ 0.040} \\
\bottomrule
\end{tabular}
}
\end{table}

For spurious correlation, Figure~\ref{fig:ratio-spurious-xgb} reports the two cross-entropy losses based on 5 data replications, with ${n_{\text{maj}}} / {n_{\text{min}}} = \{1, 2, \ldots, 10\}$, whereas Table~\ref{tab:spurious-corr-vary-ratio} reports the results when ${n_{\text{maj}}} /$ $ {n_{\text{min}}} = 6$. There is no obvious spurious feature for the Adult dataset, so we only report the results for the other three datasets. Again, we observe that our LLM-based synthetic oversampling method clearly outperforms all the alternative solutions, and the improvement is more substantial when the minority group has a smaller sample size compared to the majority group. 

\begin{figure}[t!]
\centering
    % First row
    \begin{subfigure}{0.4\textwidth}
        \centering
        \includegraphics[width=\linewidth,height=1.18in]{./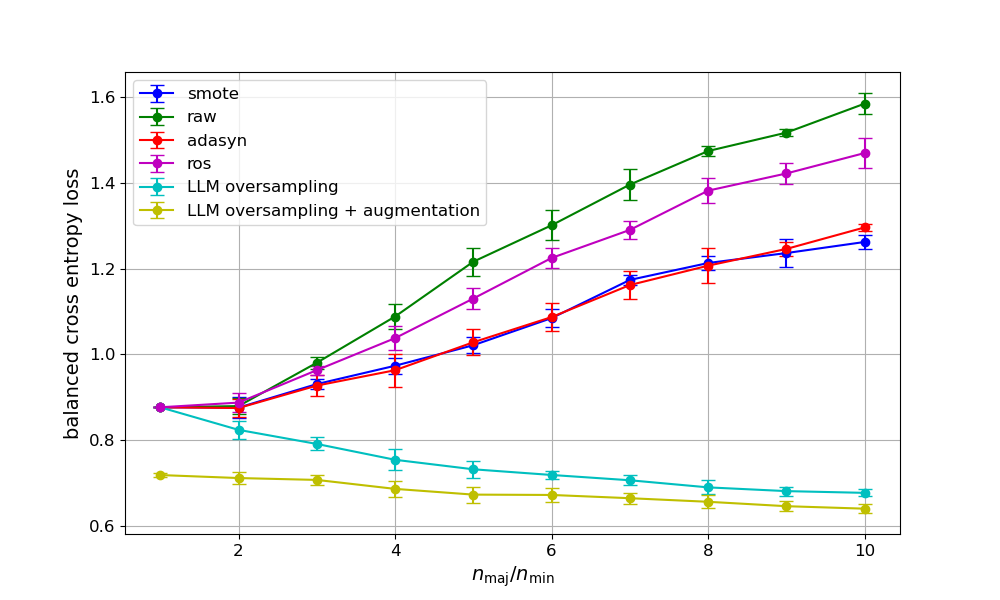}
        \caption{Craft - balanced cross-entropy loss}
    \end{subfigure}
    \begin{subfigure}{0.4\textwidth}
        \centering
        \includegraphics[width=\linewidth,height=1.18in]{./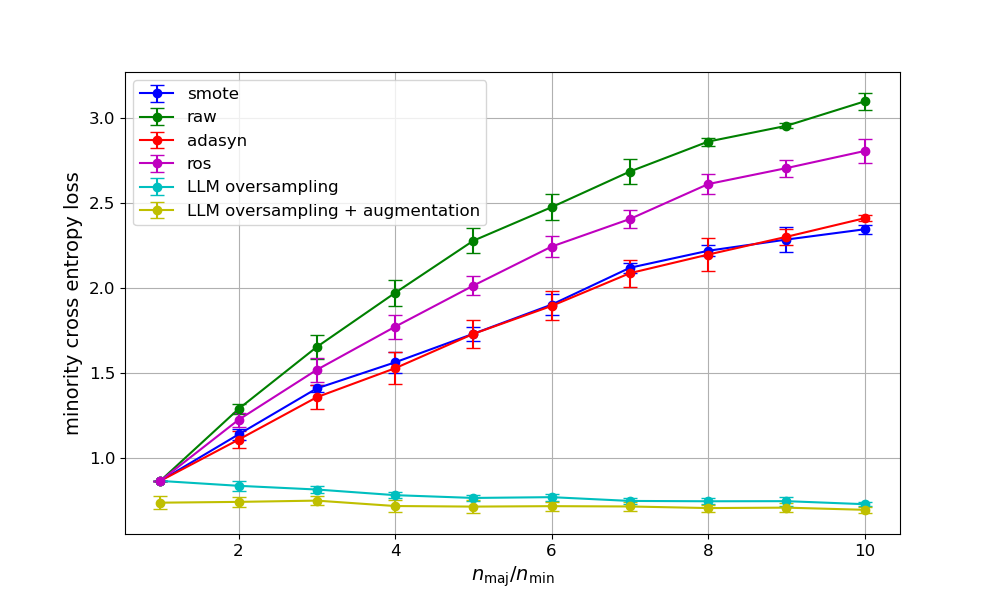}
        \caption{Craft - minority cross-entropy loss}
    \end{subfigure}
    % Second row
    \begin{subfigure}{0.4\textwidth}
        \centering
        \includegraphics[width=\linewidth,height=1.18in]{./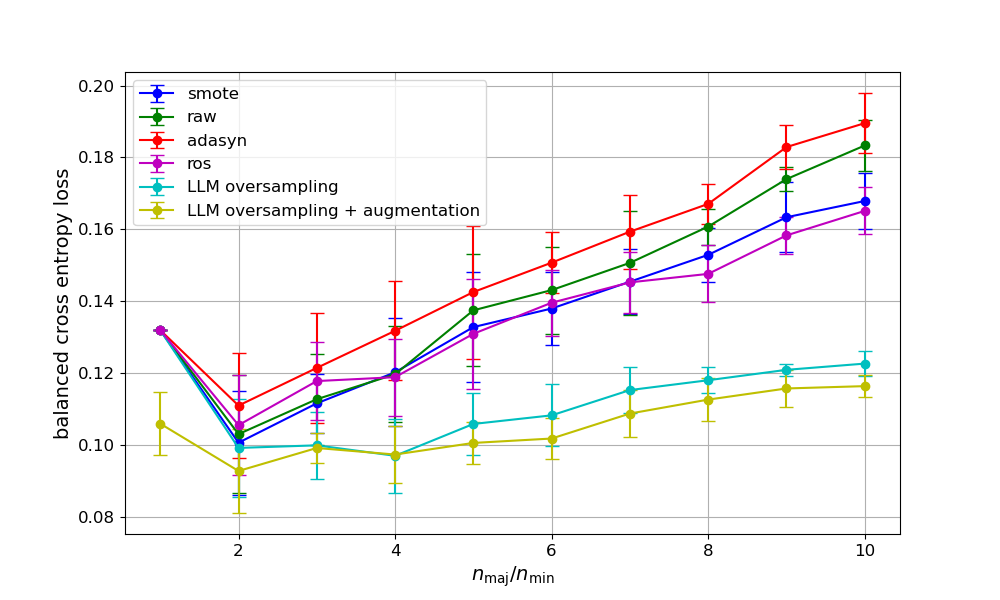}
        \caption{Gender - balanced cross-entropy loss}
    \end{subfigure}
    \begin{subfigure}{0.4\textwidth}
        \centering
        \includegraphics[width=\linewidth,height=1.18in]{./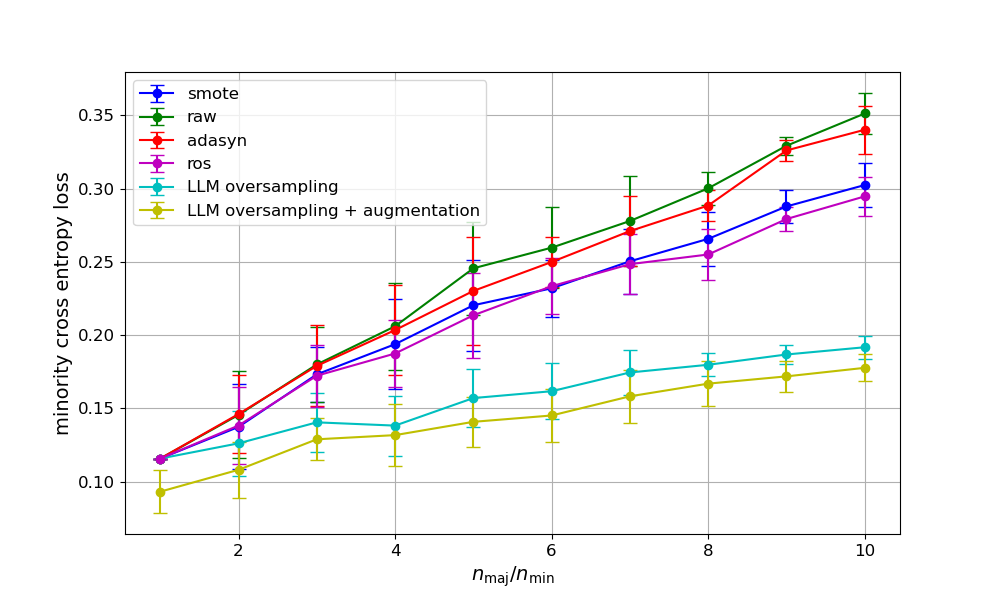}
        \caption{Gender - minority cross-entropy loss}
    \end{subfigure}
    % Third row
    \begin{subfigure}{0.4\textwidth}
        \centering
        \includegraphics[width=\linewidth,height=1.18in]{./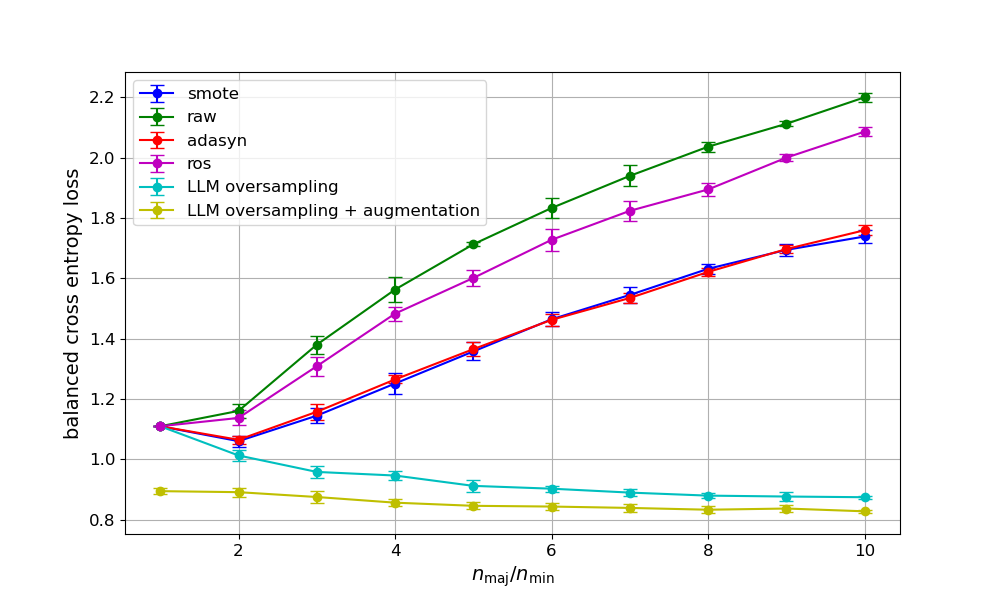}
        \caption{Diabetes - balanced cross-entropy loss}
    \end{subfigure}
    \begin{subfigure}{0.4\textwidth}
        \centering
        \includegraphics[width=\linewidth,height=1.18in]{./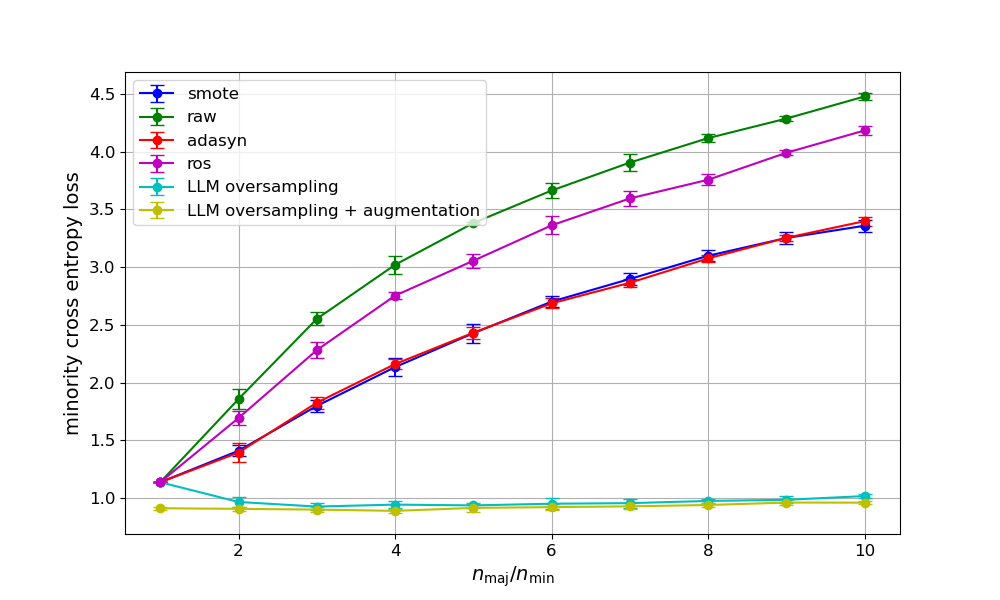}
        \caption{Diabetes - minority cross-entropy loss}
    \end{subfigure}
\caption{Synthetic oversampling for spurious correlation with a similar setup as Figure~\ref{fig:ratio-imb-xgb}.}
\label{fig:ratio-spurious-xgb}
\end{figure}

\begin{table}[t!]
\centering
\caption{Synthetic oversampling for spurious correlation with a similar setup as Table~\ref{tab:imb-vary-ratio}. Six methods are compared: no oversampling (RAW), random oversampling (ROS), synthetic minority oversampling technique (SMOTE), adaptive synthetic sampling (ADASYN), the proposed synthetic oversampling (LLM oversampling), and the proposed synthetic oversampling with additional synthetic augmentation (LLM oversampling + augmentation). Two cross-entropy losses are reported with ${n_{\text{maj}}}/{n_{\text{min}}} = 6$.}
\label{tab:spurious-corr-vary-ratio}
\resizebox{\textwidth}{!}{%
\begin{tabular}{l l c c c c c c}
\toprule
\textbf{Data Name} & \textbf{Metric} & \textbf{RAW} & \textbf{ROS} & \textbf{SMOTE} & \textbf{ADASYN} & \textbf{LLM bal} & \textbf{LLM bal+aug} \\
\midrule
\multirow{2}{*}{craft} & Balanced & 1.301 $\pm$ 0.035 & 1.225 $\pm$ 0.024 & 1.085 $\pm$ 0.021 & 1.087 $\pm$ 0.033 & 0.719 $\pm$ 0.009 & \textbf{0.673 $\pm$ 0.016} \\
                       & Min      & 2.476 $\pm$ 0.078 & 2.244 $\pm$ 0.060 & 1.902 $\pm$ 0.064 & 1.894 $\pm$ 0.085 & 0.767 $\pm$ 0.020 & \textbf{0.714 $\pm$ 0.026} \\
\midrule
\multirow{2}{*}{gender} & Balanced & 0.143 $\pm$ 0.012 & 0.140 $\pm$ 0.009 & 0.138 $\pm$ 0.010 & 0.151 $\pm$ 0.008 & 0.108 $\pm$ 0.009 & \textbf{0.102 $\pm$ 0.006} \\
                        & Min      & 0.260 $\pm$ 0.028 & 0.233 $\pm$ 0.019 & 0.232 $\pm$ 0.019 & 0.250 $\pm$ 0.017 & 0.162 $\pm$ 0.019 & \textbf{0.145 $\pm$ 0.018} \\
\midrule
\multirow{2}{*}{diabetes} & Balanced & 1.833 $\pm$ 0.032 & 1.728 $\pm$ 0.037 & 1.464 $\pm$ 0.023 & 1.463 $\pm$ 0.020 & 0.903 $\pm$ 0.009 & \textbf{0.844 $\pm$ 0.011} \\
                          & Min      & 3.663 $\pm$ 0.067 & 3.361 $\pm$ 0.079 & 2.701 $\pm$ 0.044 & 2.686 $\pm$ 0.043 & 0.953 $\pm$ 0.046 & \textbf{0.923 $\pm$ 0.026} \\
\bottomrule
\end{tabular}
}
\end{table}

%%%%%%%%%%%%%%%%%%%%%%%%%%%%%%%%%%%%%%%%%%%%%%%%%%%%%%%%%%%%%%%%%%%%%%%%%%%%%%
\subsection{Synthetic augmentation}
\label{sec:exp-scaling-law}

\begin{figure}[t!]
\centering
    % First row
    \begin{subfigure}{0.4\textwidth}
        \centering
        \includegraphics[width=\linewidth,height=1.18in]{./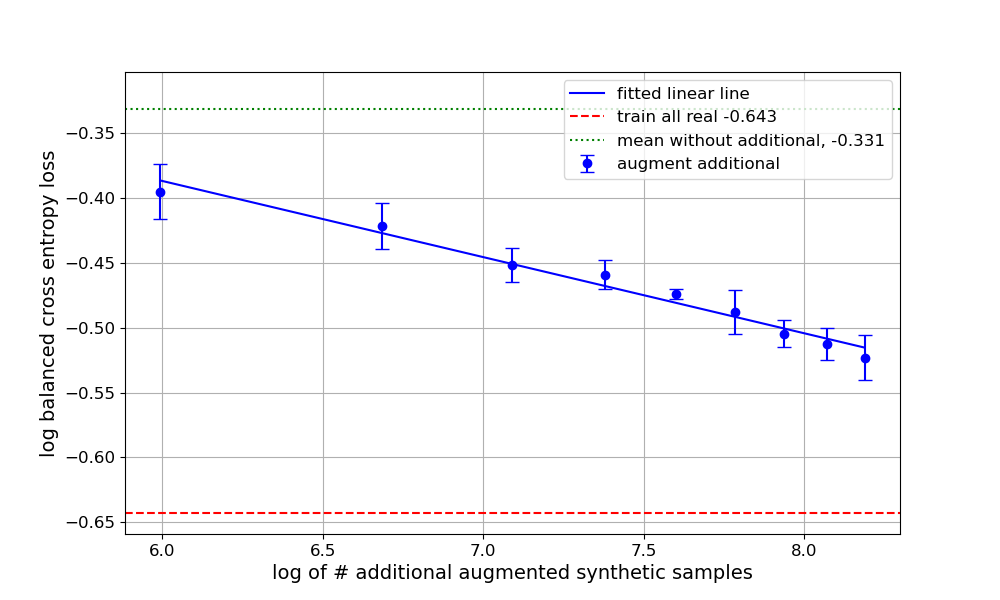}
        \caption{Craft - balanced cross-entropy loss}
    \end{subfigure}
    \begin{subfigure}{0.4\textwidth}
        \centering
        \includegraphics[width=\linewidth,height=1.18in]{./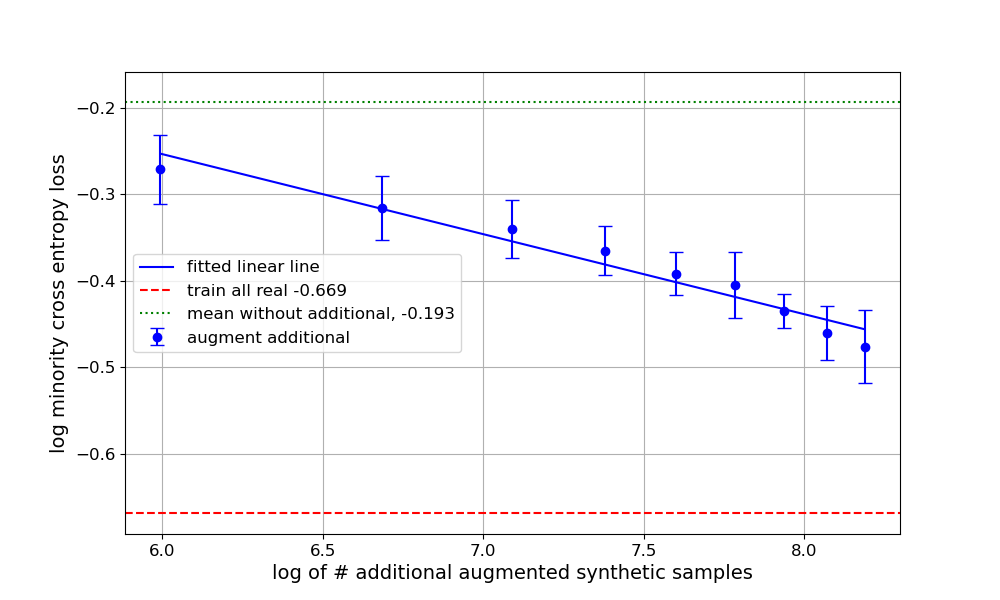}
        \caption{Craft - minority cross-entropy loss}
    \end{subfigure}
    % Second row
    \begin{subfigure}{0.4\textwidth}
        \centering
        \includegraphics[width=\linewidth,height=1.18in]{./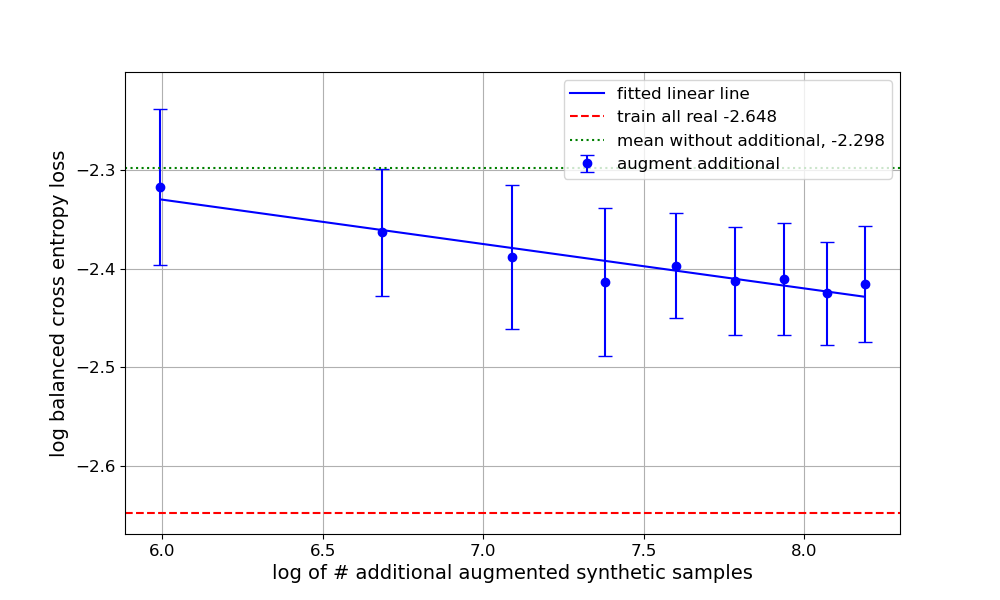}
        \caption{Gender - balanced cross-entropy loss}
    \end{subfigure}
    \begin{subfigure}{0.4\textwidth}
        \centering
        \includegraphics[width=\linewidth,height=1.18in]{./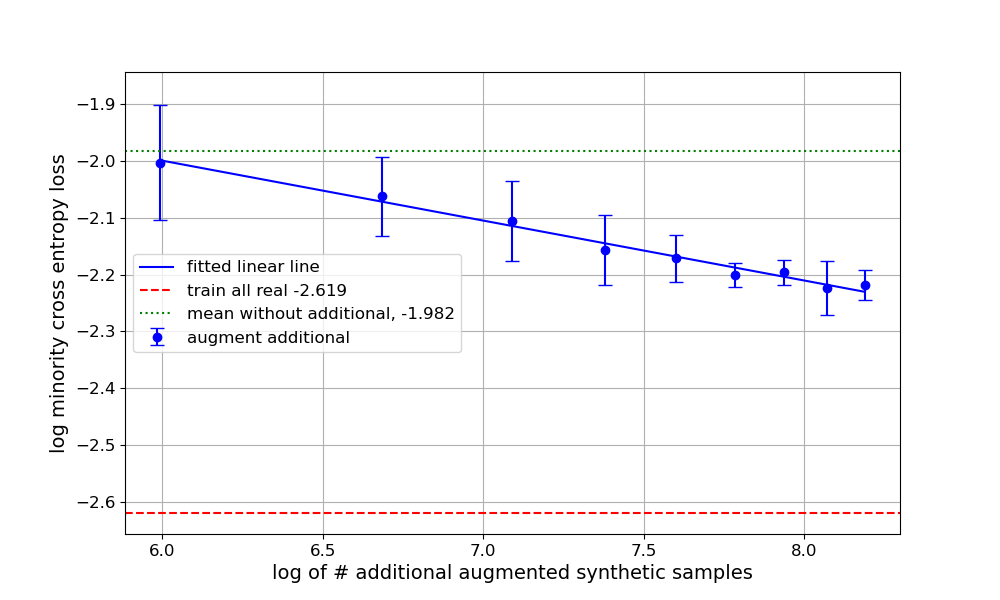}
        \caption{Gender - minority cross-entropy loss}
    \end{subfigure}
    % Third row
    \begin{subfigure}{0.4\textwidth}
        \centering
        \includegraphics[width=\linewidth,height=1.18in]{./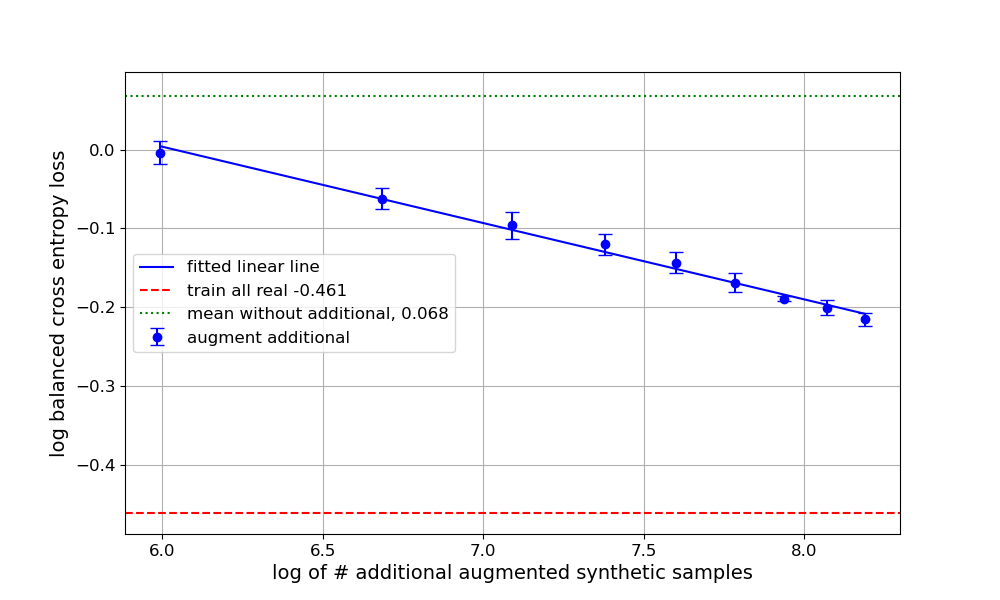}
        \caption{Diabetes - balanced cross-entropy loss}
    \end{subfigure}
    \begin{subfigure}{0.4\textwidth}
        \centering
        \includegraphics[width=\linewidth,height=1.18in]{./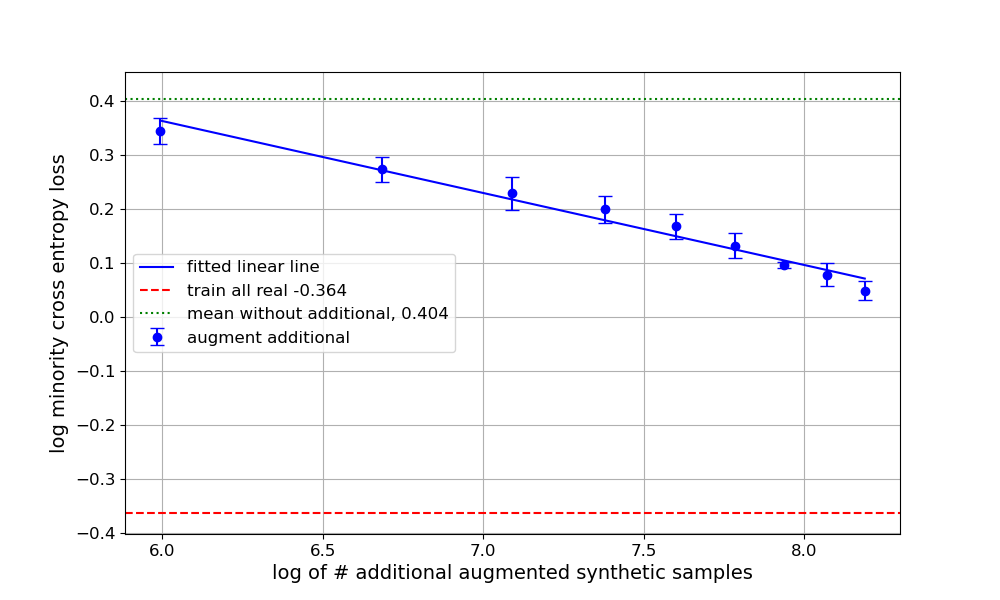}
        \caption{Diabetes - minority cross-entropy loss}
    \end{subfigure}
    % Fourth row
    \begin{subfigure}{0.4\textwidth}
        \centering
        \includegraphics[width=\linewidth,height=1.18in]{./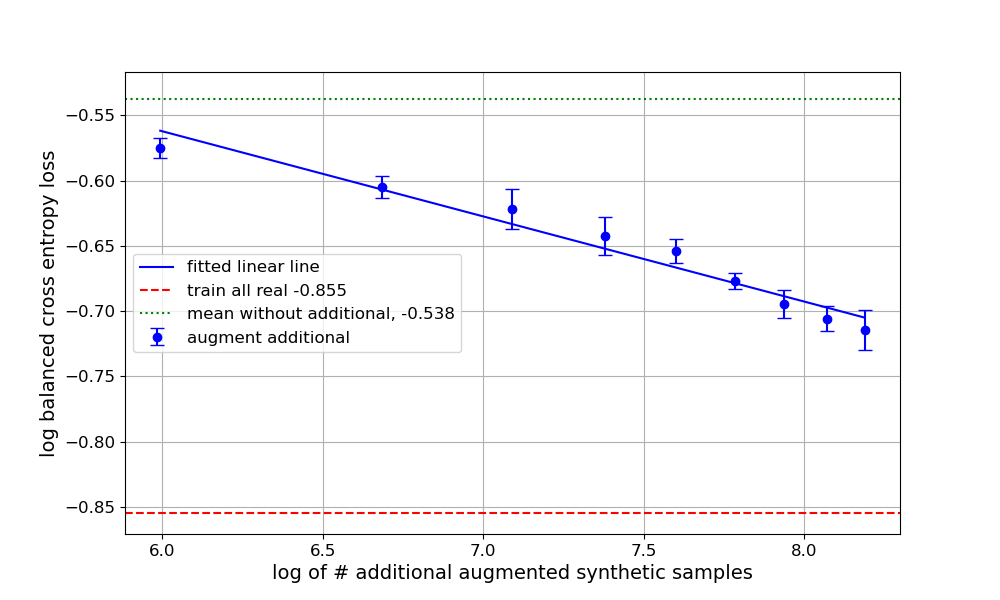}
        \caption{Adult - balanced cross-entropy loss}
    \end{subfigure}
    \begin{subfigure}{0.4\textwidth}
        \centering
        \includegraphics[width=\linewidth,height=1.18in]{./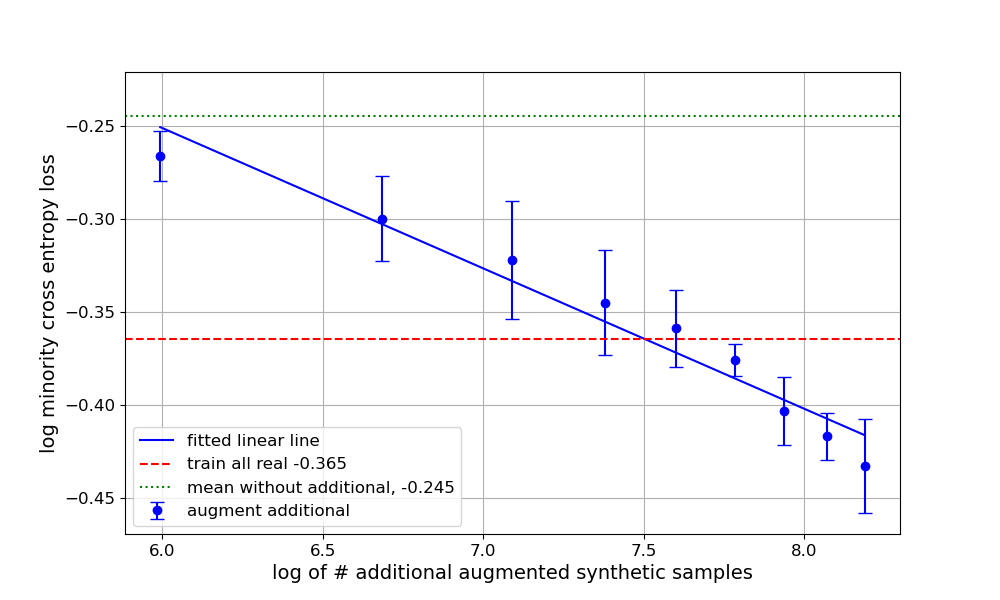}
        \caption{Adult - minority cross-entropy loss}
    \end{subfigure}
\caption{Synthetic augmentation for imbalanced classification. Two cross-entropy losses (blue) are reported as the size of the additional augmented samples $N$ increase, all in the log scale. Two benchmark lines are also included, one without augmentation (green), and the other utilizing the original data (red).}
\label{fig:scale-imb-xgb-loglog}
\end{figure}

\begin{figure}[t!]
\centering
    % First row
    \begin{subfigure}{0.4\textwidth}
        \centering
        \includegraphics[width=\linewidth,height=1.18in]{./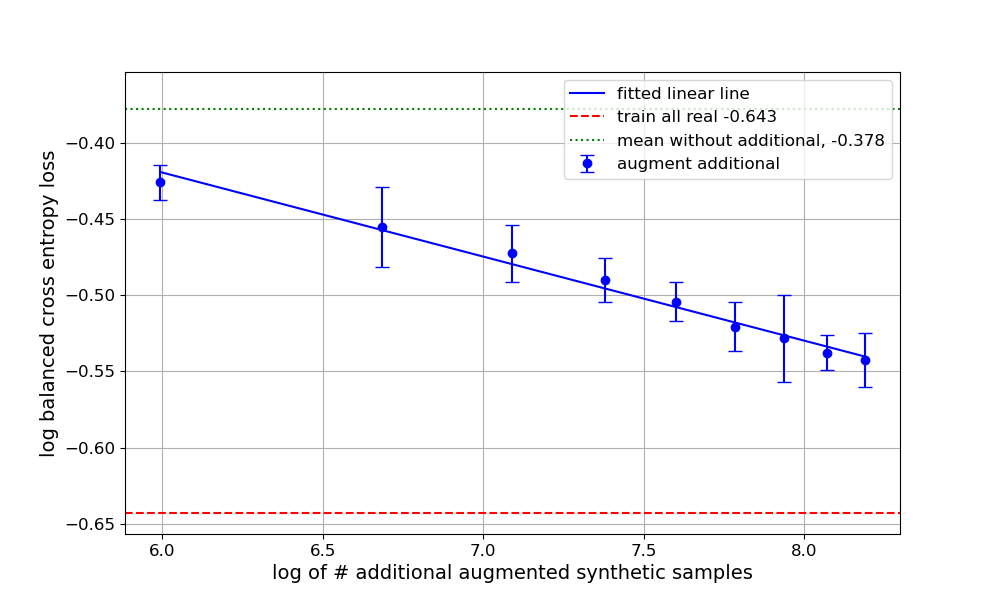}
        \caption{Craft - balanced cross-entropy loss}
    \end{subfigure}
    \begin{subfigure}{0.4\textwidth}
        \centering
        \includegraphics[width=\linewidth,height=1.18in]{./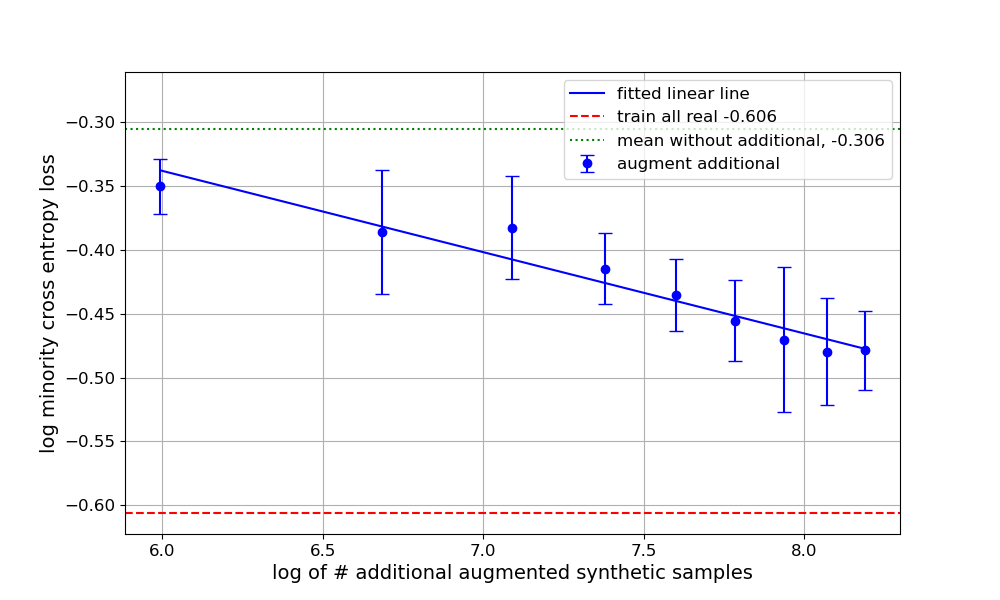}
        \caption{Craft - minority cross-entropy loss}
    \end{subfigure}
    % Second row
    \begin{subfigure}{0.4\textwidth}
        \centering
        \includegraphics[width=\linewidth,height=1.18in]{./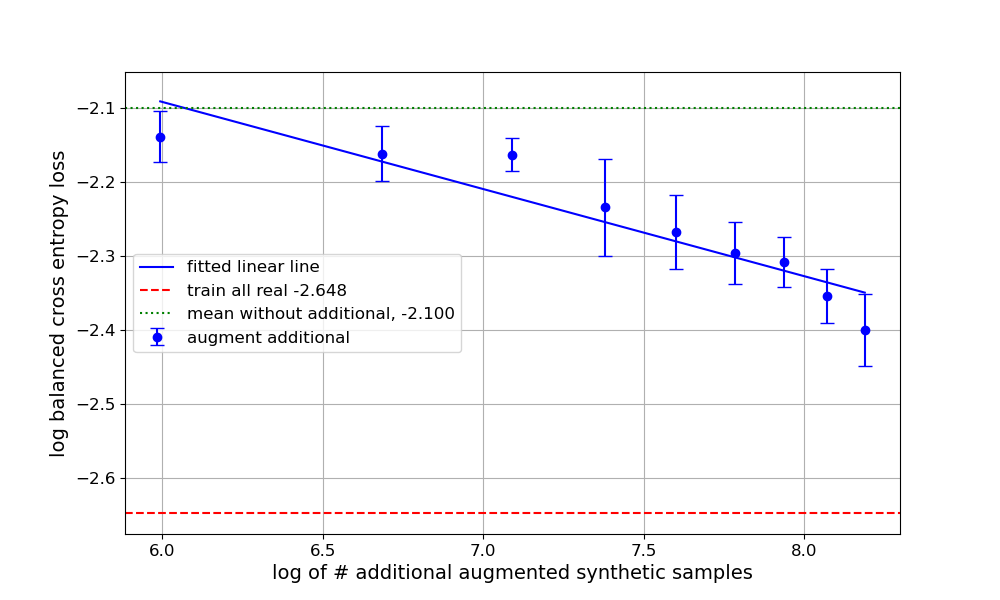}
        \caption{Gender - balanced cross-entropy loss}
    \end{subfigure}
    \begin{subfigure}{0.4\textwidth}
        \centering
        \includegraphics[width=\linewidth,height=1.18in]{./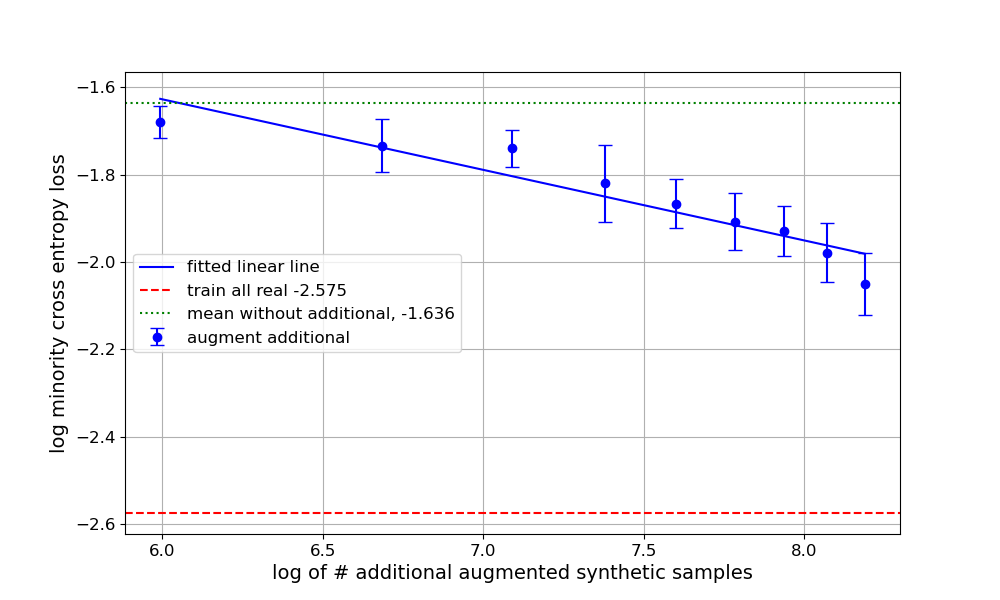}
        \caption{Gender - minority cross-entropy loss}
    \end{subfigure}
    % Third row
    \begin{subfigure}{0.4\textwidth}
        \centering
        \includegraphics[width=\linewidth,height=1.18in]{./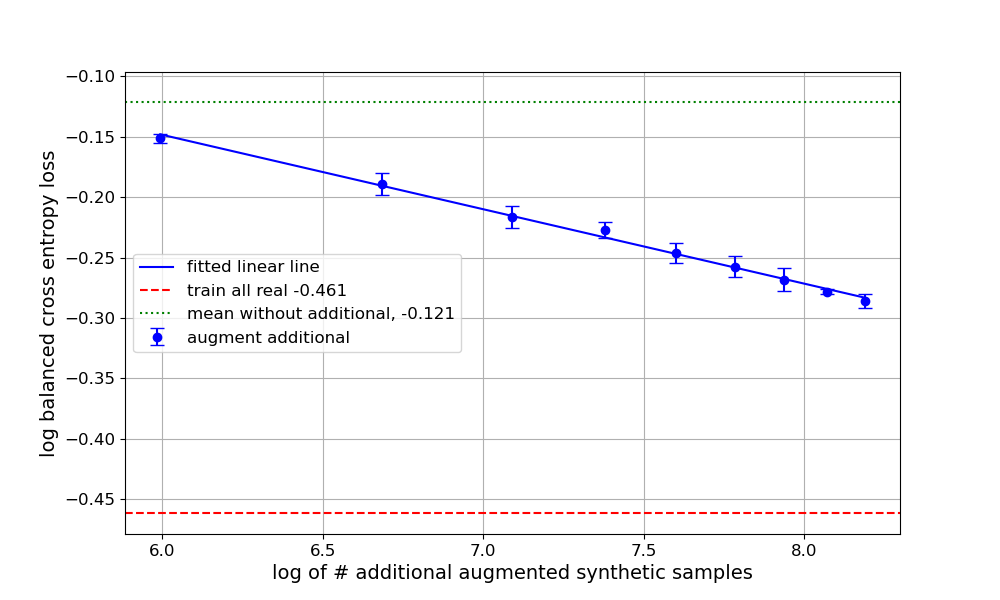}
        \caption{Diabetes - balanced cross-entropy loss}
    \end{subfigure}
    \begin{subfigure}{0.4\textwidth}
        \centering
        \includegraphics[width=\linewidth,height=1.18in]{./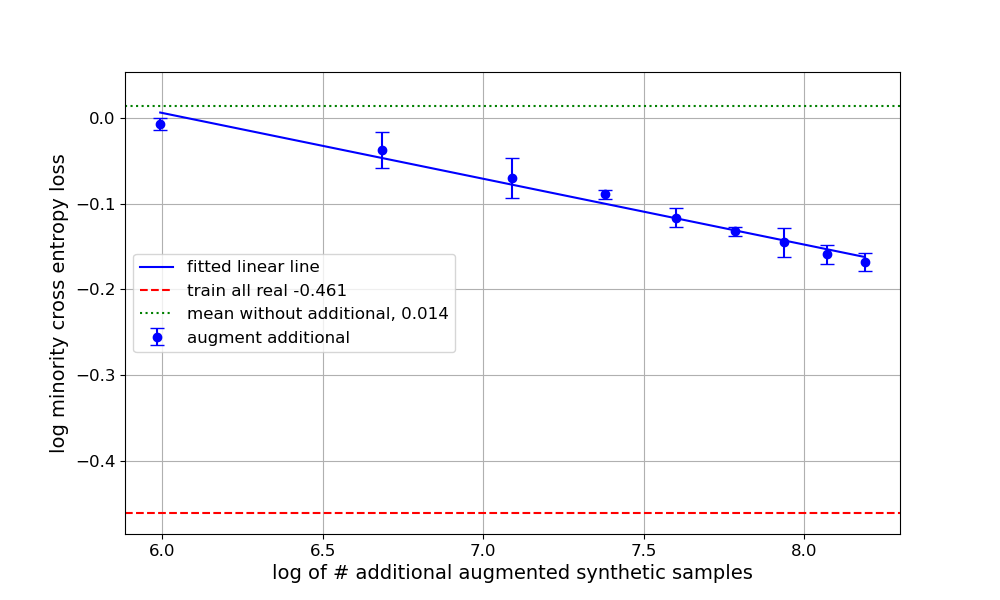}
        \caption{Diabetes - minority cross-entropy loss}
    \end{subfigure}
\caption{Synthetic augmentation for spurious correlation with a similar setup as Figure~\ref{fig:scale-imb-xgb-loglog}.}
\label{fig:scale-spurious-xgb-loglog}
\end{figure}

We next investigate the empirical performance of synthetic augmentation. Figures~\ref{fig:scale-imb-xgb-loglog} and \ref{fig:scale-spurious-xgb-loglog} report the two entropy losses based on 5 data replications for the imbalanced classification setting and the spurious correlation setting, respectively, when the number of additional samples for all groups $N = \{400, 800, \ldots, 3600\}$. We also include two benchmarks, one without any additional synthetic samples, and the other with all the training samples from the original data. We observe that both entropy losses decrease as $N$ increases, and the decay rates are approximately linear in the log scale, which agrees well with our theory. Moreover, the synthetic augmentation method clearly outperforms the solution without augmentation, but falls short compared to the one utilizing the original data. {This phenomenon has also been noted in the literature, where synthetic samples provide some benefits but are not as effective as the original data  \cite{Shumailov2024, alemohammad2023selfconsuminggenerativemodelsmad, zhang2024regurgitativetrainingvaluereal}.}

%%%%%%%%%%%%%%%%%%%%%%%%%%%%%%%%%%%%%%%%%%%%%%%%%%%%%%%%%%%%%%%%%%%%%%%%%%%%%%
\section{Discussion}
\label{sec: discussion}

In this article, we have developed rigorous theoretical foundations that systematically characterize how LLM-generated synthetic data contributes to addressing imbalanced classification and spurious correlation. We have quantified the bias introduced by synthetic oversampling, derived the scaling law for synthetic augmentation, and showed that transformers can generate high-quality synthetic data that uphold the theoretical guarantees for oversampling and augmentation. We have run extensive numerical experiments to confirm the effectiveness of our approach. We make some additional remarks. 

First, if a model could perfectly learn the conditional distribution $P(Y|X)$, then the classification problem would be fully solved, and there is no need for an extra step of data oversampling or augmentation. In practice, however, the learned conditional distribution is only an approximation, especially under severe class imbalance or limited sample size. Our two-stage strategy, first generating synthetic data then integrating them with the observed data, is designed to leverage the generative power of LLMs, while preserving interpretability and ensuring generalizability in the final predictive model. 

More specifically, in terms of interpretability, directly using an LLM for prediction is largely a black-box process, offering little insight into how individual features affect the outcome. In contrast, our hybrid strategy, first generating synthetic data with an LLM and then fitting an interpretable model such as logistic regression, boosting, or random forest, offers transparent reasoning about feature effects and decision structure. This design thus leverages the LLM's capacity to enrich underrepresented regions of the data distribution while keeping the downstream classifier transparent, auditable, and domain-interpretable.

In terms of generalizability, our theoretical analysis formalizes when this extra step is beneficial, even when the synthetic generator produces data from a distribution $\mathcal{D}_{\mathrm{syn}}$ that differs from the raw distribution $\mathcal{D}_{\mathrm{raw}}$. We show that the balanced excess risk under oversampling and augmentation satisfies
\begin{align*}
\calR_{\mathrm{bal}}(\hat{\theta}) - \calR_{\mathrm{bal}}(\theta_{\mathrm{bal}}) &= O\left( \frac{1}{|\mathcal{G}|} \sum_{g\in\mathcal{G}} \{ (1-\alpha)\rho_g + \alpha \} \left\{ \|\nabla \calB^{(g)}(\theta_{\mathrm{bal}})\|^2 + \|\nabla^2 \calB^{(g)}(\theta_{\mathrm{bal}})\|^3 \right\} \right)\\
&\quad+ O_p\left( \frac{(1-\alpha)^2}{n_{\mathrm{tot}}} + \frac{\alpha^2}{N|\mathcal{G}|} \right),
\end{align*}
where $\rho_g$ is the imbalance ratio of group $g$, and $\alpha\in[0,1]$ controls the weight of the additional data augmentation, and therefore balances the bias-variance tradeoff. If one were to directly use an LLM for prediction, it corresponds to setting $\alpha=1$. This yields
\begin{align*}
\calR_{\mathrm{bal}}(\hat{\theta}) - \calR_{\mathrm{bal}}(\theta_{\mathrm{bal}}) &= O\left( \frac{1}{|\mathcal{G}|} \sum_{g\in\mathcal{G}} \left\{ \|\nabla \calB^{(g)}(\theta_{\mathrm{bal}})\|^2 + \|\nabla^2 \calB^{(g)}(\theta_{\mathrm{bal}})\|^3 \right\} \right) + O_p\left( \frac{1}{N|\mathcal{G}|} \right).
\end{align*}
When the learned conditional generator for $P(Y|X)$ is biased, the terms $\|\nabla \calB^{(g)}(\theta_{\mathrm{bal}})\|^2$ and $\|\nabla^2 \calB^{(g)}(\theta_{\mathrm{bal}})\|^3$ would impair the performance of $\hat\btheta$. 

Next, we note some potential limitations of our approach. Our theoretical framework requires a large token dimension, on the order of $O(r |\mathcal{M}|)$, because we use zeros in the input tokens as temporal memory to store intermediate results. However, for our implementation with a pretrained GPT-4 model, there is a practical token limit, which limits the effective data size we can handle. As a result, our numerical experiments use relatively small datasets, while GPT-4 still demonstrates promising performance under such limited sample sizes.

Finally, we point out several potential future directions. Although we have focused on tabular data, the proposed framework can be extended to other data modalities, e.g., images. In addition, we can further explore transfer learning scenarios, where labeled data are scarce and costly to obtain, while large amounts of unlabeled data are readily available. Leveraging LLMs to generate synthetic labels for those unlabeled samples offers a compelling strategy to improve model performance, whereas the theoretical properties remain to be rigorously established.

%%%%%%%%%%%%%%%%%%%%%%%%%%%%%%%%%%%%%%%%%%%%%%%%%%%%%%%%%%%%%%%%%%%%%%%%%%%%%%%%%%%%%%%%%%%%%%%%%%%%%%%%
\bibliographystyle{apalike}
\bibliography{cite}

\newpage

\appendix

%%%%%%%%%%%%%%%%%%%%%%%%%%%%%%%%%%%%%%%%%%%%%%%%%%%%%%%%%%%%%%%%%%%%%%%%%%%%%%
In this Supplementary Appendix, Section~\ref{supp-sec: theory} provides additional theoretical results and the proofs for Section~\ref{sec: theory} of the paper. It includes Section~\ref{supp-sec: verification} about verification of regularity conditions, Section~\ref{sec: oversampling theory proof} about the proofs for synthetic oversampling in Section~\ref{sec: theory oversampling} of the paper, Section~\ref{supp-sec: explicit quality form} about the explicit form of synthetic data quality term in Theorem~\ref{thm: synthetic data risk}, Section~\ref{supp-sec: scaling law proof} about the proofs for synthetic augmentation in Section~\ref{sec: scaling law} of the paper, and Section~\ref{supp-sec: additional scaling law} about scaling laws for additional statistical models. Section~\ref{supp-sec: theory transformers proof} provides additional theoretical results and the proofs for Section~\ref{sec: theory transformers} of the paper. It includes Section~\ref{supp-sec:tf_preliminaries} about notations, Section~\ref{supp-sec:auxiliary} about supporting auxiliary lemmas, and Section~\ref{supp-sec:construction} about the proofs for the transformer theory in Section~\ref{sec: theory transformers} of the paper. Section~\ref{supp-sec: add-numerical} provides additional numerical results. It includes Section~\ref{supp-sec: add-algo} about additional discussions of Algorithm~\ref{alg}, Section~\ref{supp-sec: add-gpt2} about additional results using GPT-2, and Section~\ref{supp-sec: add-gpt4} about additional results using GPT-4.

%%%%%%%%%%%%%%%%%%%%%%%%%%%%%%%%%%%%%%%%%%%%%%%%%%%%%%%%%%%%%%%%%%%%%%%%%%%%%%
\section{Theory about synthetic oversampling and augmentation}
\label{supp-sec: theory}

%%%%%%%%%%%%%%%%%%%%%%%%%%%%%%%%%%%%%%%%%%%%%%%%%%%%%%%%%%%%%%%%%%%%%%%%%%%%%%
\subsection{Verification of regularity conditions}
\label{supp-sec: verification}

We verify the regularity conditions, Assumptions~\ref{asm: differentiability}, \ref{asm: uniform convergence R syn}, and \ref{asm: identifiability theta}, in Section~\ref{sec: conditions} of the paper, under the settings of logistic and linear regressions with random oversampling. That is, within each group $g \in \mG$, samples are drawn with replacement from the observed in-group sample, class-conditional when applicable, to reach a target size. At the population level, this corresponds to sampling from the same underlying group distribution as the raw data. Henceforth, for every fixed $\btheta$, $\tilde\Sigma_g(\btheta)\equiv \Sigma_g(\btheta)$. Moreover, for any matrix $A$, let $\lambda_{\min}(A)$ denote the minimum singular value of $A$. We say an event $\mathcal{E}$ occurs with high probability when $\P(\mathcal{E}) = 1 - \exp\{-\Omega(\log^2 d)\}$.

%%%%%%%%%%%%%%%%%%%%%%%%%%%%%%%%%%%%%%%%%%%%%%%%%%%%%%%%%%%%%%%%%%%%%%%%%%%%%%
\subsubsection{Logistic regression}

We first consider the binary logistic regression, where $y_i^{(g)} \in\{-1,+1\}$ and $\ell(\btheta;\x,y)=\log\{1+\exp(-y \btheta^\top \x)\}$. Suppose for each $g$, $\x_i^{(g)}$ is sub-Gaussian with $\E\|\x_i^{(g)}\|^2<\infty$ and $\E[\x_i^{(g)}\x_i^{(g) \top}]\succ 0$. Let $s(t) = (1+e^{-t})^{-1}$ be the sigmoid function.

For Assumption~\ref{asm: differentiability}, observe that the loss $\ell$ is infinitely differntiable in $\btheta$. Moreover,
$$
\nabla \ell(\btheta;\x,y)= - \sigmoid(-y \btheta^\top \x) y \x, \quad\quad  \nabla^2 \ell(\btheta;\x,y)= \sigmoid(\btheta^\top \x)\bigl(1-\sigmoid(\btheta^\top \x)\bigr) \x \x^\top,
$$
with $0<\sigmoid(1-\sigmoid)\le 1/4$. Hence, on any neighborhood of $\btheta_\b$, $\|\nabla \ell\|\le \|\x\|$, and $\|\nabla^2 \ell\|\le \frac{1}{4} \|\x \x^\top\|$, and sub-Gaussiannity of $\x_i^{(g)}$ ensures integrable envelopes. The population Hessian in group $g$ is $\nabla^2 \calR^{(g)}(\btheta)=\E\left[\sigmoid(1-\sigmoid) \x_i^{(g)}\x_i^{(g) \top}\right]$. Hence $\sum_{g}\nabla^2 \calR^{(g)}(\btheta)$ is strictly positive definite around $\btheta_\b$, and has bounded eigenvalues since $\E[\x_i^{(g)}\x_i^{(g) \top}]\succ0$ and $0<\sigmoid(1-\sigmoid)\le 1/4$. For the covariance map $\Sigma_g(\btheta)$, a standard dominated-convergence argument yields local Lipschitzness of $\Sigma_g(\btheta)$ and $\tilde \Sigma_g(\btheta)$. Since $\tilde\Sigma_g(\btheta)\equiv \Sigma_g(\btheta)$, we trivially have $\sup_{\btheta\in \Theta}\|\Sigma_g(\btheta)-\tilde\Sigma_g(\btheta)\|=o(1)$.
This verifies Assumption~\ref{asm: differentiability}. 

For Assumption~\ref{asm: uniform convergence R syn}, let $\mathcal{H} = \{\ell(\btheta;\cdot):\btheta\in\Theta\}$. We first show that this parametric class is Glivenko-Cantelli. Since $\{(\x, y) \mapsto \btheta^\top \x\}$ is a VC-subgraph, and $(x, y) \mapsto -y$ is a fixed measurable function, $\mathcal{H} = \{(\x, y) \mapsto \btheta^\top \x\} \cdot ((x, y) \mapsto -y)$ is also a VC-subgraph by \citet[][Lemma 2.6.18]{van1996weak}. Since it has a trivial integrable envelope $1$, $\mathcal{H}$ is Glivenko-Cantelli by \citet[][Theorem~2.6.7]{van1996weak}. Note that
\begin{align*}
    \calRhat_\o(\btheta) &= \frac{1}{n_\T + m_\T} \qty{\sum_{g \in \mG} \sum_{i \in [n_g]} \ell( \btheta; \x_i^{(g)}, y_i^{(g)} ) + \sum_{g \in \mG} \sum_{i \in [m_g]} \ell( \btheta; \tilde \x_i^{(g)}, \tilde y_i^{(g)} )}\\
    &= \frac{n_\T}{n_\T + m_\T} \underbrace{\frac{1}{n_\T} \sum_{g \in \mG} \sum_{i \in [n_g]} \ell( \btheta; \x_i^{(g)}, y_i^{(g)} )}_{=: \calRhat_{\o,1}(\btheta)} + \frac{m_\T}{n_\T + m_\T} \underbrace{ \frac{1}{m_\T} \sum_{g \in \mG} \sum_{i \in [m_g]} \ell( \btheta; \tilde \x_i^{(g)}, \tilde y_i^{(g)} )}_{=: \calRhat_{\o,2}(\btheta)}.
\end{align*}
By Glivenko-Cantelli theorem, we have $\sup_{\btheta \in \Theta} |\calRhat_{\o,1}(\btheta) - \E[\calRhat_{\o,1}(\btheta)]| \stackrel{p}{\to} 0$. Following a similar argument, we also have $\sup_{\btheta \in \Theta} |\calRhat_{\o,1}(\btheta) - \E[\calRhat_{\o,1}(\btheta)]| \stackrel{p}{\to} 0$.
Thus, we obtain that
\begin{small}
\begin{align*}
    \sup_{\btheta \in \Theta} |\calRhat_\o(\btheta) - \E[\calRhat_\o(\btheta)]| \leq \sup_{\btheta \in \Theta} |\calRhat_{\o,1}(\btheta) - \E[\calRhat_{\o,1}(\btheta)]| + \sup_{\btheta \in \Theta} |\calRhat_{\o,2}(\btheta) - \E[\calRhat_{\o,2}(\btheta)]| \stackrel{p}{\to} 0.
\end{align*}
\end{small}\noindent

For Assumption~\ref{asm: identifiability theta}, the balanced or oversampled population risk is strictly convex in a neighborhood of its minimizer whenever $\E[\x_i^{(g)}\x_i^{(g) \top}]\succ0$. Hence the minimizer is unique and the strict separation condition holds.

%%%%%%%%%%%%%%%%%%%%%%%%%%%%%%%%%%%%%%%%%%%%%%%%%%%%%%%%%%%%%%%%%%%%%%%%%%%%%%
\subsubsection{Linear regression}

We next consider a linear regression model with squared loss, $\ell(\btheta;\x,y)=(1/2) (y-\btheta^\top \x)^2$. Consider a bounded parameter space $\Theta$. Suppose for each $g$, $\x_i^{(g)}$ is sub-Gaussian, with $\E[\x_i^{(g)}\x_i^{(g) \top}] \succ 0$, and $\E[y_i^{(g) 2} \x_i^{(g)} \x_i^{(g) \top}]<\infty$.

For Assumption~\ref{asm: differentiability}, we have
$$
    \nabla \ell(\btheta;\x,y)=-(y-\btheta^\top \x) \x,   \quad\quad  \nabla^2 \ell(\btheta;\x,y)=\x \x^\top.
$$
Note that $\ell$ is infinitely differentiable with integrable envelopes $\|\nabla\ell\|\le (|y|+\|\btheta\| \|\x\|)\|\x\|$ and $\|\nabla^2\ell\|=\|\x \x^\top\|$. The population Hessian is $\nabla^2 \calR^{(g)}(\btheta)=\E[\x_i^{(g)}\x_i^{(g) \top}] \succ 0$, and $\sum_g \nabla^2 \calR^{(g)}(\btheta)$ preserves strict positive definiteness with bounded eigenvalues around $\btheta_\b$. 
For covariance maps, note that
\begin{align*}
\Sigma_g(\btheta) &= \E\left[\nabla \ell(\btheta;\x_i^{(g)},y_i^{(g)}) \nabla \ell(\btheta;\x_i^{(g)},y_i^{(g)})^\top\right] =\E\left[(y_i^{(g)} - \btheta^\top \x_i^{(g)})^2 \x_i^{(g)} \x_i^{(g) \top}\right] \\
    &= \E[y_i^{(g) 2} \x_i^{(g)} \x_i^{(g) \top}] - 2 \E[y_i^{(g)} \x_i^{(g)} \x_i^{(g) \top} \btheta \x_i^{(g) \top}] + \E[\x_i^{(g)} \x_i^{(g) \top} \btheta \btheta^\top \x_i^{(g)} \x_i^{(g) \top} ].
\end{align*}
Hence $\Sigma_g(\btheta)$ is locally Lipschitz around $\btheta_\b$. Likewise, $\tilde \Sigma_g(\btheta)$ is locally Lipschitz around $\btheta_\b$. Since $\tilde\Sigma_g(\btheta)\equiv\Sigma_g(\btheta)$, we trivially have $\sup_{\btheta\in \Theta}\|\Sigma_g(\btheta)-\tilde\Sigma_g(\btheta)\|=o(1)$.

For Assumption~\ref{asm: uniform convergence R syn}, note that the linear function class $\{(\x, y) \mapsto y - \btheta^\top \x\}$ is a VC-subgraph. By \citet[][Lemma 2.6.18]{van1996weak}, the class $\mathcal{H} := \{(\x, y) \mapsto (y - \btheta^\top \x)^2\}$ is also a VC-subgraph. Since $\Theta$ is bounded, $\x_i^{(g)}$ is sub-Gaussian, and $\E[y_i^{(g)} \x_i^{(g)}]$ exists, the class $\mathcal{H}$ has an integrable envelope. Following a similar argument as in logistic regression yields $\sup_{\btheta\in\Theta}\bigl|\widehat{\calR}_\o(\btheta)-\calR_\o(\btheta)\bigr|=o_p(1)$.

For Assumption~\ref{asm: identifiability theta}, $\calR^{(g)}(\btheta)= (1/2) \E[(y_i^{(g)} - \btheta^\top \x_i^{(g)})^2]$ is globally strictly convex when $\E[\x_i^{(g)}\x_i^{(g) \top}]\succ0$. Thus $\calR_\b$ and $\calR_\o$ are both strictly convex, so the minimizer is unique and the strict separation holds for both ${\calR}_\o$ and ${\calR}_\b$.

%%%%%%%%%%%%%%%%%%%%%%%%%%%%%%%%%%%%%%%%%%%%%%%%%%%%%%%%%%%%%%%%%%%%%%%%%%%%%%
\subsection{Proofs of theory on synthetic oversampling in Section~\ref{sec: theory oversampling}}
\label{sec: oversampling theory proof}

We first see how oversampling improves the performance on minority group in general. Let $n_g$ be the number of raw samples observed for group $g \in \mG$. For each group $g$, we generate $m_g$ synthetic data $m_g$. Denote $n_\T := \sum_{g} n_g$ and $m_\T := \sum_g m_g$ by the total number of raw data and the total number of synthetic data, respectively.

Denote the observed raw data as $\{(\x_1, \dots, \x_{n_g})\}_{g \in \mG}$ and generated synthetic data as $\{(\tilde \x_1,$ $\dots, \tilde \x_{m_g})\}_{g \in \mG}$.
We assume the independence between raw data and synthetic data. When directly using raw data as a reference to synthetic data, the independence does not hold. However, we can always split the raw data, with one half used as raw data, and the other half used as a reference. Given a loss function $\ell(\btheta; x, y) \in \R$ we define the empirical risk with raw and synthetic data as follows
\begin{align*}
    \calRhat_\o(\btheta) &= \frac{1}{n_\T+m_\T} \sum_{g \in \mG} \sum_{i \in [n_g]} \ell(\btheta; \x_i^{(g)}, y_i^{(g)}) + \frac{1}{n_\T + m_\T} \sum_{g \in \mG} \sum_{i \in [m_g]} \ell(\btheta; \tilde \x_i^{(g)}, \tilde y_i^{(g)}).
\end{align*}
Let $\calR_\o(\btheta) = \E[\calRhat_\o(\btheta)]$ be the population version of $\calRhat_\o$.
We also define the balanced risk
\begin{align*}
    \calR_\b(\btheta) = \frac{1}{|\mG|} \sum_{g \in \mG} \calR^{(g)}(\btheta),
\end{align*}
where $\calR^{(g)}(\btheta) = \E[\ell(\btheta; \x_1^{(g)}, y_1^{(g)})]$ is the group specific risk.
Let the minimizers of $\calRhat_\o$, $\calR_\o$, and $\calR_\b$ be $\hat\btheta_\o$, $\btheta_\o$, $\btheta_\b$ respectively.
Define the bias term for group $g$ as
\begin{align*}
    \calB^{(g)}(\btheta) := \E[\ell(\btheta; \tilde \x_1^{(g)}, \tilde y_1^{(g)})] - \E[\ell(\btheta; \x_1^{(g)}, y_1^{(g)})].
\end{align*}
Note that $\btheta_\b$ is the ideal estimator, balancing the parameters over all groups in the dataset.

Our goal is to see the effect of bias present in the synthetic data to the estimator $\hat\btheta_\o$ by measuring the risk of group $g$, $\calR^{(g)}(\hat\btheta_\o) - \calR^{(g)}(\btheta_\b)$.

\begin{proof}[\textbf{Proof of Theorem~\ref{thm: synthetic data risk}}]
In this proof, we aim to approximate $\calR^{(g)}(\hat \btheta_\o) - \calR^{(g)}(\btheta_\b)$. We divide the proof into three steps. In Step 1, we derive the bias between $\btheta_\b$ and $\btheta_\o$. In Step 2, we approximate $\hat\btheta_\o - \btheta_\o$. Finally, in Step 3, we take the effect of group-specific risk into consideration.
    
To ease notation, define $H_{\o,\o} = \nabla^2 \calR_\o(\btheta_\o)$ and $H_{\b,\b} = \nabla^2 \calR_\b(\btheta_\b)$. We note that
\begin{align}
        \calR_\o(\btheta) &= \calR_\b(\btheta) + \sum_{g \in \mG} \frac{m_g}{n_\T + m_\T} \calB^{(g)}(\btheta).\label{eq: R decomp}
\end{align}    

\paragraph{Step 1.}    
Note that $\sup_\btheta |\calR_\o(\btheta) - \calR_\b(\btheta)| = o(1)$ follows from Assumption~\ref{asm: bias o(1)} and \eqref{eq: R decomp}. Then, the convergence $\|\btheta_\o - \btheta_\b\| = o(1)$ follows by a standard Taylor expansion argument and Assumption~\ref{asm: identifiability theta}. See, for example, Theorem 5.7 of \citet{van2000asymptotic}.
From \eqref{eq: R decomp}, we obtain
    \begin{align}
        \nabla \calR_\o(\btheta_\b) &= \sum_{g' \in \mG} \nabla \calR^{(g')}(\btheta_\b) + \sum_{g' \in \mG} \frac{m_{g'}}{n_\T + m_\T} \nabla \calB^{(g')}(\btheta_\b)\nonumber\\
        &= \sum_{g' \in \mG} \frac{m_{g'}}{n_\T + m_\T} \nabla \calB^{(g')}(\btheta_\b),\label{eq: grad R syn}
    \end{align}
    where we used $\sum_{g'} \nabla \calR^{(g')}(\btheta_\b) = 0$.
    By Taylor expansion, there exists some $\btheta'$ in a line segment between $\btheta_\b$ and $\btheta_\o$ such that
    \begin{align}
        \nabla \calR_\o(\btheta_\b) = \underbrace{\nabla \calR_\o(\btheta_\o)}_{=0} + \nabla^2 \calR_\o(\btheta') (\btheta_\b - \btheta_\o).\label{eq: R s theta b expansion}
    \end{align}
    This yields
    \begin{align*}
        \{\nabla \calR_\o(\btheta_\b)\}^\top (\btheta_\b - \btheta_\o) = (\btheta_\b - \btheta_\o)^\top \nabla^2 \calR_\o(\btheta') (\btheta_\b - \btheta_\o).
    \end{align*}
    Since $\|\btheta_\b - \btheta_\o\| = o(1)$, and $\nabla^2 \calR_\o(\btheta)$ is Lipschitz continuous around $\btheta_\b$ with its smallest eigenvalue bounded below (Assumption~\ref{asm: differentiability}), we have
    \begin{align*}
        \|\btheta_\b - \btheta_\o\| &\leq \frac{1}{\lambda_{\min}(\nabla^2 \calR_\o(\btheta_\b)) + o(1)} \|\nabla \calR_\o(\btheta_\b)\| \lesssim \|\nabla \calR_\o(\btheta_\b)\|.
    \end{align*}
    Using \eqref{eq: grad R syn} and \eqref{eq: R s theta b expansion}, we have
    \begin{align*}
        \btheta_\b - \btheta_\o &= \{\nabla^2 \calR_\o(\btheta')\}^{-1} \nabla \calR_\o(\btheta_\b) = H_{\o,\o}^{-1} \nabla \calR_\o(\btheta_\b) + R_1,
    \end{align*}
    where $\|R_1\| = O(\|\nabla \calR_\o(\btheta_\b)\|^2)$.
    By Assumption~\ref{asm: differentiability} and $\|\btheta_\b - \btheta_\o\| = O(\|\nabla \calR_\o(\btheta_\b)\|)$, 
    \begin{align*}
        \|H_{\o,\o}^{-1} - H_{\b,\b}^{-1}\| &\lesssim \norm{\nabla^2 \calR_\b(\btheta_\o) + \sum_{g \in \mG} \frac{m_g}{n_\T + m_\T} \nabla^2 \calB^{(g)}(\btheta_\o) - \nabla^2 \calR_\b(\btheta_\b)}\\
        &\lesssim \|\btheta_\o - \btheta_\b\| + \norm{\sum_{g \in \mG} \frac{m_g}{n_\T + m_\T} \nabla^2 \calB^{(g)}(\btheta_\b)},\\
        &\lesssim \|\nabla \calR_\o(\btheta_\b)\| + \norm{\sum_{g \in \mG} \frac{m_g}{n_\T + m_\T} \nabla^2 \calB^{(g)}(\btheta_\b)},
    \end{align*}
Therefore,
\begin{align}
\btheta_\b - \btheta_\o &= H_{\b,\b}^{-1} \nabla \calR_\o(\btheta_\b) + R_1' = \sum_{g \in \mG} \frac{m_g}{n_\T + m_\T} H_{\b,\b}^{-1} \nabla \calB^{(g)}(\btheta_\b) + R_1',\label{eq: theta bal theta o distance}
\end{align}
where $\|R_1'\| = O(\|\nabla \calR_\o(\btheta_\b)\|^2 + \|\sum_{g \in \mG} m_g/(n_\T + m_\T) \nabla^2 \calB^{(g)}(\btheta_\b)\|^2)$.

\paragraph{Step 2.}
The consistency $\|\hat\btheta_\o - \btheta_\o\| = o_p(1)$ follows by a standard argument of $M$-estimators combined with Assumptions~\ref{asm: uniform convergence R syn} and \ref{asm: identifiability theta}. (Theorem 5.7 in \citet{van2000asymptotic}.)
    We follow the argument in the proof of Proposition 3.1 of \citet{jain2024scaling}.
    By a standard argument in $M$-estimation, combined with Assumption~\ref{asm: differentiability}, and $\|\btheta_\b - \btheta_\o\| = o(1)$, we have
    \begin{align*}
        \hat\btheta_\o &= \btheta_\o - H_{\o,\o}^{-1} \nabla \calRhat_\o(\btheta_\o) + O(\|\hat\btheta_\o - \btheta_\o\|^2),\\
        &= \btheta_\o - H_{\b,\b}^{-1} \nabla \calRhat_\o(\btheta_\o) + \underbrace{O(\|\hat\btheta_\o - \btheta_\o\|^2 + \|\btheta_\o - \btheta_\b\| \|\nabla \calRhat_\o(\btheta_\o)\|)}_{=: R_2}.
    \end{align*}
    Using Cauchy-Schwarz inequality, we can bound $R_2$ as
    \begin{align*}
        \|R_2\| &\lesssim \|\nabla \calRhat_\o(\btheta_\o)\|^2 + \|\btheta_\o - \btheta_\b\|^2\\
        &= O_p\qty( \frac{\max_{g' \in \mG} \tr(\Sigma_{g'}(\btheta_\o)) \vee \tr(\tilde \Sigma_{g'}(\btheta_\o))}{n_\T + m_\T}) + O\qty(\norm{\sum_{g \in \mG} \frac{m_g}{n_\T + m_\T} \nabla \calB^{(g)}(\btheta_\b)}^2 + \|R_1'\|^2 )\\
        &= O_p\qty( \frac{\max_{g' \in \mG} \tr(\Sigma_{g'}(\btheta_\b)) \vee \tr(\tilde \Sigma_{g'}(\btheta_\b))}{n_\T + m_\T}) + O\qty(\norm{\sum_{g \in \mG} \frac{m_g}{n_\T + m_\T} \nabla \calB^{(g)}(\btheta_\b)}^2 + \|R_1'\|^2 ).
    \end{align*}

\paragraph{Step 3.}
Combining the results from Step 1 and Step 2, we have
\begin{align*}
\hat \btheta_\o - \btheta_\b = - H_{\b,\b}^{-1} \nabla \calRhat_\o(\btheta_\o) - \sum_{g' \in \mG} \frac{m_{g'}}{n_\T + m_\T} H_{\b,\b}^{-1} \nabla \calB^{(g')}(\btheta_\b) + R_3,
\end{align*}
where $\|R_3\| \lesssim \|R_2\| + \|R_1'\|$. Now we measure the performance of $\hat\btheta_\o$ for group $g$. From Assumption~\ref{asm: differentiability}, $\|\nabla^2 \calR^{(g)}(\btheta')\| = O(1)$ for any $\btheta'$ in the line segment between $\btheta_\o$ and $\btheta_\b$, since $\|\btheta_\b - \btheta_\o\| = o(1)$ by Step 1.
Using Taylor expansion, we have
\begin{align*}
&\calR^{(g)}(\hat \btheta_\o) - \calR^{(g)}(\btheta_\b) = \{\nabla \calR^{(g)}(\btheta_\b)\}^\top (\hat \btheta_\o - \btheta_\b) + O(\|\hat\btheta_\o - \btheta_\b\|^2)\\
& =: - \{\nabla \calR^{(g)}(\btheta_\b)\}^\top H_{\b,\b}^{-1} \nabla \calRhat_\o(\btheta_\o) - \sum_{g' \in \mG} \frac{m_{g'}}{n_\T + m_\T} \{\nabla \calR^{(g)}(\btheta_\b)\}^\top H_{\b,\b}^{-1} \nabla \calB^{(g')}(\btheta_\b) + R,
\end{align*}
where
\begin{align*}
        \|R\| &\lesssim \|R_3\| + \|\nabla \calRhat_\o(\btheta_\o)\|^2\\
        &= O_p\qty( \frac{\max_{g' \in \mG} \tr(\Sigma_{g'}(\btheta_\o)) \vee \tr(\tilde \Sigma_{g'}(\btheta_\o))}{n_\T + m_\T})\\
        &\quad+ O\qty(\norm{\sum_{g \in \mG} \frac{m_g}{n_\T + m_\T} \nabla \calB^{(g)}(\btheta_\b)}^2 + \|\nabla \calR_\o(\btheta_\b)\|^2 + \norm{\sum_{g \in \mG} \frac{m_g}{n_\T + m_\T} \nabla^2 \calB^{(g)}(\btheta_\b)}^2 ).
    \end{align*}
    with $v_g$ and $b_{g,g'}$ defined as
    \begin{align}
        v_g^2 &= \{\nabla \calR^{(g)}(\btheta_\b)\}^\top H_{\b,\b}^{-1} \qty(\frac{1}{|\mG|} \sum_{g' \in \mG} \Sigma_{g'}(\btheta_\b)) H_{\b,\b}^{-1} \nabla \calR^{(g)}(\btheta_\b),\nonumber\\
        b_{g,g'} &= \{\nabla \calR^{(g)}(\btheta_\b)\}^\top H_{\b,\b}^{-1} \nabla \calB^{(g')}(\btheta_\b).\label{eq: b g g}
    \end{align}
    By Assumption~\ref{asm: identifiability theta}, we have $\|\tilde \Sigma_g(\btheta_\o) - \Sigma_g(\btheta_\o)\| = o(1)$ and $\|\Sigma_g(\btheta_\o) - \Sigma_g(\btheta_\b)\| = o(1)$.
    Therefore, we have
    \begin{align*}
        &\calR^{(g)}(\hat \btheta_\o) - \calR^{(g)}(\btheta_\b) = - \sum_{g' \in \mG} \frac{m_{g'}}{n_\T + m_\T} b_{g,g'}\\
        &\quad\quad+ O_p\qty(\frac{1}{\sqrt{n_\T+m_\T}} v_g + \frac{\max_{g' \in \mG} \tr(\Sigma_{g'}(\btheta_\b)) \vee \tr(\tilde \Sigma_{g'}(\btheta_\b))}{m_\T + n_\T})\\
        &\quad\quad+ O\qty(\|\nabla \calR_\o(\btheta_\b)\|^2 + \norm{\sum_{g \in \mG} \frac{m_g}{n_\T + m_\T} \nabla \calB^{(g)}(\btheta_\b)}^2 + \norm{\sum_{g \in \mG} \frac{m_g}{n_\T + m_\T} \nabla^2 \calB^{(g)}(\btheta_\b)}^2),
    \end{align*}
where $O_p$ hides constants in the assumption. The conclusion follows from \eqref{eq: grad R syn}, and $1/(n_\T + m_\T) = o(1)$. This completes the proof of Theorem~\ref{thm: synthetic data risk}.
\end{proof}

%%%%%%%%%%%%%%%%%%%%%%%%%%%%%%%%%%%%%%%%%%%%%%%%%%%%%%%%%%%%%%%%%%%%%%%%%%%%%%
\subsubsection{Imbalanced classification}

Consider the binary classification setting in Section~\ref{sec: theory}.

\begin{proof}[\textbf{Proof of Corollary~\ref{cor: synthetic data risk imbalanced}}]
Directly applying Theorem~\ref{thm: synthetic data risk} gives
\begin{align*}
        \calR^{(0)}(\hat \btheta_\o) &= \calR^{(0)}(\btheta_\b) - \frac{n_1-n_0}{2 n_1} b_{0,0} + O_p\qty(\frac{1}{\sqrt{n_1}} v_0 + \norm{\nabla \calB^{(0)}(\btheta_\b)}^2 + \norm{\nabla^2 \calB^{(0)}(\btheta_\b)}^2),\\
        \calR^{(1)}(\hat \btheta_\o) &= \calR^{(1)}(\btheta_\b) - \frac{n_1-n_0}{2 n_1} b_{1,0} + O_p\qty(\frac{1}{\sqrt{n_1}} v_1 + \norm{\nabla \calB^{(0)}(\btheta_\b)}^2 + \norm{\nabla^2 \calB^{(0)}(\btheta_\b)}^2),
\end{align*}
where $b_{g,g'}$ is defined in \eqref{eq: b g g}. Using $|\max(a,b) - \max(c,d)| \leq |a-c| \vee |b-d|$ for any $a,b,c,d \in \R$ completes the proof of Corollary~\ref{cor: synthetic data risk imbalanced}. 
\end{proof}

%%%%%%%%%%%%%%%%%%%%%%%%%%%%%%%%%%%%%%%%%%%%%%%%%%%%%%%%%%%%%%%%%%%%%%%%%%%%%%
\subsubsection{Spurious correlation}

Let $\mG = \mathcal{Y} \times \mathcal{S}$. Consider the spurious correlation setting in Section~\ref{sec: theory}.
We are interested in the performance of $\hat \btheta_\o$ against the minimizer $\btheta_\rw$ of $\calR_\rw$, measured in minority group risk $\calR^{(-1,1)}(\btheta) := \E[\ell(\btheta; (\z_1, \s_1), \y_1)| y_1 = -1, \s_1 = \bgamma]$.

\begin{proof}[\textbf{Proof of Corollary~\ref{cor: synthetic data risk spurious}}]
We first show that $\btheta_\b = \btheta_\rw$. Let $\s_i'$ be an independent copy of $\s_i$. Since $\z_i$ only depends on the label $y_i$,
    \begin{align*}
        \calR_\b(\btheta) &= \frac{1}{4} \sum_{y, \s} \E[\ell(\btheta; (\z_i, \s)^\top, y)] = \frac{1}{4} \sum_{y, \s} \E[\ell(\btheta; (\z_i, \s)^\top, y_i) | y_i = y]\\
        &= \frac{1}{2} \sum_{y} \E[\ell(\btheta; (\z_i, \s_i')^\top, y_i) | y_i = y] = \calR_\rw(\btheta).
    \end{align*}
    Thus $\btheta_\b = \btheta_\rw$ by definition.
    For any $g \in \mG$, Theorem~\ref{thm: synthetic data risk} gives
    \begin{align*}
        &\calR^{(g)}(\hat \btheta_\o)\\
        &\quad= \calR^{(g)}(\btheta_\rw) - \sum_{g' \in \mG} \frac{m_{g'}}{n_\T + m_\T} b_{g,g'}\\
        &\quad\quad+ O_p\biggl(\frac{1}{\sqrt{n_\T + m_\T}} v_g + \frac{1}{m_\T \wedge n_\T} + \sum_{g' \in \mG} \frac{m_{g'}}{n_\T + m_\T} (\|\nabla \calB^{(g')}(\btheta_\rw)\|^2 \vee \|\nabla^2 \calB^{(g')}(\btheta_\rw)\|^2)\biggr),
    \end{align*}
    where $b_{g,g'}$ is defined in \eqref{eq: b g g}. Using $\calR_\b(\btheta) = \calR_\rw(\btheta) = (1/|\mG|) \sum_{g \in \mG} \calR^{(g)}(\btheta)$, we obtain
    \begin{align*}
        &|\calR_\rw(\hat \btheta_\o) - \calR_\rw(\btheta_\rw)|\\
        &\quad= O_p\biggl(\sum_{g \in \mG} \sum_{g' \in \mG} \frac{m_{g'}}{n_\T + m_\T} b_{g,g'} + \sum_{g \in \mG} \frac{1}{\sqrt{n_\T + m_\T}} v_g\\
        &\quad\quad\quad\quad+ \frac{1}{m_\T \wedge n_\T} + \sum_{g' \in \mG} \frac{m_{g'}}{n_\T + m_\T} (\|\nabla \calB^{(g')}(\btheta_\rw)\|^2 \vee \|\nabla^2 \calB^{(g')}(\btheta_\rw)\|^2)\biggr)\\
        &\quad= O_p\biggl(\frac{n_{\text{maj}}-n_{\text{min}}}{n_{\text{maj}}} (\|\nabla \calB^{(1,-\bgamma)}(\btheta_\rw)\| + \|\nabla \calB^{(-1,\bgamma)}(\btheta_\rw)\| + \|\nabla^2 \calB^{(1,-\bgamma)}(\btheta_\rw)\|^2 + \|\nabla^2 \calB^{(-1,\bgamma)}(\btheta_\rw)\|^2)\\
        &\quad\quad+ \frac{1}{\sqrt{n_{\text{maj}}}} \max_{g \in \mG} v_g\biggr),
    \end{align*}
    where the last inequality follows since $n_\mn \leq c n_{\text{maj}}$ for some $c < 1$, and $|b_{g,g'}| \lesssim \|\nabla \calB^{(g')}(\btheta_\rw)\|$.
    This completes the proof of Corollary~\ref{cor: synthetic data risk spurious}.
    \if0
    \begin{align*}
        &|\max_{g \in \mG} \calR^{(g)}(\hat \btheta_\o) - \max_{g \in \mG} \calR^{(g)}(\btheta_\rw)|\\
        &\quad \leq \max_{g \in \mG} |\calR^{(g)}(\hat \btheta_\o) - \calR^{(g)}(\btheta_\rw)|\\
        &\quad= O_p\biggl(\max_{g \in \mG} \sum_{g' \in \mG} \frac{m_{g'}}{n_\T + m_\T} b_{g,g'} + \max_{g \in \mG} \frac{1}{\sqrt{n_\T + m_\T}} v_g\\
        &\quad\quad\quad\quad+ \frac{1}{m_\T \wedge n_\T} + \sum_{g' \in \mG} \frac{m_{g'}}{n_\T + m_\T} (\|\nabla \calB^{(g')}(\btheta_\rw)\|^2 \vee \|\nabla^2 \calB^{(g')}(\btheta_\rw)\|^2)\biggr)\\
        &\quad= O_p\biggl(\frac{n_{\text{maj}}-n_{\text{min}}}{n_{\text{maj}}} (\|\nabla \calB^{(1,-\bgamma)}(\btheta_\rw)\| + \|\nabla \calB^{(-1,\bgamma)}(\btheta_\rw)\| + \|\nabla^2 \calB^{(1,-\bgamma)}(\btheta_\rw)\|^2 + \|\nabla^2 \calB^{(-1,\bgamma)}(\btheta_\rw)\|^2)\\
        &\quad\quad+ \frac{1}{\sqrt{n_{\text{maj}}}} \max_{g \in \mG} v_g\biggr),
    \end{align*}
    where the last inequality follows since $n_\mn \leq c n_{\text{maj}}$ for some $c < 1$, and $|b_{g,g'}| \leq \|\nabla \calB^{(g')}(\btheta_\rw)\|$.
    This completes the proof of Corollary~\ref{cor: synthetic data risk spurious}.
    \fi
\end{proof}

%%%%%%%%%%%%%%%%%%%%%%%%%%%%%%%%%%%%%%%%%%%%%%%%%%%%%%%%%%%%%%%%%%%%%%%%%%%%%%
\subsection{Explicit form of the synthetic data quality term}
\label{supp-sec: explicit quality form}

We derive the explicit form of the synthetic data quality term in Theorem~\ref{thm: synthetic data risk}, for group $g \in \mG$, under the settings of logistic and linear regressions, i.e., 
$$ 
q^{(g)} := \{\nabla \calR^{(g)}(\btheta_\b)\}^\top \{\nabla^2 \calR_\b(\btheta_\b)\}^{-1} \boldsymbol{b},
$$
where $\boldsymbol{b} = {|\mG|}^{-1} \sum_{g' \in \mG} \rho_{g'} \nabla \calB^{(g')}(\btheta_\b)$.

%%%%%%%%%%%%%%%%%%%%%%%%%%%%%%%%%%%%%%%%%%%%%%%%%%%%%%%%%%%%%%%%%%%%%%%%%%%%%%
\subsubsection{Logistic regression}

Let $\ell(\btheta;\x,y) = -y\x^\top\btheta + \log\bigl(1+e^{\x^\top\btheta}\bigr)$ with $y\in\{0,1\}$, $s(t) := (1+e^{-t})^{-1}$, and $s_\b(\x) := s(\x^\top\btheta_\b)$. Suppose both the true and synthetic label conditionals follow logistic models,
$$
\P(Y^{(g)}=1\mid \x_i^{(g)}=\x) = s(\x^\top\btheta^{(g)}), \quad\quad 
\P(\tilde Y^{(g)}=1\mid \tilde \x_i^{(g)}=\x) = s(\x^\top\tilde\btheta^{(g)}),
$$
with possibly different covariate laws for $\x_i^{(g)}$ and $\tilde \x_i^{(g)}$. The group-$g$ risk, gradient, and Hessian are, respectively, 
\begin{align*}
    \calR^{(g)}(\btheta) &= \E\left[ -Y^{(g)}\x_i^{(g) \top}\btheta+\log(1+e^{\x_i^{(g) \top}\btheta}) \right],\\ 
    \nabla\calR^{(g)}(\btheta) &= \E\left[ \x_i^{(g)}\{s(\x_i^{(g) \top}\btheta)-Y^{(g)}\} \right],\\
    \nabla^2\calR^{(g)}(\btheta) &= \E\left[ \x_i^{(g)}\x_i^{(g) \top}s(\x_i^{(g) \top}\btheta)\{1-s(\x_i^{(g) \top}\btheta)\} \right].
\end{align*}
Evaluated at $\btheta_\b$, write 
$$
s^{(g)} := \E[\x_i^{(g)} s_\b(\x_i^{(g)})],  \mu^{(g)}_{xy} := \E[\x_i^{(g)}Y^{(g)}],  H_\b^{(g)}:=\E[\x_i^{(g)}\x_i^{(g) \top}s_\b(\x_i^{(g)})\{1-s_\b(\x_i^{(g)})\}].
$$
Then
\begin{align*}
    \nabla \calR^{(g)}(\btheta_\b) &= s^{(g)}-\mu^{(g)}_{xy},\ \ \nabla^2 \calR_\b(\btheta_\b) = \frac{1}{|\mG|}\sum_{g'\in\mG} H_\b^{(g')} =: H_\b.
\end{align*}

For the bias component, define the analogous synthetic moments $\tilde s^{(g)}:=\E[\tilde \x_i^{(g)}s_\b(\tilde \x_i^{(g)})]$ and $\tilde \mu^{(g)}_{xy}:=\E[\tilde \x_i^{(g)}\tilde Y^{(g)}]$.
A direct calculation gives $\nabla \calB^{(g)}(\btheta_\b) = \Big(\tilde s^{(g)}-s^{(g)}\Big) - \Big(\tilde \mu^{(g)}_{xy}-\mu^{(g)}_{xy}\Big)$. Therefore, $\boldsymbol{b} = \frac{1}{|\mG|}\sum_{g'\in\mG}\rho_{g'}\left[\big(\tilde s^{(g')}-s^{(g')}\big)-\big(\tilde \mu^{(g')}_{xy}-\mu^{(g')}_{xy}\big)\right]$. 

Finally, the group-specific quality term is
\begin{align*}
    q^{(g)} = \big(s^{(g)}-\mu^{(g)}_{xy}\big)^\top H_\b^{-1} \left\{\frac{1}{|\mG|}\sum_{g'\in\mG}\rho_{g'}\left[\big(\tilde s^{(g')}-s^{(g')}\big)-\big(\tilde \mu^{(g')}_{xy}-\mu^{(g')}_{xy}\big)\right]\right\}.
\end{align*}

\noindent
Since $\mu^{(g)}_{xy} = \E[\x_i^{(g)}s(\x_i^{(g) \top}\btheta^{(g)})]$ under the logistic model, the group gradient can be written as $\nabla \calR^{(g)}(\btheta_\b) = \E[\x_i^{(g)}\{s_\b(\x_i^{(g)})-s(\x_i^{(g) \top}\btheta^{(g)})\}]$, i.e., a score imbalance between the balanced model and the group's true conditional; $q^{(g)}$ then couples this imbalance with the synthetic-real mismatch in the score and label moments, scaled by the inverse Fisher metric $H_\b^{-1}$. We also note that the mismatch terms $\big(\tilde s^{(g')}-s^{(g')}\big)$ and $\big(\tilde \mu^{(g')}_{xy}-\mu^{(g')}_{xy}\big)$ measure how the synthetic data deviates from the real data in terms of feature-score alignment and label-feature alignment. Thus $q^{(g)}$ acts as a bias term measuring how much this synthetic-real distributional mismatch influences the downstream risk via its interaction with the group gradient and the curvature of the risk function through $H_\b^{-1}$.

%%%%%%%%%%%%%%%%%%%%%%%%%%%%%%%%%%%%%%%%%%%%%%%%%%%%%%%%%%%%%%%%%%%%%%%%%%%%%%
\subsubsection{Linear regression}

Let $\ell(\btheta;\x,y)= (1/2) (y-\x^\top\btheta)^2$. Suppose the data generating model for the true and synthetic distribution data follow
\begin{align*}
y_i^{(g)} &= \x_i^{(g) \top} \btheta^{(g)} + \epsilon_i^{(g)},\quad i=1,\ldots,n_g,\\
\tilde y_i^{(g)} &= \tilde \x_i^{(g) \top} \tilde \btheta^{(g)} + \tilde \epsilon_i^{(g)},\quad i=1,\ldots,m_g.
\end{align*}
Suppose $\x_i^{(g)}$ is a mean-zero random vector, and $\epsilon_i^{(g)}$ is a mean-zero random variable, independent of $\x_i^{(g)}$. Likewise, $\tilde \x_i^{(g)}$ is a mean-zero random vector and $\tilde \epsilon_i^{(g)}$ is a mean-zero random variable, independent of $\tilde \x_i^{(g)}$. Suppose $S^{(g)} := \E[\x_i^{(g)} \x_i^{(g) \top}]$, and $\tilde S^{(g)} := \E[\tilde \x_i^{(g)} \tilde \x_i^{(g) \top}]$ exist and are symmetric positive definite matrices.
Then, 
\begin{align*}
    \calR^{(g)}(\btheta) &= \frac{1}{2} \mathbb{E}[(\x_i^{(g) \top} \btheta - \x_i^{(g) \top} \btheta^{(g)})^2] + \frac{1}{2} \E[\epsilon_i^{(g) 2}]\\
    &= \frac{1}{2} (\btheta - \btheta^{(g)})^\top S^{(g)} (\btheta - \btheta^{(g)}) + \frac{1}{2} \E[\epsilon_i^{(g) 2}].
\end{align*}
Thus, $\nabla \calR^{(g)}(\btheta) = S^{(g)} (\btheta - \btheta^{(g)}),\ \ \nabla^2 \calR^{(g)}(\btheta) = S^{(g)}$. This gives
\begin{align*}
    \nabla \calR^{(g)}(\btheta_\b) = \x_i^{(g)} ( \btheta_\b - \btheta^{(g)}),\ \ \nabla^2 \calR_\b(\btheta_\b) = \frac{1}{|\mG|} \sum_{g' \in \mG} S^{(g')}.
\end{align*}
By a similar argument, we have $\nabla \calB^{(g)}(\btheta_\b) = \tilde S^{(g)} (\btheta_\b - \tilde \btheta^{(g)}) - S^{(g)} (\btheta_\b - \btheta^{(g)})$. This gives
\begin{align*}
    \boldsymbol{b} = \frac{1}{|\mG|} \sum_{g' \in \mG} \rho_{g'} \left( \tilde S^{(g')} (\btheta_\b - \tilde \btheta^{(g')}) - S^{(g')} (\btheta_\b - \btheta^{(g')}) \right).
\end{align*}
Hence
\begin{align*}
    q^{(g)} = (\btheta_\b - \btheta^{(g)})^\top S^{(g)} \left(\frac{1}{|\mG|} \sum_{g' \in \mG} S^{(g')}\right)^{-1} \qty{ \frac{1}{|\mG|} \sum_{g' \in \mG} \rho_{g'} \left( (\tilde S^{(g')} - S^{(g')}) \btheta_\b - (\tilde \x_i^{(g')} \tilde \btheta^{(g')} - S^{(g')} \btheta^{(g')}) \right) }.
\end{align*}
When the covariance matrices are the same across all groups for synthetic data and raw data, i.e., $S^{(g)} = S$ and $\tilde S^{(g)} = S$, we can further simplify $q^{(g)}$ as
\begin{align*}
q^{(g)} = - \frac{1}{|\mG|} \sum_{g' \in \mG} \rho_{g'} (\btheta_\b - \btheta^{(g)})^\top S (\tilde \btheta^{(g')} - \btheta^{(g')}).
\end{align*}
Similar to the case of logistic regressoin, $q^{(g)}$ measures the mismatch between the synthetic data and the real data.

%%%%%%%%%%%%%%%%%%%%%%%%%%%%%%%%%%%%%%%%%%%%%%%%%%%%%%%%%%%%%%%%%%%%%%%%%%%%%%
\subsection{Proofs of theory on synthetic augmentation in Section~\ref{sec: scaling law}}
\label{supp-sec: scaling law proof}

We prove Theorem~\ref{thm: scaling law}, the scaling law of the risk when we feed additional synthetic data equally to all groups. Corollaries~\ref{cor: scaling law imb} and \ref{cor: scaling law spu} follow directly and thus their proofs are omitted.

\begin{proof}[\textbf{Proof of Theorem~\ref{thm: scaling law}}]
In this proof, we aim to show the scaling law for the balanced excess risk $\calR_\b(\hat\btheta) - \calR_\b(\btheta_\b)$ in terms of $n_\T$ and $N$, where $\hat\btheta$ is the minimizer of the risk in \eqref{eq: general R with data augmentation}. We divide the proof into 4 steps. Step 1 approximates $\btheta^* - \btheta_\b$, where $\btheta^* := \argmin_\btheta \E[\calR(\btheta)]$ and $\calR(\btheta) := \E[\calRhat(\btheta)]$. Step 2 bounds the difference between $H := \nabla^2 \calR(\btheta^*)$ and $H_{\b,\b} := \nabla^2 \calR_\b(\btheta_\b)$. Step 3 approximates $\hat\btheta - \btheta^*$. Step 4 combines the previous steps and derives the approximation of $\hat\btheta - \btheta_\b$, and $\calR_\b(\hat\btheta) - \calR_\b(\btheta_\b)$.

For notational convenience, define $\calR_\o(\btheta) = \E[\calRhat_\o(\btheta)]$, $\calR_\a(\btheta) = \E[\calRhat_\a(\btheta)]$, and $H_\o := \nabla^2 \calR(\btheta_\o)$. Also define
\begin{align*}
\alpha_g := (1 - \alpha) \rho_g + \alpha,\ \  \kappa^2 := \frac{1}{|\mG|} \sum_{g \in \mG} \alpha_g \{\|\nabla \calB^{(g)}(\btheta_\b)\|^2 + \|\nabla^2 \calB^{(g)}(\btheta_\b)\|^2\}.
\end{align*}

    \paragraph{Step 1.}
    In this step, we approximate $\btheta^* - \btheta_\b$.
    By the same argument as in the proof of Theorem~\ref{thm: synthetic data risk}, we have $\|\btheta^* - \btheta_\b\| = o(1)$.
    By Taylor expansion, we have
    \begin{align*}
        0 = \nabla \calR(\btheta^*) = \nabla \calR(\btheta_\b) + \nabla^2 \calR(\btheta') (\btheta^* - \btheta_\b),
    \end{align*}
    where $\btheta'$ is in the line segment between $\btheta^*$ and $\btheta_\b$.
    By Assumption~\ref{asm: differentiability}, we have
    \begin{align*}
        0 = \nabla \calR(\btheta_\b) + \nabla^2 \calR(\btheta_\b) (\btheta^* - \btheta_\b) + R_1,
    \end{align*}
    where $\|R_1\| \lesssim \|\btheta^* - \btheta_\b\|^2$. This gives
    \begin{align}
        0 &= \nabla \calR(\btheta_\b) + \nabla^2 \calR_\b(\btheta_\b) (\btheta^* - \btheta_\b)\nonumber\\
        &\quad+ \underbrace{R_1 + \{\nabla^2 \calR(\btheta_\b) - \nabla^2 \calR_\b(\btheta_\b)\}(\btheta^* - \btheta_\b)}_{=: R_1'}.\label{eq: R1 prime}
    \end{align}
    Now, observe that for any $\btheta$,
    \begin{align}
        \calR(\btheta) &= (1 - \alpha) \sum_{g \in \mG} \frac{n_g}{n_\T + m_\T} \calR^{(g)}(\btheta) + (1 - \alpha) \sum_{g \in \mG} \frac{m_g}{n_\T + m_\T} \{\calR^{(g)}(\btheta) + \calB^{(g)}(\btheta)\}\nonumber\\
        &\quad+ \alpha \sum_{g \in \mG} \frac{1}{|\mG|} \{\calR^{(g)}(\btheta) + \calB^{(g)}(\btheta)\}\nonumber\\
        &= \calR_\b(\btheta) + \frac{1}{|\mG|} \sum_{g \in \mG} \{(1 - \alpha) \rho_g + \alpha\} \calB^{(g)}(\btheta).\label{eq: calR and calR bal}
    \end{align}
    Thus, we can bound $R_1'$ as $\|R_1'\| \lesssim \|\btheta^* - \btheta_\b\|^2 + \|\btheta^* - \btheta_\b\| \kappa \lesssim \|\btheta^* - \btheta_\b\|^2 + \kappa^2$. Furthermore, \eqref{eq: R1 prime} gives
    \begin{align}
        \btheta^* - \btheta_\b = - H_{\b,\b}^{-1} \{\nabla \calR(\btheta_\b) + R_1'\}.\label{eq: theta * - theta b}
    \end{align}
    Again from Assumption~\ref{asm: differentiability}, we obtain $\|\btheta^* - \btheta_\b\| = O(\kappa)$ and hence $\|R_1'\| = O(\kappa^2)$.
    
    \paragraph{Step 2.}
    In this step, we bound $\|H - H_{\b,\b}\|$. 
    From \eqref{eq: calR and calR bal},
    \begin{align*}
        H - H_{\b,\b} &= \nabla^2 \calR_\b(\btheta^*) + \frac{1}{|\mG|} \sum_{g \in \mG} \{(1 - \alpha) \rho_g + \alpha\} \nabla^2 \calB^{(g)}(\btheta^*) - \nabla^2 \calR_\b(\btheta_\b)\\
        &\quad+ \frac{1}{|\mG|} \sum_{g \in \mG} \{(1 - \alpha) \rho_g + \alpha\} \{\nabla^2 \calB^{(g)}(\btheta_\b) - \nabla^2 \calB^{(g)}(\btheta^*)\}.
    \end{align*}
    From Step 1, $\|\btheta_\b - \btheta^*\| \lesssim \kappa$. 
    By Assumption~\ref{asm: differentiability},
    \begin{align}
        \|H - H_{\b,\b}\| &\lesssim \|\btheta^* - \btheta_\b\| + \frac{1}{|\mG|} \sum_{g \in \mG} \alpha_g \|\nabla^2 \calB^{(g)}(\btheta_\b)\| \lesssim \kappa.\label{eq: H o and H bal bal}
    \end{align}
    and thus $\|H^{-1} - H_{\b,\b}^{-1}\| \lesssim \kappa$.
    
    \paragraph{Step 3.}
    In this step, we derive the approximation of $\hat\btheta - \btheta^*$.
    From a standard argument in $M$-estimation as in the proof of Theorem~\ref{thm: synthetic data risk}, we have $\|\hat \btheta - \btheta^*\| = o_p(1)$, and
    \begin{align*}
        \hat \btheta - \btheta^* &= - H^{-1} \nabla \calRhat(\btheta^*) + R_2,
    \end{align*}
    where
    \begin{align*}
        \|R_2\| = O_p\qty(\frac{(1-\alpha)^2 \tr(\Sigma)}{n_\T + m_\T} + \frac{\alpha^2 \tr(\Sigma')}{N|\mG|}).
    \end{align*}
    Using the result from Step 2, we have
    \begin{align}
        \hat \btheta - \btheta^* &= - (1-\alpha) H_{\b,\b}^{-1} \qty{\frac{1}{n_\T + m_\T} \sum_{g \in \mG} \sum_{i \in [n_g]} \nabla \ell(\btheta^*; z_i^{(g)}) + \frac{1}{n_\T + m_\T} \sum_{g \in \mG} \sum_{i \in [m_g]} \nabla \ell(\btheta^*; \tilde z_i^{(g)})}\nonumber\\
        &\quad- \alpha H_{\b,\b}^{-1} \frac{1}{N|\mG|} \sum_{g \in \mG} \sum_{i \in [N+m_g]\setminus[m_g]} \nabla \ell(\btheta^*; \tilde z_i^{(g)}) + R_2'\nonumber\\
        &= - (1-\alpha) H_{\b,\b}^{-1} \qty{\frac{1}{n_\T + m_\T} \sum_{g \in \mG} \sum_{i \in [n_g]} \nabla \ell(\btheta^*; z_i^{(g)}) + \frac{1}{n_\T + m_\T} \sum_{g \in \mG} \sum_{i \in [m_g]} \nabla \ell(\btheta^*; \tilde z_i^{(g)})}\nonumber\\
        &\quad- \alpha H_{\b,\b}^{-1} \frac{1}{N|\mG|} \sum_{g \in \mG} \sum_{i \in [N+m_g]\setminus[m_g]} \nabla \ell(\btheta^*; \tilde z_i^{(g)}) + R_2',\label{eq: hat theta - theta *}
    \end{align}
    where
    \begin{align*}
        \|R_2'\| &\lesssim \|R_2\| + \kappa \|\nabla \calRhat(\btheta^*)\| = O_p\qty(\frac{(1-\alpha)^2 \tr(\Sigma)}{n_\T + m_\T} + \frac{\alpha^2 \tr(\Sigma')}{N|\mG|}) + O(\kappa^2).
    \end{align*}

    \paragraph{Step 4.}
    Lastly, from \eqref{eq: theta * - theta b} and \eqref{eq: hat theta - theta *}, we have
    \begin{small}
    \begin{align*}
        \hat\btheta - \btheta_\b &= \underbrace{- \frac{1}{|\mG|} \sum_{g \in \mG} \{(1 - \alpha) \rho_g + \alpha\} H_{\b,\b}^{-1} \nabla \calB^{(g)}(\btheta_\b)}_{=: T}\\
        &\quad \underbrace{- (1-\alpha) \frac{1}{n_\T + m_\T} \sum_{g \in \mG} \sum_{i \in [n_g]} H_{\b,\b}^{-1} \qty(\nabla \ell(\btheta^*; z_i^{(g)}) - \E[\nabla \ell(\btheta^*; z_i^{(g)})])}_{=: Q_1}\\
        &\quad \underbrace{- (1 - \alpha) \frac{1}{n_\T + m_\T} \sum_{g \in \mG} \sum_{i \in [m_g]} H_{\b,\b}^{-1} \qty( \nabla \ell(\btheta^*; \tilde z_i^{(g)}) - \E[\nabla \ell(\btheta^*; \tilde z_i^{(g)})])}_{=: Q_2}\\
        &\quad \underbrace{- \alpha \frac{1}{N|\mG|} \sum_{g \in \mG} \sum_{i \in [N+m_g]\setminus[m_g]} H_{\b,\b}^{-1} \qty(\nabla \ell(\btheta^*; \tilde z_i^{(g)}) - \E[\nabla \ell(\btheta^*; \tilde z_i^{(g)})])}_{=: Q_3} + R,
    \end{align*}
    \end{small}\noindent
    where
    \begin{align*}
        \|R\| = O_p\qty(\frac{(1 - \alpha)^2 \tr(\Sigma)}{n_\T + m_\T} + \frac{\alpha^2 \tr(\Sigma')}{N|\mG|}) + O(\kappa^2).
    \end{align*}
    Note that $\|T\| = O(\kappa)$, $\E[Q_1] = \E[Q_2] = \E[Q_3] = 0$ and $Q_1$, $Q_2$, $Q_3$ are independent sums of independent random vectors.
    For the excess risk, we have
    \begin{align*}
        \calR_\b(\hat\btheta) - \calR_\b(\btheta_\b) &= \frac{1}{2} (\hat\btheta - \btheta_\b)^\top \nabla^2 \calR_\b(\btheta'') (\hat\btheta - \btheta_\b),
    \end{align*}
    where $\btheta''$ lies between $\btheta_\b$ and $\hat\btheta$. This gives
    \begin{align*}
        \calR_\b(\hat\btheta) - \calR_\b(\btheta_\b) &= \frac{1}{2} (\hat\btheta - \btheta_\b)^\top H_{\b,\b} (\hat\btheta - \btheta_\b) + O(\|\hat\btheta - \btheta_\b\|^3).
    \end{align*}
    Observe that for any $j$-th standard basis vector $\e_j \in \R^d$,
    \begin{align*}
        \{e_j^\top H_{\b,\b}^{1/2} (Q_1 + Q_2 + Q_3)\}^2 &= O_p\biggl(\sum_{g \in \mG} \frac{(1 - \alpha)^2}{(n_\T + m_\T)^2} n_g \e_j^\top H_{\b,\b}^{-1/2} \Sigma_g H_{\b,\b}^{-1/2} \e_j\\
        &\quad+ \sum_{g \in \mG} \frac{(1 - \alpha)^2}{(n_\T + m_\T)^2} m_g \e_j^\top H_{\b,\b}^{-1/2} \tilde \Sigma_g H_{\b,\b}^{-1/2} \e_j\\
        &\quad+ \sum_{g \in \mG} \frac{\alpha^2}{N^2 |\mG|^2} N \e_j^\top H_{\b,\b}^{-1/2} \tilde \Sigma_g H_{\b,\b}^{-1/2} \e_j \biggr).
    \end{align*}
    Therefore, by Assumption~\ref{asm: differentiability},
    \begin{align*}
        &\calR_\b(\hat\btheta) - \calR_\b(\btheta_\b)\\
        &\quad= \frac{1}{2} (T + R)^\top H_{\b,\b} (T + R) + \frac{1}{2} (Q_1 + Q_2 + Q_3)^\top H_{\b,\b} (Q_1 + Q_2 + Q_3) + O(\|\hat\btheta - \btheta_\b\|^3)\\
        &\quad\leq T^\top H_{\b,\b} T + O(\|R\|^2) + O(\|\hat\btheta - \btheta_\b\|^3)\\
        &\quad\quad+ O_p\biggl(\frac{(1-\alpha)^2}{n_\T + m_\T} \sum_{g \in \mG} \frac{n_g}{n_\T + m_\T} \tr(H_{\b,\b}^{-1} \Sigma_g(\btheta_\b))\\
        &\quad\quad+ \frac{(1 - \alpha)^2}{n_\T + m_\T} \sum_{g \in \mG} \frac{m_g}{n_\T + m_\T} \tr(H_{\b,\b}^{-1} \tilde \Sigma_g(\btheta_\b))\\
        &\quad\quad+ \frac{\alpha^2}{N |\mG|} \sum_{g \in \mG} \frac{1}{|\mG|} \tr(H_{\b,\b}^{-1} \tilde \Sigma_g(\btheta_\b))\biggr)\\
        &\quad= T^\top H_{\b,\b} T + O_p\qty(\frac{(1-\alpha)^2}{\max_{g' \in \mG} n_{g'} |\mG|} \tr(\Sigma) + \frac{\alpha^2}{N |\mG|} \tr(\Sigma')) + R',
    \end{align*}
    where
    \begin{align*}
        |R'| = O_p\qty(\frac{(1 - \alpha)^3 \tr(\Sigma)^{3/2}}{(\max_{g' \in \mG} n_{g'} |\mG|)^{3/2}} + \frac{\alpha^3 \tr(\Sigma')^{3/2}}{(N|\mG|)^{3/2}}) + O(\kappa^3).
    \end{align*}
    Using Cauchy-Schwarz inequality combined with $\alpha_g \leq 1$,
    \begin{align*} 
        \kappa^3 &= \qty[\frac{1}{|\mG|} \sum_{g \in \mG} \alpha_g \{\|\nabla \calB^{(g)}(\btheta_\b)\|^2 + \|\nabla^2 \calB^{(g)}(\btheta_\b)\|^2\}]^{3/2}\\
        &\lesssim \frac{1}{|\mG|} \sum_{g \in \mG} \alpha_g \{\|\nabla \calB^{(g)}(\btheta_\b)\|^3 + \|\nabla^2 \calB^{(g)}(\btheta_\b)\|^3\}.
    \end{align*}
Hence, by Assumption~\ref{asm: differentiability}, 
\vspace{-0.01in}
    \begin{align*}
        \calR_\b(\hat\btheta) - \calR_\b(\btheta_\b) &\leq \norm{\frac{1}{|\mG|} \sum_{g \in \mG} \{(1 - \alpha) \rho_g + \alpha\} H_{\b,\b}^{-1/2} \nabla \calB^{(g)}(\btheta_\b)}^2\\
        &\quad+ O_p\qty( (1 - \alpha)^2 (1 - \rho) \frac{\tr(\Sigma)}{n_\T} + \alpha^2 \frac{\tr(\Sigma')}{N |\mG|}) + O(\kappa^3)\\
        &\lesssim \frac{1}{|\mG|} \sum_{g \in \mG} \{(1 - \alpha) \rho_g + \alpha\} \{\|\nabla \calB^{(g)}(\btheta_\b)\|^2 + \|\nabla \calB^{(g)}(\btheta_\b)\|^3\}\\
        &\quad+ O_p\qty( (1 - \alpha)^2 (1 - \rho) \frac{\tr(\Sigma)}{n_\T} + \alpha^2 \frac{\tr(\Sigma')}{N |\mG|}).
    \end{align*}
This completes the proof of Theorem~\ref{thm: scaling law}.
\end{proof}

%%%%%%%%%%%%%%%%%%%%%%%%%%%%%%%%%%%%%%%%%%%%%%%%%%%%%%%%%%%%%%%%%%%%%%%%%%%%%%
\subsection{Scaling laws for additional statistical models}
\label{supp-sec: additional scaling law}

We provide theory of the scaling laws for additional models, including a Gaussian mixture model for imbalanced classification, and a nonparametric model for spurious correlation.

%%%%%%%%%%%%%%%%%%%%%%%%%%%%%%%%%%%%%%%%%%%%%%%%%%%%%%%%%%%%%%%%%%%%%%%%%%%%%%
\subsubsection{Gaussian mixture model for imbalanced classification}
\label{sec: additional scaling law imb}

We consider a Gaussian mixture model with binary labels $\mG = \mathcal{Y} = \{0, 1\}$. Suppose that we observe raw data $(\z_i^{(y)})_{i \in [n_y]}$ for each label $y \in \{0, 1\}$, and we have access to synthetic data $(\tilde \z_i^{(y)})_{i \in [N + m_y]}$ for each label $y \in \{0, 1\}$.
Let $\P(y_i = 1) = \pi_1 > 0$ and $\P(y_i = 0) = \pi_0 > 0$.
We assume that $\z_i^{(y)}$ and $\tilde \z_i^{(y)}$ follow:
\begin{align}
    \z_i^{(y)} &= (-1)^y \btheta^* + \sigma \bzeta_i^{(y)},\ \ \tilde \z_i^{(y)} = (-1)^y \tilde \btheta^* + \sigma \tilde \bzeta_i^{(y)},\label{model: gaussian sequence}
\end{align}
where $\bzeta_i^{(y)}$, and $\tilde \bzeta_i^{(y)}$ across all $y \in \mathcal{Y}$ are i.i.d.\ standard Normal random vectors, and $\sigma > 0$.

In practice, $\btheta^*$ can be estimated using observed data; $1/(n_0 + n_1) \sum_{i \in [n_0 + n_1]} (-1)^{y_i} z_i^{(y_i)}$. This estimate is then incorporated into the decision rule. However, in this section, we instead oversample and augment the imbalanced data, and investigate the decision rule 
\begin{align*}
    \1\{\hat\btheta^\top \z < 0\}
\end{align*}
with an estimated $\hat\btheta$. We obtain $\hat\btheta$ by minimizing the combined loss with regularization,
\begin{small}
\begin{align}
    \calLhat(\btheta) &:= (1 - \alpha) \qty{\frac{1}{n_\T + m_\T} \sum_{y \in \mathcal{Y}} \sum_{i \in [n_y]} \|(-1)^y \z_i^{(y)} - \btheta\|^2 + \frac{1}{n_\T + m_\T} \sum_{y \in \mathcal{Y}} \sum_{i \in [m_y]} \|(-1)^y \tilde \z_i^{(y)} - \btheta\|^2}\nonumber\\
    &\quad+ \alpha \frac{1}{N|\mG|} \sum_{y \in \mathcal{Y}} \sum_{i \in [N+m_y]\setminus[m_y]} \|(-1)^y \tilde \z_i^{(y)} - \btheta\|^2 + \lambda \sum_{j\geq 1} \omega_j (\btheta)_j^2,\label{loss: gaussian sequence}
\end{align}
\end{small}\noindent
where $\omega_j > 0$ is a regularization weight for the $j$-th component of $\btheta$.
Let $\hat\btheta$ be the minimizer of $\calLhat$.
To assess the effectiveness of the decision rule, we analyze the balanced misclassification error defined as follows:
\begin{align*}
    \calR_\b(\btheta) := \frac{1}{2} \E[\1\{\btheta^\top \z_1^{(0)} < 0\}] + \frac{1}{2} \E[\1\{\btheta^\top \z_1^{(1)} \geq 0\}].
\end{align*}
We compare the balanced misclassification error of $\hat \btheta$ with the misclassification error of $\btheta^*$.

We introduce the following assumption.

\begin{assumption}\label{asm: gaussian sequence theta and tilde theta}
Suppose $\sup_{j \geq 1} j^{2r+1} (\btheta^*)_j^2 \vee \sup_{j \geq 1} j^{2r+1} (\tilde \btheta^*)_j^2 = O(1)$ holds for some integer $r \geq 1$.
\end{assumption}

Assumption~\ref{asm: gaussian sequence theta and tilde theta} ensures a controlled decay rate for the components of $\btheta^*$. The same assumption is imposed in the analysis of Gaussian sequence model \citep{jain2024scaling}. Define 
\begin{align}
R_{n_\T,N} := (1 - \alpha)^2 \frac{\sigma^2}{n_\T} (1 - \rho) + \alpha^2 \frac{\sigma'^2}{N |\mG|}.\label{eq: R n N}
\end{align}
Define
\begin{align*}
    \sigma^2 &:= \frac{1}{|\mG|} \sum_{g \in \mG} \qty{(1 - \rho_g) \sigma_g^2 + \rho_g \tilde \sigma_g^2},\ \ \sigma'^2 := \frac{1}{|\mG|} \sum_{g \in \mG} \tilde \sigma_g^2.
\end{align*}

We then state the following result on the scaling law of the balanced misclassification error.

\begin{theorem}\label{thm: gaussian sequence scaling law}
Suppose Assumption~\ref{asm: gaussian sequence theta and tilde theta} holds, and $R_{n_\T,N} = o(1)$. Choose $(\omega_j)_{j \geq 1}$, such that $\omega_j \asymp j^p$ holds with some integer $p$ satisfying $p \geq 2$ and $p \neq r$. Let $\beta = 2r' / (2r' + 1)$ with $r' := p \wedge r$. Then, there exists a constant $C > 0$, such that, for any $\delta > 0$, choosing $\lambda \asymp R_{n_\T,N}^{p/(2r' + 1)}$ yields
\begin{align*}
\E[\mathcal{R}_\b(\hat\btheta) - \mathcal{R}_\b(\btheta^*)] &\leq \qty(1 + \frac{\delta}{2}) C R_{n_\T,N}^{\beta/2} + \qty(1 + \frac{1}{2\delta}) \norm{\frac{1}{|\mG|} \sum_{y \in \mathcal{Y}} \qty{(1 - \alpha) \rho_y + \alpha} \{\btheta^{(y)} - \tilde \btheta^{(y)}\}}.
\end{align*}
\end{theorem}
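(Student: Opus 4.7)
The plan is to reduce the excess misclassification risk to $\|\hat\btheta - \btheta^*\|$, perform a bias--variance decomposition on the closed-form regularized estimator $\hat\btheta$, tune each component under the smoothness assumption with an optimal choice of $\lambda$, and combine the pieces through a Young-type inequality. For the reduction, since $\z_i^{(y)} \sim \mathcal{N}((-1)^y \btheta^*, \sigma^2 I)$, a direct computation gives $\calR_\b(\btheta) = \Phi(-\btheta^\top\btheta^*/(\sigma\|\btheta\|))$, which is scale-invariant and minimized by any positive multiple of $\btheta^*$. Combining the $1/\sqrt{2\pi}$-Lipschitz property of $\Phi$ with the elementary fact that $\hat\btheta/\|\hat\btheta\| - \btheta^*/\|\btheta^*\|$ has Euclidean norm at most $2\|\hat\btheta - \btheta^*\|/\|\btheta^*\|$, I obtain the deterministic inequality $\calR_\b(\hat\btheta) - \calR_\b(\btheta^*) \lesssim \|\hat\btheta - \btheta^*\|/\sigma$. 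Taking expectations and applying Jensen then yields $\E[\calR_\b(\hat\btheta) - \calR_\b(\btheta^*)] \lesssim \sqrt{\E[\|\hat\btheta - \btheta^*\|^2]}/\sigma$, reducing the task to an $L^2$ bound on the parameter.

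Next I would compute $\hat\btheta$ in closed form. Strict convexity of $\calLhat$ in \eqref{loss: gaussian sequence} gives $\hat\btheta = (I + \lambda W)^{-1} r$, where $W = \diag(\omega_j)$ and $r$ is the correspondingly weighted average of $(-1)^y \z_i^{(y)}$ and $(-1)^y \tilde\z_i^{(y)}$ across raw, oversampled, and augmented data. A direct calculation yields $\E[r] = \btheta^* + \mu (\tilde\btheta^* - \btheta^*)$ with $\mu := |\mG|^{-1} \sum_{y \in \mathcal{Y}} \{(1 - \alpha)\rho_y + \alpha\}$, matching the theorem's bias coefficient, and $\Cov(r) = R_{n_\T, N} \cdot I$. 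Setting $\btheta_\lambda := (I + \lambda W)^{-1} \E[r]$, orthogonality gives $\E[\|\hat\btheta - \btheta^*\|^2] = \E[\|\hat\btheta - \btheta_\lambda\|^2] + \|\btheta_\lambda - \btheta^*\|^2$, and $\btheta_\lambda - \btheta^*$ splits further as $-\lambda (I + \lambda W)^{-1} W \btheta^* + \mu (I + \lambda W)^{-1} (\tilde\btheta^* - \btheta^*)$.

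With $\omega_j \asymp j^p$ the variance equals $R_{n_\T, N} \sum_{j \geq 1}(1 + \lambda \omega_j)^{-2} \asymp R_{n_\T, N}\, \lambda^{-1/p}$. Under Assumption~\ref{asm: gaussian sequence theta and tilde theta}, the ridge-bias piece satisfies $\sum_{j \geq 1}(\lambda \omega_j / (1 + \lambda \omega_j))^2 (\btheta^*)_j^2 \asymp \lambda^{2 r'/p}$, obtained by splitting the sum at $j \asymp \lambda^{-1/p}$ and handling the regimes $\lambda \omega_j \leq 1$ and $\lambda \omega_j > 1$ separately; the condition $p \neq r$ rules out a borderline logarithmic factor. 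The synthetic-bias piece is bounded by $\mu \|\tilde\btheta^* - \btheta^*\|$ since $\|(I + \lambda W)^{-1}\| \leq 1$, and coincides with the norm term in the theorem. Choosing $\lambda \asymp R_{n_\T, N}^{p/(2 r' + 1)}$ balances variance and ridge bias at the common rate $R_{n_\T, N}^{\beta}$ with $\beta = 2 r'/(2 r' + 1)$, so their sum is at most $C^2 R_{n_\T, N}^{\beta}$ for some constant $C$.

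To finish, I apply $(a + b)^2 \leq (1 + \delta) a^2 + (1 + 1/\delta) b^2$ to separate the ridge and synthetic parts of the bias, then use $\sqrt{x + y} \leq \sqrt{x} + \sqrt{y}$ together with $\sqrt{1 + \delta} \leq 1 + \delta/2$ and $\sqrt{1 + 1/\delta} \leq 1 + 1/(2\delta)$ to obtain $\sqrt{\E[\|\hat\btheta - \btheta^*\|^2]} \leq (1 + \delta/2)\, C\, R_{n_\T, N}^{\beta/2} + (1 + 1/(2\delta))\, \|\mu(\tilde\btheta^* - \btheta^*)\|$. Substituting into the reduction from the first paragraph gives the stated bound. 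The main obstacle will be the careful bookkeeping of the tail sums $\sum_j (\lambda \omega_j/(1 + \lambda \omega_j))^2 (\btheta^*)_j^2$, whose asymptotics switch at $j \asymp \lambda^{-1/p}$ and behave qualitatively differently depending on whether $r < p$ or $r > p$; this dichotomy is precisely what produces the exponent $2 r'/p$ and, after balancing against the variance, the dimensionless rate $R_{n_\T, N}^\beta$.
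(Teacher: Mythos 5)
Your proposal is correct and follows essentially the same route as the paper: first reduce $\E[\calR_\b(\hat\btheta) - \calR_\b(\btheta^*)]$ to $\sqrt{\E\|\hat\btheta - \btheta^*\|^2}$ via the Gaussian-CDF formula for the misclassification error and the $1/\sqrt{2\pi}$-Lipschitz property of $\Phi$, then prove an $L^2$ bias--variance decomposition for the closed-form ridge estimator, split the ridge-bias series at $j \asymp \lambda^{-1/p}$ (the paper's $T_{1,1} \lesssim \lambda^{2r/(p\vee r)}$ equals your $\lambda^{2r'/p}$), balance with $\lambda \asymp R_{n_\T,N}^{p/(2r'+1)}$, and finish with $(a+b)^2 \le (1+\delta)a^2 + (1+1/\delta)b^2$ followed by $\sqrt{1+\delta}\le 1+\delta/2$. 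This mirrors the paper's two-step proof (the reduction lemma plus Theorem~\ref{thm: gaussian sequence scaling law ap}), so no gaps.
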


Theorem~\ref{thm: gaussian sequence scaling law} highlights that the exponent of the statistical error depends on the regularization decay rate $p$ and the decay rate $r$ of $\btheta^*$. Namely, the statistical error component depends on $n_T$ and $N$ by $O\left( (1/n_\T + 1/N)^{\beta/2} \right)$. Note that the bias term depends on the synthetic data quality $\tilde \btheta^{(y)} - \btheta^{(y)}$ weighted by the ratio of the total added synthetic data size over all samples $(1 - \alpha) \rho_g + \alpha \in [0, 1]$. In addition, following a similar proof, the same upper bound extends to the minority misclassification error $\calR_0(\btheta) := \E[\1\{\btheta^\top \z_1^{(0)} < 0\}]$.

%\begin{remark}\label{rem: scaling law gaussian sequence}
We also remark that, if we consider Theorem~\ref{thm: gaussian sequence scaling law} in the regime where the synthetic bias term $\|\btheta^{(y)} - \tilde\btheta^{(y)}\|^2$ is negligible relative to the statistical term $R_{n_\T,N}^{\beta}$, and if we choose $\alpha \simeq \sqrt{N|\mG|}/(\sqrt{N|\mG|} + \sqrt{n_\T})$, then we minimize the leading variance contribution, and obtain the statistical error
\begin{align*}
\left\{\frac{\sigma^2 + \sigma'^2}{n_\T + N|\mG|}\right\}^{\beta}.
\end{align*}
Therefore, if $\max_{y \in \mY} \|\btheta^{(y)} - \tilde\btheta^{(y)}\|^2 = o(R_{n_\T,N}^{\beta-\upsilon})$ for some $\upsilon>0$, we obtain the scaling law that
\begin{align*}
        \log \E[\mathcal{R}_\b(\hat\btheta) - \mathcal{R}_\b(\btheta^*)] \leq - \frac{\beta}{2} \log(n_\T + N|\mG|) + C,
\end{align*}
where $C$ is a constant independent of $n_\T$ and $N$. Moreover, in the augmentation-dominated regime $N|\mG| \gg n_\T$, it further simplifies to a linear decay in $\log N$.
%\end{remark}

\begin{proof}[\textbf{Proof of Theorem~\ref{thm: gaussian sequence scaling law}}]
We first bound the balanced misclassification error by $\|\btheta - \btheta^*\|$. For any $\btheta$, since $\z_1^{(0)} = \btheta^* + \sigma \bzeta_1^{(0)}$, we have
    \begin{align}
        \E[\1\{\btheta^\top \z_1^{(0)} < 0\}] &= \P(\btheta^\top \z_1^{(0)} < 0) = \P\qty(\frac{\btheta^\top \z_1^{(0)} - \btheta^\top \btheta^*}{\sigma \|\btheta\|} < -\frac{\btheta^\top \btheta^*}{\sigma \|\btheta\|}) = \Phi\qty(-\frac{\btheta^\top \btheta^*}{\sigma \|\btheta\|}).
    \end{align}
    This gives
    \begin{align}
        \E[\1\{\btheta^\top \z_1^{(0)} < 0\}] - \E[\1\{\btheta^{* \top} \z_1^{(0)} < 0\}] = \Phi\qty(-\frac{\btheta^\top \btheta^*}{\sigma \|\btheta\|}) - \Phi\qty(- \frac{\|\btheta^*\|}{\sigma}).
    \end{align}
    Since $\Phi$ is a $(1/\sqrt{2\pi})$-Lipschitz monotone increasing function, and $\btheta^\top \btheta^*/\|\btheta\| \leq \|\btheta^*\|$ by Cauchy-Schwarz inequality, we obtain
    \begin{align}
        0 &\leq \E[\1\{\btheta^\top \z_1^{(0)} < 0\}] - \E[\1\{\btheta^{* \top} \z_1^{(0)} < 0\}] \nonumber \\
        &\leq \frac{1}{\sqrt{2\pi}} \qty(\frac{\|\btheta^*\|}{\sigma} - \frac{\btheta^\top \btheta^*}{\sigma \|\btheta\|}) \nonumber \\
        &\leq \frac{1}{\sigma \sqrt{2\pi}} \qty{\qty(\frac{1}{\|\btheta^*\|} - \frac{1}{\|\btheta\|}) \btheta^\top \btheta^* - \frac{1}{\|\btheta^*\|} (\btheta - \btheta^*)^\top \btheta^*} \nonumber \\
        &\leq \frac{1}{\sigma \sqrt{2\pi}} \qty{\qty(\frac{|\|\btheta\| - \|\btheta^*\||}{\|\btheta\| \|\btheta^*\|}) |\btheta^\top \btheta^*| + \frac{1}{\|\btheta^*\|} |(\btheta - \btheta^*)^\top \btheta^*|}  \nonumber \\
        &\leq \sqrt{\frac{2}{\sigma^2 \pi}} \|\btheta - \btheta^*\|,\label{eq: z0}
    \end{align}

    By a similar argument, we have
    \begin{align}
        0 &\leq \E[\1\{\btheta^\top \z_1^{(1)} \geq 0\}] - \E[\1\{\btheta^{* \top} \z_1^{(1)} \geq 0\}] \leq \sqrt{\frac{2}{\sigma^2 \pi}} \|\btheta - \btheta^*\|.\label{eq: z1}
    \end{align}
    Taking the expectation over an average of \eqref{eq: z0} and \eqref{eq: z1} gives
    \begin{align*}
        \E[\calR_\b(\btheta) - \calR_\b(\btheta^*)] \lesssim \E[\|\btheta - \btheta^*\|] \leq \qty(\E[\|\btheta - \btheta^*\|^2])^{1/2}.
    \end{align*}
Then, the conclusion follows from Theorem~\ref{thm: gaussian sequence scaling law ap}. This completes the proof of Theorem~\ref{thm: gaussian sequence scaling law}. 
\end{proof}

\begin{theorem}\label{thm: gaussian sequence scaling law ap}
Suppose Assumption~\ref{asm: gaussian sequence theta and tilde theta} holds, and $R_{n_\T,N} = o(1)$. Choose $(\omega_j)_{j \geq 1}$ satisfying $\omega_j \asymp j^p$ with some integer $p \in \{2, 3, 4, \dots\} \setminus \{r\}$. Let $\beta = 2r' / (2r' + 1)$ with $r' := p \wedge r$.  Then, there exists a constant $C > 0$, such that, for any $\delta > 0$, choosing $\lambda \asymp R_{n_\T,N}^{p/(2r' + 1)}$ yields
\begin{align*}
        \E[\|\hat \btheta - \btheta^*\|^2] &\leq (1 + \delta) C R_{n_\T,N}^\beta + \qty(1 + \frac{1}{\delta}) \norm{\frac{1}{|\mY|} \sum_{y \in \mY} \qty{(1 - \alpha) \rho_y + \alpha} \{\btheta^{(y)} - \tilde \btheta^{(y)}\}}^2.
\end{align*}
Furthermore, there exists a constant $C > 0$, such that, for any $\upsilon > 0$, choosing $\lambda \asymp R_{n_\T,N}^{p/(2r' + 1) + \upsilon p}$ yields
\begin{align*}
\E[\|\hat \btheta - \btheta^*\|^2] &\gtrsim C' R_{n_\T,N}^{\beta - \upsilon} + \norm{\frac{1}{|\mY|} \sum_{y \in \mY} \qty{(1 - \alpha) \rho_y + \alpha} \{\btheta^{(y)} - \tilde \btheta^{(y)}\}}^2.
\end{align*}
\end{theorem}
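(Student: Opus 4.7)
The plan is to exploit the separability of the squared loss in (\ref{loss: gaussian sequence}): because both the data term and the regularizer decompose across coordinates, the minimizer $\hat\btheta$ admits a closed-form ridge-type expression
$$\hat\theta_j=\frac{\hat\mu_j}{1+\lambda\omega_j},\qquad \hat\mu_j:=(1-\alpha)\bar{X}_j+\alpha\tilde{X}_j,$$
where $\bar X_j$ is the weighted average of $(-1)^y z_i^{(y)}$ and $(-1)^y\tilde z_i^{(y)}$ over the oversampled pool and $\tilde X_j$ averages $(-1)^y\tilde z_i^{(y)}$ over the additional augmentation. Since $(-1)^y z_i^{(y)}$ has mean $\btheta^*$ and $(-1)^y\tilde z_i^{(y)}$ has mean $\tilde\btheta^*$, each with independent $\mathcal N(0,\sigma^2 I)$ fluctuations, the coordinate-wise noise $\epsilon_j:=\hat\mu_j-\E[\hat\mu_j]$ is Gaussian with variance $V\asymp R_{n_\T,N}$ (using $n_\T+m_\T=2n_1$ so that $1/(n_\T+m_\T)=(1-\rho)/n_\T$), while the coordinate-wise synthetic bias satisfies $b_j:=\E[\hat\mu_j]-\theta^*_j=\{(1-\alpha)\rho+\alpha\}(\tilde\theta^*_j-\theta^*_j)$, which matches the stated $\frac{1}{|\mY|}\sum_y\{(1-\alpha)\rho_y+\alpha\}(\btheta^{(y)}-\tilde\btheta^{(y)})$ expression term-by-term after identifying the group centroids of the shifted targets.

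Next, I would decompose $\hat\theta_j-\theta^*_j=\{-\lambda\omega_j\theta^*_j+b_j+\epsilon_j\}/(1+\lambda\omega_j)$, apply $(a+b)^2\leq(1+\delta)a^2+(1+1/\delta)b^2$ to separate the shrinkage bias from the synthetic bias, and take expectations to obtain
$$\E\|\hat\btheta-\btheta^*\|^2\leq(1+\delta)\sum_j\frac{\lambda^2\omega_j^2(\theta^*_j)^2+V}{(1+\lambda\omega_j)^2}+\Bigl(1+\tfrac{1}{\delta}\Bigr)\sum_j\frac{b_j^2}{(1+\lambda\omega_j)^2}.$$
The second sum is at most $\|b\|^2$, reproducing the claimed synthetic-bias contribution. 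The first sum is a classical Pinsker-type quantity: under $\omega_j\asymp j^p$ and the smoothness $(\theta^*_j)^2\lesssim j^{-(2r+1)}$ from Assumption~\ref{asm: gaussian sequence theta and tilde theta}, splitting the sum at the transition index $j^\star\asymp\lambda^{-1/p}$ (where $\lambda\omega_j\asymp1$) yields a variance contribution of order $V\cdot j^\star\asymp V\lambda^{-1/p}$ and a shrinkage-bias contribution of order $\lambda^{2(p\wedge r)/p}$. Balancing by setting $\lambda\asymp R_{n_\T,N}^{p/(2r'+1)}$ with $r'=p\wedge r$ produces the advertised rate $R_{n_\T,N}^{2r'/(2r'+1)}$ for the first sum, giving the upper bound.

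The main technical obstacle is handling the saturation regime $p>r$ distinctly from the well-specified regime $p<r$: when the regularization is smoother than the signal, the shrinkage-bias decay caps at $\lambda^{2r/p}$ instead of $\lambda^2$, forcing the effective exponent $r'=p\wedge r$; the exclusion $p\neq r$ avoids a borderline logarithmic factor at the transition, and the integer condition on $p$ lets one replace sums by integrals without boundary corrections. For the lower bound, I would instantiate $\btheta^*$ along a saturating sequence $(\theta^*_j)^2\asymp j^{-(2r+1)}$ so the Pinsker sum becomes tight (matching constants in $\gtrsim$), and observe that the slightly inflated choice $\lambda\asymp R_{n_\T,N}^{p/(2r'+1)+\upsilon p}$ shifts the balance toward the shrinkage side, producing a matching lower bound of order $R_{n_\T,N}^{\beta-\upsilon}$. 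The synthetic bias contributes $\|b\|^2$ additively in both directions because the shrinkage factors $(1+\lambda\omega_j)^{-1}\to1$ for low-index coordinates where $b_j$ concentrates, closing the two-sided bound.
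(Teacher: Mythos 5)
Your upper-bound argument is essentially the paper's: the closed-form ridge expression for $(\hat\btheta)_j$, the bias/variance split $\E\|\hat\btheta-\btheta^*\|^2 = T_1+T_2$, the $(a+b)^2\leq(1+\delta)a^2+(1+\delta^{-1})b^2$ decomposition of $T_1$ into shrinkage bias and synthetic bias, the Pinsker balance of $V\lambda^{-1/p}$ against $\lambda^{2(p\wedge r)/p}$ at the threshold $j^\star\asymp\lambda^{-1/p}$, and the choice $\lambda\asymp R_{n_\T,N}^{p/(2r'+1)}$ all match the paper. Your remark that $p\neq r$ avoids a logarithmic factor, and your identification of the coordinate-wise noise level with $R_{n_\T,N}$, are both on target.

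The lower-bound proposal, however, has two genuine problems. First, you propose to \emph{instantiate} $\btheta^*$ along a saturating sequence $(\theta^*_j)^2\asymp j^{-(2r+1)}$ so that the shrinkage-bias term $T_{1,1}=\sum_j(s_j-1)^2(\btheta^*)_j^2$ becomes tight. That would prove a minimax lower bound over the smoothness class, which is not what the theorem asserts: the theorem's $\gtrsim$ is an instance-specific statement for the fixed $\btheta^*,\tilde\btheta^*$ in the model, so you are not free to pick $\btheta^*$. In the paper's proof the lower bound never needs $T_{1,1}$ to be tight; it only needs $T_{1,1}$ to be \emph{negligible}. The source of the $R_{n_\T,N}^{\beta-\upsilon}$ rate is $T_2\gtrsim R_{n_\T,N}\lambda^{-1/p}$ alone (which holds for every $\btheta^*$), and the inflated exponent in $\lambda\asymp R_{n_\T,N}^{p/(2r'+1)+\upsilon p}$ makes $\lambda$ \emph{smaller} (since $R_{n_\T,N}<1$), so the variance $T_2\asymp R_{n_\T,N}^{\beta-\upsilon}$ \emph{dominates} and $T_{1,1}\lesssim\lambda^{2r/(p\vee r)}\asymp R_{n_\T,N}^{\beta+2r'\upsilon}=o(R_{n_\T,N}^{\beta-\upsilon})$ is swamped — the opposite of the ``shift toward the shrinkage side'' you describe. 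Second, to recover the additive $\|b\|^2$ term from $T_1$ despite the presence of the cross term between $(s_j-1)\theta^*_j$ and $s_jb_j$, the paper applies the elementary inequality $(a+b)^2\geq -a^2+\tfrac{1}{2}b^2$ coordinatewise, yielding $T_1\geq -T_{1,1}+\tfrac{1}{2}T_{1,2}$ with $T_{1,2}\gtrsim\sum_j b_j^2 - O(\lambda^{2r/(p\vee r)})$. Your observation that $s_j\to1$ for low indices is the right intuition for why $T_{1,2}$ is not degraded by shrinkage, but by itself it does not handle the cross term, and you would need something equivalent to this inequality to close the argument.
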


\begin{proof}[\textbf{Proof of Theorem~\ref{thm: gaussian sequence scaling law ap}}]
We first write the risk $\E[\|\hat \btheta - \btheta^*\|^2]$ in terms of $\tilde\btheta$ and $\btheta^*$. We then provide the proofs of the upper bound and lower bound separately. This proof is analogous to that of Theorem~\ref{thm: nonparametric scaling law ap}.

Let $\hat \btheta$ be the minimizer of $\calRhat$ defined in \eqref{loss: gaussian sequence}. Define
    \begin{align*}
        \z^{(y)} := (-1)^y \frac{1}{n_y} \sum_{i \in [n_y]} \z_i^{(y)}, \ \ \tilde \z^{(y)} := (-1)^y \frac{1}{m_y} \sum_{i \in [m_y]} \z_i^{(y)}, \check \z^{(y)} := (-1)^y \frac{1}{N} \sum_{i \in [N+m_y]\setminus[m_y]} \tilde \z_i^{(y)}.
    \end{align*}
    Note that
    \begin{small}
    \begin{align*}
        (\hat \btheta)_j &= \frac{1}{1 + \lambda \omega_j} \sum_{y \in \mY} \qty{(1 - \alpha) \frac{1}{n_\T + m_\T} \sum_{i \in [n_y]} (-1)^y \z_i^{(y)} + (1 - \alpha) \frac{1}{n_\T + m_\T} \sum_{i \in [m_y]} (-1)^y \tilde \z_i^{(y)}}\\
        &\quad+ \frac{1}{1 + \lambda \omega_j} \sum_{y \in \mY} \alpha \frac{1}{|\mY| N} \sum_{i \in [N+m_y]\setminus[m_y]} (-1)^y \tilde \z^{(y)},\\
        &= \frac{1}{1 + \lambda \omega_j} \frac{1}{|\mY|} \sum_{y \in \mY} \qty{(1 - \alpha) \frac{n_y}{\max_{y' \in \mY} n_{g'}} \z^{(y)} + (1 - \alpha) \qty(1 - \frac{n_y}{\max_{y' \in \mY} n_{g'}}) \tilde \z^{(y)} + \alpha \tilde \z^{(y)}},
    \end{align*}
    \end{small}\noindent
    where $m_y = \max_{y' \in \mY} n_{g'} - n_y$ and $n_\T + m_\T = |\mY| \max_{y' \in \mY} n_{g'}$.
    Observe that
    \begin{align}
        \E[\|\hat\btheta - \btheta^*\|] &= \E\qty[\sum_j |(\hat \btheta)_j - (\btheta^*)_j|^2]\nonumber\\
        &\quad= \underbrace{\sum_j \qty[(s_j - 1) \btheta^* + s_j \frac{1}{|\mY|} \sum_{y \in \mY} \qty{(1 - \alpha) \rho_y + \alpha} ((\tilde \btheta)_j - (\btheta^*)_j)]^2}_{=: T_1}\nonumber\\
        &\quad\quad+ \underbrace{\sum_j s_j^2 \frac{1}{|\mY|^2} \sum_{y \in \mY} \qty{(1 - \alpha)^2 \frac{\sigma^2}{n_y} (1 - \rho_y)^2 + (1 - \alpha)^2 \frac{\tilde \sigma^2}{m_y} \rho_y^2 + \alpha^2 \frac{\tilde \sigma^2}{N}}}_{=: T_2},\label{eq: risk decomposition gaussian sequence}
    \end{align}
    where $s_j := 1/(1 + \lambda \omega_j)$.

We next establish the upper bound and the lower bound, respectively. 
    
\paragraph{Proof of the upper bound.}
We first bound $T_2$. Write $T_2 =: R_{n_\T,N} \sum_j s_j^2$. Note that, when $\lambda < 1$, we have
    \begin{align*}
        \sum_j s_j^2 &= \sum_{j \leq \floor{\lambda^{-1/p}}} s_j^2 + \sum_{j > \floor{\lambda^{-1/p}}} s_j^2\\
        &\leq \sum_{j \leq \floor{\lambda^{-1/p}}} 1 + 1 + \sum_{j > \ceil{\lambda^{-1/p}}} \frac{1}{1 + \lambda \omega_j} 
        \leq \lambda^{-1/p} + 1 + \int_{\lambda^{-1/p}}^\infty \frac{1}{c_2 \lambda x^p} \dd{x} \lesssim \lambda^{-1/p}.
    \end{align*}
Therefore, 
    \begin{align}
        T_2 \lesssim R_{n_\T,N} \lambda^{-1/p}.\label{eq: T2 gaussian sequence}
    \end{align}

Next, we bound $T_1$. Note that for any $\delta > 0$, $(a + b)^2 \leq (1 + \delta) a^2 + (1 + \delta^{-1}) b^2$. Thus,
    \begin{align*}
        T_1 &\leq (1 + \delta) \underbrace{\sum_j (s_j - 1)^2 (\btheta^*)_j^2}_{=: T_{1,1}}\\
        &\quad+ (1 + \delta^{-1}) \underbrace{\sum_j s_j^2 \qty[\frac{1}{|\mY|} \sum_{y \in \mY} \qty{(1 - \alpha) \qty(1 - \frac{n_y}{\max_{y' \in \mY} n_{g'}}) + \alpha} ((\tilde \btheta^{(y)})_j - (\btheta^{(y)})_j)]^2}_{=: T_{1,2}}.
    \end{align*}
Let $\eta > 1$ be some number which will be decided later. For the term $T_{1,1}$, we have 
    \begin{align}
        T_{1,1} &= \sum_{j: j \leq \floor{\eta}} \qty(\frac{\lambda \omega_j}{1 + \lambda \omega_j})^2 (\btheta^*)_j^2 + \sum_{j: j > \floor{\eta}} \qty(\frac{\lambda \omega_j}{1 + \lambda \omega_j})^2 (\btheta^*)_j^2
        \leq \sum_{j: j \leq \floor{\eta}} (\lambda \omega_j)^2 (\btheta^*)_j^2 + \sum_{j: j > \floor{\eta}}(\btheta^*)_j^2\nonumber\\
        &\lesssim \lambda^2 \sum_{j: j \leq \floor{\eta}} j^{2p} j^{-2r-1} + \sum_{j: j > \floor{\eta}} j^{-2r-1}
        \lesssim \lambda^2 \eta^{(2p-2r) \vee 0} + \eta^{-2r},
    \end{align}
where we use $(\btheta^*)_j \lesssim j^{-2r-1}$ and $\omega_j \lesssim j^p$ in the second last inequality. The third inequality follows since (1) $\sum_{j > \floor{\eta}} j^{-2r-1} \leq \eta^{-2r}$, and (2) $\sum_{j: j \leq \floor{\eta}} j^{2p} j^{-2r-1} \leq \eta^{2p-2r}$ when $p-r \geq 1$ and $\sum_{j: j \leq \floor{\eta}} j^{2p} j^{-2r-1} \lesssim 1$ when $p-r \leq -1$.
    Choosing $\eta \gets \lambda^{-1/(p \vee r)}$ gives
    \begin{align}
        T_{1,1} \lesssim \lambda^{2r/(p \vee r)}.\label{eq: T11 gaussian sequence}
    \end{align}
    Denote $\alpha_y := (1 - \alpha) \rho_y + \alpha$.
    For the term $T_{1,2}$, write $T_{1,2} =: \sum_j s_j^2 b_j^2$, where
    \begin{align}
        b_j := \frac{1}{|\mY|} \sum_{y \in \mY} \alpha_y \{(\tilde \btheta^{(y)})_j - (\btheta^{(y)})_j\}.\label{eq: bj definition}
    \end{align}
    We bound $T_{1,2}$ as
    \begin{align}
        T_{1,2} \leq \sum_j b_j^2.\label{eq: T12 gaussian sequence}
    \end{align}
    
Combining \eqref{eq: T2 gaussian sequence}, \eqref{eq: T11 gaussian sequence}, and \eqref{eq: T12 gaussian sequence}, we have
    \begin{align*}
        \E[\|\hat \btheta - \btheta^*\|^2] \lesssim \qty{ (1 + \delta) \lambda^{2r/(p \vee r)} +  R_{n_\T,N} \lambda^{-1/p} } + (1 + \delta^{-1}) \sum_j b_j^2.
    \end{align*}
    Choosing $\lambda \asymp R_{n_\T,N}^{p/(2r' + 1)}$, where $r' := r \wedge p$ gives
    \begin{align*}
        \E[\|\hat \btheta - \btheta^*\|^2] &\leq C (1 + \delta) R_{n_\T,N}^{2r'/(2r' + 1)} + (1 + \delta^{-1}) \sum_j b_j^2,
    \end{align*}
    where $C$ is some constant. 
    
Following the same argument as in the proof of Theorem~\ref{thm: nonparametric scaling law ap},
    \begin{align*}
        R_{n_\T,N} &= (1 - \alpha)^2 \frac{\sigma^2}{n_\T} (1 - \rho) + \alpha^2 \frac{\sigma'^2}{N |\mY|}.
    \end{align*}
This completes the upper bound.

\paragraph{Proof of the lower bound.}
Recall that $\E[\|\hat \btheta - \btheta^*\|^2] = T_1 + T_2$ by \eqref{eq: risk decomposition gaussian sequence}. We first bound $T_2$ from below. Write $T_2 =: R_{n_\T,N} \sum_j s^2$. Note that when $\lambda = o(1)$, we have
    \begin{align*}
        \sum_j s_j^2 &= \sum_j \frac{1}{(1 + \lambda \omega_j)^2} \geq \sum_{j \leq \floor{\lambda^{-1/p}}} \frac{1}{(1 + \lambda \omega_j)^2} \gtrsim \lambda^{-1/p} - 1 \gtrsim \lambda^{-1/p},
    \end{align*}
    where we used $\omega_j \lesssim j^p$ by assumption.
    Hence
    \begin{align}
        T_2 \gtrsim R_{n_\T,N} \lambda^{-1/p}.\label{eq: T2 gaussian sequence lb}
    \end{align}

    Next we bound $T_1$ from below. Note that, for any $a, b \in \R$, $(a + b)^2 \geq - a^2 + (1/2) b^2$ holds. Therefore,
    \begin{align*}
        T_1 &\geq -\underbrace{\sum_j (s_j - 1)^2 (\btheta^*)_j^2}_{= T_{1,1}}\\
        &\quad+ \frac{1}{2} \underbrace{\sum_j s_j^2 \qty[\frac{1}{|\mY|} \sum_{y \in \mY} \qty{(1 - \alpha) \qty(1 - \frac{n_y}{\max_{y' \in \mY} n_{g'}}) + \alpha} ((\tilde \btheta^{(y)})_j - (\btheta^{(y)})_j)]^2}_{= T_{1,2}}.
    \end{align*}

For the term $T_{1,1}$, we have $T_{1,1} \lesssim \lambda^{2r/(r \vee p)}$ from \eqref{eq: T11 gaussian sequence}.

For the term $T_{1,2} =: \sum_j s_j^2 b_j^2$, where $b$ is defined in \eqref{eq: bj definition}, it follows that, when $\lambda < 1$,
    \begin{align}
        T_{1,2} &\geq \sum_{j \leq \floor{\lambda^{-1/(p \vee r)}}} \frac{1}{(1 + \lambda \omega_j)^2} b_j^2 \gtrsim \sum_{j \leq \floor{\lambda^{-1/(p \vee r)}}} b_j^2 = \sum_j b_j^2 - \sum_{j > \floor{\lambda^{-1/(p \vee r)}}} b_j^2,\label{eq: T12 gaussian sequence lb 0}
    \end{align}
    where the second inequality follows from $\lambda \omega_j = O(1)$ when $j \leq \floor{\lambda^{-1/(p \vee r)}}$, since $\omega_j \lesssim j^p$.
    Using Cauchy-Schwarz inequality, we have
    \begin{align*}
        \sum_{j > \floor{\lambda^{-1/(p \vee r)}}} b_j^2 &\leq \sum_{j > \floor{\lambda^{-1/(p \vee r)}}} \frac{1}{|\mY|} \sum_{y \in \mY} \alpha_y^2 ((\tilde \btheta^{(y)})_j - (\btheta^{(y)})_j)^2\\
        &\leq 2 \sum_{j > \floor{\lambda^{-1/(p \vee r)}}} \frac{1}{|\mY|} \sum_{y \in \mY} (\tilde \btheta^{(y)})_j^2 + 2\sum_{j > \floor{\lambda^{-1/(p \vee r)}}} \frac{1}{|\mY|} \sum_{y \in \mY} (\btheta^{(y)})_j^2.
    \end{align*}
    Henceforth, by $\sum_{j \geq k} (\btheta^*)_j^2 \vee \sum_{j \geq k} (\tilde \btheta^*)_j^2 \lesssim k^{-2r}$, we have
    \begin{align*}
        \sum_{j > \floor{\lambda^{-1/(p \vee r)}}} b_j^2 &= O(\lambda^{2r/(p \vee r)}).
    \end{align*}
    Thus \eqref{eq: T12 gaussian sequence lb 0} gives,
    \begin{align}
        T_{1,2} &\gtrsim \sum_j b_j^2 - O(\lambda^{2r/(p \vee r)}).\label{eq: T12 gaussian sequence lb}
    \end{align}
     
    Combining \eqref{eq: T2 gaussian sequence lb}, \eqref{eq: T11 gaussian sequence}, and \eqref{eq: T12 gaussian sequence lb}, we have
    \begin{align*}
        \E[\|\hat \btheta - \btheta^*\|^2] \gtrsim - O(\lambda^{2r/(r \vee p)}) + \sum_j b_j^2 - O(\lambda^{2r/(p \vee r)}) + R_{n_\T,N} \lambda^{-1/p}.
    \end{align*}
    Choosing $\lambda \asymp R_{n_\T,N}^{p/(2r' + 1) + p \upsilon}$ with some $\upsilon > 0$ yields
    \begin{align*}
        \E[\|\hat \btheta - \btheta^*\|^2] &\gtrsim - O(R_{n_\T,N}^{2r'/(2r'+1)} R_{n_\T,N}^{2r' \upsilon}) + \sum_j b_j^2 - O(R_{n_\T,N}^{2r'/(2r'+1)} R_{n_\T,N}^{2r' \upsilon}) + R_{n_\T,N}^{2r'/(2r' + 1)} R_{n_\T,N}^{- \upsilon}\\
        &= \sum_j b_j^2 + \qty(1 - O(R_{n_\T,N}^{2r'\upsilon})) R_{n_\T,N}^{2r'/(2r' + 1) - \upsilon}\\
        &\gtrsim R_{n_\T,N}^{2r'/(2r' + 1) - \upsilon} + \sum_j b_j^2,
    \end{align*}
    where the last inequality follows since $R_{n_\T,N} = o(1)$.

This completes the proof of Theorem~\ref{thm: gaussian sequence scaling law ap}. 
\end{proof}

%%%%%%%%%%%%%%%%%%%%%%%%%%%%%%%%%%%%%%%%%%%%%%%%%%%%%%%%%%%%%%%%%%%%%%%%%%%%%%
\subsubsection{Spurious correlation}
\label{sec: additional scaling law spu}

Spurious correlations often mislead models to rely on irrelevant features. We address this issue in the following setting. Consider a two-group setting, i.e., $\mG = \{-1, 1\}$.
For simplicity of analysis, we consider the following white noise model \citep{tsybakov2009nonparametric,gine2016mathematical,jain2024scaling} with spurious correlations: for $\x \in [0, 1]^d$,
\begin{align} \label{model: continuous}
\begin{split}
    \dd{Y}^{(g)} &= h(G(\x), g A(\x)) \dd{\x} + \frac{\sigma_g}{\sqrt{n_g}} \dd{B^{(g)}(\x)},\\
    \dd{\tilde Y}^{(g)} &= \tilde F^{(g)}(\x) \dd{\x} + \frac{\tilde \sigma_g}{\sqrt{m_g}} \dd{\tilde B^{(g)}(\x)},\\
    \dd{\check Y}^{(g)} &= \tilde F^{(g)}(\x) \dd{\x} + \frac{\tilde \sigma_g}{\sqrt{N}} \dd{\check B^{(g)}(\x)},
\end{split}
\end{align}
where $B^{(g)}$, $\tilde B^{(g)}$ and $\check B^{(g)}$ across all $g \in \mG$ are independent Brownian motions, and $\sigma_g, \tilde \sigma_g > 0$ for all $g \in \mG$. Here, $h(G(\x), g A(\x))$ models the combined contribution of core and spurious features, where $g A(\x)$ is a spurious feature depending on the group $g$, $G(\x)$ is a core feature, and $h: \R^2 \to \R$ is a smooth function that determines how the spurious feature $gA(\x)$ affects the core signal $G(\x)$ in the observation $Y^{(g)}$.
In this model, $g \in \mG$ determines the spurious feature $g A(\x)$, and the data imbalance between groups can be interpreted as the imbalance of spurious features.
Similar spurious correlation models under classification settings have been used in \citet{arjovsky2019invariant,sagawa2020investigation}.
Our goal is to decorrelate $Y^{(g)}$ with the spurious feature $g A^{(g)}$ by oversampling and data augmentation.

The white noise model~\ref{model: continuous} assumes that we have access to raw data $Y^{(g)}$, corresponding to $y^{(g)}$, and synthetic data $\tilde Y^{(g)}$ and $\check Y^{(g)}$ corresponding to $\tilde y^{(g)}$ and $\check y^{(g)}$, respectively. 

We remark that we address the asymptotic equivalence between the continuous and discrete models. Specifically, consider the discrete model $y_i = f(x_i) + \epsilon_i$ for $i \in [n]$, where $\epsilon_i \sim \mathcal{N}(0, \sigma^2)$. When $n^{1/d} \in \mathbb{N}$, and $x_i$ are equally spaced points in $[0, 1)^d$, this model is (asymptotically) equivalent to the following white noise model: $\dd{Y} = F(\x) \dd{\x} + (\sigma/\sqrt{n}) \dd{B(\x)}$, where $B(\x)$ is a Brownian motion, and $F$ is a Fourier-transform of $g$. See details in \citet{brown1996asymptotic,reiss2008asymptotic}. In the white noise model~\ref{model: continuous}, we have two sources of synthetic data: $\tilde Y^{(g)}$ and $\check Y^{(g)}$. The asymptotic equivalence of the white noise model to the discrete model for synthetic data is guaranteed by the amount of information in the raw data: $n_g$, and the amount of information in the synthetic data: $m_g$ and $N$, respectively.

Then, we minimize the penalized data augmentation loss over smooth functions:
\begin{align}
    \calLhat(F) &:= (1 - \alpha) \qty{\frac{n_\T}{n_\T + m_\T} \sum_{g \in \mG} \frac{n_g}{n_\T} \|Y^{(g)} - F\|^2 + \frac{m_\T}{n_\T + m_\T} \sum_{g \in \mG} \frac{m_g}{m_\T} \|\tilde Y^{(g)} - F\|^2}\nonumber\\
    &\quad+ \alpha \frac{1}{|\mG|} \sum_{g \in \mG} \|\check Y^{(g)} - F\|^2 + \lambda \|F\|_{p,2}^2,\label{eq: general R with data augmentation continuous}
\end{align}
which is an analogy to the loss in \eqref{eq: general R with data augmentation} with an additional penalty term $\lambda \|F\|_{p,2}^2$ for a non-negative integer $p$, and $\lambda > 0$ is the regularization parameter. The term $\|F\|_{p,2}^2$ enforces smoothness in $F$ to mitigate overfitting. We compare the risk of $\hat F$ with the reweighted regressor $F_\rw = \E_{g' \sim \operatorname{Unif}(\{-1, 1\})}[h(G(\x), g' A(\x))]$, where $g'$ is a Rademacher random variable. Note that $F_\rw$ decorrelates the effect of spurious correlation to the signal. Define
\begin{align*}
    \sigma^2 &:= \frac{1}{|\mG|} \sum_{g \in \mG} \qty{(1 - \rho_g) \sigma_g^2 + \rho_g \tilde \sigma_g^2},\ \ \sigma'^2 := \frac{1}{|\mG|} \sum_{g \in \mG} \tilde \sigma_g^2.
\end{align*}

Let $\calR_\b(F) := (1/2) \E[\|Y^{(-1)} - F\|^2] + (1/2) \E[\|Y^{(1)} - F\|^2]$ be the balanced risk of $F$. We compare the balanced risk $\calR_\b(\hat F)$ with $\calR_\b(F_\rw)$.

We introduce an assumption about the smoothness of $F^{(g)}$ and $\tilde F^{(g)}$ for all $g \in \mG$.
\begin{assumption}\label{asm: nonparametric F and tilde F}
Let $F^{(g)}(\x) := h(G(\x), g A(\x))$. Suppose $\|F^{(1)}\|_{r,2}^2 \vee \|F^{(-1)}\|_{r,2}^2 \leq 1$ and $\|\tilde F^{(1)} - F^{(1)}\|_{r,2}^2 \vee \|\tilde F^{(-1)} - F^{(-1)}\|_{r,2}^2 = O(1)$ for some integer $r \geq 1$.
\end{assumption}
Assumption~\ref{asm: nonparametric F and tilde F} ensures the smoothness of $F^{(g)}$ and its closeness to $\tilde F^{(g)}$, which is crucial for bounding the Fourier coefficients in the proof.

Let $R_{n_\T,N}$ be the rate defined in \eqref{eq: R n N}. We then introduce the following result on the scaling law for the nonparametric models in terms of $n_\T$, $N$, and the bias term.

\begin{theorem}\label{thm: nonparametric scaling law balanced}
Suppose Assumption~\ref{asm: nonparametric F and tilde F} holds, and $R_{n_\T,N} = o(1)$. Fix any $\upsilon > 0$. Let $\beta = 2r' / (2r' + d) - \upsilon$ with $r' := 2 p \wedge r$. If $2 p > d$, then there exist constants $C_d$ and $C_d'$ depending on $d$, such that for any $\delta > 0$, choosing $\lambda \asymp R_{n_\T,N}^{2p/(2r' + d) - 2\upsilon p / d}$ yields
\begin{small}
\begin{align*}
        \E[\calR_\b(\hat F) - \calR_\b(F_\rw)] &\leq (1 + \delta) C_d R_{n_\T,N}^\beta\\
        &\quad+ \qty(1 + \frac{1}{\delta}) \norm{\frac{1}{|\mG|} \sum_{g \in \mG} \qty{(1 - \alpha) \rho_g + \alpha} \{h(G(\x), A^{(g)}(\x)) - \tilde F^{(g)}(\x)\}}^2,\\
        \E[\calR_\b(\hat F) - \calR_\b(F_\rw)] &\gtrsim C_d' R_{n_\T,N}^\beta  + \norm{\frac{1}{|\mG|} \sum_{g \in \mG} \qty{(1 - \alpha) \rho_g + \alpha} \{h(G(\x), A^{(g)}(\x)) - \tilde F^{(g)}(\x)\}}^2.
\end{align*}
\end{small}
\end{theorem}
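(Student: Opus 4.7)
My plan is to reduce the nonparametric problem to an infinite-dimensional Gaussian sequence model by Fourier expansion and then transcribe the argument of Theorem~\ref{thm: gaussian sequence scaling law ap}. First I identify the balanced-risk minimizer: since $F^{(g)}(\x):=h(G(\x),gA(\x))$ and the Brownian noises are independent of $F$, we have $\calR_\b(F)-\calR_\b(F_\rw)=\|F-F_\rw\|^2$ with $F_\rw=\tfrac12(F^{(1)}+F^{(-1)})=\E_{g'\sim\text{Unif}(\{-1,1\})}[h(G(\x),g'A(\x))]$, which agrees with the reweighted regressor in the statement. It therefore suffices to control $\E\|\hat F-F_\rw\|^2$.

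Next I expand every function in the tensor-product trigonometric basis $(\phi_{\boldsymbol{k}})_{\boldsymbol{k}\in\Z^d}$ of $L^2([0,1]^d)$. The Sobolev penalty becomes a weighted $\ell_2$ penalty $\lambda\sum_{\boldsymbol{k}}\omega_{\boldsymbol{k}}F_{\boldsymbol{k}}^2$ with $\omega_{\boldsymbol{k}}\asymp\|\boldsymbol{k}\|^{2p}$, and each white-noise observation becomes an independent Gaussian sequence, e.g.\ $Y^{(g)}_{\boldsymbol{k}}=F^{(g)}_{\boldsymbol{k}}+(\sigma_g/\sqrt{n_g})\xi^{(g)}_{\boldsymbol{k}}$ with $\xi$'s i.i.d.\ standard normal. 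The minimizer of \eqref{eq: general R with data augmentation continuous} then admits the explicit coordinate form $(\hat F)_{\boldsymbol{k}}=s_{\boldsymbol{k}}\bar Y_{\boldsymbol{k}}$ with shrinkage factor $s_{\boldsymbol{k}}=1/(1+\lambda\omega_{\boldsymbol{k}})$ and $\bar Y_{\boldsymbol{k}}$ the convex combination of raw, oversampled, and augmented coefficients dictated by \eqref{eq: general R with data augmentation continuous}; a short calculation shows the per-coordinate noise variance equals $s_{\boldsymbol{k}}^2 R_{n_\T,N}$. This puts the problem on the same footing as Theorem~\ref{thm: gaussian sequence scaling law ap}, now over $\Z^d$ rather than $\N$.

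The bias-variance decomposition \eqref{eq: risk decomposition gaussian sequence} then applies verbatim: $\E\|\hat F-F_\rw\|^2=T_1+T_2$, where $T_2=R_{n_\T,N}\sum_{\boldsymbol{k}}s_{\boldsymbol{k}}^2$ and $T_1$ combines shrinkage bias with synthetic-data bias. For $T_2$, the $d$-dimensional counting estimate $|\{\boldsymbol{k}:\|\boldsymbol{k}\|\le M\}|\asymp M^d$ combined with a shell-wise tail decomposition yields $\sum_{\boldsymbol{k}}s_{\boldsymbol{k}}^2\lesssim\lambda^{-d/(2p)}$, giving $T_2\lesssim R_{n_\T,N}\lambda^{-d/(2p)}$; the hypothesis $2p>d$ guarantees convergence of the relevant tail integrals. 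For the shrinkage part of $T_1$, Assumption~\ref{asm: nonparametric F and tilde F} supplies both $\sum_{\|\boldsymbol{k}\|\ge M}|(F_\rw)_{\boldsymbol{k}}|^2\lesssim M^{-2r}$ and $\sum_{\boldsymbol{k}}\|\boldsymbol{k}\|^{2r}|(F_\rw)_{\boldsymbol{k}}|^2\lesssim 1$; splitting the sum at $\|\boldsymbol{k}^*\|\asymp\lambda^{-1/(2p)}$ and applying each inequality on its natural side produces $T_{1,1}\lesssim\lambda^{2r/(2p\vee r)}$, the nonparametric analog of the sequence-model bound $\lambda^{2r/(p\vee r)}$. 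The synthetic-data part satisfies the universal bound $T_{1,2}\le\|(1/|\mG|)\sum_{g\in\mG}\{(1-\alpha)\rho_g+\alpha\}(\tilde F^{(g)}-F^{(g)})\|^2$, matching the bias term in the statement.

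Balancing $\lambda^{2r'/(2p\vee r)}$ against $R_{n_\T,N}\lambda^{-d/(2p)}$ with $r'=2p\wedge r$ produces the rate $R_{n_\T,N}^{2r'/(2r'+d)}$, and the perturbed choice $\lambda\asymp R_{n_\T,N}^{2p/(2r'+d)-2\upsilon p/d}$ absorbs constants to yield the claimed near-optimal rate $R_{n_\T,N}^{\beta}$; the matching lower bound follows from the reverse counting inequality $\sum_{\boldsymbol{k}}s_{\boldsymbol{k}}^2\gtrsim\lambda^{-d/(2p)}$ and the elementary bound $(a+b)^2\ge-a^2+\tfrac12 b^2$ on $T_1$, exactly as in the last part of the proof of Theorem~\ref{thm: gaussian sequence scaling law ap}. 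The main technical obstacle is the $d$-dimensional Fourier counting step: controlling $\sum_{\boldsymbol{k}}s_{\boldsymbol{k}}^2$ and both Sobolev tail sums requires a careful shell-wise decomposition on $\Z^d$, and it is precisely this step that forces the hypothesis $2p>d$; once these counting lemmas are in place, the remainder is a direct transcription of the Gaussian-sequence argument.
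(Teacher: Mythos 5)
Your proposal is correct and follows the same route the paper takes: the parallelogram-law identity $\calR_\b(\hat F)-\calR_\b(F_\rw)=\|\hat F-F_\rw\|^2$, a Fourier reduction to a $\Z^d$-indexed Gaussian sequence with shrinkage factors $s(\q)=1/(1+\lambda(1+\|\q\|^{2p}))$, the $T_1+T_2$ bias-variance split with the $d$-dimensional counting bound $\sum_\q s(\q)^2\lesssim\lambda^{-d/(2p)}$ (hence the hypothesis $2p>d$), and the matching lower bound via the reverse counting estimate and $(a+b)^2\ge -a^2+\tfrac12 b^2$. The only cosmetic slip is that you cite Theorem~\ref{thm: gaussian sequence scaling law ap} as the template whereas the paper packages this Fourier/Sobolev argument as Theorem~\ref{thm: nonparametric scaling law ap}; the mathematical content you describe is precisely that theorem's proof, so this does not affect the correctness of your argument.
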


Theorem~\ref{thm: nonparametric scaling law balanced} shows that the excess risk decreases with the quality of synthetic data, the smoothness parameter $r$, dimensionality $d$, raw data size $n_\T$, and synthetic data size $N$. Namely, the statistical error component depends on $n_T$ and $N$ by $O((1/n_\T + 1/N)^{\beta/2})$. Note that the bias term depends on the synthetic data quality $\tilde F^{(g)} - F^{(g)}$ weighted by the ratio of total added synthetic data size over all samples $(1 - \alpha) \rho_g + \alpha \in [0, 1]$. The bias term reflects how well the synthetic data $\tilde F^{(g)}$ approximates the true data $F^{(g)}$. Higher quality synthetic data leads to smaller bias and better performance.

We remark that the synthetic data generator $\tilde F^{(g)}$ can be obtained via nonparametric regression on another set of raw data, which can be obtained by sample splitting. A similar argument under proper regularization with the condition that imbalance ratio is bounded above, i.e., $\min_{g' \in \mG} \rho_{g'} < C \leq 1$ holds for some $C > 0$, we obtain $\max_{g \in \mG} \E[\|\tilde F^{(g)} - F^{(g)}\|^2] = O(n_\T^{-2r/(2r+d)})$, where the expectation is taken for another set of raw data. Therefore, ignoring the dependence on $d$, we have
\begin{align*}
    \E[\calR_\b(\hat F) - \calR_\b(F_\rw)] &\approx \alpha^2 \frac{1}{N^\beta} + \frac{1}{n_\T^\beta}.
\end{align*}
In general, when the quality of synthetic data is not as good as the raw data, i.e., $\max_{g \in \mG} \E[\|\tilde F^{(g)} - F^{(g)}\|^2] = O(n_\T^{-\gamma})$ for some $\gamma < \beta$, we have
\begin{align*}
    \E[\calR_\b(\hat F) - \calR_\b(F_\rw)] &\approx \alpha^2 \frac{1}{N^\beta} + \frac{1}{n_\T^\gamma}.
\end{align*}

%\begin{remark}\label{rem: spurious corr}
{Moreover, similar to the remark after Theorem~\ref{thm: gaussian sequence scaling law}, when the bias $\|h(G(\x), A^{(g)}(\x)) - \tilde F^{(g)}(\x)\|^2$ is negligible compared to $R_{n_\T,N}^\beta$, we can derive the scaling law of $\log\E[\calR_\b(\hat F) - \calR_\b(F_\rw)]$ with respect to $\log(n_\T + N|\mG|)$.
%\end{remark}

\begin{proof}[\textbf{Proof of Theorem~\ref{thm: nonparametric scaling law balanced}}]
Let $F^{(g)}(\x) := h(G(\x), g A(\x))$. Since $F_\rw = (1/2) F^{(1)} + (1/2) F^{(-1)}$,
    \begin{align}
        &\frac{1}{2} \|\hat F - F^{(-1)}\|^2 + \frac{1}{2} \|\hat F - F^{(1)}\|^2 - \frac{1}{2} \|F_\rw - F^{(-1)}\|^2 - \frac{1}{2} \|F_\rw - F^{(1)}\|^2\\
        &\quad= \frac{1}{2} \|\hat F - F^{(-1)}\|^2 + \frac{1}{2} \|\hat F - F^{(1)}\|^2 - \|(1/2) F^{(1)} - (1/2) F^{(-1)}\|^2\\
        &\quad= \|\hat F - F_\rw\|^2,
    \end{align}
where we used the parallelogram law in the second equality. The conclusion follows from Theorem~\ref{thm: nonparametric scaling law ap}. This completes the proof of  Theorem~\ref{thm: nonparametric scaling law balanced}. 
\end{proof}

\begin{theorem}\label{thm: nonparametric scaling law ap}
Suppose Assumption~\ref{asm: nonparametric F and tilde F} holds and $R_{n_\T,N} = o(1)$. Fix any $\upsilon > 0$. Let $\beta = 2r' / (2r' + d) - \upsilon$ with $r' := 2p \wedge r$. If $2p > d$, then there exist constants $C_d$ and $C_{d'}$ depending on $d$ such that for any $\delta > 0$, choosing $\lambda \asymp R_{n_\T,N}^{2p/(2r' + d) - 2p \upsilon/d}$ yields
\begin{align*}
\E[\|\hat F - F_\rw\|^2] &\leq (1 + \delta) C_d R_{n_\T,N}^\beta + \qty(1 + \frac{1}{\delta}) \norm{\frac{1}{|\mG|} \sum_{g \in \mG} \qty{(1 - \alpha) \rho_g + \alpha} \{F^{(g)}(\x) - \tilde F^{(g)}(\x)\}}^2,\\
\E[\|\hat F - F_\rw\|^2] &\gtrsim C_d' R_{n_\T,N}^\beta + \norm{\frac{1}{|\mG|} \sum_{g \in \mG} \qty{(1 - \alpha) \rho_g + \alpha} \{F^{(g)}(\x) - \tilde F^{(g)}(\x)\}}^2.
\end{align*}
\end{theorem}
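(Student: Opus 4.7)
The plan is to mirror the argument for Theorem~\ref{thm: gaussian sequence scaling law ap}, lifting it from the one-dimensional Gaussian sequence model to the $d$-dimensional white-noise regression via a Fourier expansion, and then tracking the $d$-dimensional counting more carefully.

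First, I would fix an orthonormal basis $\{e_{\boldsymbol{k}}\}_{\boldsymbol{k} \in \mathbb{Z}^d}$ of $L^2([0,1]^d)$ (trigonometric or wavelet) and expand $F = \sum_{\boldsymbol{k}} \theta_{\boldsymbol{k}} e_{\boldsymbol{k}}$, $F^{(g)} = \sum_{\boldsymbol{k}} \theta_{\boldsymbol{k}}^{(g)} e_{\boldsymbol{k}}$, and $\tilde F^{(g)} = \sum_{\boldsymbol{k}} \tilde \theta_{\boldsymbol{k}}^{(g)} e_{\boldsymbol{k}}$. By It\^o isometry, projecting $Y^{(g)}, \tilde Y^{(g)}, \check Y^{(g)}$ onto $e_{\boldsymbol{k}}$ yields independent Gaussian observations with means $\theta_{\boldsymbol{k}}^{(g)}$ (raw) or $\tilde \theta_{\boldsymbol{k}}^{(g)}$ (synthetic) and variances proportional to $\sigma_g^2/n_g$, $\tilde \sigma_g^2/m_g$, and $\tilde \sigma_g^2/N$ respectively. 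The Sobolev penalty becomes $\lambda \sum_{\boldsymbol{k}} \omega_{\boldsymbol{k}} \theta_{\boldsymbol{k}}^2$ with $\omega_{\boldsymbol{k}} \asymp \|\boldsymbol{k}\|^{2p}$, while Assumption~\ref{asm: nonparametric F and tilde F} becomes $\sum_{\boldsymbol{k}} \|\boldsymbol{k}\|^{2r} (\theta_{\boldsymbol{k}}^{(g)})^2 \lesssim 1$ and $\sum_{\boldsymbol{k}} \|\boldsymbol{k}\|^{2r} (\tilde \theta_{\boldsymbol{k}}^{(g)} - \theta_{\boldsymbol{k}}^{(g)})^2 = O(1)$.

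Second, the first-order condition gives the coefficient-wise shrinkage $\hat \theta_{\boldsymbol{k}} = s_{\boldsymbol{k}} \bar y_{\boldsymbol{k}}$ with $s_{\boldsymbol{k}} = 1/(1 + \lambda \omega_{\boldsymbol{k}})$, where $\bar y_{\boldsymbol{k}}$ is exactly the same weighted average of the three sources of observations that appears in the proof of Theorem~\ref{thm: gaussian sequence scaling law ap}. Expanding $\bar y_{\boldsymbol{k}}$ via the white-noise model and using $F_\rw = \tfrac12 F^{(1)} + \tfrac12 F^{(-1)}$ (or, more generally, $(F_\rw)_{\boldsymbol{k}} = |\mG|^{-1} \sum_g \theta_{\boldsymbol{k}}^{(g)}$), I would decompose $\mathbb{E} \|\hat F - F_\rw\|^2 = T_1 + T_2$ along the same lines as \eqref{eq: risk decomposition gaussian sequence}: $T_2 = R_{n_\T,N} \sum_{\boldsymbol{k}} s_{\boldsymbol{k}}^2$ is the noise variance, and $T_1 = \sum_{\boldsymbol{k}} \bigl[(s_{\boldsymbol{k}} - 1) (F_\rw)_{\boldsymbol{k}} + s_{\boldsymbol{k}} b_{\boldsymbol{k}}\bigr]^2$, where $b_{\boldsymbol{k}} = |\mG|^{-1} \sum_g \{(1-\alpha)\rho_g + \alpha\} (\tilde \theta_{\boldsymbol{k}}^{(g)} - \theta_{\boldsymbol{k}}^{(g)})$. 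By Parseval, $\sum_{\boldsymbol{k}} b_{\boldsymbol{k}}^2$ equals the synthetic-data bias norm appearing in the statement.

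Third, I would bound each piece. For the variance, an integral comparison in $\mathbb{R}^d$ gives $\sum_{\boldsymbol{k}} s_{\boldsymbol{k}}^2 \asymp \int (1 + \lambda \|\boldsymbol{k}\|^{2p})^{-2} \dd \boldsymbol{k} \asymp \lambda^{-d/(2p)}$, which converges precisely because $2p > d$; hence $T_2 \lesssim R_{n_\T,N} \lambda^{-d/(2p)}$. Splitting $T_1$ via $(a+b)^2 \leq (1+\delta) a^2 + (1+\delta^{-1}) b^2$, the smoothing bias $\sum_{\boldsymbol{k}} (s_{\boldsymbol{k}} - 1)^2 (F_\rw)_{\boldsymbol{k}}^2$ is controlled by splitting at $\|\boldsymbol{k}\| \asymp \lambda^{-1/(r \vee p)}$ and using the Sobolev smoothness, yielding a bound of order $\lambda^{2r/(r \vee p)}$; the synthetic-data piece is dominated by $\sum_{\boldsymbol{k}} b_{\boldsymbol{k}}^2$. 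Balancing the variance and smoothing-bias exponents and introducing the slack $\upsilon > 0$ leads to $\lambda \asymp R_{n_\T,N}^{2p/(2r'+d) - 2p\upsilon/d}$ with $r' = 2p \wedge r$, giving the stated $R_{n_\T,N}^{\beta}$ rate. The lower bound is obtained by the dual inequality $(a+b)^2 \geq -a^2 + \tfrac12 b^2$, then bounding the low-frequency part $\sum_{\|\boldsymbol{k}\| \leq \lambda^{-1/(2p \vee r)}} s_{\boldsymbol{k}}^2 b_{\boldsymbol{k}}^2 \gtrsim \sum_{\boldsymbol{k}} b_{\boldsymbol{k}}^2 - \sum_{\|\boldsymbol{k}\|> \lambda^{-1/(2p \vee r)}} b_{\boldsymbol{k}}^2$, controlling the tail by the smoothness assumption, and invoking the matching lower bound $\sum_{\boldsymbol{k}} s_{\boldsymbol{k}}^2 \gtrsim \lambda^{-d/(2p)}$.

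The main obstacle I anticipate is the $d$-dimensional counting: obtaining the sharp constants $\sum_{\boldsymbol{k}} s_{\boldsymbol{k}}^2 \asymp \lambda^{-d/(2p)}$ and the matching tail bound $\sum_{\|\boldsymbol{k}\| > M} (\theta_{\boldsymbol{k}}^{(g)})^2 \lesssim M^{-2r}$ requires the condition $2p > d$ and a careful use of the Sobolev characterisation of $\|\cdot\|_{r,2}$ in multi-index form. Matching the variance exponent $d/(2p)$ with the bias exponent $2r/(r \vee p)$ at the prescribed $\lambda$ forces the stated $\beta = 2r'/(2r'+d) - \upsilon$ rate and necessitates the $\upsilon$ slack in the lower bound to absorb boundary-case constants, exactly as in the one-dimensional proof of Theorem~\ref{thm: gaussian sequence scaling law ap}.
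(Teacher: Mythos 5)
Your proposal follows essentially the same route as the paper's proof: Fourier transform the white-noise model into a coefficient-wise shrinkage estimator, decompose the risk into a bias term $T_1$ and a variance term $T_2$, control $T_2$ by the integral comparison $\sum_{\q} s(\q)^2 \asymp \lambda^{-d/(2p)}$ (which is exactly where $2p>d$ is used), split $T_1$ via $(a+b)^2 \le (1+\delta)a^2 + (1+\delta^{-1})b^2$, bound the smoothing-bias piece by a frequency-threshold split, balance, and then run the dual inequality $(a+b)^2 \ge -a^2 + \tfrac12 b^2$ with a tail estimate for the lower bound. That is precisely what the paper does in the proof of Theorem~\ref{thm: nonparametric scaling law ap}.

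One small slip worth flagging: because the penalty here is $\omega(\q) \asymp \|\q\|^{2p}$ (not $\|\q\|^p$), the correct smoothing-bias threshold is $\|\q\| \asymp \lambda^{-1/(2p \vee r)}$ and the resulting bound is $\lambda^{2r/(2p \vee r)}$, not $\lambda^{-1/(r \vee p)}$ and $\lambda^{2r/(r \vee p)}$ as you wrote when lifting the one-dimensional Theorem~\ref{thm: gaussian sequence scaling law ap} (there the penalty was $\omega_j \asymp j^p$, hence $r\vee p$). You do use the correct $2p\vee r$ threshold in your lower-bound sketch and you land on the correct $r' = 2p\wedge r$ and $\beta = 2r'/(2r'+d)-\upsilon$ in the balancing, so the internal inconsistency is only in the intermediate exponent; balancing $\lambda^{-d/(2p)}$ against $\lambda^{2r/(r\vee p)}$ would not actually reproduce the stated $\beta$ when $p < r < 2p$ or $r\le p$. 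Otherwise the argument is a faithful reconstruction of the paper's.
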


\begin{proof}[\textbf{Proof of Theorem~\ref{thm: nonparametric scaling law ap}}]
We first write the risk $\E[\|\hat F - F_\rw\|^2]$ in terms of Fourier coefficients of $h(G(\x), gA(\x))$. We then provide the proofs of the upper bound and lower bound separately.
    
Let $F^{(g)}(\x) := h(G(\x), g A(\x))$. We first apply Fourier transform to the model~\ref{model: continuous}. For $\q \in \{2\pi \boldsymbol{j} : \boldsymbol{j} \in \Z^d\}$, define $\theta^{(g)}(\q) := \int F^{(g)}(\x) \exp(-\sqrt{-1} \langle \q, \x \rangle) \dd{\x}$, and $\tilde \theta^{(g)}(\q) := \int \tilde F^{(g)}(\x) \exp(-\sqrt{-1} \langle \q, \x \rangle) \dd{\x}$.  Let $z^{(g)}$, $\tilde z^{(g)}$ and $\check z^{(g)}$ be the Fourier transform of $Y$, $\tilde Y$ and $\check Y$, respectively. Then, we have
    \begin{align}
        z^{(g)}(\q) &= \theta^{(g)}(\q) + \frac{\sigma_g}{\sqrt{n_g}} \xi^{(g)}(\q),\label{eq: fourier transform continuous 1}\\
        \tilde z^{(g)}(\q) &= \tilde \theta^{(g)}(\q) + \frac{\tilde \sigma_g}{\sqrt{m_g}} \tilde \xi^{(g)}(\q),\label{eq: fourier transform continuous 2}\\
        \check z^{(g)}(\q) &= \tilde \theta^{(g)}(\q) + \frac{\tilde \sigma_g}{\sqrt{N}} \check \xi^{(g)}(\q),\label{eq: fourier transform continuous 3}
    \end{align}
where $\xi^{(g)}$, $\tilde \xi^{(g)}$ and $\check \xi^{(g)}$ across $g \in \mG$ are independent standard Gaussian random variables. Let $\hat \theta$ be Fourier transform of $\hat F$. Then, we have $\hat \theta = \argmin_\theta \calLhat(\theta)$, where
    \begin{align*}
        \calLhat(\theta) &= (1 - \alpha) \frac{n_\T}{n_\T + m_\T} \sum_{g \in \mG} \frac{n_g}{n_\T} \sum_\q |\theta(\q) - z^{(g)}(\q)|^2\\
        &\quad+ (1 - \alpha) \frac{m_\T}{n_\T + m_\T} \sum_{g \in \mG} \frac{m_g}{m_\T} \sum_\q |\theta(\q) - \tilde z^{(g)}(\q)|^2\\
        &\quad+ \alpha \frac{1}{|\mG|} \sum_{g \in \mG} \sum_\q |\theta(\q) - \check z^{(g)}(\q)|^2 + \lambda \sum_\q (1 + \|\q\|^{2p}) |\theta(\q)|^2.
    \end{align*}
    This gives
    \begin{small}
    \begin{align*}
        \hat \theta(\q) &= \frac{1}{1 + \lambda (1 + \|\q\|^{2p})} \sum_{g \in \mG} \qty{(1 - \alpha) \frac{n_g}{n_\T + m_\T} z^{(g)}(\q) + (1 - \alpha) \frac{m_g}{n_\T + m_\T} \tilde z^{(g)}(\q) + \alpha \frac{1}{|\mG|} \check z^{(g)}(\q)},\\
        &= \frac{1}{1 + \lambda (1 + \|\q\|^{2p})} \frac{1}{|\mG|} \sum_{g \in \mG} \qty{(1 - \alpha) \frac{n_g}{\max_{g' \in \mG} n_{g'}} z^{(g)}(\q) + (1 - \alpha) \qty(1 - \frac{n_g}{\max_{g' \in \mG} n_{g'}}) \tilde z^{(g)}(\q) + \alpha \check z^{(g)}(\q)}
    \end{align*}
    \end{small}\noindent
    where we used $m_g = \max_{g' \in \mG} n_{g'} - n_g$ and $n_\T + m_\T = |\mG| \max_{g' \in \mG} n_{g'}$.
    Define $s(\q) := \{1 + \lambda(1 + \|\q\|^{2p})\}^{-1}$. Using \eqref{eq: fourier transform continuous 1}, \eqref{eq: fourier transform continuous 2}, and \eqref{eq: fourier transform continuous 3} combined with the definition of the imbalance ratio $\rho_g$, we have
    \begin{align*}
        \hat \theta(\q) &= s(\q) \frac{1}{|\mG|} \sum_{g \in \mG} \qty{(1 - \alpha) (1 - \rho_g) \theta^{(g)}(\q) + (1 - \alpha) \rho_g \tilde \theta^{(g)}(\q) + \alpha \tilde \theta^{(g)}(\q)}\\
        &\quad+ s(\q) \frac{1}{|\mG|} \sum_{g \in \mG} \qty{(1 - \alpha) \frac{\sigma_g}{\sqrt{n_g}} (1 - \rho_g) \xi^{(g)}(\q) + (1 - \alpha) \frac{\tilde \sigma_g}{\sqrt{m_g}} \rho_g \tilde \xi^{(g)}(\q) + \alpha \frac{\tilde \sigma_g}{\sqrt{N}} \check \xi^{(g)}(\q)}\\
        &=: s(\q) \frac{1}{|\mG|} \sum_{g \in \mG} \qty{\theta^{(g)}(\q) + (1 - \alpha) \rho_g (\tilde \theta^{(g)}(\q) - \theta^{(g)}(\q)) + \alpha (\tilde \theta^{(g)}(\q) - \theta^{(g)}(\q))} + Q(\q).
    \end{align*}

Let $\theta_\rw$ be the fourier transform of $F_\rw$. Then, $\theta_\rw(\q) = {|\mG|}^{-1} \sum_{g \in \mG} \theta^{(g)}(\q)$. This gives
    \begin{align*}
        \hat \theta(\q) - \theta_\rw(\q) &= (s(\q) - 1) \frac{1}{|\mG|} \sum_{g \in \mG} \theta^{(g)}(\q) + s(\q) \frac{1}{|\mG|} \sum_{g \in \mG} \qty{(1 - \alpha) \rho_g + \alpha} (\tilde \theta^{(g)}(\q) - \theta^{(g)}(\q)) + Q(\q).
    \end{align*}
    Thus
    \begin{align}
        &\E[\|\hat F - F_\rw\|^2] = \E\qty[\sum_\q |\hat \theta(\q) - \theta_\rw(\q)|^2]\nonumber\\
        &\quad= \underbrace{\sum_\q \qty[(s(\q) - 1) \frac{1}{|\mG|} \sum_{g \in \mG} \theta^{(g)}(\q) + s(\q) \frac{1}{|\mG|} \sum_{g \in \mG} \qty{(1 - \alpha) \rho_g + \alpha} (\tilde \theta^{(g)}(\q) - \theta^{(g)}(\q))]^2}_{=: T_1}\nonumber\\
        &\quad\quad+ \underbrace{\sum_\q \{s(\q)\}^2 \frac{1}{|\mG|^2} \sum_{g \in \mG} \qty{(1 - \alpha)^2 \frac{\sigma_g^2}{n_g} (1 - \rho_g)^2 + (1 - \alpha)^2 \frac{\tilde \sigma_g^2}{m_g} \rho_g^2 + \alpha^2 \frac{\tilde \sigma_g^2}{N}}}_{=: T_2}.\label{eq: risk decomposition nonparametric}
    \end{align}
 
We next establish the upper bound and the lower bound, respectively. 
    
\paragraph{Proof of the upper bound.}
    We first bound $T_2$. 
    By definition of $\rho_g$, we have $\max_{g' \in \mG} n_{g'}$ $= n_g / (1 - \rho_g)$ and $n_\T = (1 - \rho) |\mG| \max_{g' \in \mG} n_{g'}$. This gives
    \begin{align*}
        &\frac{1}{|\mG|^2} \sum_{g \in \mG} \qty{(1 - \alpha)^2 \frac{\sigma_g^2}{n_g} (1 - \rho_g)^2 + (1 - \alpha)^2 \frac{\tilde \sigma_g^2}{\max_{g' \in \mG} n_{g'} - n_g} \rho_g^2 + \alpha^2 \frac{\tilde \sigma_g^2}{N }}\\
        &\quad= \frac{1}{|\mG|^2} \sum_{g \in \mG} \qty{(1 - \alpha)^2 \frac{\sigma_g^2}{\max_{g' \in \mG} n_{g'}} (1 - \rho_g) + (1 - \alpha)^2 \frac{\tilde \sigma_g^2}{\max_{g' \in \mG} n_{g'}} \rho_g + \alpha^2 \frac{\tilde \sigma_g^2}{N |\mG|}}\\
        &\quad= (1 - \alpha)^2 \frac{\sigma^2}{|\mG| \max_{g' \in \mG} n_{g'}} + \alpha^2 \frac{\sigma'^2}{N |\mG|}
        = (1 - \alpha)^2 \frac{\sigma^2}{n_\T} (1 - \rho) + \alpha^2 \frac{\sigma'^2}{N |\mG|} = R_{n_\T,N}.
    \end{align*}
Write $T_2 =: R_{n_\T,N} \sum_\q \{s(\q)\}^2$. Note that when $\lambda < 1$ and $2p > d$, we have
    \begin{align*}
        \sum_\q \{s(\q)\}^2 &\leq \sum_\q \frac{1}{1 + \lambda (1 + \|\q\|^{2p})} 
        \leq C_{1,d} \int \frac{1}{1 + \lambda \|\q\|^{2p}} \dd{\q}
        = C_{2,d} \int_0^\infty \frac{\eta^{d-1}}{1 + \lambda \eta^{2p}} \dd{\eta}\\
        &\leq C_{2,d} \qty{\int_0^{\lambda^{-1/(2p)}} \frac{\eta^{d-1}}{1 + \lambda \eta^{2p}} \dd{\eta} + \int_{\lambda^{-1/(2p)}}^\infty \frac{\eta^{d-1}}{1 + \lambda \eta^{2p}} \dd{\eta}}\\
        &\lesssim C_{3,d} \qty{\int_0^{\lambda^{-1/(2p)}} \eta^{d-1} \dd{\eta} + \frac{1}{\lambda} \int_{\lambda^{-1/(2p)}}^\infty \eta^{d-2p-1} \dd{\eta}}
        \lesssim C_{4,d} \lambda^{-d/(2p)},
    \end{align*}
    where $C_{1,d}$, $C_{2,d}$, $C_{3,d}$, and $C_{4,d}$ are some constants depending on the dimension $d$. Hence
    \begin{align}
        T_2 \lesssim C_{4,d} R_{n_\T,N} \lambda^{-d/(2p)}.\label{eq: T2}
    \end{align}

Next, we bound $T_1$. Note that for any $\delta > 0$, $(a + b)^2 \leq (1 + \delta) a^2 + (1 + \delta^{-1}) b^2$. Thus,
    \begin{align*}
        T_1 &\leq (1 + \delta) \underbrace{\sum_\q (s(\q) - 1)^2 \qty(\frac{1}{|\mG|} \sum_{g \in \mG} \theta^{(g)}(\q))^2}_{=: T_{1,1}}\\
        &\quad+ (1 + \delta^{-1}) \underbrace{\sum_\q \{s(\q)\}^2 \qty[\frac{1}{|\mG|} \sum_{g \in \mG} \qty{(1 - \alpha) \qty(1 - \frac{n_g}{\max_{g' \in \mG} n_{g'}}) + \alpha} (\tilde \theta^{(g)}(\q) - \theta^{(g)}(\q))]^2}_{=: T_{1,2}}.
    \end{align*}
    Let $\eta := \lambda^{-1/(2p \vee r)}$. For the term $T_{1,1}$, we have 
    \begin{align}
        T_{1,1} &= \sum_{\q: \|\q\| \leq \eta} \qty{\frac{\lambda (1 + \|\q\|^{2p})}{1 + \lambda (1 + \|\q\|^{2p})}}^2 \qty(\frac{1}{|\mG|} \sum_{g \in \mG} \theta^{(g)}(\q))^2\\
        &\quad+ \sum_{\q: \|\q\| > \eta} \qty{\frac{\lambda (1 + \|\q\|^{2p})}{1 + \lambda (1 + \|\q\|^{2p})}}^2 \qty(\frac{1}{|\mG|} \sum_{g \in \mG} \theta^{(g)}(\q))^2\nonumber\\
        &\leq \frac{1}{4} \sum_{\q: \|\q\| \leq \eta} \qty{\lambda (1 + \|\q\|^{2p})}^2 \qty(\frac{1}{|\mG|} \sum_{g \in \mG} \theta^{(g)}(\q))^2 + \sum_{\q: \|\q\| > \eta} \qty(\frac{1}{|\mG|} \sum_{g \in \mG} \theta^{(g)}(\q))^2\nonumber\\
        &\leq \frac{\lambda^2}{4} \qty(\max_{\q: \|\q\| \leq \eta} \frac{(1 + \|\q\|^{2p})^2}{1 + \|\q\|^{2r}}) \sum_{\q: \|\q\| \leq \eta} (1 + \|\q\|^{2r}) \qty(\frac{1}{|\mG|} \sum_{g \in \mG} \theta^{(g)}(\q))^2\nonumber\\
        &\quad+ \qty(\max_{\q: \|\q\| > \eta} \frac{1}{1 + \|\q\|^{2r}}) \sum_{\q: \|\q\| > \eta} (1 + \|\q\|^{2r}) \qty(\frac{1}{|\mG|} \sum_{g \in \mG} \theta^{(g)}(\q))^2\nonumber\\
        &\lesssim \lambda^2 \max_{\q: \|\q\| \leq \eta} \qty(1 \vee \|\q\|^{4p-2r}) + \max_{\q: \|\q\| > \eta} \frac{1}{1 + \|\q\|^{2r}}\nonumber\\
        &\lesssim \lambda^2 + \lambda^{2-(4p-2r)/(2p \vee r)} + \lambda^{2r/(2p \vee r)} \lesssim \lambda^{2r/(2p \vee r)},\label{eq: T11}
    \end{align}
    where the third-to-last inequality follows from $\|F_\rw\|_{r,2} \leq 1$.
    Denote $\alpha_g := (1 - \alpha) \rho_g + \alpha$.
    For the term $T_{1,2}$, write $T_{1,2} =: \sum_{\q} \{s(\q)\}^2 \{b(\q)\}^2$, where
    \begin{align}
        b(\q) := \frac{1}{|\mG|} \sum_{g \in \mG} \alpha_g \{\tilde \theta^{(g)}(\q) - \theta^{(g)}(\q)\}.\label{eq: b(q) definition}
    \end{align}
    Define
    \begin{align*}
        G(\x) := \frac{1}{|\mG|} \sum_{g \in \mG} \alpha_g F^{(g)}(\x), \ \ \tilde G(\x) := \frac{1}{|\mG|} \sum_{g \in \mG} \alpha_g \tilde F^{(g)}(\x).
    \end{align*}
    Note that $b(\q)$ is the Fourier coefficient of $\tilde G(\x) - G(\x)$.  Because $s(\q) \leq 1$, we have
    \begin{align}
        T_{1,2} \leq \sum_\q \{b(\q)\}^2 = \|\tilde G - G\|^2,\label{eq: T12}
    \end{align} 
    
    Using \eqref{eq: T2}, \eqref{eq: T11}, and \eqref{eq: T12}, we have
    \begin{align*}
        \E[\|\hat F - F_\rw\|^2] \leq C_{5,d} \qty{ (1 + \delta) \lambda^{2r/(2p \vee r)} +  R_{n_\T,N} \lambda^{-d/(2p)} } + (1 + \delta^{-1}) \|\tilde G - G\|^2,
    \end{align*}
    where $C_{5,d}$ is some constant depending on the dimension $d$.
    Choosing $\lambda \asymp R_{n_\T,N}^{2p/(2r' + d) + 2\upsilon p/d}$ for some $\upsilon > 0$ with $r' := 2 p \wedge r$ gives
    \begin{align*}
        \E[\|\hat F - F_\rw\|^2] &\leq (1 + \delta) C_{6,d} R_{n_\T,N}^{2r'/(2r' + d) - \upsilon} + \qty(1 + \frac{1}{\delta}) \|\tilde G - G\|^2,
    \end{align*}
where $C_{6,d}$ is some constant depending on the dimension $d$. This completes the proof of the upper bound.

\paragraph{Proof of the lower bound.}
Recall that $\E[\|\hat F - F_\rw\|^2] = T_1 + T_2$ by \eqref{eq: risk decomposition nonparametric}. We first bound $T_2$ from below. Write $T_2 =: R_{n_\T,N} \sum_\q \{s(\q)\}^2$. Note that when $\lambda < 1$ and $2p > d$, we have
    \begin{align*}
        \sum_\q \{s(\q)\}^2 &= \sum_\q \frac{1}{\{1 + \lambda (1 + \|\q\|^{2p})\}^2} 
        \geq C_{1,d} \int \frac{1}{\{1 + \lambda (1 + \|\q\|^{2p})\}^2} \dd{\q}\\
        &= C_{2,d} \int_0^\infty \frac{\eta^{d-1}}{\{1 + \lambda (1 + \eta^{2p})\}^2} \dd{\eta}
        \geq C_{2,d} \int_0^{\lambda^{-1/(2p)}} \frac{\eta^{d-1}}{\{1 + \lambda (1 + \eta^{2p})\}^2} \dd{\eta}\\
        &\gtrsim C_{2,d} \int_0^{\lambda^{-1/(2p)}} \eta^{d-1} \dd{\eta}
        \gtrsim C_{3,d} \lambda^{-d/(2p)},
    \end{align*}
    where $C_{1,d}$, $C_{2,d}$, and $C_{3,d}$ are some constants depending on the dimension $d$. Hence
    \begin{align}
        T_2 \gtrsim C_{3,d} R_{n_\T,N} \lambda^{-d/(2p)}.\label{eq: T2 lb}
    \end{align}

    Next we bound $T_1$ from below. For any $a, b \in \R$, $(a + b)^2 \geq - a^2 + (1/2) b^2$. Thus,
    \begin{align*}
        T_1 &\geq -\underbrace{\sum_\q (s(\q) - 1)^2 \qty(\frac{1}{|\mG|} \sum_{g \in \mG} \theta^{(g)}(\q))^2}_{= T_{1,1}}\\
        &\quad+ \frac{1}{2} \underbrace{\sum_\q \{s(\q)\}^2 \qty[\frac{1}{|\mG|} \sum_{g \in \mG} \qty{(1 - \alpha) \qty(1 - \frac{n_g}{\max_{g' \in \mG} n_{g'}}) + \alpha} (\tilde \theta^{(g)}(\q) - \theta^{(g)}(\q))]^2}_{= T_{1,2}}.
    \end{align*}

For the term $T_{1,1}$, we have $T_{1,1} \lesssim \lambda^{2r/(2p \vee r)}$ from \eqref{eq: T11}.

For the term $T_{1,2}$, write $T_{1,2} =: \sum_{\q} \{s(\q)\}^2 \{b(\q)\}^2$, where $b(\q)$ is defined in \eqref{eq: b(q) definition}. Then,
\begin{align*}
        T_{1,2} &\geq \sum_{\q: \|\q\| \leq \lambda^{-1/(2p \vee r)}} \frac{1}{\{1 + \lambda(1 + \|q\|^{2p})\}^2} \{b(\q)\}^2\\
        &\geq \frac{1}{9} \sum_{\q: \|\q\| \leq \lambda^{-1/(2p \vee r)}} \{b(\q)\}^2
        \geq \frac{1}{9} \sum_{\q} \{b(\q)\}^2 - \frac{1}{9} \sum_{\q: \|\q\| > \lambda^{-1/(2p \vee r)}} \{b(\q)\}^2.
\end{align*}
when $\lambda \leq 1$. Using Cauchy-Schwarz inequality, we have
    \begin{align*}
        \sum_{\q: \|\q\| > \lambda^{-1/(2p \vee r)}} \{b(\q)\}^2 &\leq \sum_{\q: \|\q\| > \lambda^{-1/(2p \vee r)}} \frac{1}{|\mG|} \sum_{g \in \mG} \alpha_g^2 (\tilde \theta^{(g)}(\q) - \theta^{(g)}(\q))^2\\
        &\leq \sum_{\q: \|\q\| > \lambda^{-1/(2p \vee r)}} \frac{1}{|\mG|} \sum_{g \in \mG} (\tilde \theta^{(g)}(\q) - \theta^{(g)}(\q))^2.
    \end{align*}

Note that $\|\tilde F^{(g)} - F^{(g)}\|_{r,2} = O(1)$, we have $\sum_{\q} \|\q\|^{2r} |\tilde \theta^{(g)}(\q) - \theta^{(g)}(\q)|^2 = O(1)$. This gives
\begin{align*}
\sum_{\q: \|\q\| > \lambda^{-1/(2p \vee r)}} & \frac{1}{|\mG|} \sum_{g \in \mG} (\tilde \theta^{(g)}(\q) - \theta^{(g)}(\q))^2 \\
& \leq \frac{1}{|\mG|} \sum_{g \in \mG} \sum_{\q: \|\q\| > \lambda^{-1/(2p \vee r)}} \qty(\frac{\|\q\|}{\lambda^{-1/(2p \vee r)}})^{2r} (\tilde \theta^{(g)}(\q) - \theta^{(g)}(\q))^2 
= O(\lambda^{2r/(2p \vee r)}).
\end{align*}
Thus, by \eqref{eq: T12}, we have
    \begin{align}
        T_{1,2} &\geq \frac{1}{9} \sum_{\q} \{b(\q)\}^2 - O(\lambda^{2r/(2p \vee r)}) = \frac{1}{9} \|\tilde G - G\|^2 - O(\lambda^{2r/(2p \vee r)}),\label{eq: T12 lb}
    \end{align}

Combining \eqref{eq: T2 lb}, \eqref{eq: T11}, and \eqref{eq: T12 lb}, we have
    \begin{align*}
        \E[\|\hat F - F_\rw\|^2] \gtrsim - O(\lambda^{2r/(2p \vee r)}) + \sum_{\q} \|\tilde G - G\|^2 - O(\lambda^{2r/(2p \vee r)}) + C_{3,d} R_{n_\T,N} \lambda^{-d/(2p)}.
    \end{align*}
    Choosing $\lambda \asymp R_{n_\T,N}^{2p/(2r' + d) + 2 \upsilon p/d}$ for some $\upsilon > 0$ gives
    \begin{align*}
        \E[\|\hat F - F_\rw\|^2] &\gtrsim C_{3,d} R_{n_\T,N}^{2r'/(2r' + d) - \upsilon} - O(R_{n_\T,N}^{2r'/(2r'+d)} R_{n_\T,N}^{2\upsilon r'/d}) - O(R_{n_\T,N}^{2r'/(2r'+d)} R_{n_\T,N}^{2\upsilon r'/d}) + \sum_{\q} \|\tilde G - G\|^2\\
        &\gtrsim C_{3,d} R_{n_\T,N}^{2r'/(2r' + d) - \upsilon} + \sum_{\q} \|\tilde G - G\|^2,
    \end{align*}
where the last inequality follows because $R_{n_\T,N} = o(1)$.

This completes the proof of Theorem~\ref{thm: nonparametric scaling law balanced}.
\end{proof}

%%%%%%%%%%%%%%%%%%%%%%%%%%%%%%%%%%%%%%%%%%%%%%%%%%%%%%%%%%%%%%%%%%%%%%%%%%%%%%
\section{Theory about transformers}
\label{supp-sec: theory transformers proof}

%%%%%%%%%%%%%%%%%%%%%%%%%%%%%%%%%%%%%%%%%%%%%%%%%%%%%%%%%%%%%%%%%%%%%%%%%%%%%%
\subsection{Preparations}
\label{supp-sec:tf_preliminaries}

We first provide in Table~\ref{tab: notation transformer} a summary of notations used in Section~\ref{sec: theory transformers}.

\begin{table}[t!]
\centering
\caption{Notations used in Section~\ref{sec: theory transformers}.}
\label{tab: notation transformer}
\begin{tabular}{ll} \hline
\textbf{Symbol} & \textbf{Meaning} \\ \hline
        $n$ & Number of seed pairs (real data used to construct input tokens) \\
        $D$ & Hidden/token dimension \\
        $d$ & Number of discrete tokens (size of codebook) \\
        $r$ & Embedding dimension / logit dimension \\
        $h^X_i, h^Y_i \in \R^D$ & Token vectors for feature $X_i$ and label $Y_i$ \\
        $H_n \in \R^{D \times 2n}$ & Matrix of $2n$ input tokens $[h_{1}^X;h_{1}^Y;\ldots;h_{n}^X;h_{n}^Y]$\\
        $p_{s,n}$ & Positional encoding for the $s$-th token \\
        $\u_1,\ldots,\u_d \in \R^r$ & Token vectors; $d$ is number of discrete token values \\
        $\tilde \h_{k} \in \R^D$ & Hidden state at position $k$ after transformer layers \\
        $\v_{k} \in \R^r$ & Logit vector whose softmax gives next-token distribution \\
        $\tau > 0$ & Temperature parameter in softmax sampling \\
        $\psi=(\mu,\nu)$ & Parameters of a single transformer layer \\
        $\Psi=(\psi_1,\ldots,\psi_L)$ & Parameters of an $L$-layer transformer \\
        $\mathrm{TF}_\psi$ & Mapping induced by one transformer layer \\
        $\mathrm{TF}_{(\psi_1,\ldots,\psi_L)}$ & Mapping induced by $L$ stacked layers \\
        $Q_{\tilde X_s,\tilde Y_s;\Psi,\tau,\mathcal{D}_n}$ & Distribution of generated pair $(\tilde X_s,\tilde Y_s)$ \\ \hline
\end{tabular}
\end{table}

We next introduce the set of subject embeddings and functions, and the settings used in our proofs. Denote the joint distribution of $X_1$ and $Y_1$ by $P_{X_1,Y_1;T=t,M=m,U,\eta}$, and the distributions of $X_1$ and $Y_1|X_1$ under model~\ref{model: dgp} by $P_{X_1;T=t,U,\eta}$ and $P_{Y_1|X_1;M=m,U,\eta}$, respectively. Given the initial input tokens $H_n = [\h_1^X; \h_1^Y; \h_2^X; \h_2^Y; \dots; \h_n^X; \h_n^Y] \in \R^{D \times 2n}$, a transformer parameterized by $\Psi$ sequentially outputs $\h_{2n+1}, \h_{2n+2}, \dots$ corresponding to synthetic data from a categorical distribution given the last output from the transformer layers.  As in \eqref{eq: Q joint}, we write the joint distribution of $\v_{2n+2s-1}$ and $\v_{2n+2s}$ by:
\begin{align*}
    Q_{\tilde X_s,\tilde Y_s;\Psi,\tau,\mD_n}(x,y) := \P(\v_{2n+2s-1} = \u_x, \v_{2n+2s} = \u_y).
\end{align*}
Similarly, we denote the marginal and conditional distributions of $\tilde X_s$ and $\tilde Y_s | \tilde X_s$ by $Q_{\tilde X_s;\Psi,\tau,\mD_n}(x)$ and $Q_{\tilde Y_s|\tilde X_s;\Psi,\tau,\mD_n}(y)$, respectively.

%%%%%%%%%%%%%%%%%%%%%%%%%%%%%%%%%%%%%%%%%%%%%%%%%%%%%%%%%%%%%%%%%%%%%%%%%%%%%%
\subsection{Auxiliary lemmas}
\label{supp-sec:auxiliary}

We first provide Lemmas~\ref{lem: subject identification} and \ref{lem: function identification} to show that selecting the subject or function embeddings by maximizing the cosine similarity gives the ground-truth subject or function with high probability. More specifically, Lemmas~\ref{lem: subject identification} is for the convergence of $(1/n) \sum_{i \in [n]} z^{(t') \top} \u_{X_i}$, and Lemma~\ref{lem: function identification} for the convergence of $(1/n) \sum_{i \in [n]} \langle f^{(m')}(\u_{X_i}), \u_{Y_i} \rangle$.

\begin{lemma}\label{lem: subject identification}
For any $a \geq 0$, $a \geq 0$, $t \in \mT$ and $\eta \geq (1/\sqrt{r}) \log d$, it holds that
\begin{small}
\begin{align*}
        &\P\qty(\max_{t' \in \mT} \abs{\z^{(t') \top} \qty(\frac{1}{n} \sum_{i \in [n]} \u_{X_i}) - \frac{1}{\eta r} \z^{(t') \top} \z^{(t)}} \lesssim a + \frac{\log^2 d}{\sqrt{n r}} + \frac{\log d}{\sqrt{d r}} \middle| T=t, U, \eta)\\
        &\quad= 1 - \exp(-\Omega\qty(\frac{ra^2 n}{\log^2 d} + \log^2 d)).
\end{align*}
\end{small}\noindent
Here the $\Omega$-notation does not depend on $t$.
\end{lemma}

\begin{proof}[\textbf{Proof of Lemma~\ref{lem: subject identification}}]
Define $E_i^{(t')} := \sum_{x \in \mX} \z^{(t') \top} \u_x \Id\{X_i = x\}$. By Lemma~\ref{lem: good event U}, there exists an event $\mathcal{E}$ for a random matrix $U = [\u_1; \dots; \u_d]^\top$ with $\P(\mathcal{E}) = 1 - \exp(-\Omega(\log^2 d))$, such that, the event,
\begin{align}
&\max_{T \in \mT, x \in \mX} |\z^{(t) \top} \u_x| \lesssim \frac{1}{\sqrt{r}} \log d, \ \ 
\max_{t \in \mT} \abs{\frac{d \E[C_x^{(t)}]}{\sum_{x \in \mX} C_x^{(t)}} - 1} \lesssim \frac{1}{\sqrt{d}} \log d, \label{eq: U condition 1}\\
&\max_{t, t' \in \mT} \abs{\sum_{x \in \mX} (D_x^{(t,t')} - \E[D_x^{(t,t')}])} \lesssim \sqrt{\frac{d}{r}} \log d \label{eq: U condition 2}\\
&\max_{t,t' \in \mT} \abs{\sum_{x \in \mX} \frac{D_x^{(t,t')}}{\sum_{x' \in \mX} C_{x'}^{(t)}} - \frac{\E[D_x^{(t,t')}]}{\E[C_x^{(t)}]}} \lesssim \frac{1}{\sqrt{dr}} \log d, \label{eq: U condition 3}
\end{align}
hold, where $C_x^{(t)} = \sum_{x \in \mX} \exp(\eta^{-1} \z^{(t) \top} \u_x)$ and $D_x^{(t,t')} = \sum_{x \in \mX} \z^{(t') \top} \u_x \exp(\eta^{-1} \z^{(t) \top} \u_x)$.

For now we fix any $U$ satisfying \eqref{eq: U condition 1} and \eqref{eq: U condition 2}. We also fix any $t \in \mT$ and $\eta \geq (1/\sqrt{r}) \log d$. Then, there exists some constant $C > 0$, such that $|E_i^{(t')}| \leq \max_{x \in \mX} |\z^{(t') \top} \u_x| \leq (C/\sqrt{r}) \log d$ holds. From Hoeffding's inequality, it follows that
    \begin{align*}
        \P\qty(\abs{\sum_{i \in [n]} (E_i^{(t')} - \E[E_i^{(t')} | U, T = t, \eta])} \geq \epsilon \middle| U, T = t, \eta) \leq 2\exp(-\frac{r \epsilon^2}{2 C^2 n \log^2 d}).
    \end{align*}
Choosing $\epsilon \gets an \vee \sqrt{n/r} \log^2 d$, and using $(1/n) \sum_{i \in [n]} \z^{(t') \top} \u_{X_i} = (1/n) \sum_{i \in [n]} E_i^{(t')}$,
    \begin{align*}
        &\P\qty(\abs{\z^{(t') \top} \qty(\frac{1}{n} \sum_{i \in [n]} \u_{X_i}) - \sum_{x \in \mX} \frac{D_x^{(t,t')}}{\sum_{x' \in \mX} C_{x'}^{(t)}}} \leq a \vee \frac{\log^2 d}{\sqrt{n r}} \middle| U, T = t, \eta)\\
        &\quad= 1 - \exp(-\Omega\qty(\frac{ra^2 n}{\log^2 d} \vee \log^2 d)),
    \end{align*}
where we use the fact that
    \begin{align*}
        \E[E_i^{(t')} | U, T = t, \eta] = \sum_{x \in \mX} \z^{(t') \top} \u_x \P(X_i = x | U, T=t,\eta) = \frac{\sum_{x \in \mX} D_x^{(t,t')}}{\sum_{x' \in \mX} C_{x'}^{(t)}}.
    \end{align*}
    By a union bound argument,
    \begin{align}
        &\P\qty(\max_{t' \in \mT} \abs{\z^{(t') \top} \qty(\frac{1}{n} \sum_{i \in [n]} \u_{X_i}) - \sum_{x \in \mX} \frac{D_x^{(t,t')}}{\sum_{x' \in \mX} C_{x'}^{(t)}}} \leq a \vee \frac{\log^2 d}{\sqrt{n r}} \middle| U, T = t, \eta)\\
        &\quad= 1 - \exp(-\Omega\qty(\frac{ra^2 n}{\log^2 d} \vee \log^2 d)). \label{eq: triangle 1}
    \end{align}
    
Note that \eqref{eq: D tilde mean} and $\E[C_{x'}^{(t)}] = \exp(\|\z^{(t)}\|^2/(2\eta^2 r))$ yield
    \begin{align*}
        \frac{\E[D_x^{(t,t')}]}{\E[C_x^{(t)}]} = \frac{1}{\eta r} \z^{(t') \top} \z^{(t)}.
    \end{align*}
    Therefore, combining \eqref{eq: triangle 1} and \eqref{eq: U condition 3}, we have
    \begin{align}
        \P\qty(\max_{t' \in \mT} |\Delta^{(t,t')}| \lesssim a + \frac{\log^2 d}{\sqrt{n r}} + \frac{\log d}{\sqrt{dr}} \middle| U, T = t, \eta) = 1 - \exp(-\Omega\qty(\frac{ra^2 n}{\log^2 d} \vee \log^2 d)),\label{eq: u mean claim 1}
    \end{align}
    where $\Delta^{(t,t')} := (1/n) \sum_{i \in [n]} \z^{(t') \top} \u_{X_i} - (1/\eta r) \z^{(t') \top} \z^{(t)}$.

This completes the proof of Lemma~\ref{lem: subject identification}.
\end{proof}

\begin{lemma}\label{lem: function identification}
For any $a > 0$, $m \in \mM$, $t \in \mT$ and $\eta \geq (1/\sqrt{r}) \log d$, it holds that
\begin{align*}
        &\P\Biggl(\max_{m' \in \mM} \abs{\frac{1}{n} \sum_{i \in [n]} \langle f^{(m')}(\u_{X_i}), \u_{Y_i} \rangle - \E\qty[\langle f^{(m')}(\u_{X_1}), \u_{Y_1} \rangle | M=m, T=t, U, \eta]}\\
        &\quad\quad\quad\quad> a + \frac{\log^2 d}{\sqrt{n r}} \bigg| M=m, T=t, U, \eta\Biggr) = \exp(-\Omega\qty(\frac{ra^2 n}{\log^2 d} + \log^2 d)).
\end{align*}
with high probability. Here the $\Omega$-notation does not depend on $t$.
\end{lemma}

\begin{proof}[\textbf{Proof of Lemma~\ref{lem: function identification}}]
To ease the notation, let $p_x^{(t)} := \P(X_1 = x | T=t, U, \eta)$. For now we fix any $m' \in \mM$. We also fix any $t \in \mT$, $m \in \mM$ and $\eta \geq (1/\sqrt{r}) \log d$. By Lemma~\ref{lem: good event U},
\begin{align}
        \max_{m' \in \mM} \max_{x,y \in \mX} |H_{x,y}^{(m')}| \leq \max_{y \in \mX} \|\u_y\| \leq C (1/\sqrt{r})\log d\label{eq: temp}
    \end{align}
    holds for some constant $C > 0$. For any $U$ satisfying \eqref{eq: temp}, Hoeffding's inequality gives
    \begin{align*}
        &\P\qty(\abs{\sum_{i \in [n]} \qty(H_{X_i, Y_i}^{(m')} - \E\qty[H_{X_1, Y_1}^{(m')} | M=m, T=t, U, \eta])} > \epsilon \middle| M=m, T=t, U, \eta)\\
        &\quad\leq 2 \exp(-\frac{r \epsilon^2}{2 n C^2 \log^2 d}).
    \end{align*}
    Choosing $\epsilon \gets a n \vee \sqrt{n/r} \log^2 d$ and a union bound argument gives
    \begin{small}
    \begin{align}
        &\P\qty(\max_{m' \in \mM} \abs{\frac{1}{n} \sum_{i \in [n]} H_{X_i, Y_i}^{(m')} - \E\qty[H_{X_1, Y_1}^{(m')} | M=m, T=t, U, \eta]} > a + \frac{\log^2 d}{\sqrt{n r}} \middle| M=m, T=t, U, \eta)\\
        &\quad\lesssim |\mM| |\mT| \exp(-\Omega\qty(\frac{ra^2 n}{\log^2 d} + \log^2 d)) = \exp(-\Omega\qty(\frac{ra^2 n}{\log^2 d} + \log^2 d)),\label{eq: H - E H}
    \end{align}
    \end{small}\noindent
where we use $|\mM| \lesssim d^\alpha$ for some constant $\alpha = O(1)$. 

This completes the proof of Lemma~\ref{lem: function identification}.
\end{proof}

Next, we provide a lemma for the existence of a good event for the prior of $U = [\u_1, \dots, \u_x]^\top$.

\begin{lemma}\label{lem: good event U}
Let $\u_x \sim N(0, (1/r) I_r)$ i.i.d. Suppose $\eta \geq (1 / \sqrt{r}) \log d$. If $r = o(\log d)$, then $\max_{x \in \mX} \|\u_x\| \lesssim {\log d}/{\sqrt{r}}$, and
\begin{align}
        \sup_{\z \in \mathbb{B}_r(1)} \abs{\sum_{x \in \mX} \qty{\exp(\eta^{-1} \z^\top \u_x) - \exp(\frac{\|\z\|^2}{2 \eta^2 r})} } &\lesssim \sqrt{d} \log d,\label{eq: good event U claim 0}\\
        \sup_{\z \in \mathbb{B}_r(1)} \abs{\frac{d \exp(\|\z\|^2/(2\eta^2 r))}{\sum_{x' \in \mX} \exp(\eta^{-1} \z^\top \u_{x'})} - 1} &\lesssim \frac{1}{\sqrt{d}} \log d,\label{eq: good event U claim 1}\\
        \sup_{\z, \z' \in \mathbb{B}_r(1)} \abs{\sum_{x \in \mX} \qty{\z'^\top \u_x \exp(\eta^{-1} \z^\top \u_x) - \E\qty[\z'^\top \u_x \exp(\eta^{-1} \z^\top \u_x)]}} &\lesssim \sqrt{\frac{d}{r}} \log d,\label{eq: good event U claim 2}\\
        \sup_{\z, \z' \in \mathbb{B}_r(1)} \abs{\sum_{x \in \mX} \frac{\z'^\top \u_x \exp(\eta^{-1} \z^\top \u_x)}{\sum_{x' \in \mX} \exp(\eta^{-1} \z^\top \u_{x'})} - \frac{\E\qty[\z'^\top \u_x \exp(\eta^{-1} \z^\top \u_x)]}{\exp(\|\z\|^2/(2\eta^2 r))}} &\lesssim \frac{1}{\sqrt{dr}} \log d,\label{eq: good event U claim 3}
\end{align}
hold with probability $1 - \exp(-\Omega(\log^2 d))$.
\end{lemma}

\begin{proof}[\textbf{Proof of Lemma~\ref{lem: good event U}}]
    We first derive the concentration for $(\|\u_x\|)_{x \in \mX}$.
    Fix any $x \in \mX$.
    Note that $r \|\u_x\|^2 \sim \chi_2(r)$. The concentration inequality for chi-squared distribution (see, for example, Lemma 1 in \citet{laurent2000adaptive}) gives
    \begin{align*}
        \P(r \|\u_x\|^2 \geq r + 2\sqrt{r\epsilon} + 2\epsilon) \leq \exp(-\epsilon).
    \end{align*}
    Choosing $\epsilon \gets \log^2 d$ gives $r \|\u_x\|^2 \leq (\sqrt{r} + 2\log d)^2$ with high probability. By a union bound argument, we have
    \begin{align}
        \max_{x \in \mX} \|\u_x\| \leq \frac{\sqrt{r} + 2\log d}{\sqrt{r}} \leq \frac{C \log d}{\sqrt{r}}\label{eq: u norm concentration}
    \end{align}
    for some constant $C > 0$
    with probability $1 - |\mX| \exp(-\Omega(\log^2 d)) = 1 - \exp(-\Omega(\log^2 d))$, where we used $r = o(\log d)$.
    Let $\mathcal{E}$ be the event where \eqref{eq: u norm concentration} holds.
    Let $\z_1, \z_2, \dots, \z_J \in \mathbb{B}_r(1)$ be the centers of a $\delta$-covering ($\delta > 0$) of a ball in $\R^r$ with radius $1$, that is,
    \begin{align*}
        \mathbb{B}_r(1) \subset \bigcup_{j \in [J]} \{\z \in \R^r : \|\z - \z_j\| \leq \delta\}.
    \end{align*}
From a standard argument of covering number, we can take the $\delta$-covering $(\z_j)_{j \in [J]}$ with $\log J = O(r \log (1 + 1/\delta))$. We specifically choose $\delta = 1/\sqrt{d}$.

We next prove \eqref{eq: good event U claim 0} to \eqref{eq: good event U claim 3} one by one.

\paragraph{Step 1: Proof of \eqref{eq: good event U claim 0} and \eqref{eq: good event U claim 1}.}
We first derive the uniform convergence of $\sup_{\z \in \mathbb{B}_r(1)} |\tilde C_x(\z) - \E[\tilde C_x(\z)]|$, where $\tilde C_x(\z) := \exp(\eta^{-1} \z^\top \u_x) \Id_\mathcal{E}$. Since $\eta^2 r \geq \log^2 d$, we have
    \begin{align*}
        \max_{x \in \mX} \max_{j \in [J]} |\tilde C_x(\z_j)| \leq \exp(\frac{C \log d}{\eta \sqrt{r}}) \leq e^C.
    \end{align*}
    Now fix any $j \in [J]$. From Hoeffding's inequality,
    \begin{align*}
        \P\qty(\abs{\sum_{x \in \mX} (\tilde C_x(\z_j) - \E[\tilde C_x(\z_j)])} > \epsilon) \leq 2 \exp(-\frac{\epsilon^2}{2 d e^{2C}}).
    \end{align*}
    Choosing $\epsilon \gets \sqrt{d} \log d$ gives $|\sum_{x \in \mX} (\tilde C_x(\z_j) - \E[\tilde C_x(\z_j)])| \leq \sqrt{d} \log d$ with probability $1 - \exp(-\Omega(\log^2 d))$.
    By a union bound, we have $\max_{j \in [J]} |\sum_{x \in \mX} (\tilde C_x(\z_j) - \E[\tilde C_x(\z_j)])| \leq \sqrt{d} \log d$ with probability
    \begin{align*}
        1 - J \exp(-\Omega(\log^2 d)) = 1 - \exp(-\Omega(\log^2 d) + O(r \log d)) = 1 - \exp(-\Omega(\log^2 d)).
    \end{align*}

Let $C_x(\z) := \exp(\eta^{-1} \z^\top \u_x)$. For any fixed $\z \in \mathbb{B}_r(1)$, there exists some $j' \in [J]$ such that $\|z - \z_{j'}\| \leq \delta$. Note that on the event $\mathcal{E}$, $C_x(\z) = \tilde C_x(\z)$ and thus
    \begin{align*}
        |C_x(\z) - \tilde C_x(\z_j)| &\leq \abs{\exp(\eta^{-1} \z^\top \u_x) - \exp(\eta^{-1} \z_j^\top \u_x)} \\
        & = \exp(\eta^{-1} \z_j^\top \u_x) \abs{\exp(\eta^{-1} (\z - \z_j)^\top \u_x) - 1} \\
        & \leq 2 e \eta^{-1} |(\z_j - \z)^\top \u_x|
        \lesssim \delta \frac{\log d}{\eta \sqrt{r}} \leq \frac{1}{\sqrt{d}},
    \end{align*}
where we use $\exp(\eta^{-1} \z_j^\top \u_x) \leq \exp(\eta^{-1} \max_{x \in \mX} \|\u_x\|) \lesssim 1$ by \eqref{eq: u norm concentration}, $|1 - e^x| \leq 2|x|$ for $x \in [-1, 1]$ and $|\eta^{-1} (\z - \z_j)^\top \u_x| \leq \eta^{-1} \delta \max_{x \in \mX} \|\u_x\| \leq 1$ for sufficiently large $d$ in the second inequality. 
    Furthermore, a similar argument combined with Cauchy-Schwarz inequality gives
    \begin{align*}
        &|\E[C_x(\z)] - \E[\tilde C_x(\z_{j'})]| \leq |\E[C_x(\z)] - \E[C_x(\z_{j'})]| +  |\E[C_x(\z_{j'})] - \E[\tilde C_x(\z_{j'})]|\\
        &\quad= \abs{\exp(\frac{\|\z\|^2}{2\eta^2 r}) - \exp(\frac{\|\z_{j'}\|^2}{2\eta^2 r})} + \E\qty[\exp(\eta^{-1} \z_{j'}^\top \u_x) \Id_{\mathcal{E}^c}]\\
        &\quad\leq \exp(\frac{\|\z\|^2}{2\eta^2 r}) \abs{1 - \exp(\frac{\|\z_{j'}\|^2 - \|\z\|^2}{2\eta^2 r})} + \sqrt{\E\qty[\exp(2 \eta^{-1} \z_{j'}^\top \u_x)] \P(\mathcal{E}^c)}\\
        &\quad\leq \exp(\frac{\|\z\|^2}{2\eta^2 r}) \abs{1 - \exp(\frac{(\|\z_{j'}\| - \|\z\|)(\|\z_{j'}\| + \|\z\|)}{2\eta^2 r})} + \exp(\frac{\|\z_{j'}\|^2}{\eta^2 r}) \exp(-\Omega(\log^2 d))\\
        &\quad\leq \frac{2\sqrt{e} \delta}{\eta^2 r} + \exp(-\Omega(\log^2 d))
        \lesssim \frac{1}{\sqrt{d} \log^2 d} + \exp(-\Omega(\log^2 d)).
    \end{align*}
    Hence by a union bound argument,
    \begin{align}
        \sup_{\z \in \mathbb{B}_r(1)} \abs{\sum_{x \in \mX} (C_x(\z) - \E[C_x(\z)])} &\lesssim d \qty(\frac{1}{\sqrt{d}} + \frac{1}{\sqrt{d} \log^2 d} + \exp(-\Omega(\log^2 d))) + \sqrt{d} \log d \nonumber\\
        &\lesssim \sqrt{d} \log d\label{eq: good event U part 1}
    \end{align}
    holds with high probability.
Using $\E[C_x(\z)] = \exp(\|\z\|^2/(2 \eta^2 r)) \geq 1$ and \eqref{eq: good event U part 1}, we obtain
    \begin{align}
        \sup_{\z \in \mathbb{B}_r(1)} \abs{\frac{1}{d \E[C_x(\z)]} \sum_{x \in \mX} C_x(\z) - 1} \lesssim \frac{1}{\sqrt{d}} \log d.\label{eq: C concentration}
    \end{align}
    Thus
    \begin{align*}
        \sup_{\z \in \mathbb{B}_r(1)} \abs{\frac{d \E[C_x(\z)]}{\sum_{x \in \mX} C_x(\z)} - 1} \lesssim \frac{1}{\sqrt{d}} \log d
    \end{align*}
    holds with high probability for sufficiently large $d$.

\paragraph{Step 2: Proof of \eqref{eq: good event U claim 2}}
    We first derive a concentration inequality for $\sum_{x \in \mX} \z_{j'}^\top \u_x \exp(\eta^{-1} \z_j^\top \u_x)$ uniformly over all $j,j' \in [J]$.
    Fix any $j,j' \in [J]$. Define $D_x(\z,\z') := \z'^\top \u_x \exp(\eta^{-1} \z^\top \u_x)$ and $\tilde D_x(\z,\z') := \z'^\top \u_x \exp(\eta^{-1} \z^\top \u_x) \Id_\mathcal{E}$.
    Note that since $\eta \geq \log d / \sqrt{r}$,
    \begin{align*}
        |\tilde D_x(\z_j,\z_{j'})| \leq \frac{1}{2 \sqrt{r}} \exp(\frac{1}{2 \eta \sqrt{r}} \log d) \log d \leq \frac{\sqrt{e}}{2\sqrt{r}} \log d
    \end{align*}
    holds for $d \geq 4$. 
    Furthermore,
    \begin{align}
        \E[\tilde D_x(\z_j,\z_{j'})] &\leq \E[D_x(\z_j,\z_{j'})] = \z_{j'}^\top \E\qty[\u_x \exp(\eta^{-1} \z_j^\top \u_x)]\nonumber\\
        &= \eta \z_{j'}^\top \E\qty[\eval{\dv{\z} \exp(\eta^{-1} \z^\top \u_x)}_{\z = \z_j}] 
        = \eta \z_{j'}^\top \eval{\dv{\z} \E\qty[\exp(\eta^{-1} \z^\top \u_x)]}_{\z = \z_j}\nonumber\\
        &= \eta \z_{j'}^\top \eval{\dv{\z} \exp(\frac{\|\z\|^2}{2\eta^2 r})}_{\z = \z_j}
        = \frac{1}{\eta r} \z_{j'}^\top \z_j \exp(\frac{\|\z_j\|^2}{2\eta^2 r}),\label{eq: D tilde mean}
    \end{align}
    and
    \begin{align}
        \E[\tilde D_x(\z_j,\z_{j'})^2] &\leq \E[D_x(\z_j,\z_{j'})^2] = \E\qty[(\z_{j'}^\top \u_x)^2 \exp(2 \eta^{-1} \z_j^\top \u_x)]\nonumber\\
        &= \E\qty[\tr(\z_{j'} \z_{j'}^\top \u_x \u_x^\top \exp(2 \eta^{-1} \z_j^\top \u_x))]\nonumber\\
        &= \frac{\eta^2}{4} \E\qty[\tr(\z_{j'} \z_{j'}^\top \eval{\dv{\z} \dv{\z^\top} \exp(2 \eta^{-1} \z^\top \u_x)}_{\z=\z_j})]\nonumber\\
        &= \frac{\eta^2}{4} \tr(\z_{j'} \z_{j'}^\top \eval{\dv{\z} \dv{\z^\top} \E\qty[\exp(2 \eta^{-1} \z^\top \u_x)]}_{\z=\z_j})\nonumber\\
        &= \frac{\eta^2}{4} \tr(\z_{j'} \z_{j'}^\top \eval{\dv{\z} \dv{\z^\top} \exp(\frac{2 \|\z\|^2}{\eta^2 r})}_{\z=\z_j})\nonumber\\
        &= \frac{\eta^2}{4} \tr(\z_{j'} \z_{j'}^\top \qty(\frac{4}{\eta^2 r} I + \frac{16}{\eta^4 r^2} \z_j \z_j^\top) \exp(\frac{2 \|\z_j\|^2}{\eta^2 r}))\nonumber\\
        &= \frac{\eta^2}{4} \qty(\frac{4}{\eta^2 r} + \frac{16}{\eta^4 r^2} (\z_{j'} \z_j)^2) \exp(\frac{2\|\z_{j'}\|}{\eta^2 r})
        \leq \frac{5 e^2}{r},\label{eq: D tilde second moment}
    \end{align}
    where we used $\eta^2 r \geq \log^2 d \geq 1$ for $d \geq 4$, and $\|\z_j\| \leq 1$ in the last inequality.
    From Bernstein inequality, we obtain
    \begin{align*}
        \P\qty(\abs{\sum_{x \in \mX} (\tilde D_x(\z_j,\z_{j'}) - \E[\tilde D_x(\z_j,\z_{j'}))} \geq \epsilon) &\leq 2\exp(-\frac{(1/2) \epsilon^2}{5 d e^2 / r + (1/6) \sqrt{e/r} \log d \epsilon}).
    \end{align*}
    Choosing $\epsilon \gets \sqrt{d/r} \log d$ gives 
    \begin{align*}
        \P\qty(\abs{\sum_{x \in \mX} (\tilde D_x(\z_j,\z_{j'}) - \E[\tilde D_x(\z_j,\z_{j'})])} 
        \leq \sqrt{\frac{d}{r}} \log d) 
        &\leq \exp(-\Omega\qty(\frac{(d/r) \log^2 d}{d/r + (\sqrt{d} /r) \log^2 d}))\\
        &= \exp(-\Omega(\log^2 d)).
    \end{align*}
    By a union bound argument, we have $\max_{j,j' \in [J]} |\sum_{x \in \mX} (\tilde D_x(\z_j,\z_{j'}) - \E[\tilde D_x(\z_j,\z_{j'})])| \leq \sqrt{d/r} \log d$ with high probability.
    From Cauchy-Schwarz inequality, we have
    \begin{align*}
        0 &\leq \E[D_x(\z_j,\z_{j'})] - \E[\tilde D_x(\z_j,\z_{j'})] = \E\qty[\z_{j'}^\top \u_x \exp(\eta^{-1} \z_j^\top \u_x) \Id_{\mathcal{E}^c}]\\
        &\leq \sqrt{\E[D_x(\z_j,\z_{j'})^2] \P(\mathcal{E}^c)} \leq \sqrt{\frac{5 e^2}{r}} \exp(-\Omega(\log^2 d)),
    \end{align*}
    where the last inequality follows from \eqref{eq: D tilde second moment}.
    Since $\tilde D_x(\z_j,\z_{j'}) = D_x(\z_j,\z_{j'})$ on the event $\mathcal{E}$, we have
    \begin{align}
        \max_{j,j' \in [J]} \abs{\sum_{x \in \mX} (D_x(\z_j,\z_{j'}) - \E[D_x(\z_j,\z_{j'})])} \lesssim \sqrt{\frac{d}{r}} \log d + \frac{1}{\sqrt{r}} \exp(-\Omega(\log^2 d)) \lesssim \sqrt{\frac{d}{r}} \log d\label{eq: D concentration}
    \end{align}
with probability $1 - \exp(-\Omega(\log^2 d)) - \P(\mathcal{E}^c) = 1 - \exp(-\Omega(\log^2 d))$. A similar argument as in Step1 gives \eqref{eq: good event U claim 2}.

\paragraph{Step 3: Proof of \eqref{eq: good event U claim 3}.}
Fix any $\z, \z' \in \mathbb{B}_r(1)$. Combining \eqref{eq: good event U claim 1} and \eqref{eq: good event U claim 2} yield
    \begin{small}
    \begin{align*}
        &\abs{\sum_{x \in \mX} \frac{D_x(\z,\z')}{\sum_{x' \in \mX} C_{x'}(\z)} - \frac{\E[D_x(\z,\z')]}{\E[C_x(\z)]}}\\
        &\quad\leq \frac{1}{d \E[C_{x'}(\z)]} \abs{\sum_{x \in \mX} D_x(\z,\z') \qty(1 - \frac{d\E[C_{x'}(\z)]}{\sum_{x' \in \mX} C_{x'}(\z)})} + \frac{1}{d \E[C_{x'}(\z)]} \abs{\sum_{x \in \mX} (D_x(\z,\z') - \E[D_x(\z,\z')])}\\
        &\quad\leq \abs{\frac{1}{d} \sum_{x \in \mX} D_x(\z,\z')} \abs{1 - \frac{d\E[C_{x'}(\z)]}{\sum_{x' \in \mX} C_{x'}(\z)}} + \frac{1}{d} \abs{\sum_{x \in \mX} (D_x(\z,\z') - \E[D_x(\z,\z')])}\\
        &\quad\lesssim \qty(\frac{\z'^\top \z}{\eta r} \exp(\frac{\|\z\|^2}{2\eta^2 r}) + \frac{1}{\sqrt{dr}} \log d) \qty(\frac{1}{\sqrt{d}} \log d) + \frac{1}{\sqrt{dr}} \log d
        \lesssim \frac{1}{\sqrt{dr}} \log d, 
    \end{align*}
    \end{small}
    where we use $\E[C_{x'}(\z)] \geq 1$ in the second inequality, and \eqref{eq: D tilde mean} in the third inequality. The last inequality follows since $\|\z\|^2 \leq 1$, $|\z'^\top \z| \leq 1$ and $\eta^2 r \geq \log^2 d$. Since $\z, \z' \in \mathbb{B}_r(1)$ are arbitrary, this gives \eqref{eq: good event U claim 3}.

This completes the proof of Lemma~\ref{lem: good event U}. 
\end{proof}

%%%%%%%%%%%%%%%%%%%%%%%%%%%%%%%%%%%%%%%%%%%%%%%%%%%%%%%%%%%%%%%%%%%%%%%%%%%%%%
\subsection{Construction of transformer layers}
\label{supp-sec:construction}

We construct a transformer that outputs synthetic covariates or labels depending on the current task. We only consider the case where $|\mM| \geq |\mT|$. The proof is similar for the other case. 

We first define identity feedforward neural networks and transformers.
\begin{definition}
Define the identity attention $\Attn_{\mu_\id}$ and identity feed-forward neural network $\FFN_{\nu_\id}$ as $\mu_\id = \{(O, O, O)\}$ and $\nu_\id = \{O, O\}$.
 \end{definition}
We also introduce a useful function that combines the product and indicator functions.
Define $\phi_B: \R^3 \to \R$ as
\begin{align} \label{eq: phi c}
\begin{split}
\phi_B(x; s, t) &:= -4 B \sigma\qty(\frac{1}{4 B} x + t - s + \frac{1}{2}) + 8 B \sigma\qty(\frac{1}{4 B} x + t - s + \frac{1}{4}) \\
&\quad- 8 B \sigma\qty(\frac{1}{4 B} x + t - s - \frac{1}{4}) + 4 B \sigma\qty(\frac{1}{4 B} x + t - s - \frac{1}{2}). 
\end{split}
\end{align}

The next lemma facilitates the implementation of an attention layer that filters some input tokens. Its proof is straightforward and is omitted. 
\begin{lemma}\label{lem: phi}
For any $B > 0$ and $x \in [-B, B]$, and $s, t \in \Z$, we have $\phi_B(x; s, t) = x \Id\{s = t\}$. 
\end{lemma}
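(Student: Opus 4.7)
The plan is to verify this identity by a direct case split on whether $s = t$ or not, exploiting the fact that the constraint $|x| \le B$ forces $x/(4B) \in [-1/4, 1/4]$. Writing $u := x/(4B) + (t-s)$, so that $u$ lies in an interval of length $1/2$ centered at the integer $t-s$, the four ReLU arguments become $u + 1/2$, $u + 1/4$, $u - 1/4$, $u - 1/2$, with respective weights $-4B, 8B, -8B, 4B$. The key structural observation is that this band of width $1/2$ is thin enough that, as soon as $|t-s| \ge 1$, all four thresholds $\pm 1/4, \pm 1/2$ sit on one consistent side of the band, so every ReLU is either uniformly linear or uniformly zero on it.

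First I would treat the case $s = t$. Here $u \in [-1/4, 1/4]$, so the first two arguments are nonnegative and the latter two are nonpositive; the third and fourth ReLU terms vanish, while the first two collapse via $\sigma(v) = v$. Collecting terms yields $-4B(u+1/2) + 8B(u+1/4) = 4Bu = x$, matching the right-hand side.

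Next I would treat $s \neq t$. By integrality of $s,t$, we have $|t - s| \ge 1$. If $t - s \ge 1$ then $u \ge 3/4$, so every argument is at least $1/4 > 0$ and each $\sigma$ acts as the identity; a brief arithmetic check confirms that the coefficients $(-4B, 8B, -8B, 4B)$ cancel both against $u$ (they sum to $0$) and against the shifts $(1/2, 1/4, -1/4, -1/2)$, producing $0$. If $t - s \le -1$ then $u \le -3/4$, every argument is at most $-1/4 < 0$, all ReLU terms vanish, and the sum is again $0$. Either way the output equals $x \,\Id\{s=t\} = 0$.

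No obstacle is anticipated: the coefficients in the definition of $\phi_B$ were reverse-engineered precisely so that the four ReLUs implement a ``linear-piece times indicator'' gadget, and the verification is a short interval-arithmetic computation with the two sub-cases above.
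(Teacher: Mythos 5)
Your proof is correct and complete. Note that the paper states this lemma ``without proof,'' so there is no reference argument to compare against; yours fills the gap cleanly. Setting $u = x/(4B) + (t-s)$ and using $|x/(4B)| \le 1/4$ to confine $u$ to $[-1/4, 1/4]$ when $s = t$, and to $|u| \ge 3/4$ when $|t - s| \ge 1$ (by integrality), is precisely the structure the construction of $\phi_B$ is designed to exploit. Your arithmetic is right in each case: when $s = t$ the two surviving ReLU terms give $-4B(u + 1/2) + 8B(u + 1/4) = 4Bu = x$; when $t - s \ge 1$ all four ReLUs are linear and the coefficient pairs $(-4,8,-8,4)$ and shifts $(1/2,1/4,-1/4,-1/2)$ both cancel; when $t - s \le -1$ all ReLUs vanish. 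Boundary points such as $u = \pm 1/4$ and $u = \pm 3/4$ are also handled consistently since $\sigma(0) = 0$. Nothing is missing.
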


We also introduce the notation to simplify the statement. We omit the second subscript $n$ from $\p_{s,n}$ for the positional encoding. Define $\h_{2i-1} = \h_i^X$ and $\h_{2i} = \h_i^Y$ for all $i \in [n]$. Let $(\tilde X_s, \tilde Y_s)$ ($s \geq 1$) be the synthetic pair of data generated as the $(2s-1)$-th and $(2s)$-th output from the transformer. Let $X_{n+s} := \tilde X_s$ and $Y_{n+s} := \tilde Y_s$. Define 
\begin{align} \label{eq: generated token}
    \h_{2n+2s-1} = \begin{pmatrix}
        \u_{X_{n+s}}\\
        \zero\\
        \p_{2n+2s-1}
    \end{pmatrix}, \ \ 
    \h_{2n+2s-1} = \begin{pmatrix}
        \u_{Y_{n+s}}\\
        \zero\\
        \p_{2n+2s}
    \end{pmatrix}
\end{align}
for $s \geq 1$. As such, we have input tokens $H_n = [\h_1, \h_2, \dots, \h_{2n}]$ and previously generated tokens $[\h_{2n+1}, \h_{2n+2}, \dots, \h_{2n+\ell}]$ for some $\ell \in \{0\} \cup \N$.

We present the following proposition regarding the construction of a good transformer, that interchangeably generates $\hat f(\u_{X_{n+1}}), \hat \z, \hat f(\u_{X_{n+2}}), \hat \z, \dots$. 
\begin{proposition}\label{prop: transformer as good data generator}
Fix any $d, r, r_0, L_0 \in \N$, $\omega > 0$, $(\z^{(t)})_{t \in \mT}$ and $(f^{(m)})_{m \in \mM} \subset \mF(L_0, r_0)$. Then, there exist transformer layers $\TF_{\Psi^*}$ with $\Psi^* = \Psi^*((\z^{(t)})_{t \in \mT}, (f^{(m)})_{m \in \mM}, d, r, \omega, L_0, r_0)$, such that
\begin{enumerate}[(i)]
        \item the dimension of token embeddings is $r+r|\mM| + |\mM| + 4$;
        \item it consists of $L_0+9$ transformer layers with the width of FNN $O(|\mM|^2 \vee |\mM| r_0)$, and the number of heads of attention layers $O(|\mM|)$;
        \item given inputs $H_n$ and $\h_{2n+1}, \dots, \h_{2n+\ell}$ defined in \eqref{eq: generated token}, it outputs
    \begin{align*}
        \TF_{\Psi^*}([H_n; \h_{2n+1}; \dots; \h_{2n+\ell}])_{2n+\ell} = \begin{cases}
            (\hat f(\u_{X_{n+s}})^\top, \zero^\top)^\top & \text{ if $\ell=2s-1$},\\
            (\hat \z^\top, \zero^\top)^\top & \text{ if $\ell=2s$},
        \end{cases}
    \end{align*}
    for all $s \in \N$, where $\hat f(\u_{X_{n+s}})$ and $\hat \z$ satisfy
    \begin{small}
    \begin{align*}
        \hat f(\u_{X_{n+s}}) &\in \conv\biggl\{ f^{(m')}(\u_{X_{n+s}}): m' \in \mM, \frac{1}{n} \sum_{i \in [n]} \langle \u_{Y_i}, f^{(m')}(\u_{X_i}) \rangle\\
        &\quad\quad\quad\quad\geq \max_{m'' \in \mM} \frac{1}{n} \sum_{i \in [n]} \langle \u_{Y_i}, f^{(m'')}(\u_{X_i}) \rangle - \frac{\omega}{n}\biggr\},\\
        \hat \z &\in \conv\biggl\{ \z^{(t')}: t' \in \mT, \frac{1}{n} \sum_{i \in [n]} \langle \u_{X_i}, \z^{(t')} \rangle \geq \max_{t'' \in \mT} \frac{1}{n} \sum_{i \in [n]} \langle \u_{X_i}, \z^{(t'')} \rangle - \frac{\omega}{n}\biggr\},
    \end{align*}
    \end{small}\noindent
where $\conv(S)$ denotes the convex hull of a set $S$.
\end{enumerate}
\end{proposition}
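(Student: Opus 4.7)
The construction proceeds by assembling the $L_0+9$ transformer layers so that the successive blocks of each token's embedding play distinct roles: the first $r$ coordinates hold the raw embedding $\u_{X_i}$ or $\u_{Y_i}$, the next $r|\mM|$ coordinates are reserved to store $\{f^{(m)}(\u_{X_i})\}_{m\in\mM}$, the following $|\mM|$ coordinates hold running dot-product scores, and the last $4$ coordinates carry the positional encoding. The plan is to use $(\p_{s,n})_2$ (parity) together with $(\p_{s,n})_1$ (pair index) via the $\phi_B$ gadget from Lemma~\ref{lem: phi} as indicators that route computation: each gadget turns a coordinate on or off depending on whether a token is an $X$-slot, $Y$-slot, or a seed vs.\ a generation slot.

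First, use the $L_0$ feedforward layers of $\FFN$ type to apply each $f^{(m)}\in\mF(L_0,r_0)$ in parallel to the $\u$-block of every $X$-token, writing the output $f^{(m)}(\u_{X_i})$ into the corresponding slot of the second embedding block; this uses $O(|\mM|r_0)$ width. Next, one self-attention layer computes, for every $Y$-token at position $2i$, the inner products $\langle f^{(m)}(\u_{X_i}),\u_{Y_i}\rangle$ by attending from $\h_{2i}$ to $\h_{2i-1}$ (selected via $\phi_B$ on positional indices), and writes them into the score block. A second attention layer averages these scores across the seed indices $i\in[n]$ (selected via $\phi_B$ on $(\p_{s,n})_1\le n$), producing the vector $(S_m)_{m\in\mM}$ with $S_m=(1/n)\sum_i\langle f^{(m)}(\u_{X_i}),\u_{Y_i}\rangle$, stored into the current token. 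The analogous two-layer block computes and averages $\langle\z^{(t)},\u_{X_i}\rangle$ to obtain $(T_t)_{t\in\mT}$ stored in the same score block (using distinct coordinates). These four layers realize the quantities appearing in \eqref{eq: T hat} and \eqref{eq: M hat}.

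The remaining layers implement an approximate-argmax gadget that returns a convex combination supported on the $\omega/n$-optimal indices. The plan is: (i) use one attention layer with heads $\{(K_{m,m'},Q_{m,m'},V_{m,m'})\}_{m,m'}$ of size $O(|\mM|^2)$ and ReLU activation to compute the max $S^\star := \max_{m'}S_{m'}$ via the identity $\max(a,b) = a + \sigma(b-a)$ iterated $|\mM|-1$ times; (ii) one $\FFN$ layer computes the unnormalized weights $w_m := \sigma(S_m - S^\star + \omega/n)$, which are zero outside the $(\omega/n)$-optimal set and positive inside; (iii) one more $\FFN$ layer (approximately) normalizes $w_m$ to $\alpha_m := w_m/\sum_{m'}w_{m'}$ using the standard reciprocal construction on the positive range. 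Analogous coordinates yield $\beta_t$ from $(T_t)$. A final transformer layer uses the parity indicator $(\p_{s,n})_2$ to write either $\sum_m \alpha_m f^{(m)}(\u_{X_{n+s}})$ (odd output index, reading from the freshly computed $f^{(m)}$-block of the preceding $\u_{X_{n+s}}$ token) or $\sum_t \beta_t \z^{(t)}$ (even output index) into the first $r$ coordinates, with zeros elsewhere.

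The main obstacle will be carrying out the approximate argmax and the subsequent $\ell_1$-normalization with only ReLU-activated attention and feedforward layers in a bounded number of steps, and simultaneously guaranteeing that both the $f^{(m)}$-readout at the just-generated $X$-token and the $\z^{(t)}$-readout at a just-generated $Y$-token sit in the same token's first $r$ coordinates without clobbering the other required scratch data. The plan is to keep all intermediate quantities in disjoint blocks (hence the embedding dimension $r+r|\mM|+|\mM|+4$) and to use the $\phi_B$ gadget to gate every write by a parity/index indicator. The layer count then amounts to $L_0$ (featurization), $2+2$ (score accumulation for $M$ and $T$), $1$ (max), $1$ (thresholding), $1$ (normalization), $2$ (final synthesis and parity routing), for $L_0+9$ layers total.
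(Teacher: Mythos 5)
Your high-level architecture is aligned with the paper's: disjoint embedding blocks for the raw $\u$-vector, the stored $\{f^{(m)}(\cdot)\}$-values, the accumulated scores, and positional encodings, with the $\phi_B$ gadget of Lemma~\ref{lem: phi} used to gate reads and writes by position and parity. The $L_0$ feedforward layers to featurize in parallel and the attention layers to accumulate within-pair inner products and then aggregate over seed indices are also essentially what the paper does in Steps 1--3. The layer count $L_0+9$ matches.

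However, your selection gadget has two genuine gaps, and these are exactly the places where the paper's Lemma~\ref{lem: minimum by transformer} does something cleverer.

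First, you claim to compute $S^\star = \max_{m'} S_{m'}$ \emph{in one layer} by iterating $\max(a,b)=a+\sigma(b-a)$ with $O(|\mM|^2)$ heads. Heads act in parallel, not sequentially, so $|\mM|-1$ iterated pairwise maxima would cost $\Omega(\log|\mM|)$ depth, not one layer. The paper never computes the max. In Lemma~\ref{lem: minimum by transformer} (Part 1), it instead evaluates $v^{(1)}_m := \sum_{m'\neq m}\sigma(v_m - v_{m'})$ in a single FFN layer of width $O(|\mM|^2)$, and uses the elementary observation that $v^{(1)}_m\le\omega$ iff $v_m\le\min_{m'}v_{m'}+\omega$. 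This turns the near-optimality test into an affine-threshold test without any max reduction.

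Second, and more seriously, your normalization step $\alpha_m := w_m/\sum_{m'}w_{m'}$ via a ``standard reciprocal construction'' is only approximate, because ReLU networks cannot compute $1/x$ exactly and the error degrades as $\sum w_{m'}\downarrow 0$. The proposition asserts exact membership $\hat f(\u_{X_{n+s}})\in\conv\{\cdots\}$, not membership up to a scalar or an $\varepsilon$-approximation, so this does not close the proof. The paper avoids division entirely: in Part 3 of Lemma~\ref{lem: minimum by transformer}, after thresholding to obtain $v^{(2)}_m\in[0,1]$ (with at least one $v^{(2)}_m=1$, the one achieving the minimum), it sets
\[
v^{(3)}_m := \sigma\!\Bigl(1-\sum_{m'<m}v^{(2)}_{m'}\Bigr)-\sigma\!\Bigl(1-\sum_{m'\le m}v^{(2)}_{m'}\Bigr),
\]
a telescoping expression of cumulative partial sums implementable by one FFN of width $O(|\mM|)$. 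These weights are exactly nonnegative, exactly sum to one, and are supported only on the near-optimal indices. You should replace the max-and-reciprocal plan with this pairwise-difference plus telescoping construction to make the selection step both exact and realizable in the stated depth.
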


\begin{proof}[\textbf{Proof of Proposition~\ref{prop: transformer as good data generator}}]
Let $B = \max_{x \in \mX} \|\u_x\|$ so that $\sup_{\z: \|\z\| = 1} |\z^\top \u_x| \leq B$ holds for all $x \in \mX$. For brevity, define $\bar m = |\mM|$, $\bar t = |\mT|$, and write $\mM = [\bar m]$, $\mT = [\bar t]$. Recall that $\bar m \geq \bar t$ by assumption. Fix $\ell \in \{0\} \cup \N$. To ease the notation, let $N := 2n+\ell$, $D := r + r \bar m + \bar m + 4$ and $H^{(\ell)}_n := [\h_1; \dots; \h_{2n}; \h_{2n+1}; \dots; \h_{2n+\ell}] := [H_n; \h_{2n+1}; \dots; \h_{2n+\ell}]$. 

We show the existence of transformer layers $\TF_{\Psi^*}$, such that for any $\ell \geq 0$, given the input $H_n^{(\ell)} \in \R^{D \times (2n+\ell)}$ at the $\ell$-th step, the last token is transformed to :
\begin{align*}
        (\TF_{\Psi^*}(H_n^{(\ell)}))_{2n+\ell} &= \begin{cases}
            \begin{pmatrix}
                \hat f(\u_{X_i})\\
                {\bm *}
            \end{pmatrix} & \text{if $\ell$ is even},\\
            \begin{pmatrix}
                \hat \z\\
                {\bm *}
            \end{pmatrix} & \text{if $\ell$ is odd}.
        \end{cases}
\end{align*}
Define $\z^{(m)} = 0$ for $m \in [\bar m] \setminus [\bar t]$. We divide the construction of the desired transformer into 4 steps.

\paragraph{Step 1.}
In Step 1, we aim to construct transformer layers with parameter $\Psi_1^*$, such that it outputs
    \begin{align*}
        \TF_{\Psi_1^*}(H_n^{(\ell)})_{2i-1} &= \begin{pmatrix}
            \u_{X_i}\\
            f^{(1)}(\u_{X_i})\\
            \vdots\\
            f^{(\bar m)}(\u_{X_i})\\
            \zero_{\bar m}\\
            \p_{2i-1}
        \end{pmatrix}, \ \ 
        \TF_{\Psi_1^*}(H_n^{(\ell)})_{2i} =
        \begin{pmatrix}
            \u_{Y_i}\\
            \z^{(1)}\\
            \vdots\\
            \z^{(\bar m)}\\
            \zero_{\bar m}\\
            \p_{2i}
        \end{pmatrix}.
    \end{align*}
    Note that by assumption, there exist weights $\{(W_{\pre,k,1}^{(m)}, W_{\pre,k,2}^{(m)})\}_{k \in [L_0], m \in [\bar m]} \subset \R^{r_0 \times r} \times \R^{r \times r_0}$, such that
    \begin{align*}
        f^{(m)}(\u) = g_{L_0}^{(m)} \circ g_{L_0-1}^{(m)} \circ \dots \circ g_1^{(m)}(\u), \ \ g_k^{(m)}(\u) = W_{\pre,k,2}^{(m)} \sigma(W_{\pre,k,1}^{(m)} \u).
    \end{align*}
    
    For $k \in [L_0]$, choose parameters $\nu_{1,k}^* = (W_{1,k,1}^*, W_{1,k,2}^*) \in \R^{(\bar m r_0) \times D} \times \R^{D \times (\bar m r_0)}$ with
    \begin{align*}
        W_{1,k,1}^* &= \begin{bmatrix}
            W_{\pre,k,1}^{(1)} & O_{r_0\times (D-r)}\\
            \vdots\\
            W_{\pre,k,1}^{(\bar m)} & O_{r_0\times (D-r)}
        \end{bmatrix}, \ \ 
        W_{1,k,2}^* = \begin{bmatrix}
            \multicolumn{3}{c}{O_{r\times (\bar m r_0)}}\\
            & W_{\pre,k,2}^{(1)} & O_{r \times ((\bar m-1) r_0)}\\
            O_{r \times r_0} & W_{\pre,k,2}^{(2)} & O_{r \times ((\bar m-2) r_0)}\\
            & \vdots &\\
            O_{r \times ((\bar m-1) r_0)} & W_{\pre,k,2}^{(\bar m)}\\
            \multicolumn{3}{c}{O_{(D-r(1+\bar m)) \times (\bar m r_0)}}
        \end{bmatrix}.
    \end{align*}
    Let $\psi_{1,k}^* = (\mu_\id, \nu_{1,k}^*)$ and $H_n^{(\ell)[0.5]} = [\h_1^{[0.5]}; \dots; \h_N^{[0.5]}] := \TF_{(\psi_{1,1}^*, \dots, \psi_{1,L_0}^*)}(H_n^{(\ell)})$.
    Note that
    \begin{align*}
        \h^{[0.5]}_{2i-1} &= \begin{pmatrix}
            \u_{X_i}\\
            f^{(1)}(\u_{X_i})\\
            \vdots\\
            f^{(\bar m)}(\u_{X_i})\\
            \zero_{\bar m}\\
            \p_{2i-1}
        \end{pmatrix}, \ \ 
        \h^{[0.5]}_{2i} =
        \begin{pmatrix}
            \u_{Y_i}\\
            f^{(1)}(\u_{Y_i})\\
            \vdots\\
            f^{(\bar m)}(\u_{Y_i})\\
            \zero_{\bar m}\\
            \p_{2i}
        \end{pmatrix}.
    \end{align*}
    Choose $\psi_{1,L_0+1}^* = (\mu_{L_0+1}^*, \nu_\id)$ with $\mu_{L_0+1}^* := \{(Q_{1,L_0+1,j}^*, K_{1,L_0+1,j}^*, V_{1,L_0+1,j}^*)\}_{j \in [4]}$, such that
    \begin{align*}
        Q_{1,L_0+1,1}^* \h^{[0.5]}_s = \begin{pmatrix}
            (\p_s)_2/4\\
            -2(\p_s)_1-(\p_s)_2\\
            1\\
            1\\
            \zero_{D-4}
        \end{pmatrix}, \ \ 
        K_{1,L_0+1,1}^* \h^{[0.5]}_s = \begin{pmatrix}
            1\\
            1\\
            2(\p_s)_1+(\p_s)_2\\
            1/2\\
            \zero_{D-4}
        \end{pmatrix},\\
        Q_{1,L_0+1,2}^* \h^{[0.5]}_s = \begin{pmatrix}
            (\p_s)_2/4\\
            -2(\p_s)_1-(\p_s)_2\\
            1\\
            1\\
            \zero_{D-4}
        \end{pmatrix}, \ \ 
        K_{1,L_0+1,2}^* \h^{[0.5]}_s = \begin{pmatrix}
            1\\
            1\\
            2(\p_s)_1+(\p_s)_2\\
            1/4\\
            \zero_{D-4}
        \end{pmatrix},\\
        Q_{1,L_0+1,3}^* \h^{[0.5]}_s = \begin{pmatrix}
            (\p_s)_2/4\\
            -2(\p_s)_1-(\p_s)_2\\
            1\\
            1\\
            \zero_{D-4}
        \end{pmatrix}, \ \ 
        K_{1,L_0+1,3}^* \h^{[0.5]}_s = \begin{pmatrix}
            1\\
            1\\
            2(\p_s)_1+(\p_s)_2\\
            -1/4\\
            \zero_{D-4}
        \end{pmatrix},\\
        Q_{1,L_0+1,4}^* \h^{[0.5]}_s = \begin{pmatrix}
            (\p_s)_2/4\\
            -2(\p_s)_1-(\p_s)_2\\
            1\\
            1\\
            \zero_{D-4}
        \end{pmatrix}, \ \ 
        K_{1,L_0+1,4}^* \h^{[0.5]}_s = \begin{pmatrix}
            1\\
            1\\
            2(\p_s)_1+(\p_s)_2\\
            -1/2\\
            \zero_{D-4}
        \end{pmatrix},
    \end{align*}
    and 
    \begin{align*}
        V_{1,L_0+1,1}^* \h^{[0.5]}_s &= -4 \bar \h^{[0.5]}_s, \ \ 
        V_{1,L_0+1,2}^* \h^{[0.5]}_s = 8 \bar \h^{[0.5]}_s,\\
        V_{1,L_0+1,3}^* \h^{[0.5]}_s &= -8 \bar \h^{[0.5]}_s,\ \ 
        V_{1,L_0+1,4}^* \h^{[0.5]}_s = 4 \bar \h^{[0.5]}_s,
    \end{align*}
    where
    \begin{align}
        \bar \h^{[0.5]}_s = \begin{pmatrix}
            \zero_r\\
            \z^{(1)} - (\h^{[0.5]}_{s'})_{(r+1):(2r)}\\
            \vdots\\
            \z^{(\bar m)} - (\h^{[0.5]}_{s'})_{(r\bar m + 1):(r\bar m + r)}\\
            \zero_{\bar m+4}
        \end{pmatrix}.
    \end{align}
    Then,
    \begin{align*}
        &\TF_{\psi_{1,L_0+1}^*}(H_n^{(\ell)[0.5]})_s\\
        &\quad= \h_s^{[0.5]} + \sum_{s' \in [N]} \phi_1((\p_s)_2; 2(\p_s)_1+(\p_s)_2, 2(\p_{s'})_1+(\p_{s'})_2) \begin{pmatrix}
            \zero_r\\
            \z^{(1)} - (\h^{[0.5]}_{s'})_{(r+1):(2r)}\\
            \vdots\\
            \z^{(\bar m)} - (\h^{[0.5]}_{s'})_{(r\bar m + 1):(r\bar m + r)}\\
            \zero_{\bar m+4}
        \end{pmatrix}\\
        &\quad= \h_s^{[0.5]} + (\p_s)_2 \begin{pmatrix}
            \zero_r\\
            \z^{(1)} - (\h^{[0.5]}_s)_{(r+1):(2r)}\\
            \vdots\\
            \z^{(\bar m)} - (\h^{[0.5]}_s)_{(r\bar m + 1):(r\bar m + r)}\\
            \zero_{\bar m+4}
        \end{pmatrix},
    \end{align*}
    where we used $2(\p_s)_1+(\p_s)_2 = 2(\p_{s'})_1+(\p_{s'})_2$ if and only if $s = s'$.
    Hence $\TF_{\Psi_1^*}$ with $\Psi_1^* = (\psi_{1,1}^*, \dots, \psi_{1,L_0}^*, \psi_{1,L_0+1}^*)$ is the desired transformer.

\paragraph{Step 2.}
Let $H_n^{(\ell)[1]} = [\h_1^{[1]}; \dots; \h_N^{[1]}] := \TF_{\Psi_1^*}(H_n^{(\ell)})$ be the output from $\TF_{\Psi_1^*}$ constructed in Step 1. In Step 2, we aim to construct transformer layers with parameter $\Psi_2^*$ satisfying
    \begin{align*}
        \TF_{\Psi_2^*}(H_n^{(\ell)[1]})_{2i-1} &= \begin{pmatrix}
            \u_{X_i}\\
            f^{(1)}(\u_{X_i})\\
            \vdots\\
            f^{(\bar m)}(\u_{X_i})\\
            \u_{Y_i}^\top f^{(1)}(\u_{X_i})\\
            \vdots\\
            \u_{Y_i}^\top f^{(\bar m)}(\u_{X_i})\\
            \p_{2i-1}
        \end{pmatrix}, \ \ 
        \TF_{\Psi_2^*}(H_n^{(\ell)[1]})_{2i} = \begin{pmatrix}
            \u_{Y_i}\\
            \z^{(1)}\\
            \vdots\\
            \z^{(\bar m)}\\
            \u_{X_i}^\top \z^{(1)}\\
            \vdots\\
            \u_{X_i}^\top \z^{(\bar m)}\\
            \p_{2i}
        \end{pmatrix}.
    \end{align*}
    Let $\phi_B$ be a function defined in \eqref{eq: phi c}.
    Let $\Attn_{\mu_2^*}$ be an attention layer with parameters $\mu_2^* = \{(Q_{2,j,j'}^*, K_{2,j,j'}^*, V_{2,j,j'}^*)\}_{j \in [\bar m], j' \in [4]}$ defined as
    \begin{align*}
        Q_{2,j,1}^* \h_s^{[1]} &= \begin{pmatrix}
            (\h_s^{[1]})_{1:r}/(4B)\\
            -2 (\p_s)_1 - (\p_s)_2\\
            1\\
            1/2\\
            \zero_{D-3-r}
        \end{pmatrix},\ \ 
        Q_{2,j,2}^* \h_s^{[1]} = \begin{pmatrix}
            (\h_s^{[1]})_{1:r}/(4B)\\
            -2 (\p_s)_1 - (\p_s)_2\\
            1\\
            1/4\\
            \zero_{D-3-r}
        \end{pmatrix},\\
        Q_{2,j,3}^* \h_s^{[1]} &= \begin{pmatrix}
            (\h_s^{[1]})_{1:r}/(4B)\\
            -2 (\p_s)_1 - (\p_s)_2\\
            1\\
            -1/4\\
            \zero_{D-3-r}
        \end{pmatrix},\ \ 
        Q_{2,j,4}^* \h_s^{[1]} = \begin{pmatrix}
            (\h_s^{[1]})_{1:r}/(4B)\\
            -2 (\p_s)_1 - (\p_s)_2\\
            1\\
            -1/2\\
            \zero_{D-3-r}
        \end{pmatrix},\\
        K_{2,j,1}^* \h_s^{[1]} &= \begin{pmatrix}
            (\h_s^{[1]})_{(jr+1):((1+j)r)}\\
            1\\
            2 (\p_s)_1 + 1 - (\p_s)_2\\
            1\\
            \zero_{D-3-r}
        \end{pmatrix},\ \ 
        K_{2,j,2}^* \h_s^{[1]} = \begin{pmatrix}
            (\h_s^{[1]})_{(jr+1):((1+j)r)}\\
            1\\
            2 (\p_s)_1 + 1 - (\p_s)_2\\
            1\\
            \zero_{D-3-r}
        \end{pmatrix},\\
        K_{2,j,3}^* \h_s^{[1]} &= \begin{pmatrix}
            (\h_s^{[1]})_{(jr+1):((1+j)r)}\\
            1\\
            2 (\p_s)_1 + 1 - (\p_s)_2\\
            1\\
            \zero_{D-3-r}
        \end{pmatrix},\ \ 
        K_{2,j,4}^* \h_s^{[1]} = \begin{pmatrix}
            (\h_s^{[1]})_{(jr+1):((1+j)r)}\\
            1\\
            2 (\p_s)_1 + 1 - (\p_s)_2\\
            1\\
            \zero_{D-3-r}
        \end{pmatrix},\\
        V_{2,j,1}^* \h_s^{[1]} &= -4 B \e_{r(1+\bar m)+j},
        V_{2,j,2}^* \h_s^{[1]} = 8 B \e_{r(1+\bar m)+j},\\
        V_{2,j,3}^* \h_s^{[1]} &= -8 B \e_{r(1+\bar m)+j},
        V_{2,j,4}^* \h_s^{[1]} = 4 B \e_{r(1+\bar m)+j}.
    \end{align*}
    
    Then, the $s$-th column of the output of $\Attn_{\mu_2^*}$ is
    \begin{small}
    \begin{align*}
        &\Attn_{\mu_2^*}(H_n^{(\ell)[1]})_s\\
        &\quad= \h_s^{[1]} + \sum_{j \in [\bar m]} \sum_{j' \in [4]} \sum_{s' \in [N]} \sigma(\langle Q_{2,j,j'}^* \h_s^{[1]}, K_{2,j,j'}^* \h_{s'}^{[1]} \rangle) V_{2,j,j'}^* \h_{s'}^{[1]}\\
        &\quad= \h_s^{[1]} + \sum_{j \in [\bar m]} \sum_{s' \in [N]} \phi_{B}((\h_s^{[1]})_{1:r}^\top (\h_{s'}^{[1]})_{(jr+1):((1+j)r)}; 2(\p_s)_1+(\p_s)_2, 2(\p_{s'})_1+1-(\p_{s'})_2) \e_{r(1+\bar m)+j}.
    \end{align*}
    \end{small}\noindent
    Since $2(\p_s)_1+(\p_s)_2 = 2(\p_{s'})_1+1-(\p_{s'})_2$ if and only if $(\p_s)_1 = (\p_{s'})_1$ and $(\p_s)_2 = 1 - (\p_{s'})_2$, 
    \begin{align*}
        \Attn_{\mu_2^*}(H_n^{(\ell)[1]})_s = \h_s^{[1]} + \sum_{s' \in [N]} \begin{pmatrix}
            \zero_{r(1+\bar m)}\\
            (\h_s^{[1]})_{1:r}^\top (\h_{s'}^{[1]})_{(jr+1):((1+j)r)}\\
            \vdots\\
            (\h_s^{[1]})_{1:r}^\top (\h_{s'}^{[1]})_{(jr+1):((1+j)r)}\\
            \zero_4
        \end{pmatrix} \Id\{(\p_s)_1 = (\p_{s'})_1, (\p_s)_2 = 1-(\p_{s'})_2\},
    \end{align*}
    where we used $|(\h_s^{[1]})_{1:r}^\top (\h_{s'}^{[1]})_{(jr+1):((1+j)r)}| \leq B$.
    The desired transformer is obtained by choosing parameter $\Psi_2^* = (\psi_2^*)$, where $\psi_2^* = (\mu_2^*, \nu_\id)$.
    Define $H_n^{(\ell)[2]} = [\h_1^{[2]}; \dots; \h_n^{[2]}] = \TF_{\Psi_2^*}(H_n^{(\ell)[1]})$.

\paragraph{Step 3.}
In Step 3, we aim to construct transformer layers $\TF_{\Psi_3^*}$ satisfying
    \begin{align*}
        \TF_{\Psi_3^*}(H_n^{(\ell)[2]})_{2i-1} &= \begin{pmatrix}
            (\h_{2i-1}^{[2]})_{1:(r(\bar m + 1))}\\
            \sum_{i' \in [n]} \u_{Y_{i'}}^\top f^{(1)}(\u_{X_{i'}})\\
            \vdots\\
            \sum_{i' \in [n]} \u_{Y_{i'}}^\top f^{(\bar m)}(\u_{X_{i'}})\\
            (\h_{2i-1}^{[2]})_{(D-3):D}
        \end{pmatrix}, \ \ 
        \TF_{\Psi_3^*}(H_n^{(\ell)[2]})_{2i} &= \begin{pmatrix}
            (\h_{2i}^{[2]})_{1:(r(\bar m + 1))}\\
            \sum_{i' \in [n]} \u_{X_{i'}}^\top \z^{(1)}\\
            \vdots\\\
            \sum_{i' \in [n]} \u_{X_{i'}}^\top \z^{(\bar m)}\\
            (\h_{2i}^{[2]})_{(D-3):D}
        \end{pmatrix}.
    \end{align*}
    Note that the summations are over all $i' \in [n]$.
    We choose the parameter $\psi_3^* = (\mu_\id, \nu_3^*)$ with $\nu_3^* = (W_{3,1}^*, W_{3,2}^*)$ defined as
    \begin{align*}
        W_{3,2}^* \sigma(W_{3,1}^* \h_s) &= W_{3,2}^* \begin{pmatrix}
            \sigma((\p_s)_2)\\
            \sigma((\p_s)_3)\\
            \sigma(2(\p_s)_1 - (\p_s)_3)
        \end{pmatrix}\\
        &= \begin{pmatrix}
            \zero_{D-2}\\
            - \sigma((\p_s)_3) + \sigma((\p_s)_2) + \sigma(2(\p_s)_1 - (\p_s)_3)\\
            0
        \end{pmatrix}.
    \end{align*}
    Then, it follows that
    \begin{align*}
        \TF_{\psi_3^*}(H_n^{(\ell)[2]})_{2i-1} &= \begin{pmatrix}
            (\h_{2i-1}^{[2]})_{1:(D-4)}\\
            i\\
            0\\
            \sigma(2(i-n))\\
            1
        \end{pmatrix} =: \begin{pmatrix}
            (\h_{2i-1}^{[2]})_{1:(D-4)}\\
            \tilde \p_{2i-1}
        \end{pmatrix},\\
        \TF_{\psi_3^*}(H_n^{(\ell)[2]})_{2i} &= \begin{pmatrix}
            (\h_{2i}^{[2]})_{1:(D-4)}\\
            i\\
            1\\
            1 + \sigma(2(i-n))\\
            1
        \end{pmatrix} =: \begin{pmatrix}
            (\h_{2i}^{[2]})_{1:(D-4)}\\
            \tilde \p_{2i}
        \end{pmatrix}.
    \end{align*}
    Let $H_n^{(\ell)[2.5]} = [\h_1^{[2.5]}, \dots, \h_N^{[2.5]}] := \TF_{\psi_3^*}(H_n^{(\ell)[2]})$.
    By a similar argument as in Step 1 and Step 2,
    we can choose $\psi_4^* = (\mu_4^*, \nu_\id)$ with $\mu_4^* = \{(Q_{4,j}^*, K_{4,j}^*, V_{4,j}^*)\}_{j \in [8]}$ such that
    \begin{align*}
        \TF_{\psi_4^*}(H_n^{(\ell)[2.5]})_s &= \h_s^{[2.5]} + \sum_{s' \in [N]} \phi_1(1; (\tilde \p_s)_2, (\tilde \p_{s'})_3) \begin{pmatrix}
            \zero_{r(\bar m+1)}\\
            (\h_{s'}^{[2.5]})_{(r(\bar m+1)+1):(D-4)}\\
            \zero_4
        \end{pmatrix}\\
        &\quad- \sum_{s' \in [N]} \phi_1(1; 2(\tilde \p_s)_1+(\tilde \p_s)_2, 2(\tilde \p_{s'})_1+(\tilde \p_{s'})_2) \begin{pmatrix}
            \zero_{r(\bar m+1)}\\
            (\h_{s'}^{[2.5]})_{(r(\bar m+1)+1):(D-4)}\\
            \zero_4
        \end{pmatrix}\\
        &= \begin{pmatrix}
            (\h_s^{[2.5]})_{1:(r(\bar m + 1))}\\
            \zero_{r \bar m}\\
            (\h_s^{[2.5]})_{(D-3):D}
        \end{pmatrix} + \sum_{s' \in [N]} \phi_1(1; (\tilde \p_s)_2, (\tilde \p_{s'})_3) \begin{pmatrix}
            \zero_{r(\bar m+1)}\\
            (\h_{s'}^{[2.5]})_{(r(\bar m+1)+1):(D-4)}\\
            \zero_4
        \end{pmatrix}.
    \end{align*}
    Note that $(\tilde \p_s)_2 = (\p_s)_2 \in \{0, 1\}$, $(\tilde \p_s)_3 = (\tilde \p_s)_2$ for $s \leq 2n$, and $(\tilde \p_s)_3 \geq 2$ for $s \geq 2n+1$. Thus,
    \begin{align*}
        \TF_{\psi_4^*}(H_n^{(\ell)[2.5]})_s
        &= \begin{pmatrix}
            (\h_s^{[2.5]})_{1:(r(\bar m + 1))}\\
            \zero_{r \bar m}\\
            (\h_s^{[2.5]})_{(D-3):D}
        \end{pmatrix} + \sum_{s' \in [2n]} \phi_1(1; (\p_s)_2, (\p_{s'})_3) \begin{pmatrix}
            \zero_{r(\bar m+1)}\\
            (\h_{s'}^{[2.5]})_{(r(\bar m+1)+1):(D-4)}\\
            \zero_4
        \end{pmatrix}.
    \end{align*}
    This gives
    \begin{align*}
        \TF_{\psi_4^*}(H_n^{(\ell)[2.5]})_{2i-1} &= \begin{pmatrix}
            (\h_{2i-1}^{[2.5]})_{1:(r(\bar m + 1))}\\
            \sum_{i' \in [n]} \u_{Y_{i'}}^\top f^{(1)}(\u_{X_{i'}})\\
            \vdots\\
            \sum_{i' \in [n]} \u_{Y_{i'}}^\top f^{(\bar m)}(\u_{X_{i'}})\\
            (\h_{2i-1}^{[2.5]})_{(D-3):D}
        \end{pmatrix}, \ \ 
        \TF_{\psi_4^*}(H_n^{(\ell)[2.5]})_{2i} &= \begin{pmatrix}
            (\h_{2i}^{[2.5]})_{1:(r(\bar m + 1))}\\
            \sum_{i' \in [n]} \u_{X_{i'}}^\top \z^{(1)}\\
            \vdots\\\
            \sum_{i' \in [n]} \u_{X_{i'}}^\top \z^{(\bar m)}\\
            (\h_{2i}^{[2.5]})_{(D-3):D}
        \end{pmatrix}.
    \end{align*}
    $\TF_{\Psi_3^*}$ with $\Psi_3^* = (\psi_3^*, \psi_4^*)$ is the desired transformer.
    Let $H_n^{(\ell)[3]} = \TF_{\Psi_3^*}(H_n^{(\ell)[2]})$.

\paragraph{Step 4.}
In Step 4, we aim to construct transformer layers satisfying
    \begin{align*}
        \TF_{\Psi_4^*}(H_n^{(\ell)[3]})_{2i-1} &= \begin{pmatrix}
            \hat f(\u_{X_i})\\
            \zero_{(r+1)\bar m}\\
            \p_{2i-1}
        \end{pmatrix}, \ \ 
        \TF_{\Psi_4^*}(H_n^{(\ell)[3]})_{2i} = \begin{pmatrix}
            \hat \z\\
            \zero_{(r+1)\bar m}\\
            \p_{2i}
        \end{pmatrix}.
    \end{align*}
    where $\hat f$ and $\hat \z$ satisfy
    \begin{small}
    \begin{align*}
        \hat f(\u_{X_i}) &\in \conv\qty{ f^{(m')}(\u_{X_i}): \frac{1}{n} \sum_{i \in [n]} \langle \u_{Y_i}, f^{(m')}(\u_{X_i}) \rangle \geq \max_{m'' \in [\bar m]} \frac{1}{n} \sum_{i \in [n]} \langle \u_{Y_i}, f^{(m'')}(\u_{X_i}) \rangle - \frac{\omega}{n} },\\
        \hat \z &\in \conv\qty{ \z^{(t')}: \frac{1}{n} \sum_{i \in [n]} \langle \u_{X_i}, \z^{(t')} \rangle \geq \max_{t'' \in [\bar t]} \frac{1}{n} \sum_{i \in [n]} \langle \u_{X_i}, \z^{(t'')} \rangle - \frac{\omega}{n} },
    \end{align*}
    \end{small}\noindent
    where $\conv(S)$ denotes the convex hull of a set $S$.
    Toward this end, we directly apply Lemma~\ref{lem: minimum by transformer}. Let $\Psi_4^* = \Psi^\mn$.

    The desired transformer is $\TF_{\Psi_4^*} \circ \TF_{\Psi_3^*} \circ \TF_{\Psi_2^*} \circ \TF_{\Psi_1^*}$. This completes the proof of Proposition~\ref{prop: transformer as good data generator}. 
\end{proof}

Now we provide the following key theorem, which separately states the generative and discriminative capacities of the constructed transformers. It is a detailed version of Theorem~\ref{thm: generative and discriminative} in the paper. 

\begin{theorem}[Detailed version of Theorem~\ref{thm: generative and discriminative}]
\label{thm: transformer as good data generator 2}
Suppose Assumptions~\ref{asm: known subjects} and \ref{asm: separability} hold. Fix $dr \in \N$, $(\z^{(t)})_{t \in \mT}$ and $(f^{(m)})_{m \in \mM} \subset \mF(L_0, r_0)$. Take $n$ and $d$ sufficiently large, such that
\begin{align*}
C_0 \qty( \eta \sqrt{r} \frac{\log^2 d}{\sqrt{n}} + \eta r \frac{\log d}{\sqrt{d}} ) &< \delta_\mM \wedge \delta_\mT
\end{align*}
holds for some constant $C_0 > 0$. Choose $\omega = \log^2 d/\sqrt{r}$. Let $\Psi^* = \Psi^*((\z^{(t)})_{t \in \mT}, (f^{(m)})_{m \in \mM},$  $d,r,\omega, L_0, r_0)$ be the parameter of transformer layers as in Proposition~\ref{prop: transformer as good data generator}. Then, for any $m \in \mM$, $t \in \mT$, $\eta \geq (1/\sqrt{r}) \log d$, and any $P_{X_1,Y_1|M=m,T=t,U,\eta}$ described in Section~\ref{sec: dgp}, with the choice $\tau = \eta$,
\begin{small}
\begin{align}
        \E[\KL(P_{X_1;T=t,U,\eta} \| Q_{\tilde X_s;\Psi^*,\tau,\mD_n}) | T=t,\eta] &= \exp(-\Omega\qty(\frac{\delta_\mT^2 n}{\eta^2 r})),\label{eq: KL 1}\\
        \E\qty[\KL(P_{Y_1|X_1;M=m,U,\eta} \| Q_{\tilde Y_s|X_1;\Psi^*,\tau,\mD_n})|M=m,T=t,\eta] &= \exp(-\Omega\qty(\frac{\delta_\mM^2 n}{\eta^2 r}))\label{eq: KL 2}
\end{align}
\end{small}
hold for all $s \in \N$. Henceforth,
\begin{small}
\begin{align}
        \E[\KL(P_{X_1,Y_1;T=t,M=m,U,\eta} \| Q_{\tilde X_s,\tilde Y_s;\Psi^*,\tau,\mD_n}) | M=m,T=t,\eta] = \exp(-\Omega\qty(\frac{(\delta_\mT^2 \wedge \delta_\mM^2) n}{\eta^2 r}))\label{eq: KL 3}
\end{align}
\end{small}
holds for all $s \in \N$.
\end{theorem}

\begin{proof}[\textbf{Proof of Theorem~\ref{thm: transformer as good data generator 2}}]
We divide the proof into two steps, where Step 1 provides the bound for \eqref{eq: KL 1}, Step 2 provides the bound for \eqref{eq: KL 2}, and Step 3 combines the previous steps to give the bound for \eqref{eq: KL 3}. Fix any distributions $P_{X_1;T=t,U,\eta}$ and $P_{Y_1|X_1;M=m,U,\eta}$ introduced in Section~\ref{sec: dgp}. Let $\bar m = |\mM|$, $\bar t = |\mT|$ and write $\mM = [\bar m]$, $\mT = [\bar t]$. Recall that $\bar m \geq \bar t$.

\paragraph{Step 1.}
In this step, we prove \eqref{eq: KL 1}. For any $T = t$ and $U$, from Proposition~\ref{prop: transformer as good data generator}, there exists some $\mT' \subset \mT = [\bar t]$ with $\mT' \ni t$, such that
    \begin{align}
        \hat \z = \sum_{t' \in \mT'} \alpha^{(t')} \z^{(t')},\label{eq: hat z z}
    \end{align}
    where $\alpha^{(t')} \geq 0$ for any $t' \in \mT'$, $\sum_{t' \in \mT'} \alpha^{(t')} = 1$, and
    \begin{align}
        \frac{1}{n} \sum_{i \in [n]} \langle \u_{X_i}, \z^{(t')} \rangle \geq \max_{t'' \in [\bar t]} \frac{1}{n} \sum_{i \in [n]} \langle \u_{X_i}, \z^{(t'')} \rangle - \frac{\omega}{n}\label{eq: u z}
    \end{align}
    holds for all $t' \in \mT'$.
    From Lemma~\ref{lem: subject identification} with $a> 0$ chosen later, there exists some constant $C > 0$, such that
    \begin{align*}
        &\max_{t \in [\bar t]} \P\qty(\max_{t' \in \mT} \abs{\z^{(t') \top} \qty(\frac{1}{n} \sum_{i \in [n]} \u_{X_i}) - \frac{1}{\eta r} \z^{(t') \top} \z^{(t)}} \leq C \epsilon \middle| T=t,U,\eta)\\
        &= 1 - \exp(-\Omega\qty(\frac{ra^2 n}{\log^2 d} + \log^2 d)) =: 1 - R_{n,d,r},
    \end{align*}
    where $\epsilon := a + \log^2 d/\sqrt{n r} + \log d/\sqrt{d r}$. Combined with \eqref{eq: u z}, we have
    \begin{align*}
        \frac{1}{\eta r} \langle \z^{(t)}, \z^{(t')} \rangle + C \epsilon &\geq \max_{t'' \in [\bar t]} \frac{1}{\eta r} \langle \z^{(t)}, \z^{(t'')} \rangle - C \epsilon - \frac{\omega}{n}\\
        &= \frac{1}{\eta r} - C \epsilon - \frac{\omega}{n}
    \end{align*}
    with high probability for all $t' \in \mT'$. Thus we have
    \begin{align}
        \min_{t' \in \mT'} \z^{(t') \top} \z^{(t)} \geq 1 - 2 C \eta r \epsilon - \frac{\eta r \omega}{n}\label{eq: hat z z inner product}
    \end{align}
    with high probability. Set $a = \delta_\mT/(4C\eta r)$. Then, $R_{n,d,r} = \exp(-\Omega(\delta_\mT^2 n / (\eta^2 r \log^2 d) + \log^2 d))$.
    By taking $C_0$ in Theorem~\ref{thm: transformer as good data generator 2} sufficiently large, $n$ and $d$ satisfy
    \begin{align}
        (2C + 1) \frac{\eta \sqrt{r} \log^2 d}{\sqrt{n}} + 2C \frac{\eta \sqrt{r} \log d}{\sqrt{d}} < \frac{\delta_\mT}{2}.
    \end{align}
    Henceforth,
    \begin{align}
        1 - 2 C \eta r \epsilon - \frac{\eta r \omega}{n} &= 1 - \frac{\delta_\mT}{2} - 2C \frac{\eta \sqrt{r} \log^2 d}{\sqrt{n}} - 2C \frac{\eta \sqrt{r} \log d}{\sqrt{d}} - \frac{\eta \sqrt{r} \log^2 d}{n}\\
        &= 1 - \frac{\delta_\mT}{2} - (2C + 1) \frac{\eta \sqrt{r} \log^2 d}{\sqrt{n}} - 2C \frac{\eta \sqrt{r} \log d}{\sqrt{d}} > 1 - \delta_\mT,\label{eq: recovery regime}
    \end{align}
    Assumption~\ref{asm: separability} and \eqref{eq: hat z z inner product} imply that $\mT'$ is a singleton set, i.e., $\mT' = \{t\}$ and $\hat \z = \z^{(t)}$ with probability $1 - R_{n,d,r}$.
    We next bound $\E[\KL(P_{X_1;T=t,U,\eta} \| Q_{\tilde X_s;\Psi^*,\tau,\mD_n})]$. We specifically choose $\tau \gets \eta$. Then,
    \begin{small}
    \begin{align*}
        &\KL(P_{X_1;T=t,U,\eta} \| Q_{\tilde X_s;\Psi^*,\tau,\mD_n})\\
        &\quad= \sum_{x \in \mX} P_{X_1=x|T=t,U,\eta} \qty(\log \frac{\exp(\eta^{-1} \z^{(t) \top} \u_x)}{\sum_{x' \in \mX} \exp(\eta^{-1} \z^{(t) \top} \u_{x'})} - \log \frac{\exp(\eta^{-1} \hat \z^\top \u_x)}{\sum_{x' \in \mX} \exp(\eta^{-1} \hat \z^\top \u_{x'})}).
    \end{align*}
    \end{small}\noindent
    Let $T_x(\z) := \log \frac{\exp(\eta^{-1} \z^\top \u_x)}{\sum_{x' \in \mX} \exp(\eta^{-1} \z^\top \u_{x'})}$.
    Observe that
    \begin{align}
        \|\nabla T_x(\z)\| &= \norm{\eta^{-1} \u_x - \eta^{-1} \sum_{x' \in \mX} \u_{x'} \frac{\exp(\eta^{-1} \z^\top \u_{x'})}{\sum_{x'' \in \mX} \exp(\eta^{-1} \z^\top \u_{x''})}} \leq 2\eta^{-1} \max_{x \in \mX} \|\u_x\|.
    \end{align}
    Thus $T_x(\z)$ is $2\eta^{-1} \max_{x \in \mX} \|\u_x\|$-Lipschitz in $\z$. By the mean value theorem,
    \begin{align}
        |T_x(\z^{(t)}) - T_x(\hat\z)| \leq 2\eta^{-1} \max_{x \in \mX} \|\u_x\| \|\z^{(t)} - \hat \z\|.
    \end{align}
    Since this holds for all $x \in \mX$, we obtain
    \begin{align}
        \KL(P_{X_1;T=t,U,\eta} \| Q_{\tilde X_s;\Psi^*,\tau,\mD_n}) \leq 2\eta^{-1} \max_{x \in \mX} \|\u_x\| \|\z^{(t)} - \hat \z\|.
    \end{align}
    By Cauchy-Schwarz inequality, we have
    \begin{align}
        \E[\KL(P_{X_1;T=t,U,\eta} \| Q_{\tilde X_s;\Psi^*,\tau,\mD_n}) | T=t, \eta] \lesssim \eta^{-1} \qty(\E[\max_{x \in \mX} \|\u_x\|^2] \E[\|\z^{(t)} - \hat \z\|^2 | T=t, \eta])^{1/2}.\label{eq: term 1 and 2}
    \end{align}

Now we derive $\E[\max_{x \in \mX} \|\u_x\|^2]$. Since $r \|\u_x\|^2 \sim \chi_2(r)$, we employ the concentration inequality for chi-squared distribution (see Lemma 1 in \citet{laurent2000adaptive}):
    \begin{align*}
        \P(r \|\u_x\|^2 \geq r + 2\sqrt{r\epsilon} + 2\epsilon) \leq \exp(-\epsilon).
    \end{align*}
    By a union bound argument, and by $2\sqrt{r \epsilon} \leq r + \epsilon$, we have
    \begin{align*}
        \P\qty(\frac{r \max_{x \in \mX} \|\u_x\|^2 - 2r}{3} \geq \epsilon) &\leq \P(r \max_{x \in \mX} \|\u_x\|^2 \geq r + 2\sqrt{r \epsilon} + 2 \epsilon) \leq 1 \wedge d \exp(-\epsilon).
    \end{align*}
    Using $\E[|X|] = \int_0^\infty \P(|X| > \epsilon) \dd{\epsilon}$, we obtain 
    \begin{align*}
        \E\qty[\frac{r \max_{x \in \mX} \|\u_x\|^2 - 2r}{3}] \leq \int_0^{\log d} 1 \dd{\epsilon} + \int_{\log d}^\infty d \exp(-\epsilon) \dd{\epsilon} = O(\log d).
    \end{align*}
    Henceforth, 
    \begin{align}
        \E[\max_{x \in \mX} \|\u_x\|^2] \lesssim 1 + \frac{\log d}{r} \lesssim \frac{\log d}{r}.\label{eq: term 1}
    \end{align}
    For the term $\E[\|\z^{(t)} - \hat \z\|^2 | T=t, \eta]$, using $\hat \z = z^{(t)}$ with probability $1 - R_{n,d,r}$ and $\|\z^{(t)}\| \vee \|\hat \z\| \leq 1$, we have 
    \begin{align}
        \E[\|\z^{(t)} - \hat \z\|^2 | T=t, \eta] \lesssim \exp(-\Omega\qty(\frac{\delta_\mT^2}{\eta^2 r \log^2 d} n + \log^2 d)).\label{eq: term 2}
    \end{align}
    Combining \eqref{eq: term 1} and \eqref{eq: term 2} into \eqref{eq: term 1 and 2}, we obtain
    \begin{align*}
        \E[\KL(P_{X_1;T=t,U,\eta} \| Q_{\tilde X_s;\Psi^*,\tau,\mD_n}) | T=t, \eta] &\lesssim \frac{\sqrt{\log d}}{\eta \sqrt{r}} \exp(-\Omega\qty(\frac{\delta_\mT^2}{\eta^2 r \log^2 d} n + \log^2 d))\\
        &\leq \exp(-\Omega\qty(\frac{\delta_\mT^2}{\eta^2 r \log^2 d} n + \log^2 d))\\
        &\leq \exp(-\Omega\qty(\frac{\delta_\mT^2}{\eta^2 r} n)),
    \end{align*}
where we use $\eta \geq r^{-1/2} \log d$ in the second inequality, and $|a| + |b| \geq 2\sqrt{|ab|}$ in the third inequality. This yields \eqref{eq: KL 1}.

\paragraph{Step 2.}
In this step, we prove \eqref{eq: KL 2}. For any fixed $M = m$ and $U$, from Proposition~\ref{prop: transformer as good data generator}, there exists some $\mM' \subset \mM = [\bar m]$ with $\mM' \ni m$, such that
    \begin{align}
        \hat f(\u_x) = \sum_{m' \in \mM'} \beta^{(m')} f^{(m')}(\u_x),\label{eq: hat f f}
    \end{align}
    where $\beta^{(m')} \geq 0$ for any $m' \in \mM'$, $\sum_{m' \in \mM'} \beta^{(m')} = 1$, and
    \begin{align*}
        \frac{1}{n} \sum_{i \in [n]} \langle \u_{Y_i}, f^{(m')}(\u_x) \rangle &\geq \max_{m'' \in [\bar m]} \frac{1}{n} \sum_{i \in [n]} \langle \u_{Y_i}, f^{(m'')}(\u_x) \rangle - \frac{\omega}{n}\\
        &\geq \frac{1}{n} \sum_{i \in [n]} \langle \u_{Y_i}, f^{(m)}(\u_x) \rangle - \frac{\omega}{n}
    \end{align*}
    holds for all $m' \in \mM'$.
    From Lemma~\ref{lem: good event U} and \eqref{eq: D tilde mean}, 
    Note that $\|\u_x\| \lesssim (1/\sqrt{r}) \log d$ holds with high probability from Lemma~\ref{lem: good event U}.
    Thus $\max_{m \in [\bar m], x \in \mX} \|f^{(m)}(\u_x)\| \leq 1$ holds with high probability by assumption.
    \begin{align}
        \sup_{\z, \z' \in \mathbb{B}_r(1)} \abs{\sum_{x \in \mX} \z'^\top \u_x \exp(\eta^{-1} \z^\top \u_x) - \frac{1}{\eta r} \z'^\top \z \exp(\frac{\|\z\|^2}{2\eta^2 r})} &\lesssim \sqrt{\frac{d}{r}} \log d,\label{eq: good event 1}\\
        \sup_{\z \in \mathbb{B}_r(1), x \in \mX} \abs{\z^\top \u_x} \lesssim \frac{\log d}{\sqrt{r}}, \ \ \sup_{\z \in \mathbb{B}_r(1)} \abs{\frac{d \exp(\|\z\|^2/(2\eta^2 r) )}{\sum_{x' \in \mX} \exp(\eta^{-1} \z^\top \u_{x'})} - 1} &\lesssim \frac{1}{\sqrt{d}} \log d\label{eq: good event 0}
    \end{align}
    hold with high probability with respect to $U$. Hereafter we focus on the event for $U$, where \eqref{eq: good event 1} and \eqref{eq: good event 0} hold, and $\max_{m \in [\bar m], x \in \mX} \|f^{(m)}(\u_x)\| \leq 1$ holds. 
    From Lemma~\ref{lem: function identification} with $a > 0$ chosen later, we have
    \begin{align*}
        &\max_{m \in \mM, t \in \mT} \P\biggl(\max_{m' \in \mM} \abs{\frac{1}{n} \sum_{i \in [n]} \langle f^{(m')}(\u_{X_i}), \u_{Y_i} \rangle - \E\qty[\langle f^{(m')}(\u_{X_1}), \u_{Y_1} \rangle | M=m, T=t, U, \eta]}\\
        &\quad\quad\quad\quad> a + \frac{\log^2 d}{\sqrt{n r}} \bigg| M=m, T=t, U, \eta\biggr)\\
        &\quad= \exp(-\Omega\qty(\frac{ra^2 n}{\log^2 d} + \log^2 d)).
    \end{align*}
    Thus, we have
    \begin{align}
        &\E\qty[\langle f^{(m')}(\u_{X_1}), \u_{Y_1} \rangle | M=m, T=t, U, \eta]\nonumber\\
        &\quad\geq \max_{m'' \in [\bar m]} \E\qty[\langle f^{(m'')}(\u_{X_1}), \u_{Y_1} \rangle | M=m, T=t, U, \eta] - \frac{\omega}{n} - \frac{2 \log^2 d}{\sqrt{nr}} - 2a\nonumber\\
        &\quad\geq \E\qty[\langle f^{(m)}(\u_{X_1}), \u_{Y_1} \rangle | M=m, T=t, U, \eta] - \frac{\omega}{n} - \frac{2 \log^2 d}{\sqrt{nr}} - 2a\label{eq: inner prod f raw}
    \end{align}
    with probability $1 - \exp(-\Omega(ra^2n/\log^2 d + \log^2 d))$.
    We next compute $\E[\langle f^{(m')}(\u_{X_1}), \u_{Y_1} \rangle | M=m, T=t, U, \eta]$ for any $m' \in \mM$. Note that
    \begin{align*}
        &\E[\langle f^{(m')}(\u_{X_1}), \u_{Y_1} \rangle | X_1=x, T=t, M=m, U, \eta]\\
        &\quad= \sum_{y \in \mX} \langle f^{(m')}(\u_x), \u_y \rangle \frac{\exp(\eta^{-1} \langle \u_y, f^{(m)}(\u_x) \rangle)}{\sum_{y' \in \mX} \exp(\eta^{-1} \langle \u_{y'}, f^{(m)}(\u_x) \rangle)}\\
        &\quad= \frac{\sum_{y \in \mX} \langle f^{(m')}(\u_x), \u_y \rangle \exp(\eta^{-1} \langle \u_y, f^{(m)}(\u_x) \rangle)}{\sum_{y' \in \mX} \exp(\eta^{-1} \langle \u_{y'}, f^{(m)}(\u_x) \rangle)}.
    \end{align*}
    We bound the denominator and numerator separately. For the denominator, \eqref{eq: good event 0} gives
    \begin{align*}
        &\sum_{y' \in \mX} \exp(\eta^{-1} \langle \u_{y'}, f^{(m)}(\u_x) \rangle)\\
        &\quad= \exp(\eta^{-1} \langle \u_x, f^{(m)}(\u_x) \rangle) + \sum_{y'\neq x} \exp(\eta^{-1} \langle \u_{y'}, f^{(m)}(\u_x) \rangle)\\
        &\quad= \exp(O(\eta^{-1}\|\u_x\|)) + (d-1) \exp(\frac{\|f^{(m)}(\u_x)\|^2}{2\eta^2 r}) \qty(1 + O\qty(\frac{1}{\sqrt{d}} \log d)).
    \end{align*}
    with high probability. For the numerator, \eqref{eq: good event 1} gives
    \begin{align*}
        &\sum_{y \in \mX} \langle f^{(m')}(\u_x), \u_y \rangle \exp(\eta^{-1} \langle \u_y, f^{(m)}(\u_x) \rangle)\\
        &\quad= \langle f^{(m')}(\u_x), \u_x \rangle \exp(\eta^{-1} \langle \u_x, f^{(m)}(\u_x) \rangle) + \sum_{y\neq x} \langle f^{(m')}(\u_x), \u_y \rangle \exp(\eta^{-1} \langle \u_y, f^{(m)}(\u_x) \rangle)\\
        &\quad= O(\|\u_x\|) \exp(O(\eta^{-1}\|\u_x\|)) + (d-1) \frac{1}{\eta r} \langle f^{(m')}(\u_x), f^{(m)}(\u_x) \rangle \exp(\frac{\|f^{(m)}(\u_x)\|^2}{2\eta^2 r})\\
        &\quad\quad + O\qty(\sqrt{\frac{d}{r}} \log d)
    \end{align*}
    with high probability. Since $\|\u_x\| \lesssim (1/\sqrt{r}) \log d$ holds with high probability from Lemma~\ref{lem: good event U}, $r = o(\log d)$, and $\eta \geq (1/\sqrt{r}) \log d$, the following holds with high probability:
    \begin{align*}
        &\E[\langle f^{(m')}(\u_{X_1}), \u_{Y_1} \rangle | X_1=x, T=t, M=m, U, \eta]\\
        &\quad= \frac{(d-1) \frac{1}{\eta r} \langle f^{(m')}(\u_x), f^{(m)}(\u_x) \rangle \exp(\frac{\|f^{(m)}(\u_x)\|^2}{2\eta^2 r}) + O\qty(\sqrt{\frac{d}{r}} \log d)}{(d-1) \exp(\frac{\|f^{(m)}(\u_x)\|^2}{2\eta^2 r}) + O(\sqrt{d} \log d)}\\
        &\quad= \frac{1}{\eta r} \langle f^{(m')}(\u_x), f^{(m)}(\u_x) \rangle + O\qty(\frac{\log d}{\sqrt{d r}}) + O\qty(\frac{\log d}{\sqrt{d}})\\
        &\quad= \frac{1}{\eta r} \langle f^{(m')}(\u_x), f^{(m)}(\u_x) \rangle + O\qty(\frac{\log d}{\sqrt{d}}).
    \end{align*}
    Note that the constant hidden in the big-$O$ notation is independent of $x$. Hence
    \begin{align*}
        &\E[\langle f^{(m')}(\u_{X_1}), \u_{Y_1} \rangle | T=t, M=m, U, \eta]\\
        &\quad= \E\qty[\frac{1}{\eta r} \langle f^{(m')}(\u_x), f^{(m)}(\u_x) \rangle | T=t, U, \eta] + O\qty(\frac{\log d}{\sqrt{d}}).
    \end{align*}
    Combined with \eqref{eq: inner prod f raw}, we have
    \begin{align}
        \frac{1}{\eta r} \E\qty[\|f^{(m)}(\u_{X_1})\|^2 - \langle f^{(m')}(\u_{X_1}), f^{(m)}(\u_{X_1}) \rangle | T=t, U, \eta] &\leq \frac{\omega}{n} + \frac{2 \log^2 d}{\sqrt{nr}} + 2a + C' \frac{\log d}{\sqrt{d}}\label{eq: inner prod bound on f}
    \end{align}
    with probability $1 - \exp(-\Omega(ra^2n/\log^2 d + \log^2 d))$ for all $m' \in \mM'$, where $C' > 0$ is some constant.
    Set $a = \delta_\mM/(4\eta r)$.
    By taking $C_0$ in Theorem~\ref{thm: transformer as good data generator 2} sufficiently large, $n$ and $d$ satisfy
    \begin{align*}
        3 \eta r \frac{\log^2 d}{\sqrt{nr}} + C' \eta r \frac{\log d}{\sqrt{d}} &< \frac{\delta_\mT}{2}.
    \end{align*}
Henceforth, 
    \begin{align}
        \frac{\omega}{n} + \frac{2 \log^2 d}{\sqrt{nr}} + 2a + C' \frac{\log d}{\sqrt{d}} &= \frac{\log^2 d}{n \sqrt{r}} + \frac{2 \log^2 d}{\sqrt{nr}} + \frac{\delta_\mM}{2\eta r} + C' \frac{\log d}{\sqrt{d}}\nonumber\\
        &\leq \frac{3 \log^2 d}{\sqrt{nr}} + \frac{\delta_\mM}{2\eta r} + C' \frac{\log d}{\sqrt{d}} 
        < \frac{1}{\eta r} \delta_\mM.\label{eq: recovery regime 2}
    \end{align}
    Assumption~\ref{asm: separability} combined with \eqref{eq: inner prod bound on f} implies that $\mM' = \{m\}$ and thus $\hat f = f^{(m)}$ with probability $1 - \exp(-\Omega(ra^2n/\log^2 d + \log^2 d))$.\\

We next bound $\E[\KL(P_{Y_1|X_1;M=m,U,\eta} \| Q_{\tilde Y_s|X_1;\Psi^*,\tau,\mD_n})| M=m,T=t,U,\eta]$ with the choice $\tau \gets \eta$. 
    \begin{align*}
        &\E[\KL(P_{Y_1|X_1;M=m,U,\eta} \| Q_{\tilde X_s;\Psi^*,\tau,\mD_n}) | M=m, T=t, U, \eta]\\
        &\quad= \sum_{x \in \mX} p_x^{(t)} \sum_{y \in \mX} P_{Y_1|X_1=x, M=m,U,\eta}(y) \log \frac{P_{Y_1|X_1=x,M=m,U,\eta}(y)}{Q_{\tilde X_s;\Psi^*,\tau,\mD_n}(y)},
    \end{align*}
    where $p_x^{(t)} := \P(X_1 = x | T=t, U, \eta)$. By a similar argument as in Step 1,
    \begin{align*}
        &\E[\KL(P_{Y_1|X_1;M=m,U,\eta} \| Q_{\tilde Y_s|X_1;\Psi^*,\mD_n}) | M=m, T=t, U, \eta]\\
        &\quad= \sum_{x \in \mX} p_x^{(t)} \sum_{y \in \mX} P_{Y_1|X_1=x, M=m,U,\eta}(y)\\
        &\quad\quad\times \qty(\log\frac{\exp(\eta^{-1} \langle f^{(m)}(\u_x), \u_y \rangle)}{\sum_{y' \in \mX} \exp(\eta^{-1} \langle f^{(m)}(\u_x), \u_y \rangle)} - \log\frac{\exp(\eta^{-1} \langle \hat f(\u_x), \u_y \rangle)}{\sum_{y' \in \mX} \exp(\eta^{-1} \langle \hat f(\u_x), \u_y \rangle)})\\
        &\quad\leq 2 \eta^{-1} \sum_{x \in \mX} p_x^{(t)} \max_{y \in \mX} \|\u_x\| \|f^{(m)}(\u_x) - \hat f(\u_x)\|\\
        &\quad= 2 \eta^{-1} \max_{y \in \mX} \|\u_x\| \E[\|f^{(m)}(\u_{X_1}) - \hat f(\u_{X_1})\| | T=t, U, \eta].
    \end{align*}
    Hence, taking expectation with respect to $U$ combined with Cauchy-Schwarz inequality, we have
    \begin{align*}
        &\E[\KL(P_{Y_1|X_1;M=m,U,\eta} \| Q_{\tilde Y_s|X_1;\Psi^*,\mD_n}) | M=m, T=t, \eta]\\
        &\quad\lesssim \eta^{-1} \qty(\E[\max_{y \in \mX} \|\u_x\|^2] \E[\|f^{(m)}(\u_{X_1}) - \hat f(\u_{X_1})\|^2 | M=m, T=t, \eta])^{1/2}.
    \end{align*}
    Using \eqref{eq: term 1} and the fact that $\hat f = f^{(m)}$ with probability $1 - \exp(-\Omega(\delta_\mM^2n/(\eta^2 r) + \log^2 d))$ combined with $\|f^{(m)}\| \vee \|\hat f\| \leq 1$, we obtain
    \begin{align*}
        &\E[\KL(P_{Y_1|X_1;M=m,U,\eta} \| Q_{\tilde Y_s|X_1;\Psi^*,\mD_n}) | M=m, T=t, \eta]\\
        &\quad\lesssim \eta^{-1} \qty(\frac{\log d}{r} \exp(-\Omega\qty(\frac{\delta_\mM^2 n}{\eta^2 r \log^2 d} + \log^2 d)))^{1/2}\\
        &\quad= \exp(-\Omega\qty(\frac{\delta_\mM^2 n}{\eta^2 r \log^2 d} + \log^2 d)) = \exp(-\Omega\qty(\frac{\delta_\mM^2 n}{\eta^2 r})),
    \end{align*}
    where we used $\eta \geq r^{-1/2} \log d$ and $|a| + |b| \geq 2\sqrt{|ab|}$. This gives \eqref{eq: KL 2}.

\paragraph{Step 3.}
In this step, we prove \eqref{eq: KL 3}. Taking expectation on both sides of 
    \begin{align*}
        &\E[\KL(P_{X_1,Y_1;T=t,M=m,U,\eta} \| Q_{\tilde X_s,\tilde Y_s;\Psi^*,\tau,\mD_n}) | M=m,T=t, \eta]\\
        &\quad= \E[\KL(P_{X_1;T=t,U,\eta} \| Q_{\tilde X_s;\Psi^*,\tau,\mD_n}) | T=t, \eta]\\
        &\quad\quad+ \E[\KL(P_{Y_1|X_1;M=m,U,\eta} \| Q_{\tilde Y_s|X_1;\Psi^*,\mD_n}) | M=m, T=t, \eta],
    \end{align*}
and combining the results from Steps 1 and 2 gives \eqref{eq: KL 3}. 

This completes the proof of Theorem~\ref{thm: transformer as good data generator 2}
\end{proof}

We implement the selection layer that performs argmin operation. The next lemma is an application of Proposition M.2 of \citet{bai2023transformers}.

\begin{lemma}[Minimum by Transformer]\label{lem: minimum by transformer}
Fix $\omega > 0$, $r \in \N$ and $\bar m \geq 2$. Then, there exist transformer layers $\TF_{\Psi^\mn}$ with $\Psi^\mn = \Psi^\mn(\omega, \bar m, r)$ satisfying that
\begin{enumerate}[(i)]
        \item it consists of $5$ transformer layers with the width of FNN $O(\bar m^2)$, and the number of heads of attention layers $O(\bar m)$;
        \item for any $N \in \N$ and $H = [\h_1, \dots, \h_N] \in \R^{r(\bar m + 1) + \bar m + 4}$ of the form
        \begin{align*}
            \h_s = \begin{pmatrix}
                \x_0\\
                \x_1\\
                \vdots\\
                \x_{\bar m}\\
                v_1\\
                \vdots\\
                v_{\bar m}\\
                \p_s
            \end{pmatrix},
        \end{align*}
        where $\x_m \in \R^r$ for $m \in \{0\} \cup [\bar m]$, $v_m \in [-1, 1]$ for $m \in [\bar m]$, and $\p_s$ is defined in Section~\ref{sec: dgp}, $\TF_{\Psi^\mn}$ outputs
        \begin{align*}
             \TF_{\Psi^\mn}(H)_s = (\x^\top, \zero^\top)^\top,
        \end{align*}
        where $\x \in \conv\{\x_m : m \in [\bar m], v_m \leq \min_{m' \in [\bar m]} v_{m'} + \omega\}$, with $\conv(S)$ being the convex hull of $S$.
\end{enumerate}
\end{lemma}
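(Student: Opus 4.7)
}
The plan is to invoke (a mild adaptation of) Proposition M.2 of \citet{bai2023transformers}, which already provides a transformer implementation of a soft $\arg\min$ selector with a tolerance parameter. Since each token $\h_s$ already carries the full list $(\x_0,\x_1,\dots,\x_{\bar m},v_1,\dots,v_{\bar m})$ together with the positional encoding $\p_s$, no information needs to be exchanged across the sequence: the entire computation can be arranged to act \emph{position-wise}, so all attention blocks can be replaced by identity attentions $\Attn_{\mu_\id}$ (or used purely to duplicate coordinates into fresh work slots), and the substantive work is done by the feedforward layers $\FFN_{\nu}$. I would therefore construct $\TF_{\Psi^\mn}$ as a composition of $5$ blocks, each of the form $\FFN_{\nu_k}\circ\Attn_{\mu_k}$, keeping the token dimension fixed at $r(\bar m+1)+\bar m+4$ and packing all intermediate scalars into the unused coordinates that the outer construction in Proposition~\ref{prop: transformer as good data generator} has reserved.

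The computation decomposes into three algorithmic steps that I would implement with ReLU feedforward layers. (i) \emph{Compute the minimum.} Using the identity $\min(a,b)=\tfrac12(a+b)-\tfrac12(\sigma(a-b)+\sigma(b-a))$ pairwise and aggregating via an FFN of width $O(\bar m)$, produce $v^\star:=\min_{m\in[\bar m]} v_m$ in $\lceil\log_2\bar m\rceil=O(\log \bar m)$ depth, which fits within two FFN layers of width $O(\bar m^2)$. (ii) \emph{Threshold into a mask.} Apply a piecewise-linear ``triangle'' function, built analogously to $\phi_B$ in \eqref{eq: phi c}, to form $w_m:=\sigma(\omega-(v_m-v^\star))\wedge 1$; by construction $w_m>0$ iff $v_m\le v^\star+\omega$, and $w_m\ge c>0$ for all $m$ achieving the minimum. (iii) \emph{Normalize and aggregate.} Compute $S:=\sum_{m\in[\bar m]} w_m\in[1,\bar m]$ and form $\x:=S^{-1}\sum_{m\in[\bar m]} w_m\x_m$; by construction $\x$ lies in $\conv\{\x_m:v_m\le v^\star+\omega\}$, as required.

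The main obstacle is step (iii), the normalization by $S$: a pure ReLU FFN cannot exactly implement division. I would handle this by the same device used in \citet{bai2023transformers}: because $S$ takes values in the bounded interval $[1,\bar m]$, the map $S\mapsto S^{-1}$ admits a uniform piecewise-linear approximation on $[1,\bar m]$ with $O(\bar m^2)$ breakpoints, realized by one additional FFN of width $O(\bar m^2)$; multiplication of $S^{-1}$ with each coordinate of $\sum_m w_m\x_m$ can then be carried out by a single bilinear-type ReLU gadget of the same width (again via the identity $xy=\tfrac14((x+y)^2-(x-y)^2)$ with squares approximated on the bounded ranges). Summing depths: the min, the mask, the denominator, the reciprocal, and the final multiplication give $5$ feedforward stages, matching the stated complexity. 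Finally, the last layer writes $(\x^\top,\zero^\top)^\top$ to the first $r$ coordinates and zeroes out the auxiliary work coordinates, producing $\TF_{\Psi^\mn}(H)_s=(\x^\top,\zero^\top)^\top$ as claimed.
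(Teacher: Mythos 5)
Your plan has the right high-level structure --- compute a near-minimality test, convert it to weights, and aggregate --- but you miss the key device that makes the output an \emph{exact} convex combination, and your workarounds introduce approximation errors that the lemma cannot tolerate.

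The lemma requires $\x$ to lie \emph{exactly} in $\conv\{\x_m : v_m \le \min_{m'} v_{m'}+\omega\}$; this exactness is used downstream (e.g.\ via \eqref{eq: hat z z} and \eqref{eq: hat f f} in the proof of Theorem~\ref{thm: transformer as good data generator 2}) to derive the KL bounds. Your step (iii) approximates $S^{-1}$ on $[1,\bar m]$ by a piecewise-linear function of bounded width, and then approximates the scalar--vector product via $xy=\tfrac14((x+y)^2-(x-y)^2)$ with piecewise-linear squares. Both steps incur a uniform error that cannot be driven to zero with fixed depth and $O(\bar m^2)$ width, so the output is only approximately a convex combination, not exactly one. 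That breaks the statement as written.

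The paper avoids both the division and the bilinear gadget. After forming $v^{(1)}_m := \sum_{m'\neq m}\sigma(v_m-v_{m'})$ (which equals $0$ precisely at the minimizers, so the minimum itself is never computed) and $v^{(2)}_m := \sigma(1 - v^{(1)}_m/\omega)\in[0,1]$, it builds convex weights by a telescoping ReLU construction:
\[
v^{(3)}_m := \sigma\Bigl(1 - \sum_{m'<m} v^{(2)}_{m'}\Bigr) - \sigma\Bigl(1 - \sum_{m'\leq m} v^{(2)}_{m'}\Bigr).
\]
These satisfy $v^{(3)}_m\ge 0$, $\sum_m v^{(3)}_m = 1$ exactly (the sum telescopes, and $\sum_{m'}v^{(2)}_{m'}\ge 1$ since at least one $v^{(1)}_{m^*}=0$), and $v^{(3)}_m>0$ only when $v^{(1)}_m<\omega$, which implies $v_m\le \min_{m'} v_{m'}+\omega$. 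No reciprocal is ever formed. The aggregation $\sum_m v^{(3)}_m\x_m$ is then performed \emph{exactly} by an attention layer whose position-diagonal score equals $v^{(3)}_m$ via the $\phi_1$ gadget of \eqref{eq: phi c}; the $\mathrm{score}\times V\h$ mechanism natively implements scalar--vector multiplication without a bilinear approximation. Finally, your claim that an $O(\log\bar m)$-depth pairwise-min tree ``fits within two FFN layers of width $O(\bar m^2)$'' is unjustified, but since the construction never needs $\min_m v_m$ in the first place, this issue is moot once you adopt the $v^{(1)}_m$ formulation.
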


\begin{proof}[\textbf{Proof of Lemma~\ref{lem: minimum by transformer}}]
Write $D = r + r\bar m + \bar m + 4$. We divide the proof into 4 steps.

\paragraph{Step 1.}
Let $\FFN_{\nu^\mn_{1}}$ be a feedforward neural network with $\nu^\mn_{1} = (W^\mn_{1,1}, W^\mn_{1,2})$, such that
    \begin{align*}
        W^\mn_{1,2} \sigma(W^\mn_{1,1} \h)
        &= W^\mn_{1,2} \begin{pmatrix}
            \sigma((\h)_{r(1+\bar m)+1} - (\h)_{r(1+\bar m)+2})\\
            \sigma((\h)_{r(1+\bar m)+1} - (\h)_{r(1+\bar m)+3})\\
            \vdots\\
            \sigma((\h)_{r(1+\bar m)+1} - (\h)_{r(1+\bar m)+\bar m})\\
            \vdots\\
            \sigma((\h)_{r(1+\bar m)+\bar m} - (\h)_{r(1+\bar m)+1})\\
            \vdots\\
            \sigma((\h)_{r(1+\bar m)+\bar m} - (\h)_{r(1+\bar m)+\bar m-1})\\
            \sigma((\h)_{(r(1+\bar m)+1):(r(1 +\bar m)+\bar m)})\\
            \sigma(-(\h)_{(r(1+\bar m)+1):(r(1 +\bar m)+\bar m)})
        \end{pmatrix}\\
        &= \begin{pmatrix}
            \zero_{r(1+\bar m)}\\
            - (\h)_{r(1+\bar m)+1} + \sum_{m' : m'\neq 1} \sigma((\h)_{r(1+\bar m)+1} - (\h)_{r(1+\bar m)+m'})\\
            \vdots\\
            - (\h)_{r(1+\bar m)+\bar m} + \sum_{m' : m'\neq \bar m} \sigma((\h)_{r(1+\bar m)+\bar m} - (\h)_{r(1+\bar m)+m'})\\
            \zero_4
        \end{pmatrix}.
    \end{align*}
    Then,
    \begin{align*}
        \FFN_{\nu^\mn_{1}}(\h_s) &= \begin{pmatrix}
            \x_0\\
            \x_1\\
            \vdots\\
            \x_{\bar m}\\
            v^{(1)}_1\\
            \vdots\\
            v^{(1)}_{\bar m}\\
            \p_s
        \end{pmatrix},
    \end{align*}
    where $v^{(1)}_m = \sum_{m': m' \neq m} \sigma(v_m - v_{m'})$. Note that $v^{(1)}_m \leq \omega$ implies $v_m \leq \min_{m': m' \neq m} v_{m'} + \omega$.
    Choose $\psi^\mn_{1} := (\mu_\id, \nu_1^\mn)$.

\paragraph{Step 2.} 
Define $\FFN_{\nu^\mn_{2}}$ as a feedforward neural network with $\nu^\mn_{2} = (W^\mn_{2,1}, W^\mn_{2,2})$, such that
\begin{small}
\begin{align*}
        W^\mn_{2,2} \sigma(W^\mn_{2,1} \h)
        &= W^\mn_{2,2} 
        \begin{pmatrix}
            \sigma((\h)_{r(1+\bar m)+1})\\
            \vdots\\
            \sigma((\h)_{r(1+\bar m)+\bar m})\\
            \sigma(-(\h)_{r(1+\bar m)+1})\\
            \vdots\\
            \sigma(-(\h)_{r(1+\bar m)+\bar m})\\
            \sigma((\h)_D - (\h)_{r(1+\bar m)+1} / \omega)\\
            \vdots\\
            \sigma((\h)_D - (\h)_{r(1+\bar m)+\bar m} / \omega)
        \end{pmatrix}
        = \begin{pmatrix}
            \zero_{r(1+\bar m)}\\
            -(\h)_{r(1+\bar m)+1} + \sigma((\h)_D - (\h)_{r(1+\bar m)+1} / \omega)\\
            \vdots\\
            -(\h)_{r(1+\bar m)+\bar m} + \sigma((\h)_D - (\h)_{r(1+\bar m)+\bar m} / \omega)\\
            \zero_4
        \end{pmatrix},
    \end{align*}
    Then,
    \begin{align*}
        \FFN_{\nu^\mn_{2}}(\h_s)
        &= \begin{pmatrix}
            \x_0\\
            \x_1\\
            \vdots\\
            \x_{\bar m}\\
            v^{(2)}_1\\
            \vdots\\
            v^{(2)}_{\bar m}\\
            \p_s
        \end{pmatrix},
\end{align*}
\end{small}
where $v^{(2)}_m = \sigma(1 - v^{(1)}_m/\omega)$. Let $\psi^\mn_{2} := (\mu_\id, \nu_2^\mn)$.

\paragraph{Step 3.}
Define $\FFN_{\nu^\mn_{3}}$ as a feedforward neural network with $\nu^\mn_{3} = (W^\mn_{3,1}, W^\mn_{3,2})$, such that
    \begin{align*}
        W^\mn_{3,2} \sigma(W^\mn_{3,1} \h) &= W^\mn_{3,2}
        \begin{pmatrix}
            \sigma((\h)_{r(1+\bar m)+1})\\
            \vdots\\
            \sigma((\h)_{r(1+\bar m)+\bar m})\\
            \sigma(-(\h)_{r(1+\bar m)+1})\\
            \vdots\\
            \sigma(-(\h)_{r(1+\bar m)+\bar m})\\
            \sigma((\h)_D)\\
            \sigma((\h)_D - (\h)_{r(1+\bar m)+1})\\
            \sigma((\h)_D - (\h)_{r(1+\bar m)+1} - (\h)_{r(1+\bar m)+2})\\
            \vdots\\
            \sigma((\h)_D - (\h)_{r(1+\bar m)+1} - (\h)_{r(1+\bar m)+2} - \dots - (\h)_{r(1+\bar m)+\bar m})
        \end{pmatrix}\\
        &= \begin{pmatrix}
            \zero_{r(1 + \bar m)}\\
            -(\h)_{r(1+\bar m)+1} + V_1^{(3)}(\h)\\
            -(\h)_{r(1+\bar m)+2} + V_2^{(3)}(\h)\\
            \vdots\\
            -(\h)_{r(1+\bar m)+\bar m} + V_{\bar m}^{(3)}(\h)\\
            \zero_4\\
        \end{pmatrix},
    \end{align*}
where $V_m^{(3)}(\h) := \sigma((\h)_D - (\h)_{r(1+\bar m)+1} - \dots - (\h)_{r(1+\bar m)+m-1}) - \sigma((\h)_D - (\h)_{r(1+\bar m)+1} - \dots - (\h)_{r(1+\bar m)+m})$. Then,
    \begin{align*}
        \FFN_{\nu^\mn_{3}}(\h_s) &= \h_s + W^\mn_{3,2} \sigma(W^\mn_{3,1} \h_s)
        = \begin{pmatrix}
            \x_0\\
            \x_1\\
            \vdots\\
            \x_{\bar m}\\
            v^{(3)}_1\\
            \vdots\\
            v^{(3)}_{\bar m}\\
            \zero\\
            *
        \end{pmatrix},
    \end{align*}
where $v^{(3)}_m = \sigma(1 - \sum_{m' \in [m-1]} v^{(2)}_{m'}) - \sigma(1 - \sum_{m' \in [m]} v^{(2)}_{m'})$. Note that, since there always exists some $m^* \in [\bar m]$ with $v_{m^*} = \min_{m \in [\bar m]} v_m$, $v^{(1)}_{m^*} = 0$ and $v^{(2)}_{m^*} = 1$. This implies $v^{(2)}_{\bar m} = 0$. Thus, 
    \begin{align*}
        \sum_{m \in [\bar m]} v^{(3)}_m = \sigma(1) - \sigma\qty(1 - \sum_{m' \in [\bar m]} v^{(2)}_{m'}) = 1 - 0 = 1.
    \end{align*}
We also have $v^{(3)}_m \geq 0$ for all $m \in [\bar m]$. Furthermore, $v^{(3)}_m > 0$ implies $v^{(1)}_m < \omega$ and thus $v_m \leq \min_{m' \in [\bar m]} v_{m'} + \omega$. Then, $\sum_m v^{(3)}_m \x_m$ is a convex combination of $\{\x_m : v_m \leq \min_{m' \in [\bar m]} v_{m'} + \omega\}$. Let $\psi_{\mn,3} := (\mu_\id, \nu^\mn_{ 3})$.

\paragraph{Step 4.}
Define $\TF_{\mu^\mn_4}$ with a parameter $\mu^\mn_4 = \{(Q^\mn_{4,j,j'}, K^\mn_{4,j,j'}, V^\mn_{4,j,j'})\}_{j \in [\bar m], j' \in [8]}$, such that
    \begin{align*}
        Q^\mn_{4,j,1} \h &= \begin{pmatrix}
            (\h)_{r(1+\bar m)+j}/4\\
            -2 (\h)_{D-3} - (\h)_{D-2}\\
            (\h)_D\\
            (\h)_D/2\\
            \zero_{D-4}
        \end{pmatrix},
        Q^\mn_{4,j,2} \h = \begin{pmatrix}
            (\h)_{r(1+\bar m)+j}/4\\
            -2 (\h)_{D-3} - (\h)_{D-2}\\
            (\h)_D\\
            (\h)_D/4\\
            \zero_{D-4}
        \end{pmatrix},\\
        Q^\mn_{4,j,3} \h &= \begin{pmatrix}
            (\h)_{r(1+\bar m)+j}/4\\
            -2 (\h)_{D-3} - (\h)_{D-2}\\
            (\h)_D\\
            -(\h)_D/4\\
            \zero_{D-4}
        \end{pmatrix},
        Q^\mn_{4,j,4} \h = \begin{pmatrix}
            (\h)_{r(1+\bar m)+j}/4\\
            -2 (\h)_{D-3} - (\h)_{D-2}\\
            (\h)_D\\
            -(\h)_D/2\\
            \zero_{D-4}
        \end{pmatrix},\\
        Q^\mn_{4,j,5} \h &= \begin{pmatrix}
            (\h)_D/4\\
            -2 (\h)_{D-3} - (\h)_{D-2}\\
            (\h)_D\\
            (\h)_D/2\\
            \zero_{D-4}
        \end{pmatrix},
        Q^\mn_{4,j,6} \h = \begin{pmatrix}
            (\h)_D/4\\
            -2 (\h)_{D-3} - (\h)_{D-2}\\
            (\h)_D\\
            (\h)_D/4\\
            \zero_{D-4}
        \end{pmatrix},\\
        Q^\mn_{4,j,7} \h &= \begin{pmatrix}
            (\h)_D/4\\
            -2 (\h)_{D-3} - (\h)_{D-2}\\
            (\h)_D\\
            -(\h)_D/4\\
            \zero_{D-4}
        \end{pmatrix},
        Q^\mn_{4,j,8} \h = \begin{pmatrix}
            (\h)_D/4\\
            -2 (\h)_{D-3} - (\h)_{D-2}\\
            (\h)_D\\
            -(\h)_D/2\\
            \zero_{D-4}
        \end{pmatrix},\\
        K^\mn_{4,j,1} \h &= \begin{pmatrix}
            (\h)_D\\
            (\h)_D\\
            2 (\h)_{D-3} + (\h)_{D-2}\\
            (\h)_D\\
            \zero_{D-4}
        \end{pmatrix},
        K^\mn_{4,j,2} \h = \begin{pmatrix}
            (\h)_D\\
            (\h)_D\\
            2 (\h)_{D-3} + (\h)_{D-2}\\
            (\h)_D\\
            \zero_{D-4}
        \end{pmatrix},\\
        K^\mn_{4,j,3} \h &= \begin{pmatrix}
            (\h)_D\\
            (\h)_D\\
            2 (\h)_{D-3} + (\h)_{D-2}\\
            (\h)_D\\
            \zero_{D-4}
        \end{pmatrix},
        K^\mn_{4,j,4} \h = \begin{pmatrix}
            (\h)_D\\
            (\h)_D\\
            2 (\h)_{D-3} + (\h)_{D-2}\\
            (\h)_D\\
            \zero_{D-4}
        \end{pmatrix},\\
        K^\mn_{4,j,5} &= K^\mn_{4,j,1}, \ \ K^\mn_{4,j,6} = K^\mn_{4,j,2}, \ \ K^\mn_{4,j,7} = K^\mn_{4,j,3}, \ \ K^\mn_{4,j,8} = K^\mn_{4,j,4},\\
        V^\mn_{4,j,1} \h &= -4 \begin{pmatrix}
            (\h)_{(jr+1):((j+1)r)}\\
            \zero_{D-r}
        \end{pmatrix},\ \ 
        V^\mn_{4,j,2} \h = 8\begin{pmatrix}
            (\h)_{(jr+1):((j+1)r)}\\
            \zero_{D-r}
        \end{pmatrix},\\
        V^\mn_{4,j,3} \h &= -8 \begin{pmatrix}
            (\h)_{(jr+1):((j+1)r)}\\
            \zero_{D-r}
        \end{pmatrix},\ \ 
        V^\mn_{4,j,4} \h = 4 \begin{pmatrix}
            (\h)_{(jr+1):((j+1)r)}\\
            \zero_{D-r}
        \end{pmatrix}\\
        V^\mn_{4,j,5} \h &= 4 \begin{pmatrix}
            (\h)_{1:(r(1+\bar m))}\\
            \zero_{D-r(1+\bar m)}
        \end{pmatrix},\ \ 
        V^\mn_{4,j,6} \h = -8 \begin{pmatrix}
            (\h)_{1:(r(1+\bar m))}\\
            \zero_{D-r(1+\bar m)}
        \end{pmatrix},\\
        V^\mn_{4,j,7} \h &= 8 \begin{pmatrix}
            (\h)_{1:(r(1+\bar m))}\\
            \zero_{D-r(1+\bar m)}
        \end{pmatrix},\ \ 
        V^\mn_{4,j,8} h = -4 \begin{pmatrix}
            (\h)_{1:(r(1+\bar m))}\\
            \zero_{D-r(1+\bar m)}
        \end{pmatrix}.
    \end{align*}
Let $\tilde H = [\tilde \h_1; \dots; \tilde \h_N] := \TF_{(\psi^\mn_1,\psi^\mn_2,\psi^\mn_3)}(H)$. Then,
    \begin{small}
    \begin{align*}
        &\Attn_{\mu^\mn_4}(\tilde H)_s = \tilde \h_s + \sum_{s' \in [N]} \phi_1((\tilde \h_s)_{r(1+\bar m)+j}; 2(\tilde \h_s)_{D-3}+(\tilde \h_s)_{D-2}, 2(\tilde \h_{s'})_{D-3}+(\tilde \h_{s'})_{D-2}) \begin{pmatrix}
            (\tilde \h_{s'})_{(jr+1):((j+1)r)}\\
            \zero_{D-r}
        \end{pmatrix}\\
        &\quad\quad- \sum_{s' \in [N]} \phi_1(1; 2(\tilde \h_s)_{D-3}+(\tilde \h_s)_{D-2}, 2(\tilde \h_{s'})_{D-3}+(\tilde \h_{s'})_{D-2}) \begin{pmatrix}
            (\tilde \h_{s'})_{1:(r(1+\bar m))}\\
            \zero_{D-r}
        \end{pmatrix}\\
        &\quad= \begin{pmatrix}
            \sum_{m \in [\bar m]} v^{(3)}_m \x_m\\
            \zero_{(r+1)\bar m}\\
            \p_s
        \end{pmatrix}.
    \end{align*}
    \end{small}\noindent
Define $\psi^\mn_4 = (\mu^\mn_4, \nu_\id)$. We can easily implement a transformer layer $\TF_{\psi_5^\mn}$, such that $\TF_{\psi_5^\mn}(H)_s = ((\h_s)_{1:D-4}^\top, \zero_4^\top)^\top$. 

The desired transformer is obtained by $\TF_{\Psi^\mn}$ with $\Psi^\mn = (\psi^\mn_1,\psi^\mn_2,\psi^\mn_3,\psi^\mn_4,\psi^\mn_5)$. This completes the proof of Lemma~\ref{lem: minimum by transformer}.
\end{proof}

%%%%%%%%%%%%%%%%%%%%%%%%%%%%%%%%%%%%%%%%%%%%%%%%%%%%%%%%%%%%%%%%%%%%%%%%%%%%%%
\section{Additional Numerical Results}
\label{supp-sec: add-numerical}

%%%%%%%%%%%%%%%%%%%%%%%%%%%%%%%%%%%%%%%%%%%%%%%%%%%%%%%%%%%%%%%%%%%%%%%%%%%%%%
\subsection{More details about general Algorithm~\ref{alg}}
\label{supp-sec: add-algo}

We present a schematic plot of our LLM-powered synthetic oversampling and augmentation algorithm in Figure~\ref{fig: diagram}, and another schematic plot of the transformer workflow in Figure~\ref{fig: diagram2}. 

\begin{figure}[b!]
\centering
\includegraphics[width=0.75\linewidth, height=2.3in]{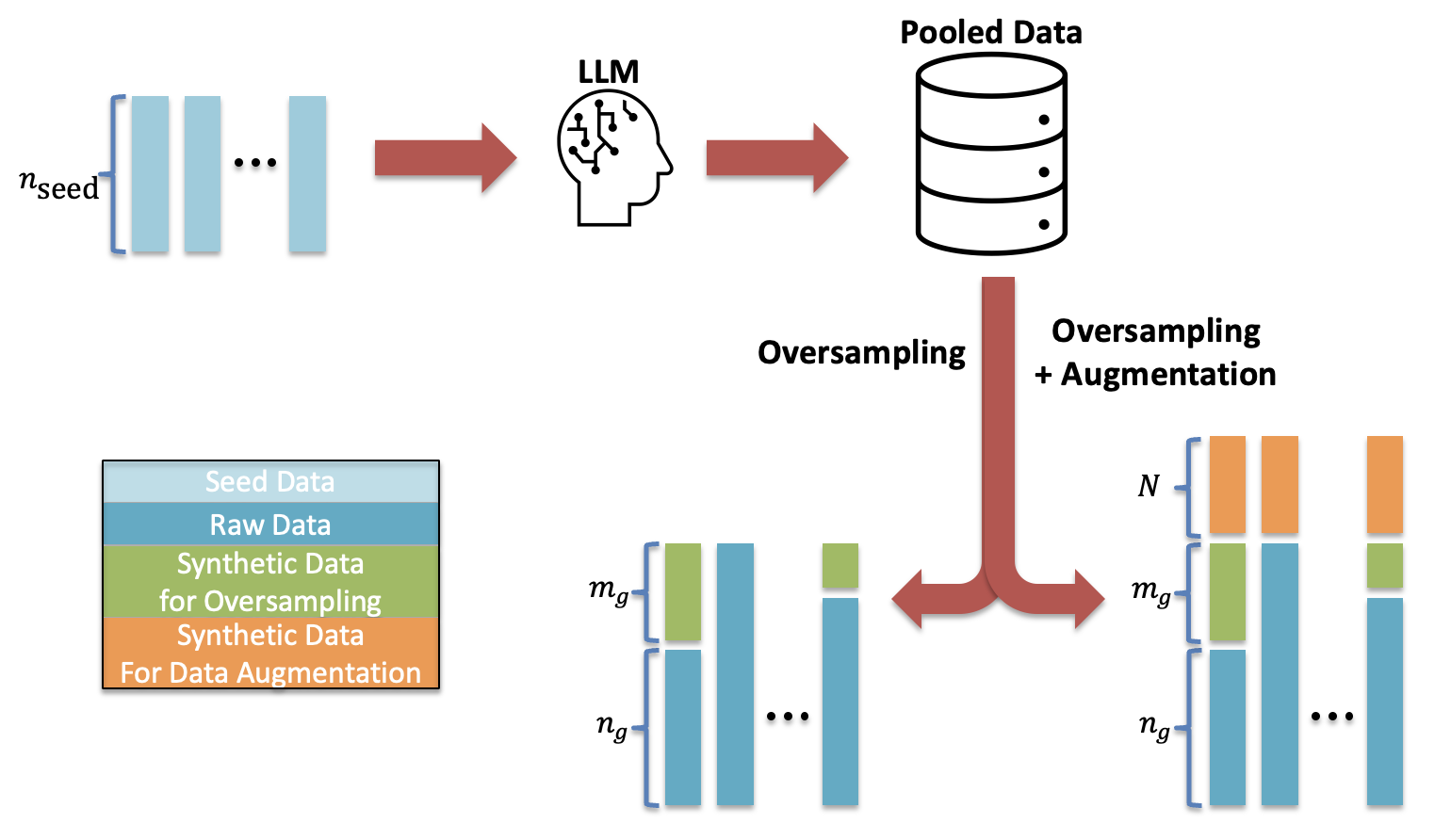}
\caption{Schematic plot of Algorithm~\ref{alg}.}
\label{fig: diagram}
\end{figure}

\begin{figure}[t!]
\centering
\includegraphics[width=0.75\linewidth, height=2.3in]{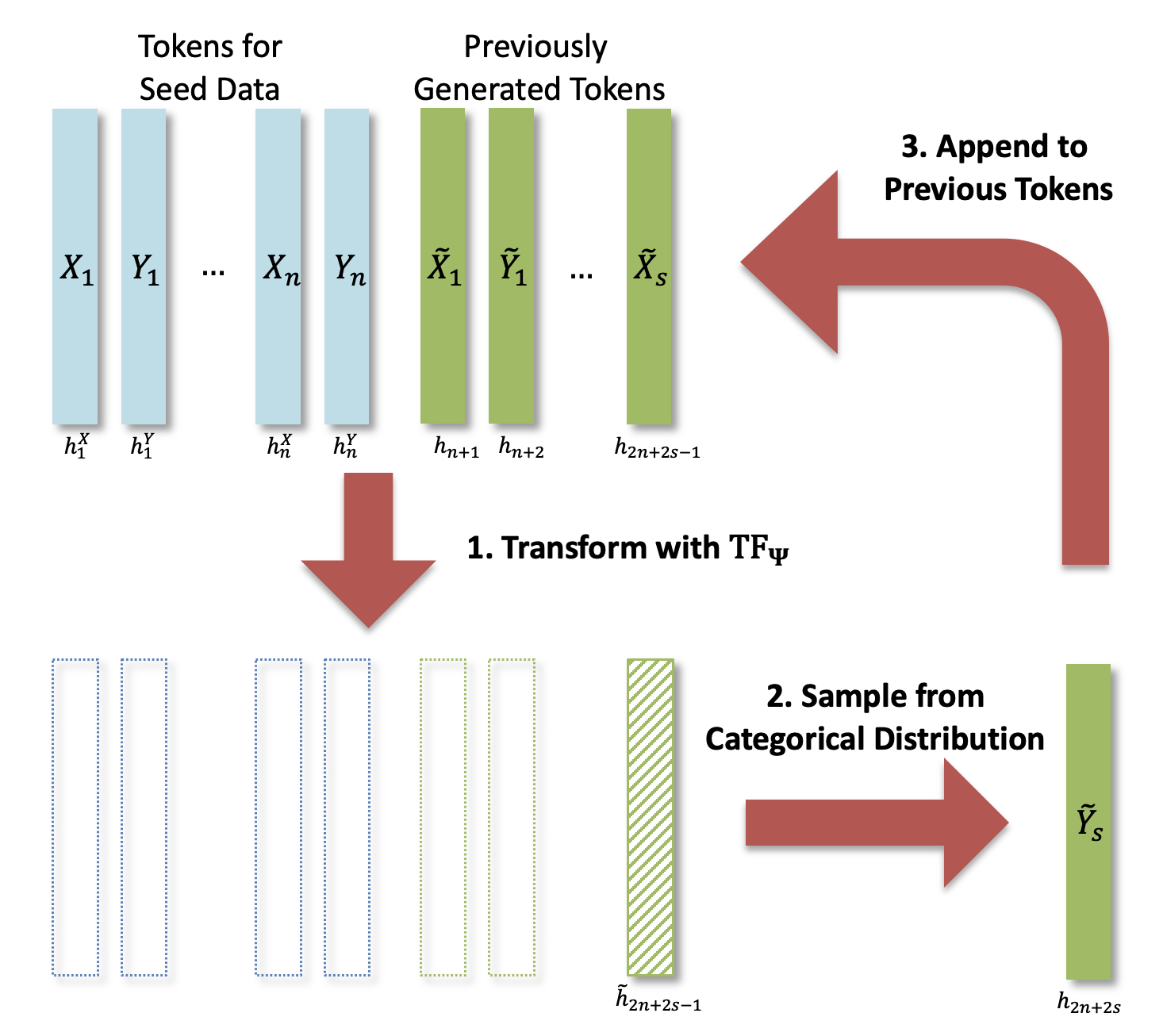}
\caption{Schematic plot of transformer workflow.}
\label{fig: diagram2}
\end{figure}

We clarify that, in Algorithm~\ref{alg}, we first generate a sufficiently large pool of synthetic samples using the LLM, and then randomly select $m_g$ samples for each group $g$ according to their generated labels. We also comment that, incorporating \emph{all} synthetic samples \emph{without} group-level adjustment may produce an imbalanced dataset, as the distribution of the generated labels may not match the desired balance for the downstream classification task. As such, our algorithm introduces a controlled sampling step to enforce equal representation across groups and maintain balance. Furthermore, after synthetic oversampling, we perform an additional synthetic augmentation step, by adding $N$ synthetic samples per group, As demonstrated in Theorem~\ref{thm: scaling law}, along with Figures~\ref{fig:scale-imb-xgb-loglog} and \ref{fig:scale-spurious-xgb-loglog}, this augmentation step further improves the downstream performance, by reducing the variance term at a polynomial rate while incurring an negligible bias.

%%%%%%%%%%%%%%%%%%%%%%%%%%%%%%%%%%%%%%%%%%%%%%%%%%%%%%%%%%%%%%%%%%%%%%%%%%%%%%
\subsection{More results with GPT-2}
\label{supp-sec: add-gpt2}

%%%%%%%%%%%%%%%%%%%%%%%%%%%%%%%%%%%%%%%%%%%%%%%%%%%%%%%%%%%%%%%%%%%%%%%%%%%%%%
\subsubsection{Implementation details}
\label{supp-sec: gpt2-details}

We provide additional implementation details for GPT-2. Specifically, the GPT-2 model (a distilGPT-2 variant) was fine-tuned using the be-great (v0.0.9) framework, which offers a stable implementation of the GReaT method for tabular data synthesis. All experiments were run on a single NVIDIA A10G GPU with 24 GB of memory, with each fine-tuning run completed in about ten minutes. Fine-tuning was performed for 100 epochs with a batch size of 32 using the AdamW optimizer and the default learning-rate schedule. The decoding temperature was set to 0.7. All other optimization hyperparameters, including weight decay, gradient clipping, and warm-up steps, followed the framework's default configuration. The maximum sequence length and tokenizer settings were automatically adapted to the feature dimension of each dataset. To ensure robustness, five distinct random seeds were used, and results were averaged across those replicates. For reproducibility, random seeds were fixed across all components, including model initialization, data shuffling, and sampling procedures. During data generation, each synthetic record was produced independently via top-$p$ nucleus sampling at $p=0.9$, with a separate decoding run and unique random seed for every sample to guarantee inter-sample independence. Continuous variables were discretized using quantile-based binning prior to tokenization, ensuring consistent input structure across training and generation. The bin size was automatically determined by the quantile procedure implemented in the be-great framework.

%%%%%%%%%%%%%%%%%%%%%%%%%%%%%%%%%%%%%%%%%%%%%%%%%%%%%%%%%%%%%%%%%%%%%%%%%%%%%%
\subsubsection{Seed data}
\label{supp-sec: seeddata}

In our framework, the seed data is used to prompt and fine-tune a large language model, and to generate synthetic samples. The raw data is used to train the downstream classifier, along with the generated synthetic data. The seed data serve as representative examples that initialize the in-context learning process of the transformer-based generator. They define the conditional prompt from which synthetic samples are generated, thereby anchoring the learned distribution to the true data distribution. In our numerical experiments, both the seed data and the raw data are sampled from the training data, i.e., the 70\% subset of a given dataset. 

\begin{figure}[t!]
\centering
% ===== Row 1: CRAFT =====
\begin{subfigure}{0.24\textwidth}
  \centering
  \includegraphics[width=\linewidth]{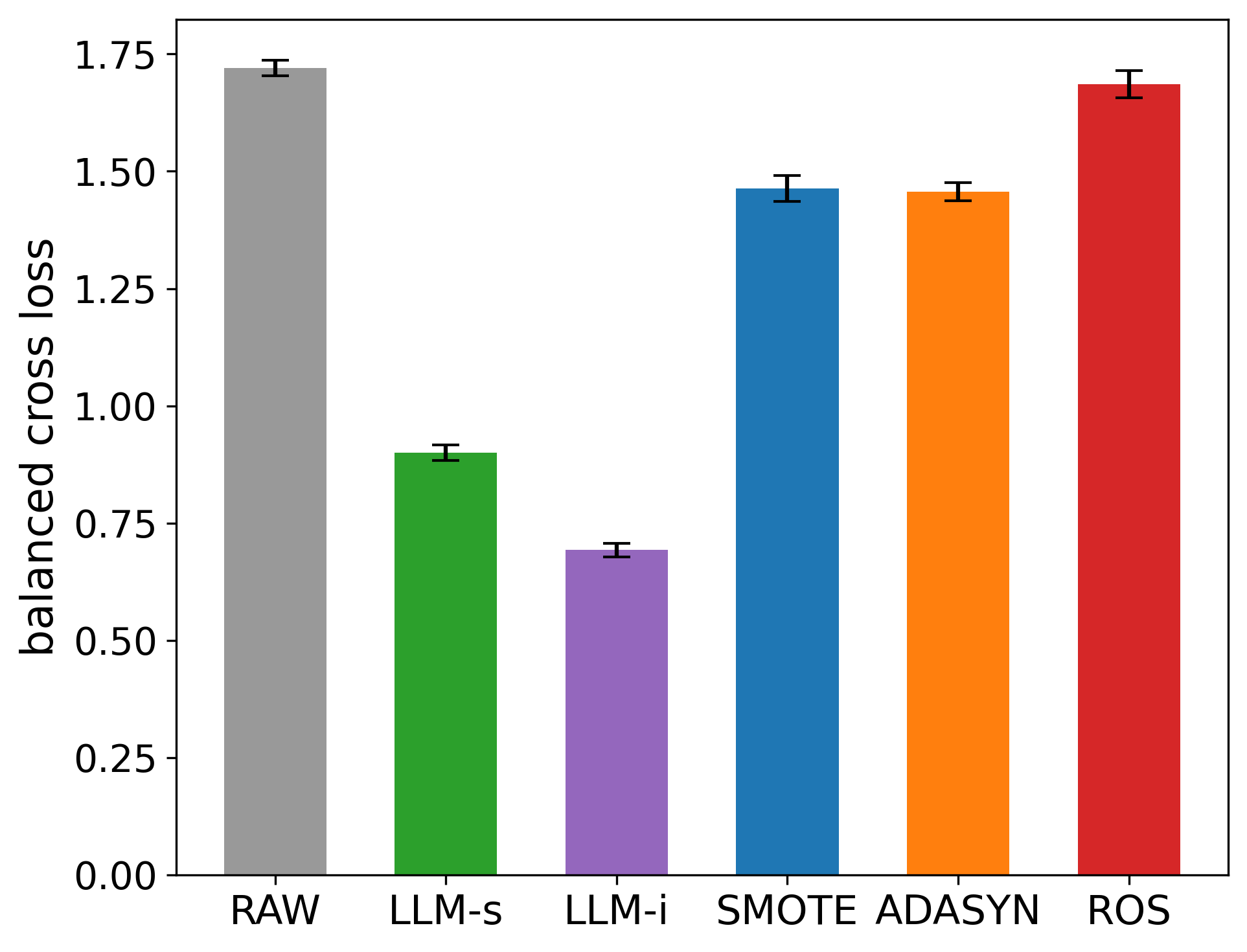}
  \caption{CRAFT (imb.class, bal.loss)}
\end{subfigure}
\begin{subfigure}{0.24\textwidth}
  \centering
  \includegraphics[width=\linewidth]{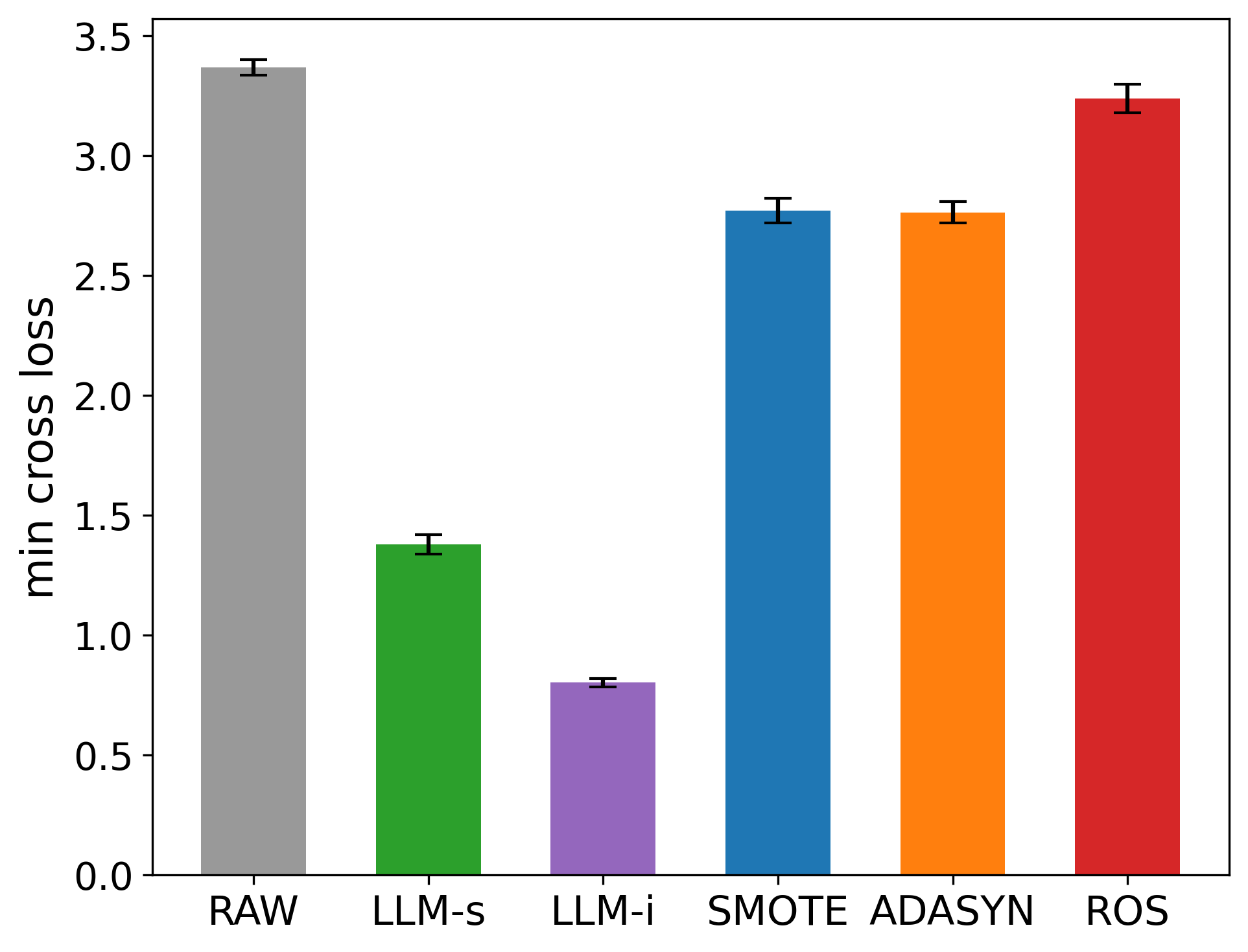}
  \caption{CRAFT (imb.class, min.loss)}
\end{subfigure}
\begin{subfigure}{0.24\textwidth}
  \centering
  \includegraphics[width=\linewidth]{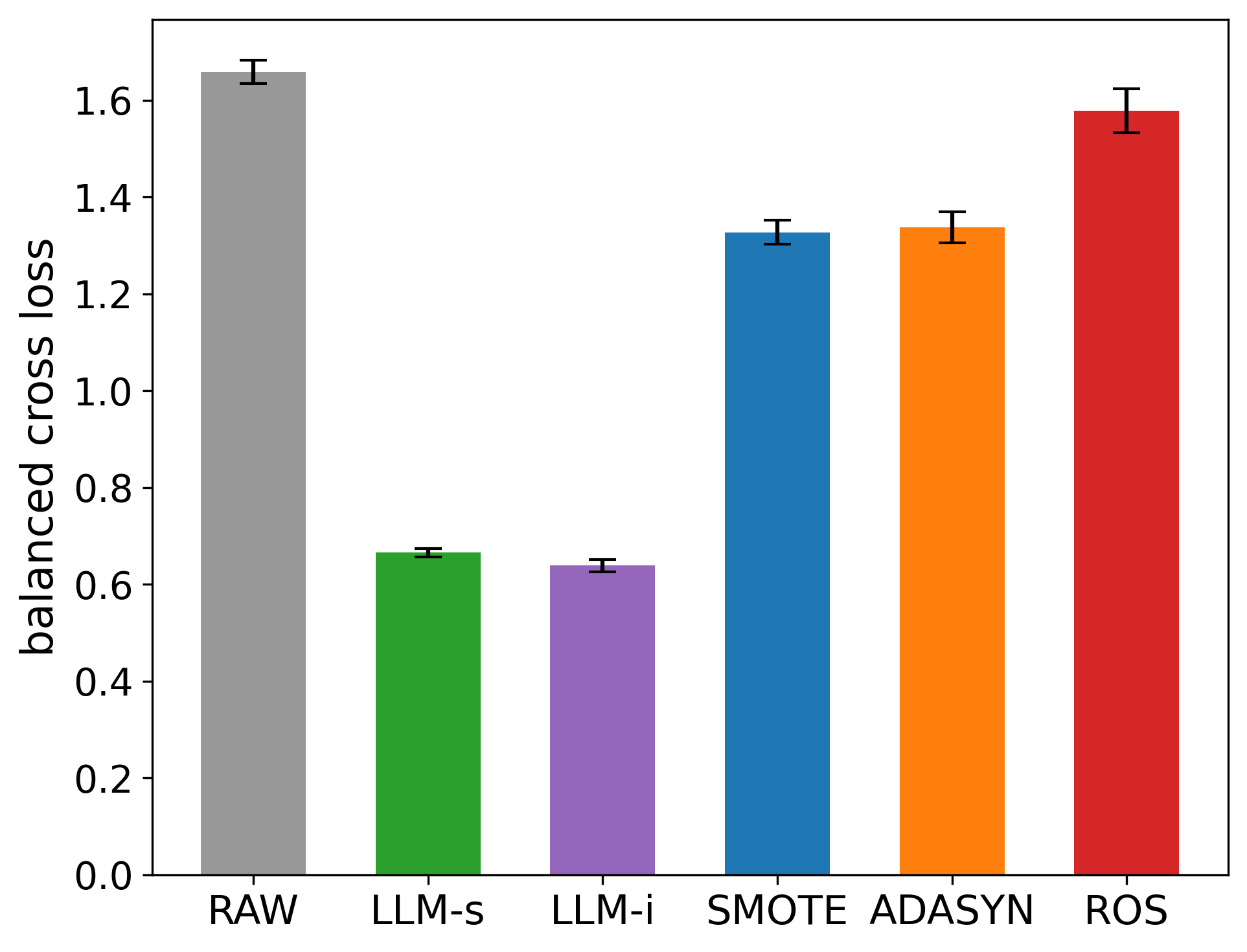}
  \caption{CRAFT (spur.cor, bal.loss.)}
\end{subfigure}
\begin{subfigure}{0.24\textwidth}
  \centering
  \includegraphics[width=\linewidth]{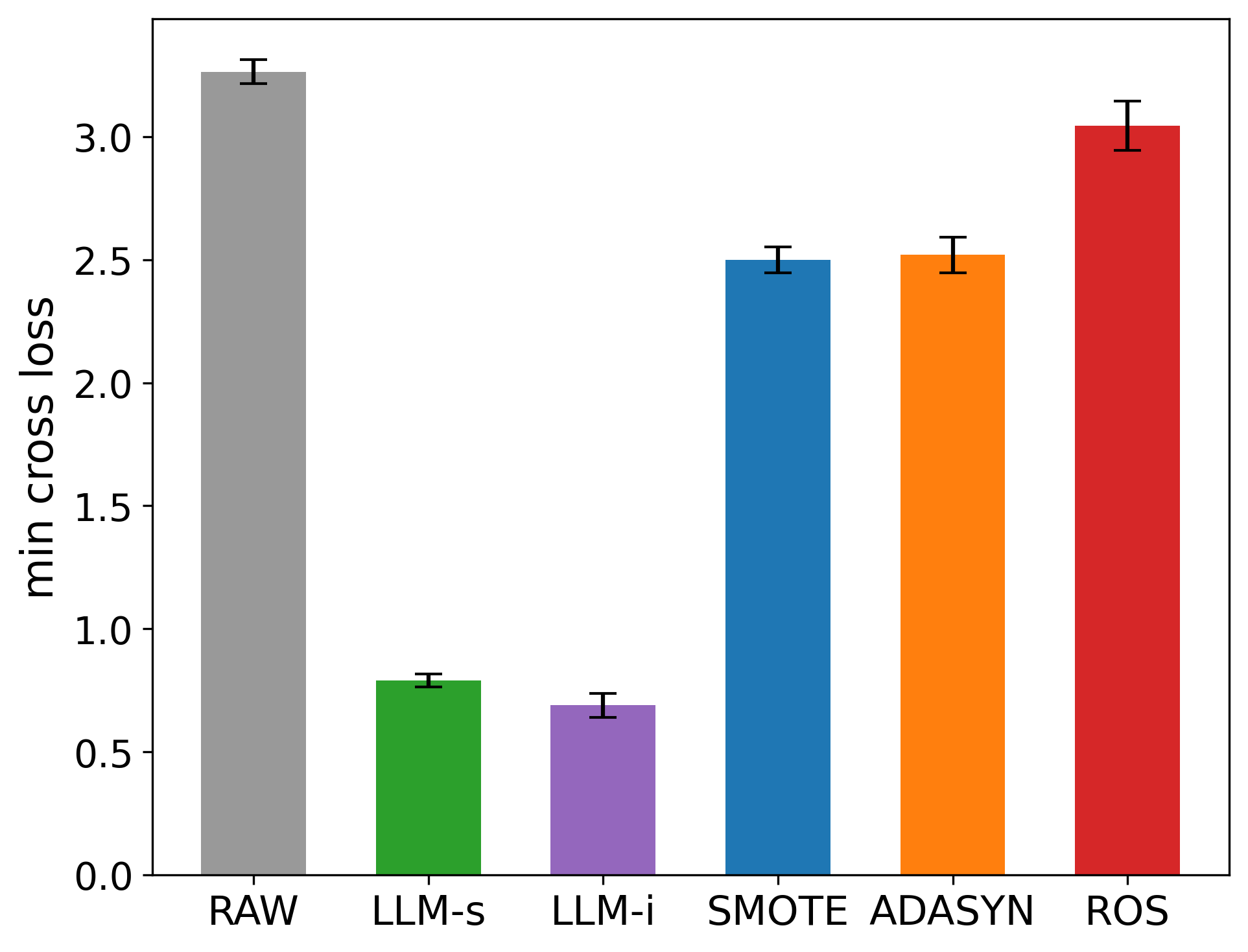}
  \caption{CRAFT (spur.cor., min.loss)}
\end{subfigure}

\vspace{4pt}
% ===== Row 2: Gender =====
\begin{subfigure}{0.24\textwidth}
  \centering
  \includegraphics[width=\linewidth]{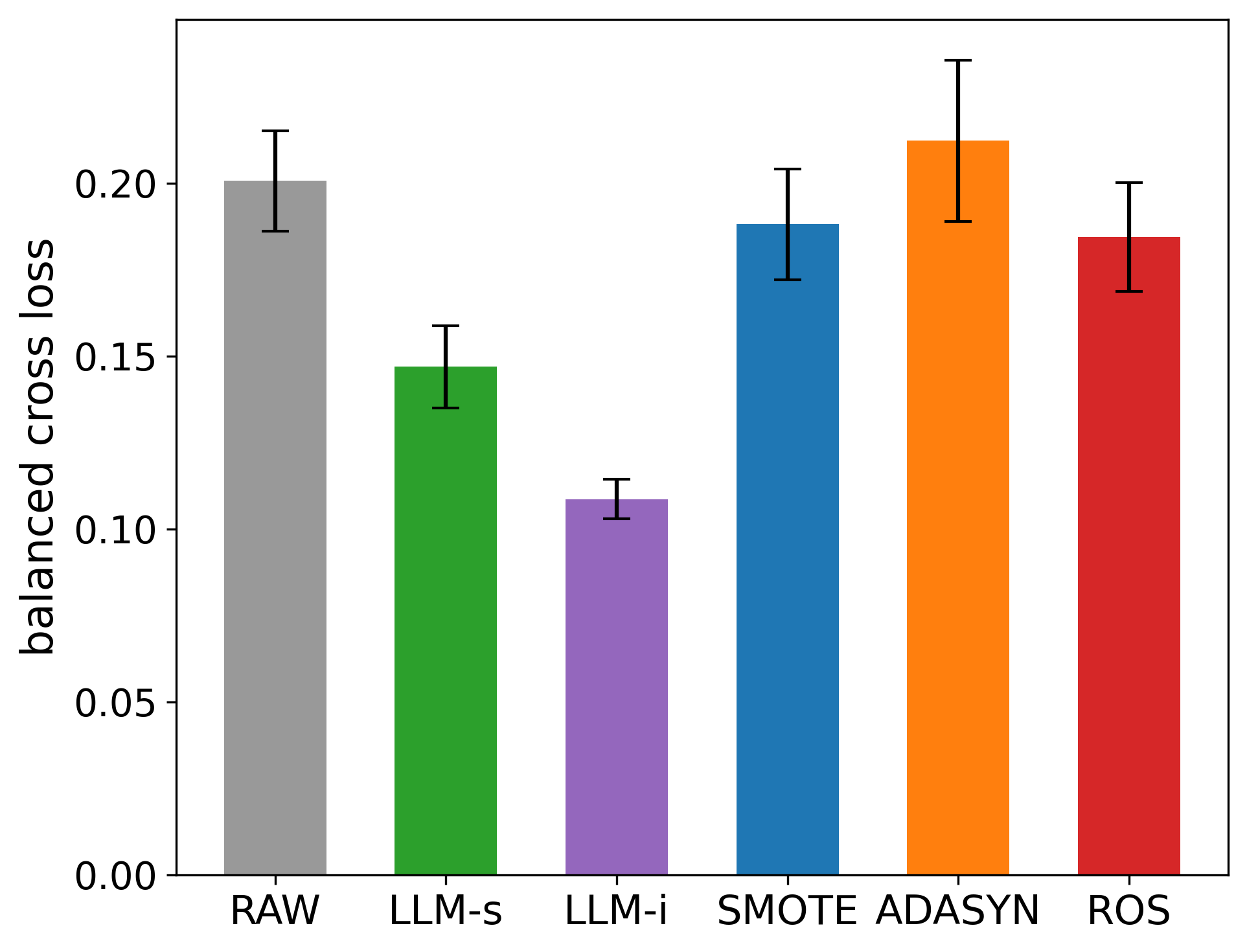}
  \caption{Gender (imb.class, bal.loss)}
\end{subfigure}
\begin{subfigure}{0.24\textwidth}
  \centering
  \includegraphics[width=\linewidth]{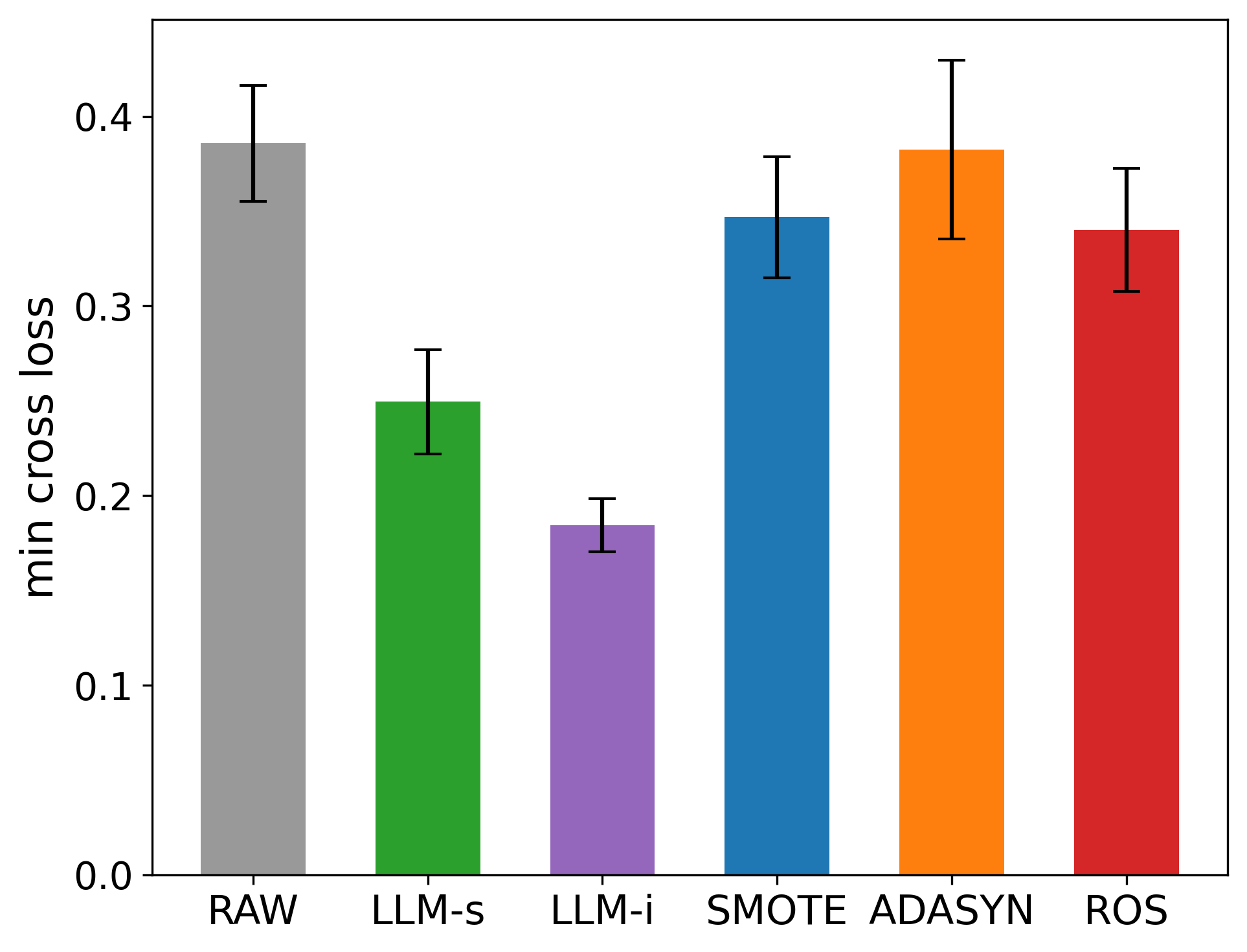}
  \caption{Gender (imb.class, min.loss)}
\end{subfigure}
\begin{subfigure}{0.24\textwidth}
  \centering
  \includegraphics[width=\linewidth]{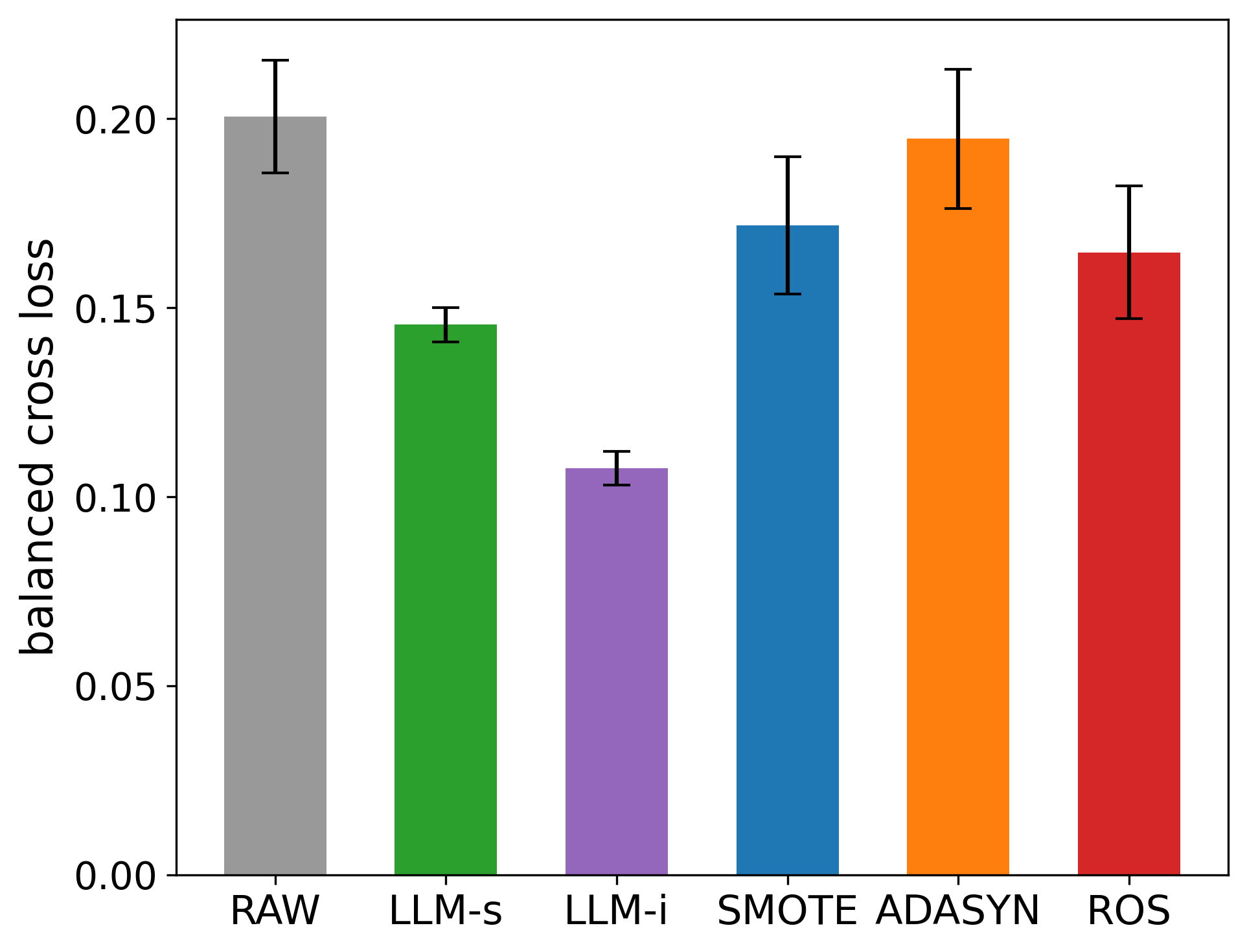}
  \caption{Gender (spur.cor, bal.loss.)}
\end{subfigure}
\begin{subfigure}{0.24\textwidth}
  \centering
  \includegraphics[width=\linewidth]{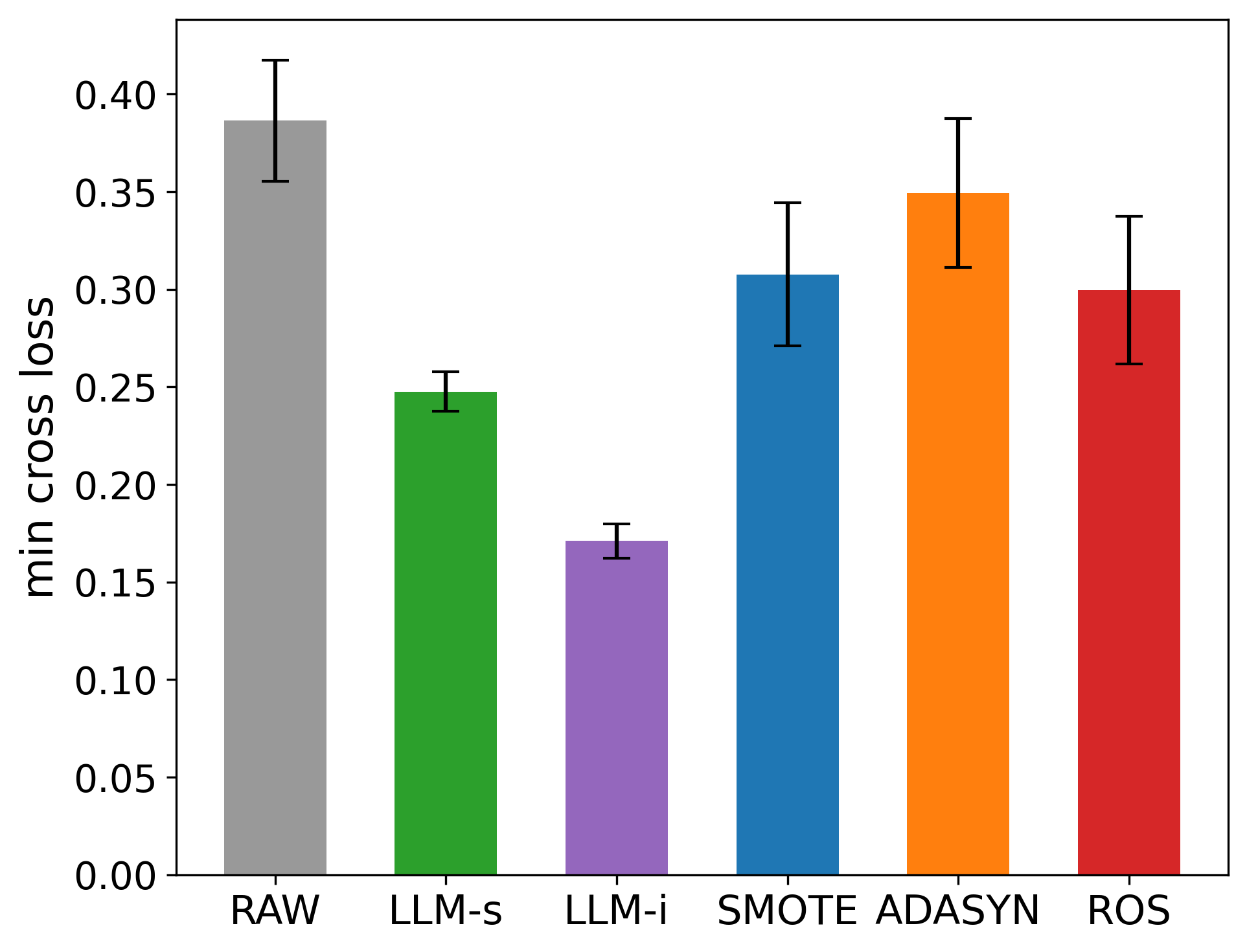}
  \caption{Gender (spur.cor, min.loss.)}
\end{subfigure}

\vspace{4pt}
% ===== Row 3: Diabetes =====
\begin{subfigure}{0.24\textwidth}
  \centering
  \includegraphics[width=\linewidth]{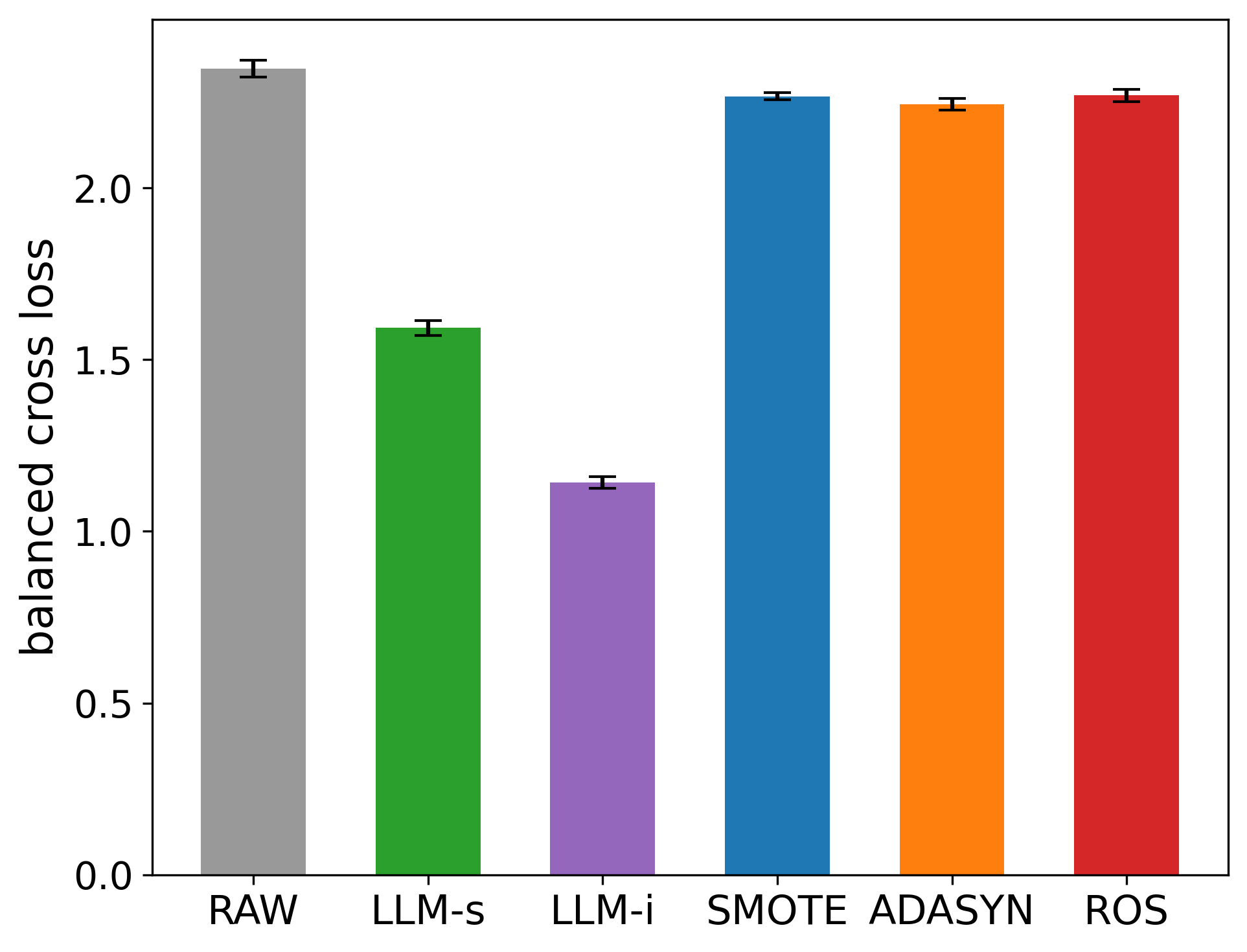}
  \caption{Diabetes (imb.class, bal.loss)}
\end{subfigure}
\begin{subfigure}{0.24\textwidth}
  \centering
  \includegraphics[width=\linewidth]{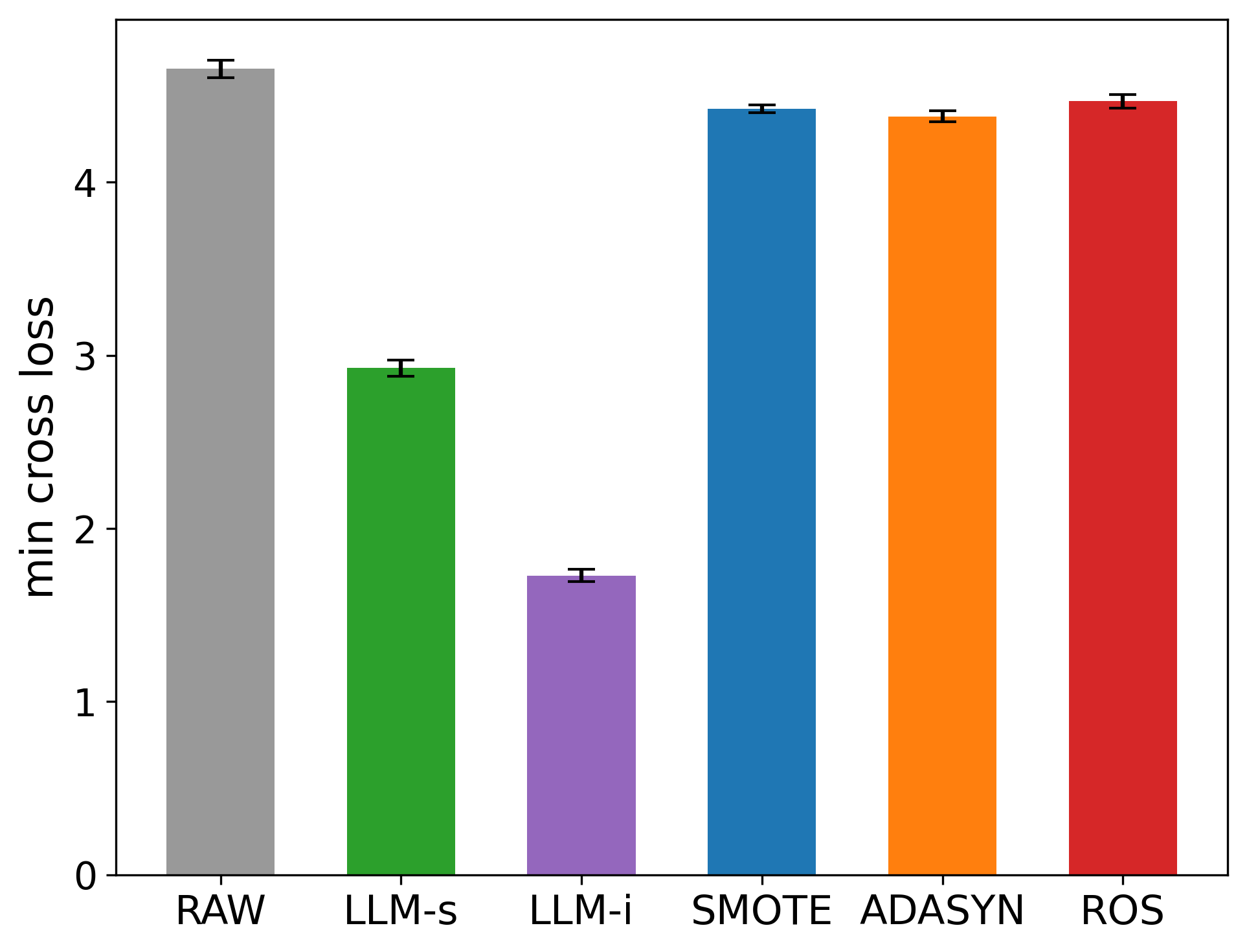}
  \caption{Diabetes (imb.class, min.loss))}
\end{subfigure}
\begin{subfigure}{0.24\textwidth}
  \centering
  \includegraphics[width=\linewidth]{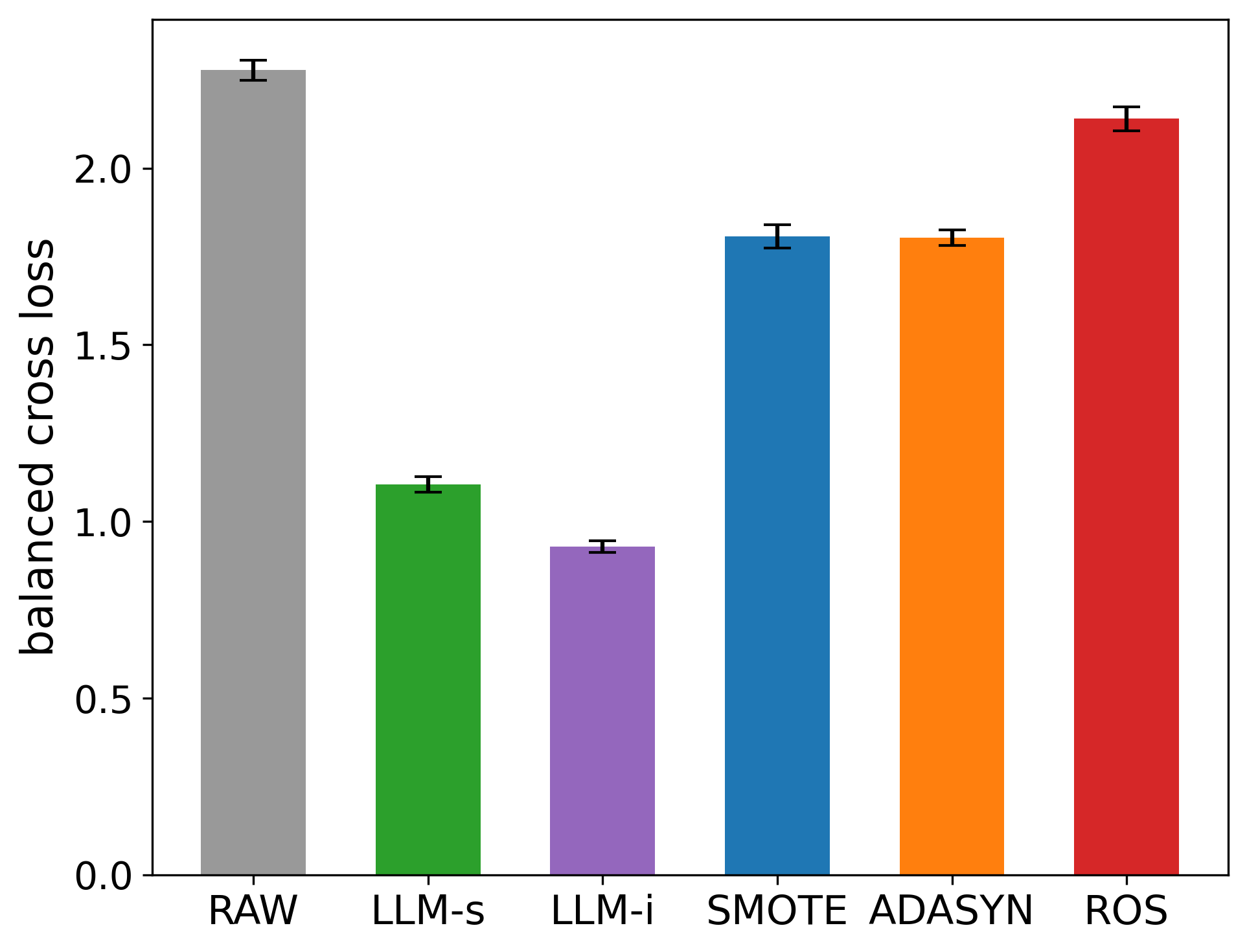}
  \caption{Diabetes (spur.cor, bal.loss.)}
\end{subfigure}
\begin{subfigure}{0.24\textwidth}
  \centering
  \includegraphics[width=\linewidth]{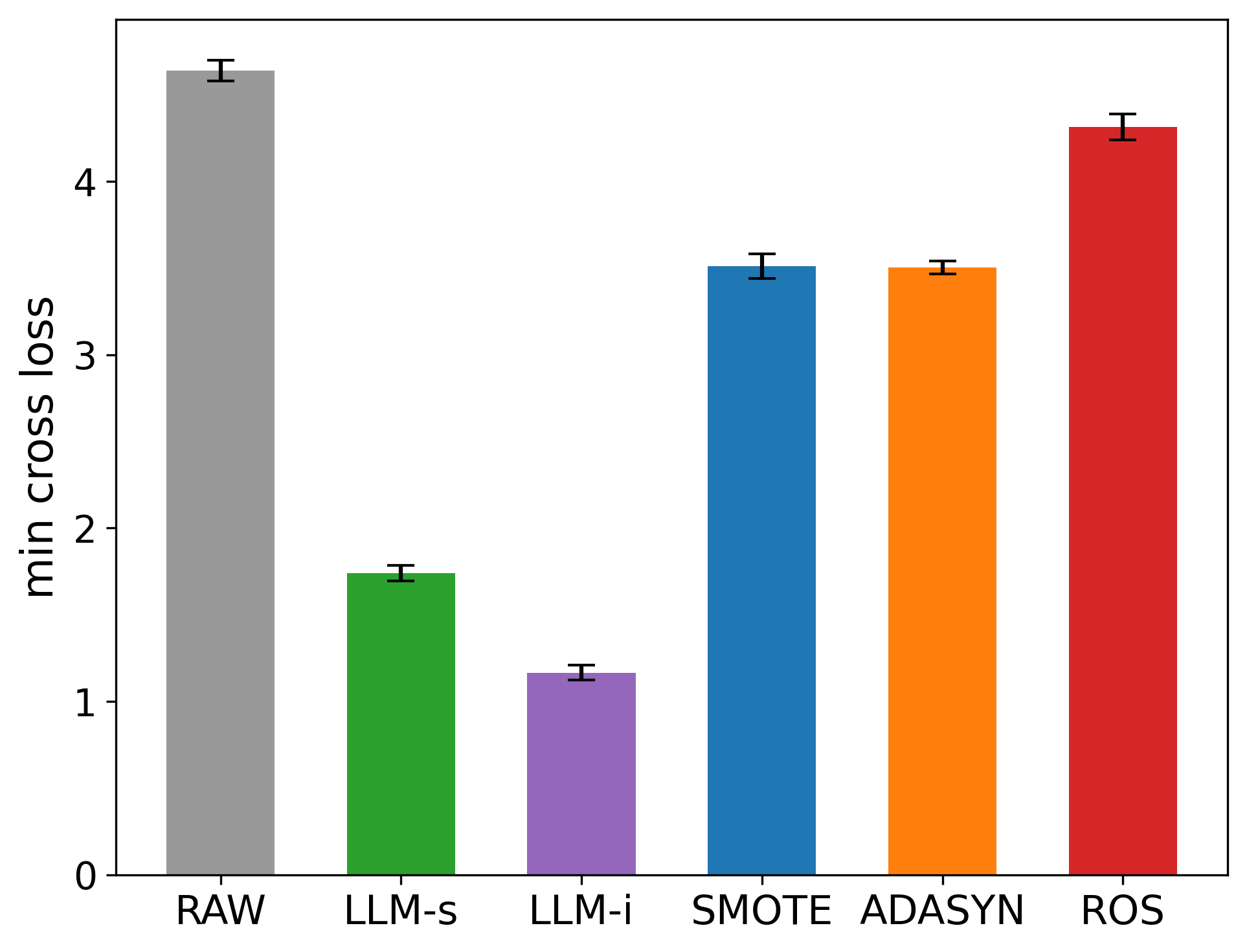}
  \caption{Diabetes (spur.cor, min.loss.)}
\end{subfigure}

\vspace{4pt}
% ===== Row 4: Adult =====
\begin{subfigure}{0.24\textwidth}
  \centering
  \includegraphics[width=\linewidth]{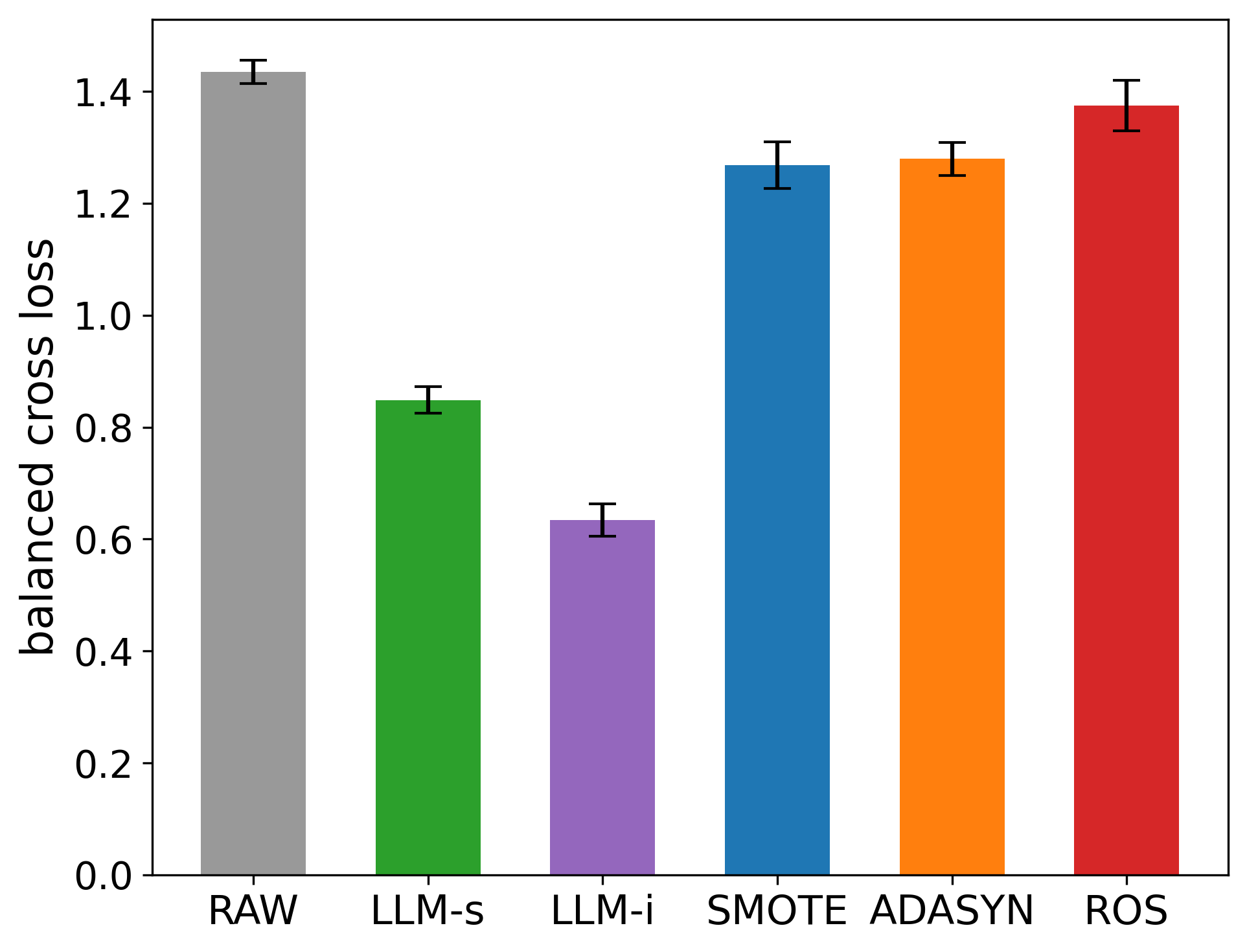}
  \caption{Adult (imb.class, bal.loss)}
\end{subfigure}
\begin{subfigure}{0.24\textwidth}
  \centering
  \includegraphics[width=\linewidth]{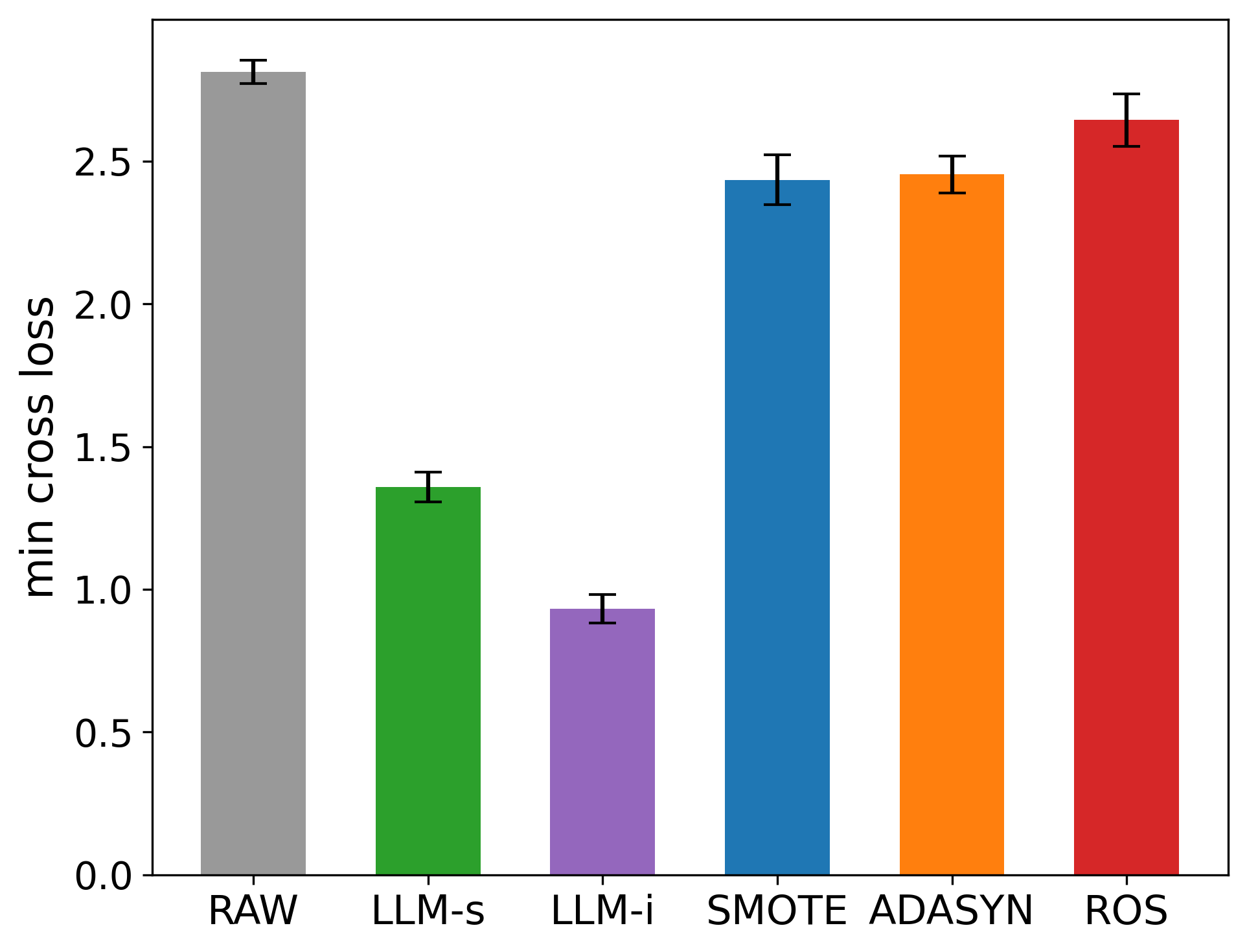}
  \caption{Adult (imb.class, min.loss)}
\end{subfigure}
\begin{subfigure}{0.24\textwidth}
  \centering
  \includegraphics[width=\linewidth]{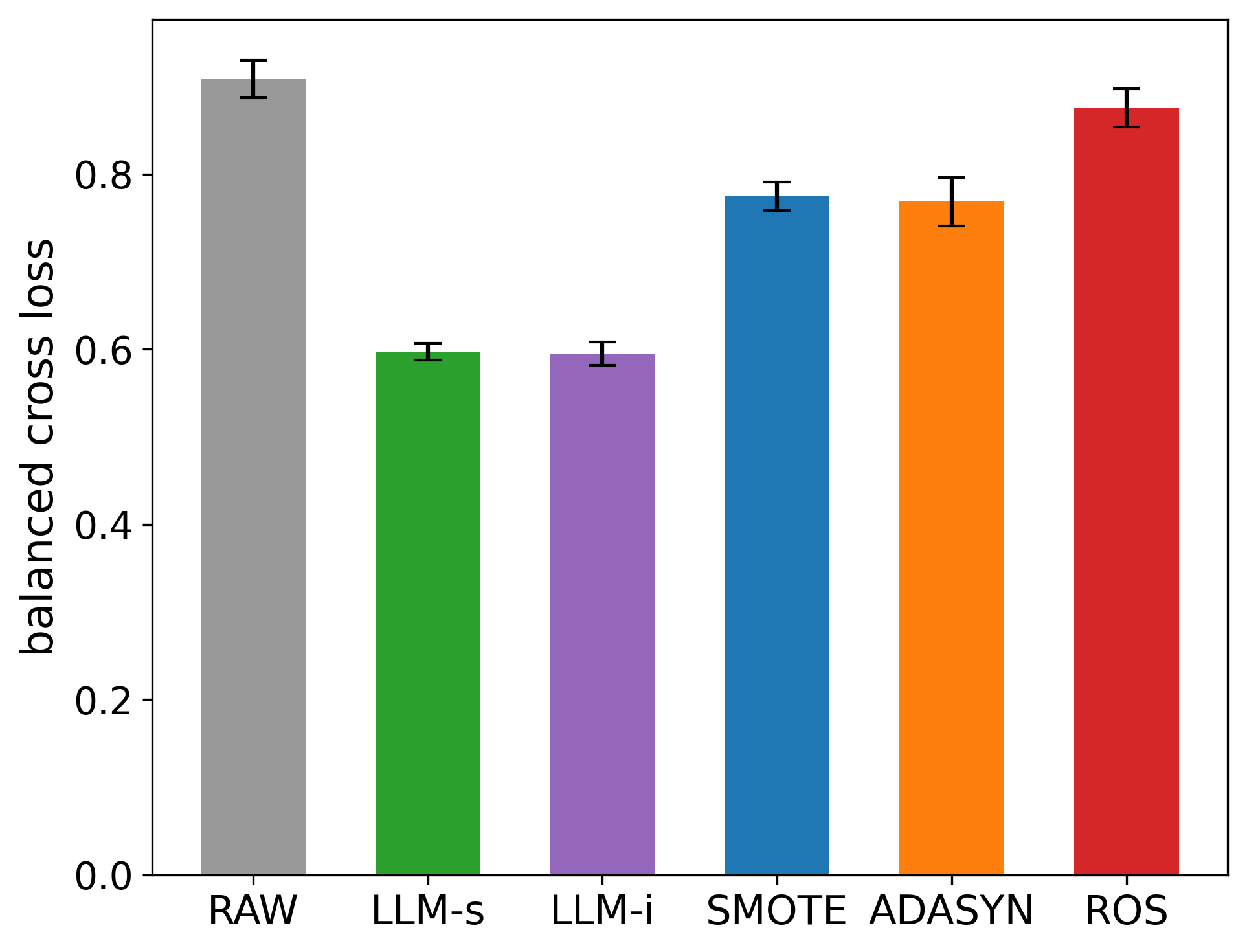}
  \caption{Adult (spur.cor, bal.loss.)}
\end{subfigure}
\begin{subfigure}{0.24\textwidth}
  \centering
  \includegraphics[width=\linewidth]{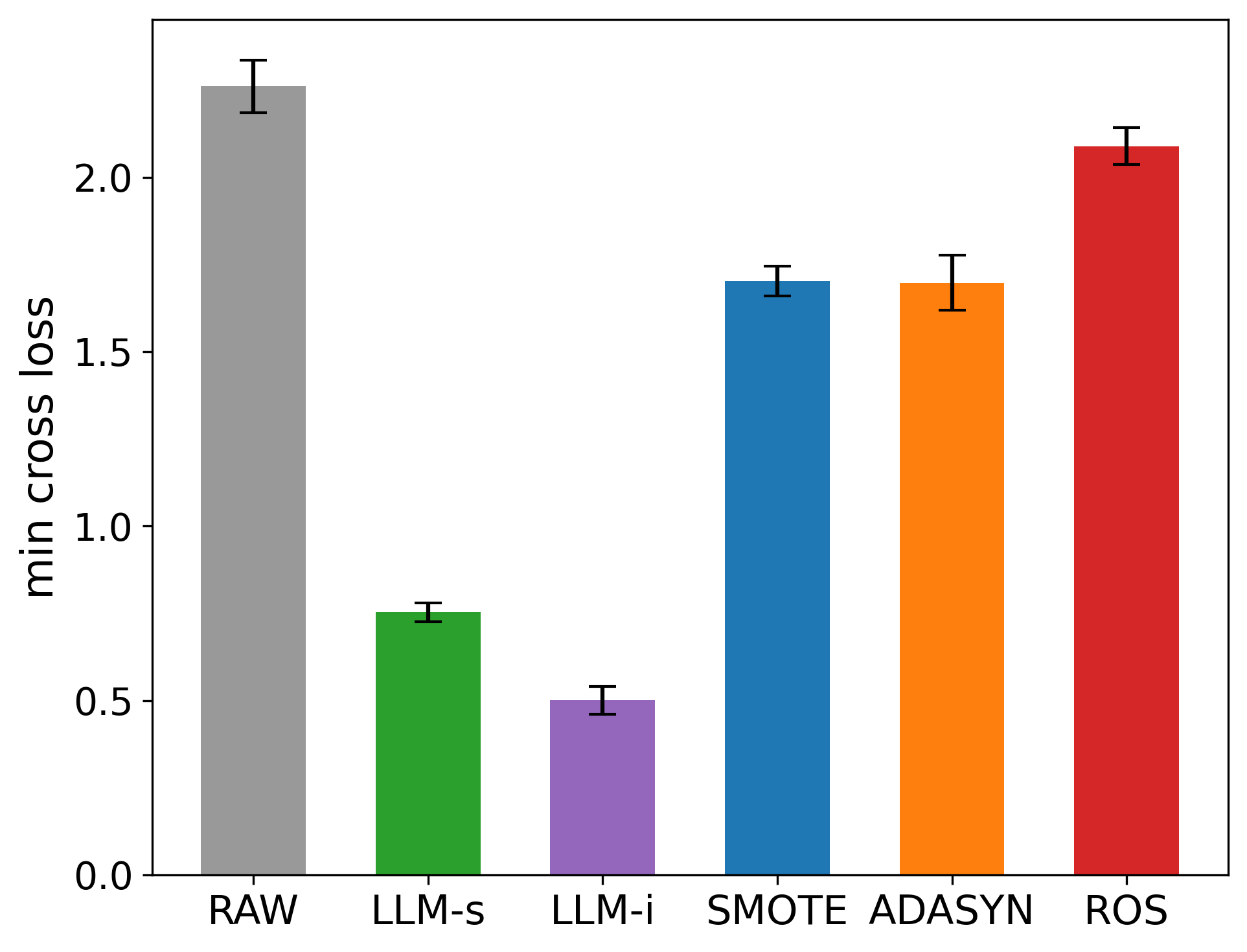}
  \caption{Adult (spur.cor, min.loss.)}
\end{subfigure}
\caption{Synthetic oversampling. Rows correspond to datasets, and columns to imbalanced classification and spurious correlation under balanced and minority cross-entropy loss.}
\label{fig:ablate_all}
\end{figure}

In Section~\ref{sec:setup} of the paper, we use \emph{all} training data as the seed data. In this section, we consider two variations of creating the seed data. In the first variation, we randomly choose 200 samples, with 100 each from the minority and majority groups, out of \emph{all} training samples to serve as the seed data. We refer to this approach as LLM-s, reflecting that only a small subset of samples are used as the seed data. In the second variation, we still randomly choose 200 samples, but out of the training samples excluding the raw data. This way, the generated synthetic data are to be independent of the raw data. We refer to this approach as LLM-i, reflecting that the synthetic and raw data are independent. We consider the same datasets, Craft, Gender, Diabetes, Adult as in Section~\ref{sec:setup}. For each dataset, we choose $n_{\text{min}} = 100$ samples from the minority group, and ${n_{\text{maj}}} = 1100$ samples from the majority group, to serve as the raw data. We continue to compare with the same alternative solutions as in Section~\ref{sec:setup}, including the baseline method with no oversampling (RAW), random oversampling (ROS), SMOTE, and ADASYN. We employ the same evaluation criteria, i.e., the balanced cross-entropy loss and the cross-entropy loss for the minority group. 

Figure~\ref{fig:ablate_all} reports the results. We observe that both variations of LLM-based synthetic oversampling method continue to outperform all the alternative solutions. Moreover, LLM-s achieves comparable and in some cases slightly worse performance than LLM-i, but still much better performance than other methods. These results, along with the results reported in Section~\ref{sec:exp-oversampling},  show that, even with a small amount of seed data, the LLM-based generator manages to produce high-quality synthetic samples that enhance downstream classification performance. In addition, the dependence between the seed and raw data has relatively limited effect empirically.

%%%%%%%%%%%%%%%%%%%%%%%%%%%%%%%%%%%%%%%%%%%%%%%%%%%%%%%%%%%%%%%%%%%%%%%%%%%%%%
\subsection{Results with GPT-4}
\label{supp-sec: add-gpt4}

%%%%%%%%%%%%%%%%%%%%%%%%%%%%%%%%%%%%%%%%%%%%%%%%%%%%%%%%%%%%%%%%%%%%%%%%%%%%%%
\subsubsection{Experiment setup}

For synthetic data generation, in addition to a fine-tuned GPT-2 model, we also consider a pretrained GPT-4 model without any additional fine-tuning. Specifically, we use the GPT-4 Turbo model (gpt-4-1106-preview) accessed through the OpenAI API. Synthetic data are generated through few-shot prompting, where the seed examples are formatted as tabular text followed by an instruction requesting additional records with the same schema. Each API call produces one sample record using temperature 0.7 and top-$p$ nucleus sampling with $p=0.9$. 

We consider two datasets, Diabetes and Gender from Section~\ref{sec:setup}. Due to the token limit of the GPT-4 Turbo, the size of the seed data to prompt GPT-4 is limited. We randomly choose 20 samples, with 10 each from the minority and majority groups, out of all training samples to serve as the seed data. We randomly choose $n_{\text{min}} = 10$ samples from the minority group, and ${n_{\text{maj}}} = 40$ samples from the majority group, to serve as the raw data. Slightly different from Section~\ref{sec:setup}, we examine the empirical performance by varying the synthetic-to-raw ratio, i.e., the number of synthetic samples relative to the number of raw samples, among $\{0\%, 20\%, 40\%, 60\%, 80\%, 100\%\}$. We also consider three different classifiers, logistic regression, CatBoost \citep{prokhorenkova2018catboost}, and random forest \citep{breiman2001random}, with all hyperparameters tuned by three-fold cross-validation. For synthetic oversampling, we compare with RAW, ROS and SMOTE. For synthetic augmentation, we compare with two popular deep generative methods, conditional tabular generative adversarial networks \citep[CTGAN,][]{xu19ctgan}, and tabular variational autoencoder \citep[TVAE,][]{Tazwar2024}, along with a continuous data-mixing analogue of SMOTE \citep[Mixup,][]{zhang2018mixup}, and a resampling analogue of ROS (Bootstrap). We use the misclassification error rate as the evaluation criterion. We repeat each experiment five times.

%%%%%%%%%%%%%%%%%%%%%%%%%%%%%%%%%%%%%%%%%%%%%%%%%%%%%%%%%%%%%%%%%%%%%%%%%%%%%%
\subsubsection{Example prompt}
\label{supp-sec: gpt4-prompt}

The following prompt is used for GPT-4-based data generation. The same structure is applied across datasets, with variable descriptions adapted accordingly.
\medskip

\begin{lstlisting}[basicstyle=\ttfamily\small,
                   breaklines=true,
                   breakatwhitespace=false,
                   frame=single,
                   columns=fullflexible,
                   escapeinside={(*@}{@*)}]
(*@\textcolor{blue}{Sys:}@*) You are an expert statistician in analyzing diabetes condition. Your objective is to predict/guess new chunk of records that closely mirrors the statistical properties of a provided real-world records. This predicted/guessed records collection will be instrumental for downstream tasks such as developing personalized treatment plans, conducting epidemiological studies, and optimizing healthcare resource allocation. You are good at in-context learning. You always think step-by-step, use chain-of-thoughts, and your common sense. 
(*@\textcolor{blue}{User:}@*) The following is the text of the observed records of diabetes condition. Investigate it carefully. Each row represents the number of times pregnant, plasma glucose concentration, diastolic blood pressure, triceps skin fold thickness, 2-Hour serum insulin, body mass index, diabetes pedigree function, age, class (class value 1 is interpreted as tested positive for diabetes) . Guess and craft new 20 records of textural representation as if they were from the same source of the given records. Do not replicate the real records and the good example predicted records I will give you. Discover the pattern and trends of the real records. Your guess should preserve statistical properties. All pairs of correlation of variables should be very close to real-world records. All variables marginal distribution should be closely align with the real dataset. Learn complicated associations and interplays. Introduce interpretable variation. The guess should closely resemble real records in terms of trends and patterns. Use your domain knowledge and understanding of diabetes and other factors when you are predicting. Output predicted records in the same format as real-world records format. Do not order the guessed records.
(*@\textcolor{blue}{User:}@*) [serialized seed data]
(*@\textcolor{blue}{User:}@*) Your response must only exclusively contain your guessed records with the same format as the provided example (e.g. object is value). No other words. Please always think step-by-step, use chain-of-thoughts, and your common sense. The guessed 20 records are:
\end{lstlisting}

An example of the serialized seed data provided to GPT-4 is shown below.

\begin{lstlisting}[basicstyle=\ttfamily\small,
                   breaklines=true,
                   breakatwhitespace=false,
                   frame=single,
                   columns=fullflexible]
preg is 3.0, plas is 128.0, pres is 68.0, skin is 25.0, insu is 155.0, mass is 34.3, pedi is 0.372, age is 29.0, class is 0.0.
preg is 1.0, plas is 85.0, pres is 66.0, skin is 29.0, insu is 0.0, mass is 26.6, pedi is 0.351, age is 31.0, class is 0.0.
preg is 4.0, plas is 112.0, pres is 78.0, skin is 39.0, insu is 0.0, mass is 37.6, pedi is 0.412, age is 22.0, class is 0.0.
preg is 0.0, plas is 137.0, pres is 40.0, skin is 35.0, insu is 168.0, mass is 43.1, pedi is 2.288, age is 33.0, class is 1.0.
preg is 3.0, plas is 173.0, pres is 82.0, skin is 48.0, insu is 465.0, mass is 38.4, pedi is 2.137, age is 25.0, class is 1.0.
preg is 10.0, plas is 115.0, pres is 70.0, skin is 30.0, insu is 0.0, mass is 35.3, pedi is 0.134, age is 29.0, class is 0.0.
\end{lstlisting}

%%%%%%%%%%%%%%%%%%%%%%%%%%%%%%%%%%%%%%%%%%%%%%%%%%%%%%%%%%%%%%%%%%%%%%%%%%%%%%
\subsubsection{GPT-4 Results}

Figure~\ref{fig:gpt4_imbalanced_3x3} reports the results for synthetic oversampling for imbalanced classification and Figure~\ref{fig:gpt4_spurious_overall_3x3} for synthetic augmentation. We see that our LLM-based synthetic oversampling method continues to outperform all the alternative solutions. Table~\ref{tab:aug-rf} reports the results for synthetic augmentation. We see that our method is the only one that consistently reduces misclassification error rate as the amount of synthetic data increases, and it consistently outperforms all the alternative solutions. Together, these results demonstrate the capacity of GPT-4 to generate high-quality synthetic data.

\begin{figure}[t!]
\centering
% ---- Row 1: Gender ----
\begin{subfigure}{0.31\textwidth}
  \centering
  \includegraphics[width=\linewidth]{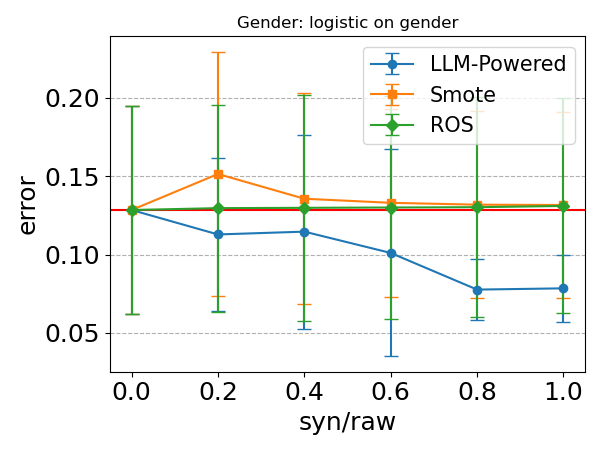}
  \caption{Gender (Logistic)}
\end{subfigure}\hfill
\begin{subfigure}{0.31\textwidth}
  \centering
  \includegraphics[width=\linewidth]{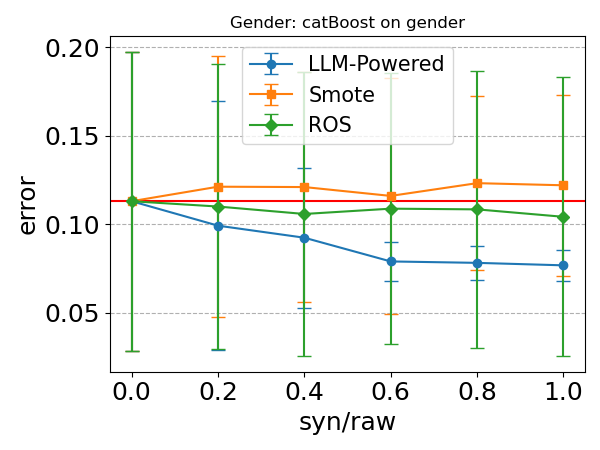}
  \caption{Gender (CatBoost)}
\end{subfigure}\hfill
\begin{subfigure}{0.31\textwidth}
  \centering
  \includegraphics[width=\linewidth]{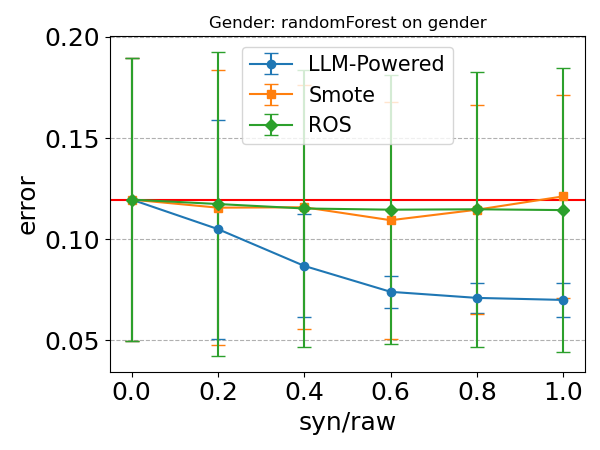}
  \caption{Gender (Random Forest)}
\end{subfigure}

\vspace{4pt}
% ---- Row 2: Diabetes ----
\begin{subfigure}{0.31\textwidth}
  \centering
  \includegraphics[width=\linewidth]{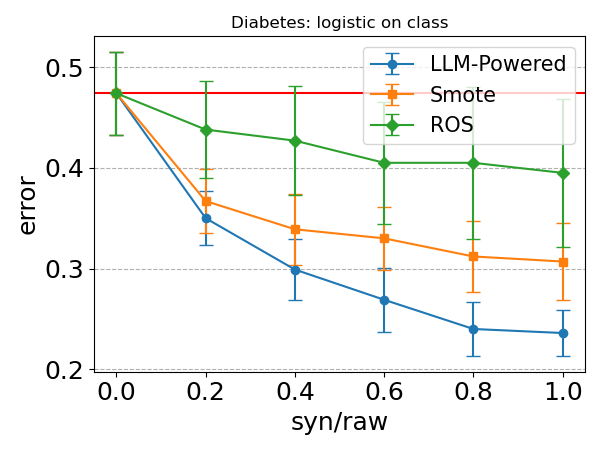}
  \caption{Diabetes (Logistic)}
\end{subfigure}\hfill
\begin{subfigure}{0.31\textwidth}
  \centering
  \includegraphics[width=\linewidth]{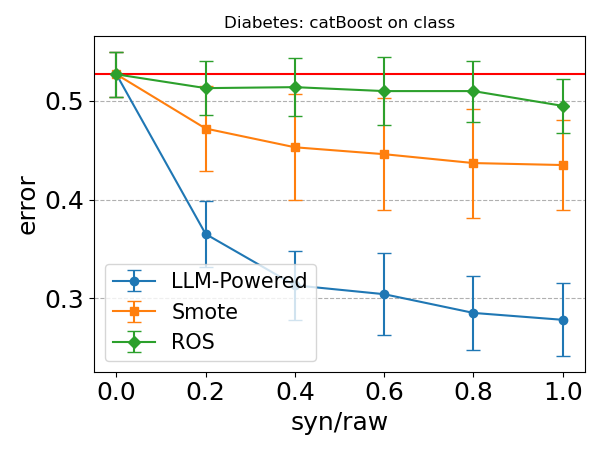}
  \caption{Diabetes (CatBoost)}
\end{subfigure}\hfill
\begin{subfigure}{0.31\textwidth}
  \centering
  \includegraphics[width=\linewidth]{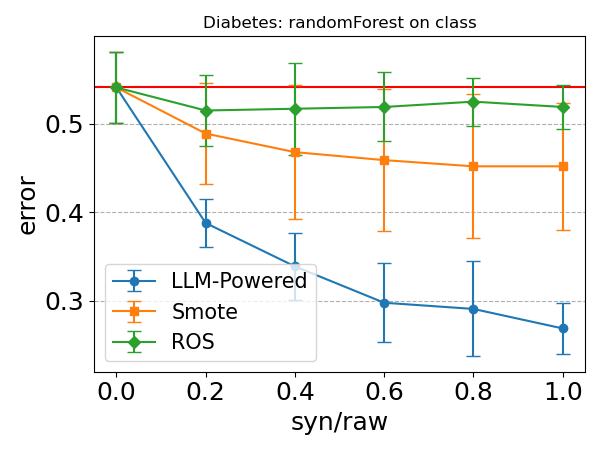}
  \caption{Diabetes (Random Forest)}
\end{subfigure}
\vspace{4pt}
\caption{Synthetic oversampling for imbalanced classification using GPT-4. Rows correspond to datasets, and columns to classification methods. The red line corresponds to the baseline method with no oversampling (RAW).}
\label{fig:gpt4_imbalanced_3x3}
\end{figure}

\begin{figure}[H]
\centering
% Gender
\begin{subfigure}{0.31\textwidth}\centering
\includegraphics[width=\linewidth]{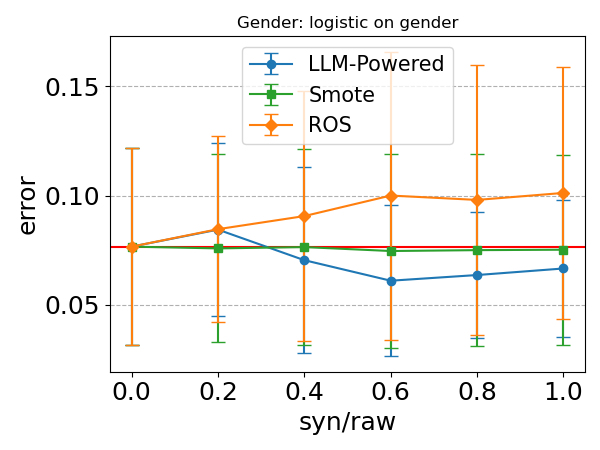}
\caption{Gender (Logistic)}
\end{subfigure}\hfill
\begin{subfigure}{0.31\textwidth}\centering
\includegraphics[width=\linewidth]{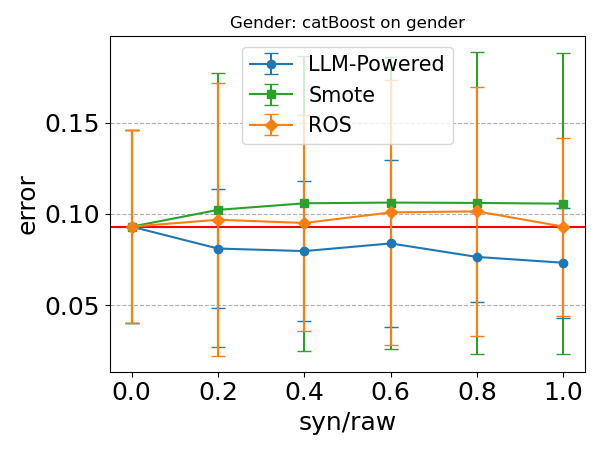}
\caption{Gender (CatBoost)}
\end{subfigure}\hfill
\begin{subfigure}{0.31\textwidth}\centering
\includegraphics[width=\linewidth]{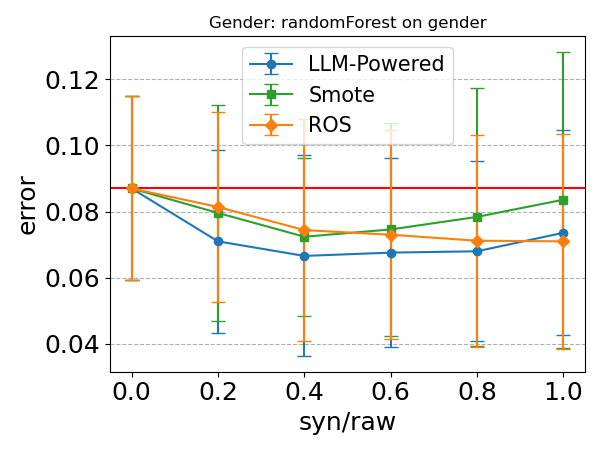}
\caption{Gender (Random Forest)}
\end{subfigure}

\vspace{4pt}
% Diabetes
\begin{subfigure}{0.31\textwidth}\centering
\includegraphics[width=\linewidth]{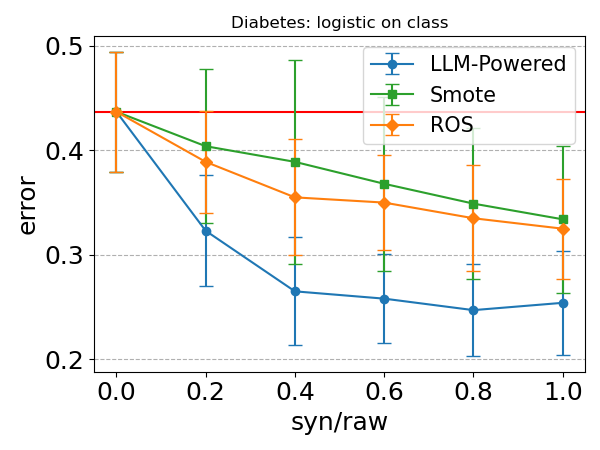}
\caption{Diabetes (Logistic)}
\end{subfigure}\hfill
\begin{subfigure}{0.31\textwidth}\centering
\includegraphics[width=\linewidth]{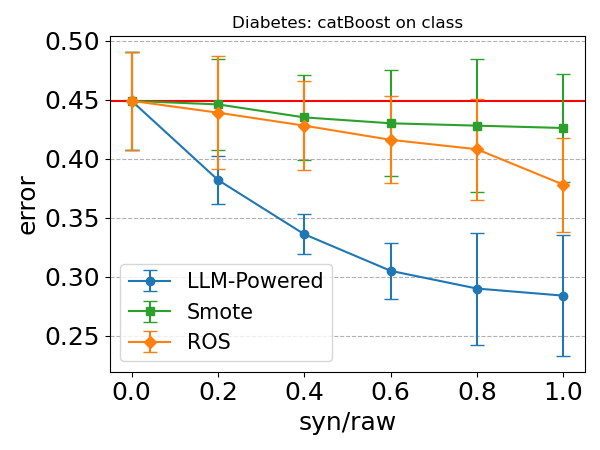}
\caption{Diabetes (CatBoost)}
\end{subfigure}\hfill
\begin{subfigure}{0.31\textwidth}\centering
\includegraphics[width=\linewidth]{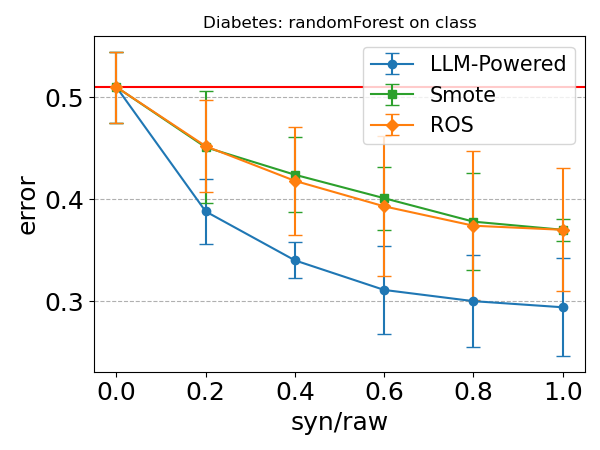}
\caption{Diabetes (Random Forest)}
\end{subfigure}
\caption{Synthetic oversampling for spurious correlation using GPT-4. Rows correspond to datasets, and columns to classification methods. The red line corresponds to the baseline method with no oversampling (RAW).}
\label{fig:gpt4_spurious_overall_3x3}
\end{figure}

\begin{table}[H]
\centering
\caption{Synthetic augmentation using data generated from different methods.}
\label{tab:aug-rf}
\resizebox{\textwidth}{!}{%
\begin{tabular}{llcccccc}
\toprule
dataset & method & \multicolumn{6}{c}{syn/raw ratio in synthetic augmentation} \\
\cmidrule(lr){3-8}
 &  & Raw & 20\% & 40\% & 60\% & 80\% & 100\% \\
\midrule
\addlinespace
Gender 
& GPT4 & $0.0914 \pm 0.0547$ & $\mathbf{0.083 \pm 0.0358}${\color{green}$\downarrow$} & $\mathbf{0.0796 \pm 0.0285}${\color{green}$\downarrow$} & $\mathbf{0.075 \pm 0.0217}${\color{green}$\downarrow$} & $\mathbf{0.073 \pm 0.0210}${\color{green}$\downarrow$} & $\mathbf{0.075 \pm 0.0221}${\color{green}$\downarrow$} \\
& CTGAN & $0.0914 \pm 0.0547$ & $0.117 \pm 0.0967${\color{red}$\uparrow$} & $0.152 \pm 0.103${\color{red}$\uparrow$} & $0.177 \pm 0.124${\color{red}$\uparrow$} & $0.187 \pm 0.122${\color{red}$\uparrow$} & $0.203 \pm 0.123${\color{red}$\uparrow$} \\
& TVAE & $0.0914 \pm 0.0547$ & $0.093 \pm 0.047${\color{red}$\uparrow$} & $0.098 \pm 0.052${\color{red}$\uparrow$} & $0.098 \pm 0.054${\color{red}$\uparrow$} & $0.101 \pm 0.059${\color{red}$\uparrow$} & $0.100 \pm 0.061${\color{red}$\uparrow$} \\
& Mixup & $0.0914 \pm 0.0547$ & $0.0896 \pm 0.0448${\color{green}$\downarrow$} & $0.0922 \pm 0.0519${\color{red}$\uparrow$} & $0.0906 \pm 0.0473${\color{green}$\downarrow$} & $0.0896 \pm 0.0457${\color{green}$\downarrow$} & $0.0904 \pm 0.0427${\color{green}$\downarrow$} \\
& Bootstrap & $0.0914 \pm 0.0547$ & $0.0992 \pm 0.0758${\color{red}$\uparrow$} & $0.105 \pm 0.0828${\color{red}$\uparrow$} & $0.105 \pm 0.0895${\color{red}$\uparrow$} & $0.105 \pm 0.0879${\color{red}$\uparrow$} & $0.107 \pm 0.0860${\color{red}$\uparrow$} \\
\midrule
\addlinespace
Diabetes 
& GPT4 & $0.239 \pm 0.0185$ & $\mathbf{0.233 \pm 0.0251}${\color{green}$\downarrow$} & $\mathbf{0.225 \pm 0.0209}${\color{green}$\downarrow$} & $\mathbf{0.225 \pm 0.0209}${\color{green}$\downarrow$} & $\mathbf{0.214 \pm 0.0139}${\color{green}$\downarrow$} & $\mathbf{0.210 \pm 0.0245}${\color{green}$\downarrow$} \\
& CTGAN & $0.239 \pm 0.0185$ & $0.241 \pm 0.0179${\color{red}$\uparrow$} & $0.252 \pm 0.0202${\color{red}$\uparrow$} & $0.269 \pm 0.0263${\color{red}$\uparrow$} & $0.274 \pm 0.0266${\color{red}$\uparrow$} & $0.279 \pm 0.0258${\color{red}$\uparrow$} \\
& TVAE & $0.239 \pm 0.0185$ & $0.250 \pm 0.0127${\color{red}$\uparrow$} & $0.252 \pm 0.0120${\color{red}$\uparrow$} & $0.248 \pm 0.0076${\color{red}$\uparrow$} & $0.252 \pm 0.0104${\color{red}$\uparrow$} & $0.255 \pm 0.0158${\color{red}$\uparrow$} \\
& Mixup & $0.239 \pm 0.0185$ & $0.238 \pm 0.0299${\color{green}$\downarrow$} & $0.242 \pm 0.0220${\color{red}$\uparrow$} & $0.250 \pm 0.0345${\color{red}$\uparrow$} & $0.248 \pm 0.0305${\color{red}$\uparrow$} & $0.252 \pm 0.0299${\color{red}$\uparrow$} \\
& Bootstrap & $0.239 \pm 0.0185$ & $0.237 \pm 0.0192${\color{green}$\downarrow$} & $0.238 \pm 0.0231${\color{green}$\downarrow$} & $0.241 \pm 0.0258${\color{red}$\uparrow$} & $0.241 \pm 0.0258${\color{red}$\uparrow$} & $0.241 \pm 0.0258${\color{red}$\uparrow$} \\
\midrule
\end{tabular}%
}
\end{table}

\end{document}